\newenvironment{proof}{\paragraph{Proof:}}{\hfill$\square$}
\titleclass{\subsubsubsection}{straight}[\subsubsection]
\newcounter{subsubsubsection}[subsubsection]
\renewcommand\thesubsubsubsection{\thesubsubsection.\arabic{subsubsubsection}}
\titlespacing*{\subsubsubsection}{0pt}{0.5em}{0.5em}
\title{TeLU Activation Function for Fast and Stable Deep Learning}
\author{%
  Alfredo Fernandez \\
  University of South Florida\\
  \texttt{alfredo.fernandez2021@outlook.com}
  \And
  Ankur Mali \\
  University of South Florida\\
  \texttt{ankurarjunmali@usf.edu}
}
\newtheorem{Thm}{Theorem}[section]
\newtheorem{Lem}{Lemma}[section]
\newtheorem{Def}{Definition}[section]
\newtheorem{Prop}{Property}[section]
\begin{document}

\maketitle

\begin{abstract}
We propose the Hyperbolic Tangent Exponential Linear Unit (TeLU), a neural network hidden activation function defined as $TeLU(x)=x \cdot tanh(e^x)$. TeLU’s design is grounded in the core principles of key activation functions, achieving strong convergence by closely approximating the identity function in its active region while effectively mitigating the vanishing gradient problem in its saturating region. Its simple formulation enhances computational efficiency, leading to improvements in scalability and convergence speed. Unlike many modern activation functions, TeLU seamlessly combines the simplicity and effectiveness of ReLU with the smoothness and analytic properties essential for learning stability in deep neural networks. TeLU’s ability to mimic the behavior and optimal hyperparameter settings of ReLU, while introducing the benefits of smoothness and curvature, makes it an ideal drop-in replacement. Its analytic nature positions TeLU as a powerful universal approximator, enhancing both robustness and generalization across a multitude of experiments. We rigorously validate these claims through theoretical analysis and experimental validation, demonstrating TeLU's performance across challenging benchmarks\footnotemark{}; including ResNet18 on ImageNet, Dynamic-Pooling Transformers on Text8, and Recurrent Neural Networks (RNNs) on the Penn TreeBank dataset. These results highlight TeLU’s potential to set a new standard in activation functions, driving more efficient and stable learning in deep neural networks, thereby accelerating scientific discoveries across various fields.\footnotetext{The complete code for all experiments is available at https://github.com/alfredofernandez2021/TeLU-Experiments.}

\end{abstract}

\newpage
\section{Introduction}
\label{intro}
The rapid advancements in deep learning have significantly expanded the capabilities of machine learning, enabling machines to perform tasks ranging from image recognition to natural language processing with remarkable accuracy \cite{lecun2015deep, StrongGradients1, Transformer}. Central to these advancements are neural networks, computational models inspired by the human brain, which consist of layers of interconnected neurons. Each neuron applies a mathematical operation to its inputs and passes the result to the next layer, enabling the network to learn and generalize from data \cite{Rumelhart1986LearningRB}. As neural networks have grown in complexity and depth, their success has increasingly depended on the subtle choices made in their architecture, including the selection of activation functions \cite{glorot_init}.

Activation functions play a pivotal role in neural networks by introducing non-linearity into the model, allowing it to learn complex patterns and representations that would be impossible for a purely linear model \cite{sharma2017activation}. Traditional activation functions such as the logistic sigmoid \cite{Rumelhart1986LearningRB}, hyperbolic tangent (tanh), and rectified linear unit (ReLU) \cite{ReLU} have been foundational in the growth of deep learning. However, despite their widespread use, these functions come with significant limitations. Sigmoid and Tanh functions are prone to vanishing gradient problems, particularly in deep networks, which can stall the learning process \cite{vanishinggradientsigmoid, BengioVanishingGradient}. ReLU, while exhibiting improved convergence due to its linear active region, is prone to dead neurons; where units stop contributing to the learning process due to negative inputs being zeroed out \cite{LReLU, dyingrelu}.

Modern activation functions, such as the Exponential Linear Unit (ELU) \cite{elu}, mitigate the dying neuron issue and reduce the forward bias shift effects present in positive semi-definite non-linearities such as ReLU by approaching an odd symmetry near the origin. The utilization of negative activation pushes the expected activation closer to 0, resulting in effects that resemble batch normalization and thereby speeds up learning \cite{Amari1998}. The Gaussian Error Linear Unit (GELU) \cite{GELU} uses its non-monotonic growth to deliver a comparable symmetry near the origin with the added benefit of having a deactivation that saturates to zero. This gradual deactivation functions like a deterministic Gaussian form of standard neuron dropout \cite{dropout}, where neurons selectively participate based on the presence of features rather than their absence, enhancing the model's robustness \cite{xie2021smoothadversarialtraining}. 

Despite these innovative developments, ReLU remains the most popular general-purpose nonlinearity, largely due to its simplicity and rapid convergence \cite{ReLUtrends, relusparse}. We believe that ReLU's widespread use in feed-forward neural networks has established it as the standard benchmark against which newer activation functions are compared. However, these newer functions often lack ReLU’s computational efficiency and effective gradient propagation \cite{DUBEY202292}. We also observed that recently proposed activation functions typically do not perform optimally under the same hyperparameter settings as ReLU \cite{Smish}. As a result, these modern nonlinearities can appear as inefficient alternatives that do not offer consistent improvements over ReLU.

Driven by this hypothesis, we set out to develop an activation function that not only preserves the enduring strengths of ReLU but also introduces innovations that exceed the benefits provided by modern activation functions. Recognizing the robustness and optimizer compatibility of smooth functions \cite{xie2021smoothadversarialtraining, SecondOrder1}, we focused on discovering an analytic non-linearity \cite{analytic}. In this paper, we present the Hyperbolic Tangent Exponential Linear Unit (TeLU), an activation function that retains ReLU’s rapid convergence while addressing key challenges such as the vanishing gradient problem and learning instability. TeLU also enhances robustness and provides greater stability during learning, making it a powerful and versatile tool for advancing deep neural networks.

We validate these claims through extensive empirical evaluations across various benchmark datasets and architectures. We demonstrate that TeLU outperforms traditional activation functions in terms of both training efficiency and final model accuracy. Moreover, our analysis reveals that this new activation function offers a more robust and stable training process, particularly in deep and complex networks. These findings suggest that TeLU could be a valuable addition to the toolkit of deep learning practitioners, paving the way for more powerful and effective neural network models.

The contributions of this paper are as follows:

\begin{itemize}
\item \textit{Proposal of the Hyperbolic Tangent Exponential Linear Unit (TeLU):} An artificial neuron activation function that integrates and optimizes the beneficial characteristics of previous nonlinearities.
\item \textit{Comprehensive Theoretical Analysis:} A detailed examination and comparison of popular activation functions' near-linear behavior in the active region, persistent gradient in the saturation region, runtime efficiency, universal approximation, and stability properties that theoretically demonstrate TeLU's advantages.
\item \textit{Extensive Experimental Evaluation:} A wide-ranging set of experimental validations of TeLU's distinctive theoretical advantages conducted on Multi-Layer Perceptron (MLP), Convolutional Neural Network (CNN), dynamic-pooling transformer, Recurrent Neural Network (RNN), and Variational AutoEncoder (VAE) architectures using competitive datasets such as ImageNet and Text8.
 \end{itemize}


Section 2 explores related activation functions, detailing the sequence of innovations in activation functions. Section 3 discusses the limitations of existing activation functions, highlighting the need for an alternative solution. Section 4 outlines our design process and introduces the formulation of TeLU. Section 5 delves the theoretical basis for TeLU's advantages. Subsection 5.1 examines the vanishing gradients of popular activation functions for increasingly negative inputs and describes how TeLU mitigates this issue. Subsection 5.2 explains how TeLU approximates the identity function for positive inputs, resulting in faster convergence compared to other activation functions. Subsection 5.3 demonstrates the computational efficiency of TeLU’s simple formulation. Subsection 5.4 discusses TeLU’s operational similarities to ReLU, enabling compatibility with existing ReLU projects. Subsection 5.5 proves that TeLU architectures are analytic universal approximators, outlining the corresponding stability and robustness benefits. Finally, Subsection 5.6 addresses learning stability, drawing on heuristics from literature to illustrate how TeLU enhances stability in deep neural network training.

Section 6 presents experimental validation of the theoretical results discussed in Section 5. Subsection 6.1 shows that TeLU's mitigation of the vanishing gradient problem is crucial for recovering from strong vanishing gradient conditions as well as improving accuracy in convolutional neural networks. Subsection 6.2 provides extensive evidence of TeLU's enhanced convergence properties using the ImageNet dataset with the ResNet34 architecture and the Text8 dataset with the Dynamic-Pooling Transformer architecture. Subsection 6.3 benchmarks the runtime of popular activation functions, illustrating TeLU's computational efficiency across various systems. Subsection 6.4 highlights the tuning similarities between TeLU and ReLU, demonstrating that configurations optimized for ReLU yield superior accuracy when applied to TeLU. Subsection 6.5 showcases the benefits of TeLU’s analytic universal approximation in Variational AutoEncoders (VAEs), Recurrent Neural Networks (RNNs), and robustness benchmarks. Finally, Subsection 6.6 demonstrates TeLU's unmatched learning stability across diverse architectural variations, initialization methods, and optimizers.


Section 7 provides a detailed discussion of the value of the TeLU activation function, combining both theoretical insights and experimental findings to illustrate its advantages in neural network training. This section delves into how TeLU addresses challenges such as improved convergence, vanishing gradient mitigation, computational efficiency, and compatibility with existing training configurations, highlighting its practical implications across various architectures. Building on this, Section 8 concludes the research by revisiting the main objectives and contributions, emphasizing TeLU’s role as an effective drop-in replacement for ReLU. Section 9 looks forward to future research directions, proposing efforts to expand theoretical guarantees and conduct further experimental validations. It suggests exploring TeLU’s performance in more diverse architectures and refining its mathematical properties, setting the stage for ongoing advancements in the deep learning community. Appendix A includes supplementary tables related to our theoretical analysis and experimental setup. Appendix B provides additional theorems that offer deeper insights into the approximation, stability, and convergence guarantees of TeLU.

\newpage
\section{Related Work}

This section provides an abridged history of activation function development, starting with the early models like the step and sigmoid functions, which paved the way for more sophisticated approaches. It explores how innovations such as the Rectified Linear Unit (ReLU) overcame key limitations of previous models, improved training efficiency and convergence in deep networks. Subsequently, we will examine more recent advancements, such as the Exponential Linear Unit (ELU), Sigmoid Linear Unit (SiLU), and Gaussian Error Linear Unit (GELU), which further addressed issues like dead neurons and output bias with smooth nonlinearities. Each innovation reflects a progression in the understanding of how activation functions can enhance neural network learning dynamics. By evaluating these historical developments and their contributions, we gain valuable insight into the broader landscape of activation functions and their critical role in driving the ongoing advancements in deep learning.

\subsection{Early Activation Functions}

In neuroscience, it was once believed that biological neurons activated according to the Heaviside unit step function \cite{McPitts}, which is either active with output 1 or inactive with output 0. As a differentiable approximation to the Heaviside function, the Logistic Sigmoid function $\sigma (x) = \frac{1}{1+e^{-x}}$ \cite{Rumelhart1986LearningRB} became prevalent in early neural networks, offering a practical activation function capable of universal approximation \cite{Cybenko1989Approximation, HORNIK1989359, FUNAHASHI1989183}. However, the Logistic Sigmoid function suffers from the vanishing gradient problem as pre-activations approach $\pm \infty$ and activations saturate toward 0 or 1. The vanishing gradient problem can significantly hinder the training of deep architectures. During backpropagation, the small gradients from upstream neurons shrink the gradients of downstream activations due to the chain rule in calculus \cite{backprop}. Consequently, parameter updates in the earlier layers become negligible when enough upstream neurons have gradients that approach zero.

The Hyperbolic Tangent (tanh) activation function, defined as $tanh(x)=\frac{e^x - e^{-x}}{e^x + e^{-x}}$, helps mitigate the vanishing gradient problem by providing larger gradients than the Logistic function for pre-activation values around 0. This effect is nearly guaranteed in practice when appropriate L1 or L2 weight regularization is used. Additionally, the Hyperbolic Tangent is symmetric about the origin, resulting in an average activation value closer to 0. This symmetry leads to improved convergence efficiency compared to positive functions like the Logistic Sigmoid \cite{LecunSymmetricAdvantage.66.2396, LeCun1998, raiko2012deep, Schraudolph2012}.

\subsection{Rectified Linear Units}

The Rectified Linear Unit (ReLU), defined as $ReLU(x) = max(0,x)$ and shown in figure \ref{fig:RELUFigure}, was originally discovered by Fukushima \cite{ReLU}. It was later recognized for its benefits in accelerating convergence due to its computational efficiency, near-linearity, and reduced vanishing gradient problem \cite{relusparse}. ReLU's simplicity in computation and differentiation leads to faster training times and improved scalability of neural architectures. Its linear behavior for positive inputs provides strong gradients and a faster convergence rate, though this comes with the trade-off of significant output bias \cite{LecunSymmetricAdvantage.66.2396}.

In the negative domain, ReLU neurons are inactive, resulting in sparse activations that help reduce overfitting. This effectively acts as a form of built-in deterministic dropout regularization \cite{dropout}. However, this inactivity can also lead to a permanent reduction in the network's representational capacity, a phenomenon known as the "dying ReLU" problem. Research by Lu et al. \cite{lu2019dying} shows that the likelihood of a ReLU network experiencing neuron inactivity increases with network depth and decreases with network width. This makes deep ReLU architectures particularly challenging to train unless width is scaled accordingly, which in turn increases computational costs \cite{complexitydeeplearning}. Additionally, the piecewise linear nature of ReLU introduces robustness challenges \cite{goodfellow2015explainingharnessingadversarialexamples}: Input perturbations can deactivate essential neurons or activate previously inactive neurons that did not participate in training, potentially compromising the stability and performance of the model.

\begin{figure}
    \centering
    \includegraphics[width=0.75\linewidth]{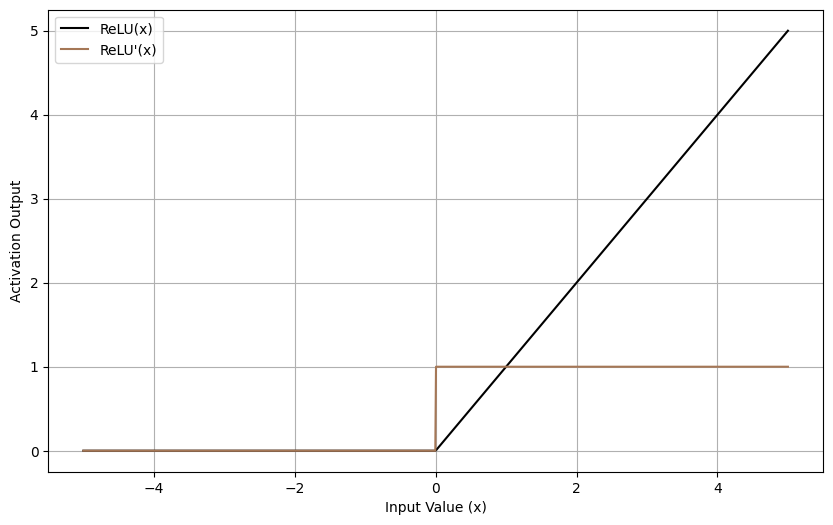}
    \caption[ReLU Activation Function and its First-Order Derivative.]%
    {\textbf{ReLU Activation Function and its First-Order Derivative.} Plot of $ReLU(x) = max(0,x) = \{x<0 : 0 , x\}$ and its first order derivative. For negative inputs, ReLU is 0. For positive inputs, ReLU is defined as the identity $x$.}
    \label{fig:RELUFigure}
    \vspace{1.0\baselineskip}
\end{figure}

\subsection{ReLU Variants}

Despite its drawbacks, ReLU remains a popular default choice for hidden activation functions and has inspired numerous variants. The Leaky ReLU (LReLU) \cite{LReLU}, defined as $LReLU(x)=max(0.01x,x)$, addresses the issue of inactive neurons by assigning a small constant slope to negative inputs. Similarly, Parametric ReLU (PReLU) \cite{PReLU}, $LReLU(x)=max(\alpha x,x)$, treats the slope of negative inputs as learnable parameter $\alpha$. While introducing additional learnable parameters in PReLU can increase the risk of overfitting, this can be managed with aggressive regularization and data augmentation techniques.

Random Leaky ReLU (RReLU) \cite{RReLU}, $RReLU(x) = max( \beta x,x)$, where $ \beta \sim \mathcal{U}(l,u)$ and $l,u \in [0,1)$ avoids the overhead of additional overfitting prevention by randomly setting the slope of the negative region using a uniform distribution over small positive values. This randomness introduces an implicit form of regularization, allowing RReLU to achieve lower training accuracies than ReLU, LReLU, and PReLU. These variants trade ReLU's deactivation region for unbounded behavior in an attempt to avoid the dying neuron problem. However, piecewise linear functions like these have been found to exhibit poor robustness \cite{xie2021smoothadversarialtraining}, making them susceptible to adversarial noise. As a result, smooth approximations of ReLU, such as $Softplus(x) = ln(1+e^x)$, may be more suitable for building secure applications.

\subsection{Exponential Linear Units}

Smooth nonlinearities also offer a significant advantage in their compatibility with second-order optimization algorithms like Natural Gradient Descent \cite{SecondOrderOptimization, shrestha2023naturalgradientmethodsperspectives}. Unlike first-order optimizers, which rely only on the gradient of the error function, second-order optimizers utilize the Hessian matrix, providing curvature information about the error surface. This additional information allows for more precise updates to model parameters, resulting in faster and more efficient convergence during training \cite{backprop}. 

The Exponential Linear Unit (ELU) \cite{elu} attempts to approach the Natural Gradient \cite{shrestha2023naturalgradientmethodsperspectives} in learning without the use of second-order optimizers by reducing the output bias commonly found in popular activation functions. This reduction in output bias results in a Fisher information matrix \cite{fisher1925theory} with small off-diagonal entries \cite{raiko2012deep}. As a result, ELU helps bring neural networks closer to Fisher Optimal Learning \cite{Amari1998}, where learning is guided by the natural gradient for improved convergence \cite{Schraudolph2012, LeCun1998, Lecun1991}. ELU is defined as the identity function for positive inputs, preserving the rapid convergence of the Rectified Linear Unit (ReLU). For negative inputs, ELU is defined as $ELU(x) = e^x - 1$, producing a small negative value instead of zeroing out the activation. Consequently, ELU saturates toward -1 rather than 0 as preactivation grows negative, meaning that ELU neurons are excluded from inference only when their preactivation is close to 0. This design sacrifices ReLU’s sparsity and deactivation in favor of capturing negative features. Although ELU has a continuous first derivative, its second derivative is not continuous, making it unsuitable for second-order optimization methods.


\begin{figure}
    \centering
    \includegraphics[width=0.75\linewidth]{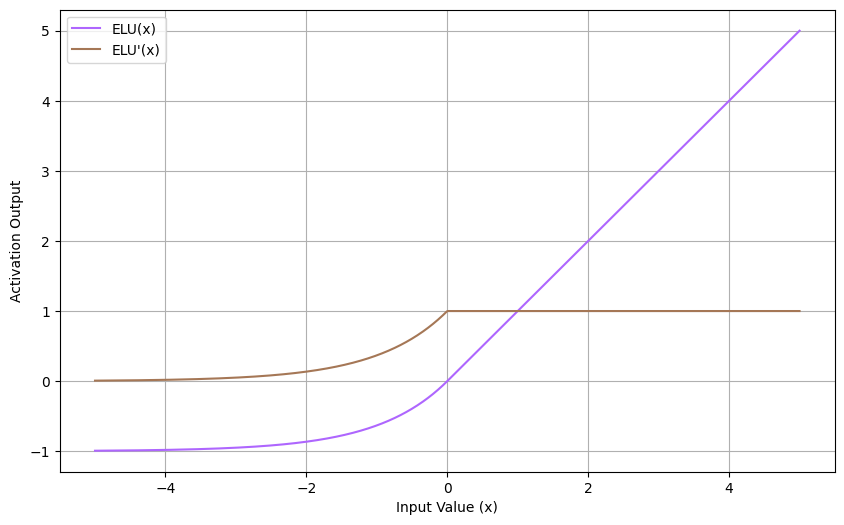}
    \caption[ELU Activation Function and its First-Order Derivative.]%
    {\textbf{ELU Activation Function and its First-Order Derivative.} Plot of $ELU(x)= {x<0 : e^x - 1 , x}$ and its first order derivative. For negative inputs, ELU is defined by $e^x - 1$ and thus saturates to an output of -1 as $x \to -\infty$. For positive inputs, ELU is defined as the identity $x$.}
    \vspace{1.0\baselineskip}
\end{figure}

\subsection{Sigmoid Linear Units}


Unlike ELU, smooth non-monotonic ReLU approximations like SiLU retain a dense deactivation region while offering the advantage of continuous, non-zero differentiability. The Sigmoid Linear Unit (SiLU)\cite{Ramachandran2017SwishAS}, also referred to as the Swish activation function, was discovered through Neural Architecture Search (NAS) \cite{NeuralArchitectureSearch}. Mathematically, SiLU is defined as $SiLU(x)=x \cdot \sigma(x)$, where $\sigma(x) = \frac{1}{1+e^{-x}}$. Its discovery involved constructing trees from univariate and bivariate functions, with a focus on balancing computational efficiency and expressivity. During the search, it was observed that simpler functions which could be generalized as linearly-scaled cumulative distribution functions (LSCDFs), were often favored. SiLU’s smooth, non-monotonic nature results in smoother optimization landscapes, aiding in the effective training of deeper networks by avoiding issues like sharp transitions and local minima. SiLU's seminal research \cite{Ramachandran2017SwishAS} has shown that SiLU outperforms traditional activation functions, such as ReLU and Leaky ReLU, across various tasks. These findings sparked the growing recognition of smooth, non-monotonic activation functions in deep learning \cite{GELU, Mish}.

\begin{figure}
    \centering
    \includegraphics[width=0.75\linewidth]{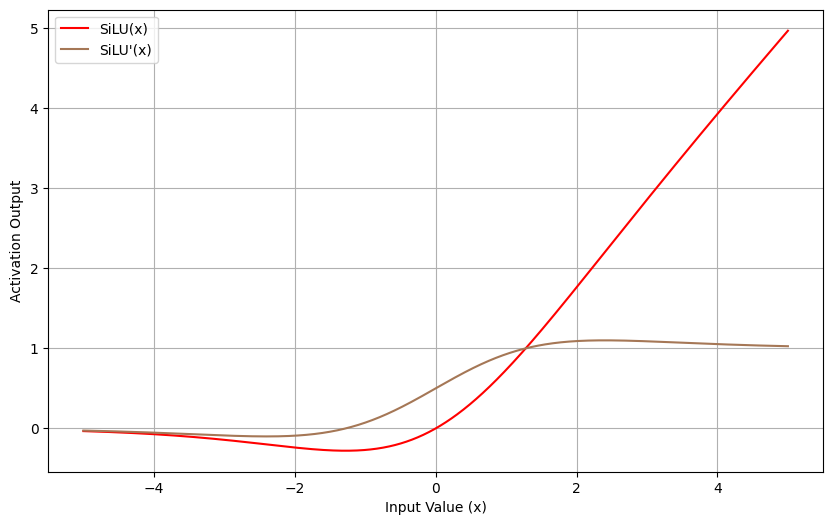}
    \caption[SiLU Activation Function and its First-Order Derivative.]
    {\textbf{SiLU Activation Function and its First-Order Derivative.} Plot of $SiLU(x)= x \cdot \frac{1}{1+e^{-x}}$ and its first order derivative. As input $x \to -\infty$, SiLU saturates to a deactivation output of 0. As input $x \to \infty$, SiLU's active region approximates the identity $x$.}
    \label{fig:SiLUFigure}
\end{figure}

\subsection{Gaussian Error Linear Units}

Hendrycks and Gimpel recognized the mathematical similarities between the inactive regions of LSCDFs, such as ReLU and SiLU, and the effects of dropout regularization \cite{GELU}. The formulation of GELU was a reinterpretation of the SiLU activation function, where a scaled error function was employed in place of the logistic cumulative distribution function found within SiLU. The result, $\frac{x}{2} \cdot [1+erf(x/ \sqrt{2})]$ theoretically allows GELU to leverage soft pseudo-ensembles \cite{pseudoensembles}, where negative activations are neglected and positive activations are preserved. Pseudo-ensembles refer to subsets of a neural network that temporarily become inactive, enabling network subgraphs to learn to perform tasks independently like traditional ensemble methods \cite{ensemblesReview}. This approach tends to enhance a model's robustness by benefiting from the many-advisors principle, common in ensemble machine learning models. Supporting this strategy, the non-monotonic GELU activation function has been shown to outperform both ReLU and ELU in the MNIST \cite{MNIST}, CIFAR-10, and CIFAR-100 \cite{cifar} datasets.

\begin{figure}
    \centering
    \includegraphics[width=0.75\linewidth]{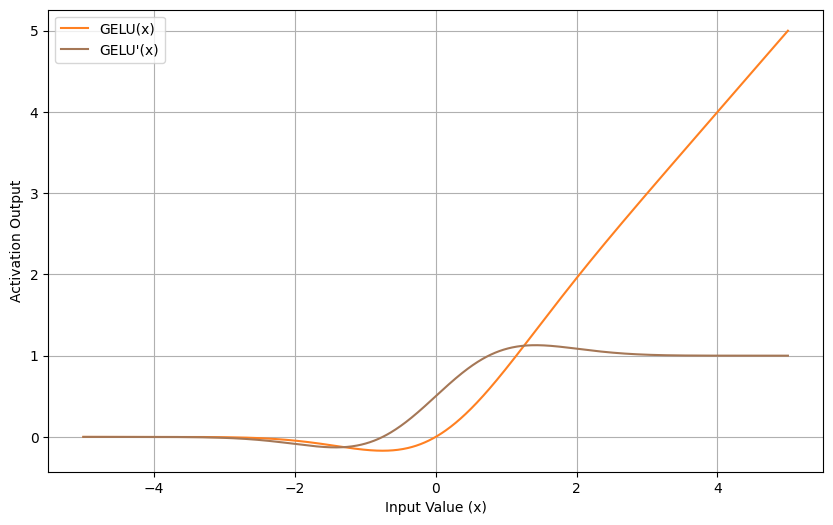}
    \caption[GELU Activation Function and its First-Order Derivative.]
    {\textbf{GELU Activation Function and its First-Order Derivative.} Plot of $GELU(x)= \frac{x}{2} \cdot (1+erf( \frac{x}{\sqrt{2}}))$ and its first order derivative. As input $x \to -\infty$, GELU saturates to a deactivation output of 0. As input $x \to \infty$, GELU's active region approximates the identity $x$.}
    \label{fig:GELUFigure}
    \vspace{1.0\baselineskip}
\end{figure}

\subsection{The Mish Activation Function}

The Mish activation function \cite{Mish}, defined as $Mish(x) = x \cdot tanh(ln(1+e^x))$, introduces a new LSCDF that is similar to SiLU and GELU. Mish was found while observing the causes of the beneficial SiLU properties, and was validated against other competing formulations with preliminary experimentation. The cumulative distribution function used, $tanh(ln(1+e^x))$, allows the Mish non-linearity to exhibit self-regularization and thus requires less external regularization by other means. This design choice allows Mish to demonstrate improvements over ReLU and SiLU, particularly in terms of stability when dealing with network depth and weight initialization strategies. The function also exhibits greater robustness to Gaussian noise than ReLU and SiLU in its introductory experiments \cite{Mish}.

\begin{figure}[h]
    \centering
    \includegraphics[width=0.75\linewidth]{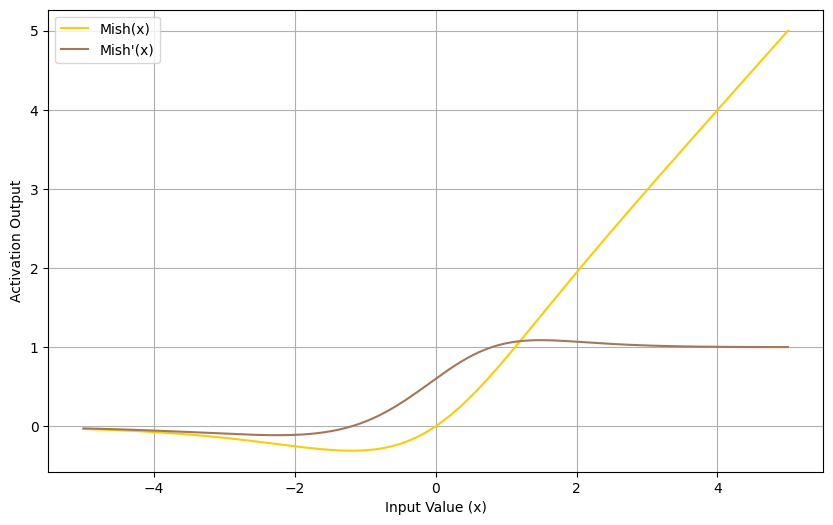}
    \caption[Mish Activation Function and its First-Order Derivative.]
    {\textbf{Mish Activation Function and its First-Order Derivative.} Plot of $Mish(x)= x \cdot tanh( ln(1 + e^x))$ and its first order derivative. As input $x \to -\infty$, Mish saturates to a deactivation output of 0. As input $x \to \infty$, Mish's active region approximates the identity $x$.}
    \label{fig:MishFigure}
    \vspace{1.0\baselineskip}
\end{figure}

\subsection{The Logish and Smish Activation Functions}

Similarly to Mish, the Logish function, $Logish(x) = x \cdot ln(1 + \frac{1}{1+e^{-x}}$ \cite{Logish}, achieves non-monotonic non-linearity with a self-regularizing effect, uniquely approaching a slope of $ln(2)$ as the input grows. The Smish function, $Smish(x) = x \cdot tanh(ln(1 + \frac{1}{1+e^{-x}})$ \cite{Smish}, further evolves Logish by applying the hyperbolic tangent to its smooth binary function, creating a new non-linearity that tends towards a slope of 0.6 as the input increases, though at a higher computational cost. The impact of slope reductions in Logish and Smish is still an area of active investigation, but early experimental results from their respective foundational studies indicate that both functions exhibit greater stability compared to widely used activation functions like ReLU, SiLU, and Mish. This is particularly evident on the CIFAR10 dataset, even under conditions where minimal L2 regularization is applied. 

Notably, the experiments demonstrating Smish's enhanced stability and performance were conducted without utilizing weight decay regularization. This absence of external regularization methods suggests that Smish possesses inherent self-regularization properties, achieving high accuracy without reliance on explicit regularization techniques. Consequently, Smish could be particularly advantageous in scenarios where stability and self-regularization are critical, but computational efficiency is negligible.


\begin{figure}
    \centering
    \includegraphics[width=0.75\linewidth]{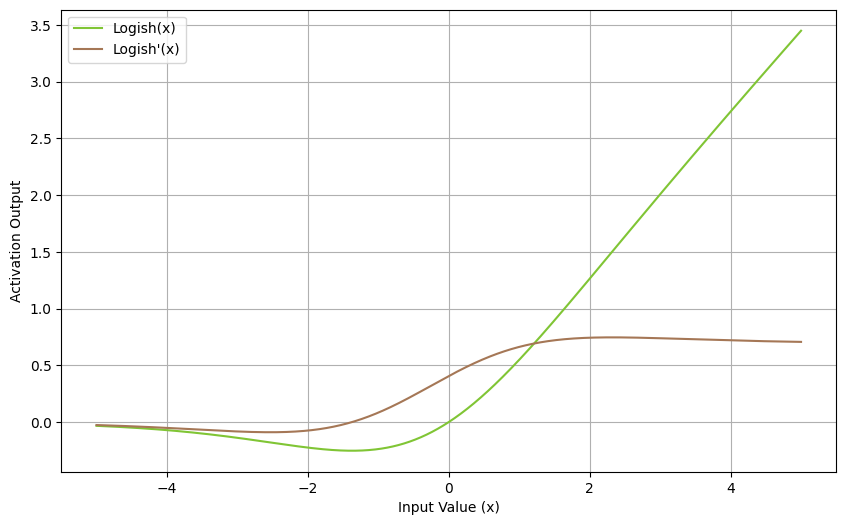}
    \caption[Logish Activation Function and its First-Order Derivative.]
    {\textbf{Logish Activation Function and its First-Order Derivative.} Plot of $Logish(x)= x \cdot ln(1 + \frac{1}{1+e^{-x}})$ and its first order derivative. As input $x \to -\infty$, Logish saturates to a deactivation output of 0. As input $x \to \infty$, Logish's active region approximates a linear growth with a slope of $ln(2)$.}
    \label{fig:LogishFigure}
    \vspace{4.0\baselineskip}
\end{figure}

\begin{figure}
    \centering
    \includegraphics[width=0.75\linewidth]{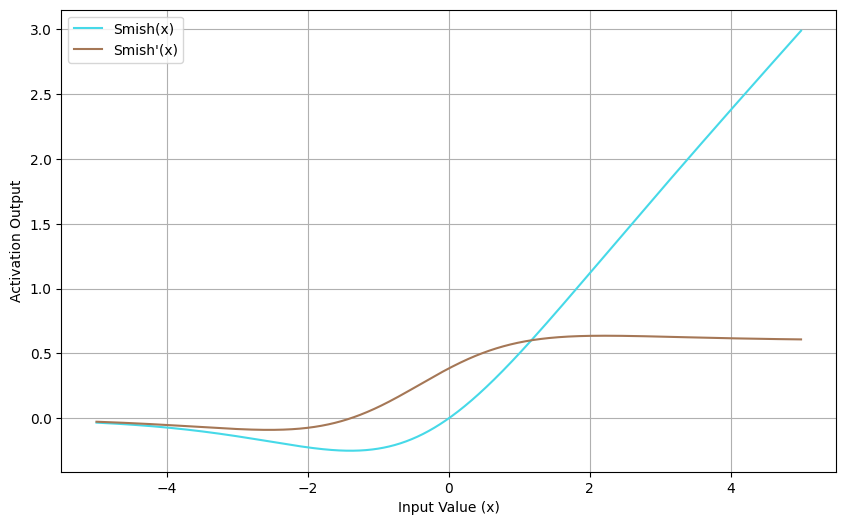}
    \caption[Smish Activation Function and its First-Order Derivative.]
    {\textbf{Smish Activation Function and its First-Order Derivative.} Plot of $Smish(x)= x \cdot tanh( ln(1 + \frac{1}{1+e^{-x}}))$ and its first order derivative. As input $x \to -\infty$, Smish saturates to a deactivation output of 0. As input $x \to \infty$, Smish's active region approximates a linear growth with a slope of 0.6.}
    \label{fig:SmishFigure}
    \vspace{1.0\baselineskip}
\end{figure}

\newpage
\section{Motivation}

Despite substantial progress in deep learning, the choice of activation functions has remained relatively unchanged, with ReLU continuing to dominate due to its computational efficiency and effective gradient propagation in the active region \cite{StrongGradients1, glorot_init, AFTrends, relusparse}. While ReLU’s simple formulation leads to rapid convergence in shallow networks, its limitations become apparent in deeper architectures. The well-known “dying ReLU” problem can cause significant capacity loss, hindering optimization and reducing overall model expressivity \cite{dyingrelu}. This issue becomes more pronounced as networks scale in depth and complexity, where maintaining stable gradient flow is critical for effective training \cite{lu2019dying}.

To address these issues, various alternatives such as ELU \cite{elu}, SiLU \cite{Ramachandran2017SwishAS}, GELU \cite{GELU}, and Mish \cite{Mish} have been proposed, each introducing modifications to smoothen activation transitions and reduce dead neurons. For example, ELU’s negative saturation encourages zero-centered activations, improving convergence stability. Similarly, non-monotonic functions like SiLU and GELU provide smoother gradients, preventing neuron saturation and improving gradient flow \cite{importantSmooth}. However, these designs often introduce increased computational complexity, and empirical evaluations have shown that their advantages are not universally consistent across diverse architectures and datasets \cite{AFTrends, inconsistent2, DUBEY202292, glorot_init, inconsistent3}.

Recent studies suggest that an ideal activation function should balance strong gradient propagation, zero-mean activations, smooth transitions, and computational efficiency \cite{SmoothBenefits, Ramachandran2017SwishAS, relusparse}. By incorporating negative activations, non-monotonic functions typically result in a more symmetrical distribution of activations around zero, which helps to accelerate convergence \cite{elu, DUBEY202292}. However, functions such as GELU and Mish, while possessing these desirable properties, are computationally demanding and produce weaker gradients, which slows down the learning process \cite{glorot_init}. This disconnect between theoretical advantages and practical performance suggests that there is still no single activation function that can effectively unify the strengths of existing methods.

This fragmentation in the design space motivates the need for an activation function that combines the strengths of ReLU’s learning efficiency \cite{relusparse} with the stability and generalization capabilities of smooth functions \cite{importantSmooth}. Our proposed activation function addresses these challenges by integrating smooth transitions, near-zero-mean activations, and robust gradient dynamics, enabling it to achieve consistent performance across a wide range of tasks. Supported by strong theoretical and empirical findings, our function is designed to overcome the limitations of current activations, leading to better convergence properties, enhanced stability, and improved generalization across both shallow and deep architectures.

By building on the strengths of its predecessors and addressing their individual weaknesses, our activation function establishes a new benchmark in activation design. It not only accelerates training and reduces computational overhead but also provides a more stable optimization landscape for large-scale models. This approach offers a unified solution that can set a new standard for activation functions, enabling more reliable and efficient learning in complex deep learning scenarios.

\newpage
\section{Formulation}

\subsection{Design Goals}


Our primary design goal for a hidden activation function is to provide neural network architectures with enough expressivity to approximate any unknown target function defined by the task at hand. Assuming these target functions are continuous, we can theoretically reduce this goal to achieving a universal approximator \cite{Cybenko1989Approximation}. Alongside expressivity, we seek an activation function that is computationally efficient for both forward and backward passes, reducing hardware demands and the duration of each training epoch. Additionally, we aim for an activation function that promotes fast convergence, lowering the number of epochs required for models to achieve success on hidden data. Combined, these requirements ensure training efficiency by minimizing both the time and number of epochs necessary to reduce loss on the training data.


In addition to training efficiency, maintaining training stability is crucial \cite{stabilitySGD}. This means that models using our activation function should achieve consistently favorable evaluations on both training and unseen data, even with slight variations in model and training configurations. Model configuration includes architectural choices like depth, layer width, and the types of layers used, while training configuration covers optimization-related decisions such as the choice of algorithm, learning rate, and weight decay coefficient. A particularly harmful form of instability occurs when a model’s parameters lead to numerically unstable outputs \cite{numericalStability}. When a neuron’s output overflows or underflows, it can cause the entire model to infer incorrectly, resulting in performance that barely exceeds random chance. Therefore, we aim to design an activation function that minimizes these instances of numerical instability, ensuring favorable and reliable training outcomes.


Our final set of goals focus on ensuring that the model's success on training data translates effectively to unseen data. To achieve this, we need to design an activation function that promotes strong generalization, allowing a neural network to maintain its performance beyond the training set. Traditionally, overfitting is managed at the model and optimization level by reducing model capacity \cite{DeepLearningTextbook} or using regularization techniques like dropout \cite{dropout}, weight decay \cite{Rumelhart1986LearningRB}, and batch normalization \cite{ioffe2015batchnormalizationacceleratingdeep}. It is well-known that combining various forms of regularization often leads to better generalization \cite{RethinkingGeneralization, goodfellow2013maxoutnetworks, dropout}. With this in mind, we aim to incorporate a modest yet meaningful level of self-regularization directly within our activation function, operating under the hypothesis that it could reduce the reliance on traditional methods. However, we must be cautious; introducing too much self-regularization could inadvertently limit the model's capacity. In addition, we seek to develop a robust activation function that enhances the model’s resilience, enabling it to perform reliably even when exposed to data with small perturbations.

\subsection{Strategies} 


Leshno et al. \cite{PolynomialUniversalApproximationLESHNO1993861} establish that a function can serve as a universal approximator if and only if it is continuous and non-polynomial. Thus, to satisfy our goal of creating a universal approximator, we must focus on developing a non-polynomial function. For computational efficiency, the activation function must also have a simple formulation with an easily differentiable first-order derivative. As demonstrated by $ReLU$, rapid convergence can be achieved through a straightforward function that maintains strong gradients in the active region, corresponding to positive outputs \cite{relusparse}. To fulfill this rapid convergence requirement, we can employ a linear unit that acts like the identity for positive inputs. However, we should avoid growth that exceeds the identity, as this could lead to exploding gradients and introduce numerical instability into the model. By maintaining this identity-like behavior for positive inputs, we reduce the risk of vanishing gradients in the active region. Additionally, it is crucial to sustain nonzero gradients in the saturation region of our linear unit, helping to prevent subgraphs in the neural network from stalling learning—a common issue in $ReLU$ architectures \cite{lu2019dying}.

To enhance learning stability in neural networks, we first analyze the strategy behind ELU's formulation \cite{elu}. ELU uses approximate odd symmetry near the origin, producing negative outputs for all negative inputs. This design enables the saturation region to counterbalance the linear region, pulling the expected output closer to zero for mean-zero inputs \cite{LecunSymmetricAdvantage.66.2396}. ELU achieves this by avoiding deactivation for negative inputs, adopting a piecewise definition that saturates to -1 as inputs approach $-\infty$. We hypothesize that an activation function with a single definition and gradual deactivation could further improve stability. Non-monotonic activations often exhibit odd symmetry near zero, reducing output bias for input distributions with small standard deviations, as seen with standard normalization techniques \cite{Norm1, Norm2}. Additionally, minimizing nonlinear terms may reduce numerical instability by limiting the chances of underflow or overflow. Thus, we aim for a simple formulation that aligns with our computational efficiency goals.



Recognizing that generalization is enhanced by combining various regularization techniques \cite{RethinkingGeneralization, goodfellow2013maxoutnetworks, dropout}, we aim to develop a self-regularizing activation function. Activation functions like GELU \cite{GELU}, Mish \cite{Mish}, Logish \cite{Logish}, and Smish \cite{Smish} are notable for their inherent self-regularization properties. For example, Hendrycks and Gimpel \cite{GELU} designed GELU as the expected value of a stochastic regularizer, while the self-regularization in Mish, Logish, and Smish arises from additional terms in their formulations. This observation leads us to hypothesize that a new activation function with implicit regularization and enhanced generalization is likely to belong to the class of smooth, non-monotonic nonlinearities \cite{Mish, Logish, Smish}. Furthermore, Rosca et al. \cite{importantSmooth} emphasize the critical role of smooth activation functions in improving both generalization and uncertainty estimates. By pursuing a smooth, non-monotonic activation function, we aim to harness the combined benefits of smoothness and implicit self-regularization.

Beyond traditional generalization, we aim for models using our activation function to also perform well on unseen data corrupted by small perturbations \cite{robustnessseminal}. Xie et al. \cite{xie2021smoothadversarialtraining} have demonstrated that smooth activation functions like GELU and SiLU offer both accuracy and robustness advantages over piecewise linear activation functions. This inherent smoothness, characterized by differentiability at every point, is a defining feature of analytic functions \cite{analytic}. An analytic function can be expressed as a convergent power series, which involves the representation of infinitely many derivatives. Analytic functions are also compatible with higher-order optimization techniques, such as Natural Gradient Descent \cite{shrestha2023naturalgradientmethodsperspectives}, which utilize the curvature of the loss function rather than just its value, leading to more stable learning with fewer steps until convergence \cite{SecondORder2, SecondOrder1, SecondOrder3, backprop}. However, as demonstrated by ELU, it is possible to approach the benefits of second-order optimization techniques without their direct use by incorporating a non-linearity that exhibits a degree of odd symmetry around the origin. This feature helps reduce forward output bias, thereby enhancing the learning efficiency by approach Fisher optimal learning \cite{LecunSymmetricAdvantage.66.2396, Schraudolph2012, LeCun1998, Lecun1991, Amari1998}.


\subsection{Design Requirements}

Based on our developed strategies, the search for an activation function will be directed by the following mathematical requirements:

\begin{itemize}
\item Should approximate linearity as inputs approach $\infty$
\item Must be analytic, or infinitely differentiable at all points, without collapsing to zero
\item Should exhibit near odd symmetry about $y = -x$ near the origin
\item Must include a dense inactive region that saturates towards zero
\item Should demonstrate minimal gradient decay as the output saturates towards zero
\item Should have low computational complexity
\item Must be non-polynomial to qualify as a universal approximator
\end{itemize}

In practice, we discovered that combining conflicting heuristics can lead to a more focused mathematical investigation. Thus, we present our goal summary in the form of the following dichotomies:

\begin{itemize}
\item The non-linearity must be nearly linear in the active region while remaining analytic. This ensures a convergence rate similar to ReLU and the identity, while being infinitely differentiable at all points to enhance stability and robustness.
\item The nonlinearity should activate with both positive and negative values while having a dense inactive region. This allows the gradient of the unit to align with the natural gradient, while still retaining the ability to deactivate or drop out a neuron during training.
\item The nonlinearity must asymptotically decrease towards 0 as inputs become negative, but should minimize the rate at which gradients approach 0 in the inactive region. This gives the optimizer the flexibility to deactivate a neuron without permanently losing the option to reactivate it later.
\item The nonlinearity should be definable with minimal mathematical complexity without being polynomial or suffering from exploding gradients. This results in a computationally efficient universal approximator that can be trained with stability.
\end{itemize}

Additionally, the new non-linearity must perform competitively with ReLU when used in configurations optimized for ReLU-based architectures. This practical requirement ensures that the new activation function can serve as a viable alternative to ReLU and be seamlessly integrated into existing deep learning projects.



\subsection{Proposed Implementation}


After developing several unstable activation functions, we began closely analyzing the Mish activation function \cite{Mish}. Mish, defined as $Mish(x)=x \cdot tanh(ln(1+e^x))$, was noted in its foundational research for offering impressive stability and robustness compared to earlier functions. However, its drawbacks, namely computational inefficiency and damped gradients, prevent it from surpassing ReLU in convergence efficiency in practice \cite{StrongGradients1, glorot_init, relusparse}. Recognizing Mish's potential, we identified that by removing its logarithmic term and the associated unit addition, we could significantly enhance its computational efficiency and gradient strength. 
This led to the creation of the Hyperbolic Tangent Exponential Linear Unit, $TeLU(x) = x \cdot \tanh(e^x)$, which demonstrated superior convergence efficiency \cite{our_old}. TeLU was independently discovered by empirical observations in \cite{tanhexp}. In contrast, our work not only provides theoretical foundations but also extensive empirical evidence demonstrating why TeLU achieves superior convergence efficiency and serves as a better drop-in replacement for ReLU-based systems. \footnote{In the discussion section, we show our key contributions and key differences compared to \cite{tanhexp} }. 
According to the taxonomy proposed by Apicella et al. \cite{APICELLA202114}, TeLU is a classic, fixed-shape activation function.


We found that our mathematical and experimental requirements are best met by TeLU, shown in Figure \ref{fig:TeLUFigure}. TeLU combines the linear properties of traditional activation functions with the nonlinear advantages of exponential and hyperbolic tangent functions. This blend strikes a balance between promoting efficient learning and avoiding exploding gradients. TeLU's design, utilizing the hyperbolic tangent of the exponential function, naturally moderates the output, keeping it within a manageable range. Unlike earlier activation functions, TeLU provides a smooth transition across the origin, enhancing gradient flow and contributing to more stable and consistent learning dynamics \cite{importantSmooth, DUBEY202292}. Figure \ref{fig:AllActivationsFigure} illustrates this behavior, highlighting TeLU's continuity and slower saturation compared to other state-of-the-art functions.

\begin{figure}
    \centering
    \includegraphics[width=0.75\linewidth]{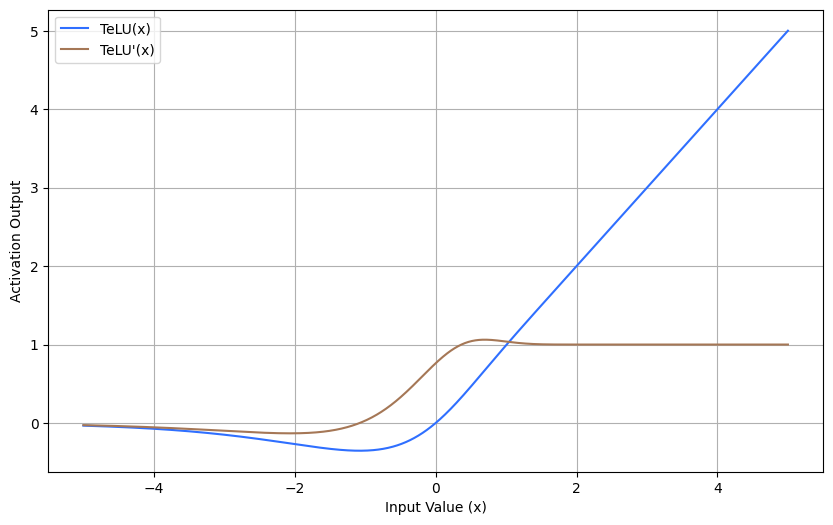}
    \caption[TeLU Activation Function and its First-Order Derivative.]
    {\textbf{TeLU Activation Function and its First-Order Derivative.} Plot of $TeLU(x)= x \cdot tanh( e^x )$ and its first order derivative. As input $x \to -\infty$, TeLU slowly saturates to a deactivation output of 0. As input $x \to \infty$, TeLU's active region quickly approximates the identity $x$.}
    \label{fig:TeLUFigure}
    \vspace{1.0\baselineskip}
\end{figure}



Our formulation is in line with the successful heuristics identified in the SiLU Neural Architecture Search \cite{Ramachandran2017SwishAS}, which emphasizes the effectiveness of linearly-scaled cumulative distribution functions with minimal terms. This category includes activation functions such as ReLU, SiLU, GELU, and Mish, which are all known for their reliable performance as validated by independent research \cite{DUBEY202292, Swish3rdParty}. The similarity of TeLU to both GELU and SiLU suggests that it may share their robustness properties, as indicated in studies on smooth activation functions \cite{xie2021smoothadversarialtraining, importantSmooth}. This consistency with existing literature is essential, as it strengthens the potential of TeLU to achieve comparable levels of success. The alignment with established results not only supports the validity of TeLU's design but also increases the likelihood of its adoption in diverse deep learning applications. To illustrate, Figure \ref{fig:AllActivationsFigure} presents a comparison of the main activation functions discussed in this paper, while Figure \ref{fig:AllDerivativesFigure} displays their corresponding first-order derivatives.



\begin{figure}
    \centering
    \includegraphics[width=0.75\linewidth]{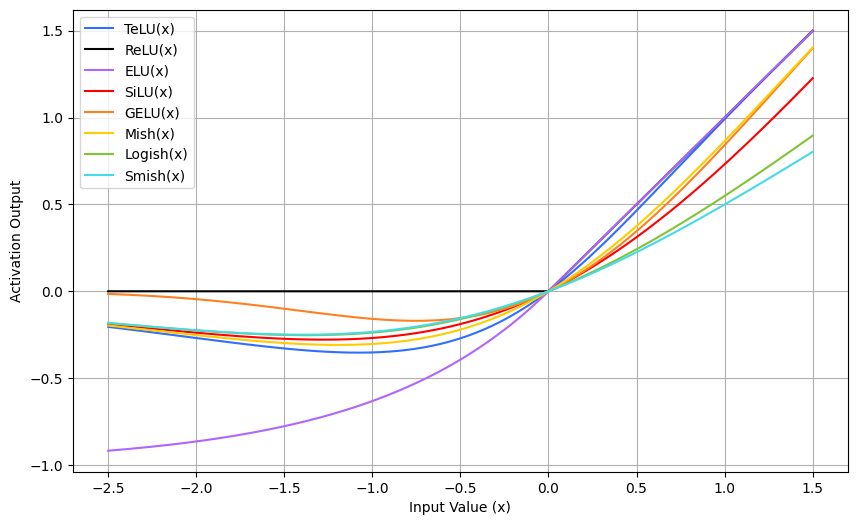}
    \caption[Focused Set of Linear Unit Activation Functions.]
    {\textbf{Focused Set of Linear Unit Activation Functions.} Plot depicts our focused group of linear unit activation functions that will commonly be used throughout analysis and experiments.}
    \label{fig:AllActivationsFigure}
    \vspace{1.0\baselineskip}
\end{figure}

\begin{figure}
    \centering
    \includegraphics[width=0.75\linewidth]{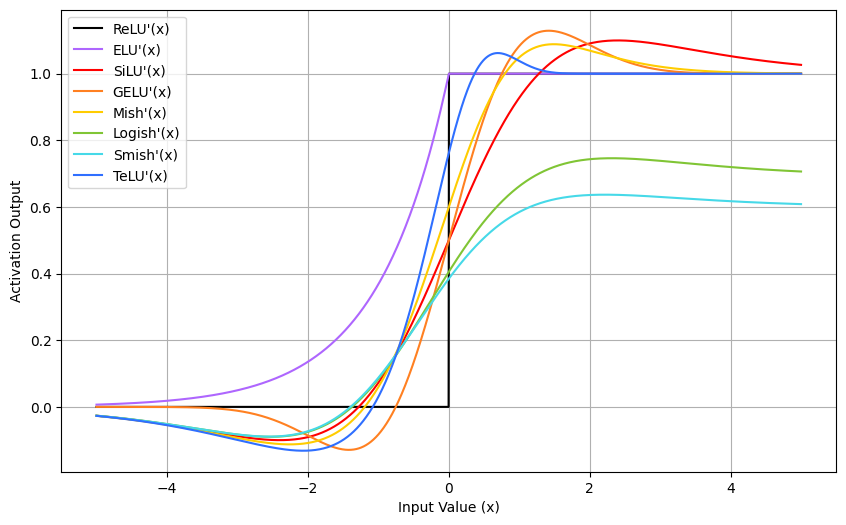}
    \caption[First Derivatives of Focused Set of Linear Unit Activation Functions.]
    {\textbf{First Derivatives of Focused Set of Linear Unit Activation Functions.} Plot depicts the first-order derivatives of our focused group of activation functions.}
    \label{fig:AllDerivativesFigure}
    \vspace{1.0\baselineskip}
\end{figure}

\newpage
\section{Theoretical Framework}
\label{sect:theorysection}


This section provides an in-depth examination of the theoretical advantages of the TeLU activation function, expanding on key principles of activation function design and their impact on neural network performance. It explores essential properties like gradient behavior, convergence speed, computational efficiency, and overall stability, laying the theoretical groundwork for TeLU’s potential benefits. This analysis demonstrates how TeLU addresses common deep learning challenges, including inefficient learning, the vanishing gradient problem, and computational complexity.

\subsection{Persistent Gradients of the Saturation Region}
\label{subsect:persistentgradtheory}

We have observed that lower-bounded activation functions are generally advantageous in deep learning. Apart from certain ReLU variants, all the activation functions we’ve discussed are lower-bounded. This aligns with the behavior of biological neurons, which either remain inactive or fire at a specific frequency. Interestingly, even without an initial bias toward lower-bounded functions, Nader and Azar’s evolutionary neural architecture search (NAS) demonstrated a preference for them across all tested datasets. From a many-advisors perspective, it seems that most tasks inherently favor lower-bounded functions. Moreover, we adhere to the heuristic that an activation function should saturate to an output of 0 when a neuron becomes inactive, making neuron inactivity analogous to dropout during inference. However, this presents the challenge of neurons being irreversibly dropped out during training if the gradient in the inactive region decays too quickly to zero.

To address this, our nonlinearity is designed to saturate to 0 when a neuron is inactive while ensuring that the gradient in the inactive region remains above zero for as long as possible. This approach maximizes the chances of the optimizer reactivating a neuron when needed. Ideally, just as we desire strong gradients in the active region, we also aim for relatively strong gradients in the saturating inactive region. To evaluate this aspect of each nonlinearity's curvature, we first examine the graph of its first derivative. Figure \ref{fig:PersGradFirstDervs} illustrates the absolute value of each activation function's first derivative. The plot it shown over negative values near the origin to give an idea about the relative rate that each function's derivative approaches 0. While some gradients clearly saturate faster than others, we sought a clearer comparison of their decay rates.

\begin{figure}
    \centering
    \includegraphics[width=0.75\linewidth]{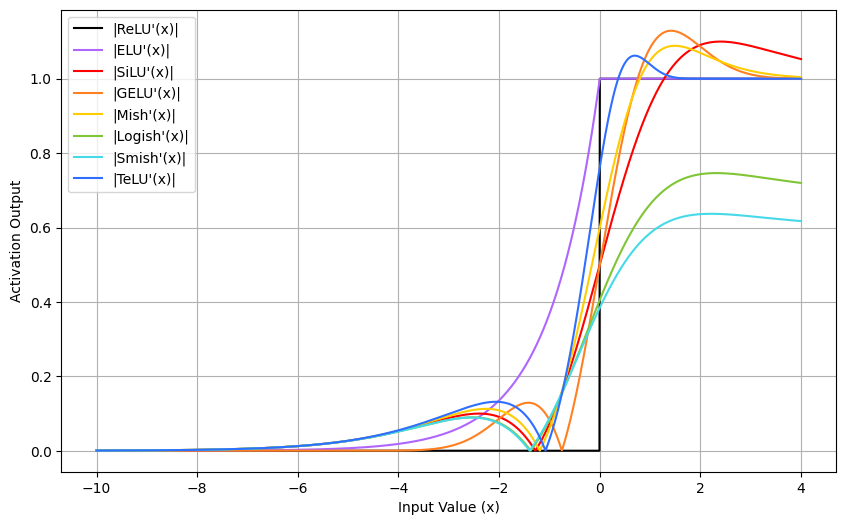}
    \caption[Saturation of Activation Function as Input Grows Negative.]
    {\textbf{Saturation of Activation Function as Input Grows Negative.} Shows the absolute value of the first derivatives of TeLU, ReLU, ELU, SiLU, GELU, Mish, Logish, and Smish. We notice how some derivatives approximate 0 faster than others, indicating that they decay towards 0 at varying rates.}
    \label{fig:PersGradFirstDervs}
    \vspace{1.0\baselineskip}
\end{figure}

\begin{figure}
    \centering
    \includegraphics[width=0.75\linewidth]{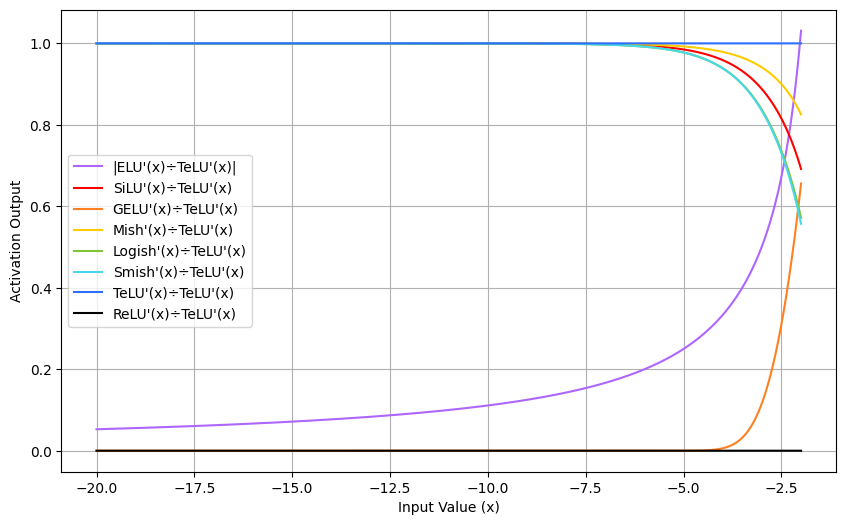}
    \caption[Quotient Comparisons of Vanishing Gradients as Input Grows Negative.]
    {\textbf{Quotient Comparisons of Vanishing Gradients as Input Grows Negative.} Shows quotient comparison comprising first derivatives of each activation divided by TeLU's first derivative. Plotted values less than one show that TeLU's derivatives saturate slower towards 0 than others at small negative values. The dark blue line shows the derivative of TeLU divided by itself, which is equivalent to 1, as a means to easily compare to other, lesser, quotients in this range.}
    \label{fig:RelativeGradients}
    \vspace{1.0\baselineskip}
\end{figure}

Figure \ref{fig:RelativeGradients} addresses this by showing the ratio of TeLU’s absolute derivative to each derivative. This comparison reveals that most non-monotonic nonlinearities maintain gradients that decay more slowly than their monotonic counterparts, albeit with the trade-off of a singular point at zero. This trade-off can be seen as a necessary compromise for gradient persistence. In practice, these solutions emerge primarily due to underflow in lower-precision floating-point variables, as they are likely irrational numbers. Additionally, Figure \ref{fig:RelativeGradients} indicates that TeLU exhibits stronger gradients at the onset of inactivity compared to other functions. To gain a clear understanding of the rates of gradient decay, we extend the concept of asymptotic growth classes to also encompass asymptotic decay. Therefore, we define the $O$, $\Omega$, and $\Theta$ asymptotic classes as:

\begin{Def}
    $f(n) \in O(g(n))$ if and only if there exists positive constants $c$ and $n_0$ such that $f(n) \leq c \cdot g(n)$ for all $n \geq n_0$; where $n \in \mathbb{Z}$
\end{Def}

\begin{Def}
    $f(n) \in \Omega(g(n))$ if and only if there exists positive constants $c$ and $n_0$ such that $c \cdot g(n) \leq f(n)$ for all $n \geq n_0$; where $n \in \mathbb{Z}$
\end{Def}

\begin{Def}
    $f(n) \in \Theta(g(n))$ if and only if there exists positive constants $c_1$, $c_2$, and $n_0$ such that $c_1 \cdot g(n) \leq f(n) \leq c_2 \cdot g(n)$ for all $n \geq n_0$; where $n \in \mathbb{Z}$
\end{Def}

In traditional terms, a function $f(x)$ belongs to the class $O(g(x))$ if it is bounded above by a scaled version of $g(x)$ beyond a certain point. Similarly, $f(x)$ is in the class $\Omega(g(x))$ if $g(x)$ serves as a lower bound for $f(x)$ beyond a fixed input value. The function $f(x)$ is classified as $\Theta(g(x))$ if and only if it belongs to $O(g(x))$ and $\Omega(g(x))$. Common asymptotic growth classes include $O(1)$, $O(x)$, $O(x^c)$, $O(c^x)$, $O(x!)$, where, $c$ is a positive constant. $a!$ expresses the factorial operation on variable $a$, which is $\prod_{i=a}^{1} i$ when $a \in \mathbb{Z}^+$. To extend the factorial operation to real numbers, we utilize the Gamma function $\Gamma (z) = \int_{0}^{\inf} t^{z-1}e^{-t}dt$.

To define asymptotic decay classes, we invert these growth classes, resulting in classifications like $\Theta(1)$, $\Theta(1/x)$, $\Theta(1/x^2)$, $\Theta(1/e^x)$, and $\Theta(1/x!)$. When we apply this framework to non-monotonic activation functions, we find that the limit of some functions divided by the corresponding $g(x)$ functions does not converge to a nonzero constant. Therefore, we extend the decay classes with $\Theta(x/e^x)$ and $\Theta(1/(x^2)!)$ to better describe the asymptotic decay of functions like TeLU, SiLU, Mish, Logish, and Smish. Interestingly, we discover that GELU does not belong $\Theta(1/x!)$ due to having a greater rate of decay, leading us to classify GELU as an element of $O(1/x!)$ and $\Omega(1/(x2)!)$. We summarize these asymptotic decay classifications in \ref{tab:Asymptotic}, with each function mirrored about x=0 to more clearly illustrate the mathematical notation. 

\begin{table}[]
\centering
 \caption[Asymptotic Decay Qualifications.]
 {\textbf{Asymptotic Decay Qualifications.} Shows the asymptotic decay classes that describes the rate at which each activation function's derivative approaches 0. If the saturation of an activation function $f(x)$ can be described by a tight bound, $\Theta (g(x))$, then the limit of $\frac{-f(x)}{g(x)}$ as $x \to \infty$ evaluates to a non-zero integer. We could not determine this tight bound for GELU, but instead found upper $O$ and lower $\Omega$ bounds. ReLU is simply defined as 0 for negative inputs, so it does not exhibit a particular decay. }
    \begin{tabular}{||c c c||} 
 \hline
 Function & TeLU Limit & Asymptotic Class \\
 \hline\hline
 TeLU & 1.0 & $\frac{d}{dx}TeLU(-x)\in\Theta(\frac{x}{e^x})$ \\[1.5pt] 
 \hline 
 \\[-1em]
 ReLU & N/A & N/A \\[1.5pt] 
 \hline
 \\[-1em]
 ELU & $x$ & $\frac{d}{dx}ELU(-x)\in\Theta(\frac{1}{e^x})$\\[1.5pt] 
 \hline
 \\[-1em]
 SiLU & 1.0 & $\frac{d}{dx}SiLU(-x)\in\Theta(\frac{x}{e^x})$\\[1.5pt] 
 \hline
 \\[-1em]
 GELU & $e^x$ & $\frac{d}{dx}GELU(-x)\in O(\frac{1}{x!}), \Omega(\frac{1}{(x^2)!})$ \\[1.5pt] 
 \hline
 \\[-1em]
 Mish & 1.0 & $\frac{d}{dx}Mish(-x)\in\Theta(\frac{x}{e^x})$\\[1.5pt] 
 \hline
 \\[-1em]
 Logish & 1.0 & $\frac{d}{dx}Logish(-x)\in\Theta(\frac{x}{e^x})$\\[1.5pt] 
 \hline
 \\[-1em]
 Smish & 1.0 & $\frac{d}{dx}Smish(-x)\in\Theta(\frac{x}{e^x})$\\[1.5pt]  
 \hline
\end{tabular}
\label{tab:Asymptotic}
\end{table}

We further demonstrate the relevance of these asymptotic decay classifications by identifying the approximate regions where each nonlinearity’s gradient underflows when using Pytorch's float32 precision. To ensure a fair comparison, we defined each function using its typical subfunctions as listed in Table 4. For the first derivative of each nonlinearity, we examined the domain subset $[-200,0]$, recording instances where the gradient reached zero with a step size of 0.0001. We record the results in \ref{tab:Numerical} with a precision of two floating point decimal places.

\begin{table}[]
\centering
 \caption[Numerical Inactivity of Activation Functions.]
 {\textbf{Numerical Inactivity of Activation Functions.} Table shows the domain at which each activation function evaluation results in numerical underflow when using a floating point precision of 32 bits. We also determine the evaluations of each activation function at $x=-10,-100$ to show the persistence of gradients both near and far from the origin. }
    \begin{tabular}{||c c c c||} 
 \hline
 $f(x)$ & Null Domain & $f'(-10)$ & $f'(-100)$ \\
 \hline\hline
 $TeLU(x)$ & $(-\infty,-103.98]$ & -4.2826e-06 & -1.4013e-42 \\ 
 \hline
 $ReLU(x)$ & $(-\infty,0.0]$ & 0.0 & 0.0 \\
 \hline
 $ELU(x)$ & $(-\infty,-103.98]$ & 7.5826e-10 & 3.7835e-44\\
 \hline
 $SiLU(x)$ & $(-\infty,-103.98]$ & -4.2826e-06 & -1.4013e-42\\
 \hline
 $GELU(x)$ & $(-\infty,-14.42]$ & -9.2447e-18 & 0.0 \\
 \hline
 $Mish(x)$ & $(-\infty,-103.98]$ & -4.2309e-06 & -1.4153e-42\\
 \hline
 $Logish(x)$ & $(-\infty,-103.98]$ & -4.2309e-06 & -1.4153e-42\\
 \hline
 $Smish(x)$ & $(-\infty,-103.98]$ & -4.2309e-06 & -1.4153e-42\\
 \hline
\end{tabular}
\label{tab:Numerical}
\end{table}

Interestingly, the non-monotonic zero-gradient singularities were not detected, suggesting that they are unlikely to be common during training. Additionally, it is evident that TeLU, SiLU, Mish, Logish, and Smish delay their underflow regions due to their slower asymptotic decay. In contrast, GELU and ELU exhibit broader zero-gradient regions as a result of numerical underflow. ReLU, on the other hand, consistently exhibits zero gradients throughout its entire inactive region, which is characteristic of the "dying ReLU" problem. However, the shared underflow regions observed in TeLU, SiLU, Mish, Logish, and Smish arise from their exponential subfunctions experiencing numerical underflow, which causes them to default to 0. As a result, the sigmoid component of these functions evaluates to 0, leading to a product of 0 when multiplied by the identity function. Similarly, the GELU activation function is driven to 0 due to the behavior of its error function implementation.

To better understand the comparative numerical decay of these nonlinearities, we examined the values of their derivatives at various points within their inactive regions. Although our full experiment considered 100 different points, for simplicity, we present the gradient values at inputs -100 and -10, which we believe sufficiently capture the overall behavior. These values confirm that TeLU and SiLU maintain stronger gradients at the onset of their post-singularity inactive regions. In contrast, Mish, Logish, and Smish initially exhibit weaker gradients but eventually lead to slightly stronger gradients. This relationship is similar to the comparison between the Hyperbolic Tangent and Sigmoid nonlinearities: while the Hyperbolic Tangent maintains stronger gradients earlier in its inactive region, the Logistic Sigmoid eventually shows slightly stronger gradients deeper into its inactive state.

\subsection{Near-Linearity of Active Region}
\label{subsect:nearlinsubsection}


Activation functions that mimic the identity function in their unbounded regions benefit from strong gradients in their active areas, resulting in more efficient training \cite{StrongGradients1, glorot_init}. These strong unit gradients prevent learnable parameter updates from being scaled down by the upstream partial derivatives of saturating functions during backpropagation, allowing the model to reach local minima in the loss landscape more rapidly. At this stage, learning rate schedulers can then lower the effective learning rate to facilitate precise convergence. In contrast, activation functions with sub-identity growth in the active region often require specialized learning rate schedulers to counterbalance their weaker gradients. However, this added complexity increases inter-dependencies among training hyperparameters, leading to a less modular design.


To achieve similar benefits in convergence rate and modularity with a smooth activation function, we aimed to approximate linearity in the active region. In this context, $TeLU(x) = x \cdot \tanh(e^x)$ can be viewed as a variant of $Mish(x) = x \cdot \tanh(\ln(1 + e^x))$, but with the logarithmic term removed, allowing the hyperbolic tangent to saturate to 1 more rapidly. The identity function then multiplies with this saturated hyperbolic tangent, resulting in near-linearity in the active region. As shown in Table \ref{tab:NearLinearity}, TeLU’s activation for positive inputs aligns more closely with the identity than any other smooth function considered, as measured by both L1 and L2 metrics. The L1 distance metric from linearity is calculated as $\int_0^\infty |f(x) - mx| dx$ for each activation function $f(x)$, where $m$ represents the slope that $f(x)$ approaches as input grows large. Similarly, the L2 distance metric is evaluated as $\int_0^\infty (f(x) - mx)^2 dx$. 


\begin{table}[]
\centering
 \caption[Near-Linearity Distance Calculations.]
 {\textbf{Near-Linearity Distance Calculations.} Calculating the sums of L1 and L2 distances between each activation function's active region and the linear function each non-linearity approximates throughout $0 \leq x < \infty$. Logish approximates a slope of $ln(2)$ as $x \to \infty$, while Smish approximates a slope of 0.6. All other activation functions either approximate or match the identity function $x$ in growth throughout their active regions. We use these respective slopes to calculate each distance.}
    \begin{tabular}{||c c c c||} 
 \hline
 Function & L1 Distance & L2 Distance & Max Slope \\
 \hline\hline
 TeLU & \textbf{0.0273} & \textbf{0.0008} & \textbf{1.0} \\ 
 \hline
 SiLU & 0.8225 & 0.1582 & \textbf{1.0} \\
 \hline
 GELU & 0.2500 & 0.0309 & \textbf{1.0} \\
 \hline
 Mish & 0.2407 & 0.0238 & \textbf{1.0} \\
 \hline
 Logish &  0.4289 & 0.0436 & $ln(2)$ \\
 \hline
 Smish & 0.2887 & 0.0201 & 0.6 \\
 \hline
\end{tabular}
\label{tab:NearLinearity}
\end{table}


Among the listed activation functions, TeLU stands out by exhibiting an active region that most efficiently and rapidly approximates the identity function. This is significant because, in theory, stronger gradients enhance gradient propagation during backpropagation, thereby minimizing the vanishing gradient problem. However, it is important to avoid gradients that persistently exceed 1, as they can lead to exploding gradients. Functions that grow linearly with a gradient of 1 effectively mitigate both vanishing and exploding gradient issues. Piecewise functions like ReLU and ELU are defined as the identity for positive inputs but lack persistent gradients in their negative regions. This deficiency can result in neurons becoming inactive or "dying," a common issue in ReLU-based networks when the gradient becomes zero for negative inputs. In contrast, TeLU offers persistent, non-zero gradients across its entire domain, reducing the likelihood of neuron inactivation. As discussed in Subsection \ref{subsect:persistentgradtheory}, the persistent gradients provided by TeLU facilitate more stable and consistent weight updates, leading to smoother training dynamics. The combination of persistent gradients in the deactivation region and strong unit gradients in the active region enables neural networks using TeLU to converge faster than those employing other activation functions. This makes TeLU a superior choice for achieving efficient and robust training outcomes.

\subsection{Runtime Efficiency}
\label{subsect:runtimeefficiencytheory}

The efficiency of a neuron during forward and backward passes can be significantly enhanced by a nonlinearity with a simple mathematical formulation. Many powerful optimization methods exist that allow major speedups in numerical calculations. Approaches such as dynamic programming \cite{DynamicProgHWACC}, linear interpolation \cite{NumericalOptimizationHWACC}, low-level language implementations \cite{ProgrammingLangSpeedupHWACC}, and machine learning hardware accelerators \cite{HWACCQuantum} \cite{HWACCTPU} \cite{HWACCQuantum} can each provide vast speedups to computations. Together, these optimization methods can provide indeterminately large performance speedups to activation function computation. These, however, come at engineering, hardware, or power costs. In this subsection, we abstract away such optimizations and focus on the baseline definitions of each non-linearity. We define these expressions to contain only common expressions, namely: $x$, $e^x$, $ln(x)$, $tanh(x)$, maximum, piece-wise, and the error function. We express each non-linearity in their common forms in \ref{table:baselineFormulations}, which reflects how each are defined in their initial works \cite{ReLU, elu, Ramachandran2017SwishAS, GELU, Mish, Logish, Smish}.

\begin{table}[]
\centering
 \caption[Baseline Definitions of Activation Functions.]
 {\textbf{Baseline Definitions of Activation Functions.} Base formulations of activation functions used to determine computational complexity in this subsection. Legal terms include only expression with universal recognition in mathematics, to provide a fair comparison.}
    \begin{tabular}{||c c||} 
 \hline
 Function & Algebraic Definition \\
 \hline\hline
 TeLU & $x \cdot tanh(e^x)$ \\ 
 \hline
 ReLU & $maximum(0,x)$ \\
 \hline
 ELU & if $ x<0$ : $e^x - 1$; else: $x$\\
 \hline
 SiLU & $x \cdot \frac{1}{1+e^{-x}}$\\
 \hline
 GELU & $x \cdot \frac{1}{2}(1+erf(\frac{x}{\sqrt{2}}))$ \\
 \hline
 Mish & $x \cdot tanh(ln(1+e^x))$\\
 \hline
 Logish & $x \cdot ln( 1 + \frac{1}{1+e^{-x}})$\\
 \hline
 Smish & $x \cdot tanh(ln( 1 + \frac{1}{1+e^{-x}}))$\\
 \hline
\end{tabular}
\label{table:baselineFormulations}
\end{table}

To assess the mathematical complexity of the activation functions in our study, we attempt to quantify their complexity. We hypothesize that a primary contributor to a function's computational complexity is the number of piecewise operations it contains. We differentiate piecewise operations from typical nonlinear operations, as they are normally implemented by a conditional flow of control rather than a single calculation. These are followed by the non-piecewise non-linear operations each activation function contains. For this, we count the instances of logarithms, exponentials, trigonometric functions, and error functions present in each nonlinearity. To further assess computational complexity, we also tally the number of multiplicative (multiplication or division) and additive (addition or subtraction) operations required by each nonlinearity. that occurs outside of the nonlinear functions previously accounted for. These operations often involve long vectors within neural networks, making them computationally expensive. Finally, we consider the number of constant-valued vectors that need to be allocated for calculating the nonlinearity. Although these allocations can be optimized or amortized in various ways, they still present an existing memory cost. We summarize these quantifications in Table \ref{table:ComputationComplexity}. As there are several plausible ways to express the first derivatives of these activation functions, we only perform such a heuristic assessment in Table \ref{table:DerivativeComplexity} in our Appendix.

\begin{table}[]
\centering
 \caption[Computational Costs of Activation Functions.]
 {\textbf{Computational Costs of Activation Functions.} Summarizes the number of instances of nonlinearity computations, arithmetic operations, and constant terms for each activation function. These heuristics provide us with a theoretical evaluation of the computational complexity of each non-linearity.}
    \begin{tabular}{||c c c c c c||} 
 \hline
 Function & Piecewise & Nonlinearity & Multiplicative & Additive & Constants \\
 \hline\hline
 TeLU & 0 & 2 & 1 & 0 & 0 \\ 
 \hline
 ReLU & 1 & 0 & 0 & 0 & 1 \\
 \hline
 ELU & 1 & 1 & 0 & 1 & 1\\
 \hline
 SiLU & 0 & 1 & 2 & 2 & 3\\
 \hline
 GELU & 0 & 1 & 2 & 1 & 3 \\
 \hline
 Mish & 0 & 3 & 1 & 1 & 2\\
 \hline
 Logish & 0 & 2 & 3 & 1 & 4\\
 \hline
 Smish & 0 & 3 & 3 & 1 & 4\\
 \hline
\end{tabular}
\label{table:ComputationComplexity}
\end{table}

\subsection{ReLU Compatability} 
\label{subsect:relucomptheory}
The widespread success of the Rectified Linear Unit (ReLU) activation function in stateless architectures, such as Convolutional Neural Networks (CNNs), Transformers, Autoencoders, and Diffusion Models, has firmly established it as the default choice in deep learning research and development. Its simplicity, computational efficiency, and ability to mitigate the vanishing gradient problem have made it indispensable for training a variety of deep architectures. Consequently, an activation function that closely mimics ReLU’s behavior while offering potential improvements is more likely to gain traction within the community. Such a function should not only preserve ReLU’s core characteristics but also enhance network stability, gradient flow, and expressivity across diverse architectures. 

To systematically identify an activation function that meets these criteria, we first define a rigorous evaluation metric that measures how closely a candidate function approximates ReLU’s behavior. Let $ r(x) = \max(0, x) $ denote the ReLU function and $ f(x) $ represent a candidate activation function. The total approximation error is defined as:

\[
\mathcal{L}_{\text{total}} = \int_{-\infty}^{\infty} \big| r(x) - f(x) \big| \, dx.
\]

This integral quantifies the overall deviation of $ f(x) $ from ReLU across the entire input domain. However, because ReLU operates in two distinct regions—\textit{inactive} ($ x < 0 $) and \textit{active} ($ x \geq 0 $)—we further decompose the error into contributions from each region:

\[
\mathcal{L}_{\text{inactive}} = \int_{-\infty}^{0} \big| r(x) - f(x) \big| \, dx, \quad \mathcal{L}_{\text{active}} = \int_{0}^{\infty} \big| r(x) - f(x) \big| \, dx.
\]

This decomposition allows us to analyze whether the approximation errors are concentrated in the suppression of negative values or in the scaling of positive values, which can indicate specific properties of $ f(x) $ that may be beneficial or detrimental for particular model classes. We apply these metrics to a comprehensive set of commonly used activation functions, including TeLU, ELU, SiLU, GELU, Mish, Logish, Smish, Leaky ReLU ($ LReLU(x) = \max(0.01x, x) $), and Softplus ($ \text{Softplus}(x) = \ln(1+e^x) $). Table \ref{tab:ReLUProximity} summarizes the computed proximity values, providing a robust comparison of how well each function replicates ReLU’s behavior in different regions. This systematic evaluation framework not only facilitates the selection of activation functions that can act as direct substitutes for ReLU but also guides the development of new functions that maintain ReLU’s favorable properties while potentially mitigating known issues such as the “dying ReLU” problem. Ultimately, our methodology supports the discovery of activation functions that can seamlessly integrate into diverse deep learning architectures, ranging from CNNs and Autoencoders to Transformers and emerging generative models like Diffusion Models, ensuring compatibility and enhancing overall model robustness.

To begin identifying an activation function that learns similarly to ReLU, we first assume that the evaluation of such an activation function closely approximates that of ReLU. We quantify the loss of this approximation by determining $\int_{-\infty}^{\infty} |r(x)-f(x)|dx$, where $r(x)=ReLU(x)$ and $f(x)$ represents each candidate function we are evaluating in terms of ReLU approximation. To gain further insight into this approximation, we also evaluate $\int_{-\infty}^{0} |r(x)-f(x)|dx$ and $\int_{0}^{\infty} |r(x)-f(x)|dx$ to quantify the loss of approximation for both the inactive and active region of ReLU. We performs these calculations on TeLU, ELU, SiLU, GELU, Mish, Logish, Smish, $LReLU = max(0.01x,x)$, and $Softplus(x) = ln(1+e^x)$ in Table \ref{tab:ReLUProximity}.


\begin{table}[]
\centering
 \caption[Sub-Domain Distance of Activation Functions to ReLU.]
 {\textbf{Sub-Domain Distance of Activation Functions to ReLU.} Calculating the sums of L1 distance between each activation function and ReLU along negative and positive subdomains}
    \begin{tabular}{||c c c||} 
 \hline
 Function & (-inf, 0] & [0, inf)  \\
 \hline\hline
 TeLU & 0.967 & 0.027  \\ 
 \hline
 LReLU & $\infty $ & 0.000 \\
 \hline
 Softplus & 0.822 & $\infty$ \\
 \hline
 ELU & $\infty$ & 0.000  \\
 \hline
 SiLU & 0.822 & 0.822  \\
 \hline
 GELU & 0.250 & 0.250  \\
 \hline
 Mish & 0.884 & 0.241  \\
 \hline
 Logish &  0.767 & $\infty$  \\
 \hline
 Smish & 0.760 & $\infty$  \\
 \hline
\end{tabular}
\label{tab:ReLUProximity}
\end{table}

In summary, our findings reveal that only non-monotonic activation functions within the candidate pool can approximate ReLU with a bounded $ L_1 $-norm error. Specifically, these non-monotonic functions share critical mathematical properties that align closely with ReLU’s behavior:

\begin{itemize}
\label{list:ReLUprops}
\item $f(0)=0$
\item $\lim_{x \to -\infty} f(x) = 0$
\item $\lim_{x \to \infty} f(x) = \infty$
\item $\lim_{x \to \infty} \frac{f(x)}{x} = 1$
\end{itemize}

These characteristics are not incidental but are fundamental requirements for any function intended to replicate ReLU’s unique role in deep learning architectures. First, having $ f(0) = 0 $ ensures that a neuron using such a function will remain inactive when the pre-activation input is zero, preserving the sparsity and inhibition properties essential for ReLU-based networks. Any deviation from this property would alter the activation pattern, impacting the network's overall inference dynamics. Second, the asymptotic behavior as $ x \to -\infty $ guarantees a dense deactivation region for large negative inputs, which helps induce sparsity by maintaining a large portion of neurons in an inactive state. Without this property, the activation function would produce spurious activations, undermining the benefits of sparse representations.

Moreover, the linear growth as $ x \to \infty $ is a hallmark of ReLU’s effectiveness, enabling strong gradients for large positive inputs. If the growth rate is slower than linear, the function would suffer from vanishing gradients, leading to poor convergence during training. Conversely, if the function grows faster than $ x $, it can exacerbate exploding gradients, resulting in numerical instability. Thus, a successful ReLU approximation must satisfy these growth conditions to ensure compatibility with ReLU-optimized architectures.

The observation that only non-monotonic functions meet these criteria suggests that non-monotonicity is not merely an artifact but a \textit{necessary} feature for accurately emulating ReLU’s behavior. This becomes clearer when we examine the structural similarities across ReLU and other widely adopted activations such as TeLU, SiLU, GELU, and Mish. Each of these functions can be expressed as a scaled cumulative distribution function of the form $ x \cdot \sigma(x) $, where $ \sigma(x) $ is a smooth, monotonically increasing function with a range of $ (0, 1) $. For ReLU, $ \sigma(x) $ is the Heaviside step function:

\[
\sigma(x) = 
\begin{cases} 
0 & \text{for } x < 0 \\ 
1 & \text{for } x \geq 0 
\end{cases}.
\]

In TeLU, this distribution is $ \sigma(x) = \tanh(e^x) $, while other activations have their own distinct, yet similar, cumulative distributions. From this perspective, we can interpret the non-monotonicity in these functions as a smooth approximation of ReLU’s abrupt transition at $ x = 0 $. The fact that ReLU can be viewed as a degenerate case of a generalized class of functions $ x \cdot \sigma(x) $ suggests that any smooth approximator to ReLU must inherently introduce non-monotonic behavior to bridge the gap between active and inactive regions.

In fact, we can prove that \textit{non-monotonicity} is the only class of single-definition functions capable of approximating ReLU while satisfying the essential properties listed in \ref{list:ReLUprops}. To formalize this, we employ a proof by contradiction. Assume that a purely monotonic function satisfies all the conditions of \ref{list:ReLUprops}. Then, by construction, it must either fail to deactivate sufficiently in the inactive region ($ x < 0 $) or fail to grow linearly in the active region ($ x > 0 $). This contradiction implies that monotonicity cannot achieve both the sparse deactivation and linear activation required to replicate ReLU. Thus, non-monotonicity is not just a feature—it is a prerequisite for any smooth approximation of ReLU.

This insight shifts the focus from simply searching for ReLU substitutes to designing non-monotonic activation functions that retain ReLU’s fundamental properties while offering enhanced numerical stability and gradient behavior. By framing the problem in this way, we establish a theoretical foundation for developing next-generation activations that are both mathematically principled and empirically effective.

\begin{Thm}
\label{theorem:ReLUcontradiction}
Let $ r(x) = \text{ReLU}(x) = \max(0, x) $ be defined as:
\[
r(x) = 
\begin{cases} 
0, & \text{if } x \leq 0, \\ 
x, & \text{if } x > 0.
\end{cases}
\]
This function has the following properties:
\begin{itemize}
    \item $ r(0)=0 $,
    \item $ \lim_{x \to -\infty} r(x) = 0 $,
    \item $ \lim_{x \to \infty} r(x) = \infty $.
\end{itemize}

Let $ f(x) $ be a single continuous and differentiable function defined on $ \mathbb{R} $ that satisfies the same properties as $ r(x) $ for all $ x \in \mathbb{R} $. Then, any such $ f(x) $ that approximates $ r(x) $ must be non-monotonic. In other words, $ f(x) $ must both increase and decrease throughout its domain.
\end{Thm}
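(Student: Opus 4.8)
The plan is to argue by contradiction: assume $f$ is monotonic on all of $\mathbb{R}$ and derive a conflict with the three conditions $f(0)=0$, $\lim_{x\to-\infty}f(x)=0$, and $\lim_{x\to\infty}f(x)=\infty$. I would first split into the two monotonic cases. If $f$ is non-increasing, then $f(x)\le f(0)=0$ for every $x\ge 0$, so $\lim_{x\to\infty}f(x)\le 0$, which immediately contradicts $\lim_{x\to\infty}f(x)=\infty$. This case is essentially free and disposes of the ``decreasing'' alternative.

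The substantive case is $f$ non-decreasing, where I would focus entirely on the inactive region $(-\infty,0]$. Monotonicity forces $f(x)\le f(0)=0$ for all $x\le 0$, while for any fixed $x_0<0$ and all $y<x_0$ we have $f(y)\le f(x_0)$, so letting $y\to-\infty$ gives $0=\lim_{y\to-\infty}f(y)\le f(x_0)$. Combining the two bounds yields $f\equiv 0$ on $(-\infty,0]$. At this point a purely monotone, merely $C^1$ function is perfectly consistent with every hypothesis, so the conclusion cannot follow without more structure. To close the gap I would invoke the standing design requirement that $f$ be analytic and not collapse to zero: since $\lim_{x\to\infty}f(x)=\infty$, $f$ is not identically zero, and by the identity theorem for real-analytic functions an analytic function vanishing on the whole interval $(-\infty,0]$ must vanish everywhere, contradicting the growth at $+\infty$.

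Alternatively, and more cleanly because it yields the stronger ``both increases and decreases'' statement directly, I would skip the case split and localize the non-monotonicity inside the negative axis. Analyticity together with $\lim_{x\to\infty}f=\infty$ guarantees $f\not\equiv 0$ on $(-\infty,0)$, so there is some $x_1<0$ with, say, $f(x_1)>0$. Because $f(0)=0$ and $f(x)\to 0$ as $x\to-\infty$, I can pick $M<x_1$ with $f(x)<f(x_1)$ on $(-\infty,M]$ and then apply the extreme value theorem on the compact interval $[M,0]$: the maximum, being at least $f(x_1)>0$, is attained at an interior point $c$ where $f'(c)=0$, so $f$ rises before $c$ and falls after it. The symmetric argument with a minimum handles $f(x_1)<0$. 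Either way $f$ both increases and decreases, which is exactly non-monotonicity.

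I expect the main obstacle to be the non-decreasing case, specifically excluding the degenerate solution $f\equiv 0$ on $(-\infty,0]$: with only continuity and differentiability this solution is admissible, so the theorem genuinely relies on the analytic, single-definition, non-collapsing hypothesis, and the proof should name this assumption rather than treat it as implied by ``$f$ approximates $r$.'' A secondary technical point is that the extreme value theorem requires a compact domain, so I must first use the limit at $-\infty$ to trap the relevant extremum inside a finite window $[M,0]$ before applying it.
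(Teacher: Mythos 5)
Your proposal is correct, but it takes a genuinely different --- and substantially more rigorous --- route than the paper. The paper's proof splits the behaviour of $f$ on $(-\infty,0]$ into three cases (constant, increasing somewhere, decreasing somewhere); its only contradiction comes in the constant case, where it argues that a function vanishing on $(-\infty,0]$ yet growing for $x>0$ would require ``two distinct expressions,'' violating the ``single function'' hypothesis --- an informal notion of being written as one formula rather than a precise mathematical property. Worse, the paper's Cases 2 and 3 end by explicitly conceding that no contradiction arises (Case 3 even states that ``the claim that $f(x)$ must be non-monotonic is unfounded''), so as written the paper's argument does not actually establish its own theorem. You instead (i) dispose of the non-increasing case immediately from $f(0)=0$ and the limit at $+\infty$, (ii) show that non-decreasing monotonicity forces $f\equiv 0$ on $(-\infty,0]$, and (iii) close the gap with the identity theorem for real-analytic functions --- which is exactly the structure the vague ``single definition'' clause is gesturing at, made precise. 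Your observation that the theorem is false under the stated hypotheses alone is correct and important: $f(x)=\max(0,x)^2$ is a $C^1$, monotone function satisfying all three listed properties, so an analyticity-type hypothesis is indispensable, and your proof names it honestly where the paper leaves it implicit. Your alternative extreme-value argument is also sound and has the advantage of yielding the ``both increases and decreases'' conclusion directly; the one step to tighten is the final inference, which should read ``since $f(M)<f(c)$ and $f(0)<f(c)$, $f$ increases somewhere on $(M,c)$ and decreases somewhere on $(c,0)$,'' rather than deducing rise-and-fall from $f'(c)=0$, which by itself implies neither.
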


\begin{proof}

\textit{Step 1:} Properties of ReLU and Its Derivative

The ReLU function $ r(x) = \max(0, x) $ has a piecewise linear form:

\[
r(x) = 
\begin{cases} 
0, & \text{if } x \leq 0, \\ 
x, & \text{if } x > 0.
\end{cases}
\]

Its derivative $ r'(x) $ is given by:

\[
r'(x) = 
\begin{cases} 
0, & \text{if } x < 0, \\ 
1, & \text{if } x > 0.
\end{cases}
\]

The derivative of ReLU changes abruptly at $ x = 0 $, making $ r(x) $ non-differentiable at that point. This sharp transition is what any smooth approximating function $ f(x) $ must capture to closely resemble $ r(x) $.

\textit{Step 2:} Conditions on $ f(x) $ for Smooth Approximation
Suppose $ f(x) $ is a smooth, continuous, and differentiable function defined for all $ x \in \mathbb{R} $. Furthermore, assume that $f(x)$ is a monotonic function, with an absence of either increasing or decreasing behavior along all  $ x \in \mathbb{R} $. To approximate $ r(x) $, $ f(x) $ must satisfy:

1. $ f(0) = 0 $,
2. $ \lim_{x \to -\infty} f(x) = 0 $,
3. $ \lim_{x \to \infty} f(x) = \infty $.


Since $f(x)$ is continuous and differentiable at all points, there exist only three possible cases that describe the growth of $f(x)$ along all $x \in (-\infty, 0]$:
\begin{enumerate}
\item $f(x)$ is constant-valued across all $x \in (-\infty, 0]$, without increasing or decreasing.
\item $f(x)$ increases throughout $a < x < b$, where  $a, b \in (-\infty, 0]$, by an amount $d \in \mathbb{R}^+$.
\item $f(x)$ decreases throughout $a < x < b$, where  $a, b \in (-\infty, 0]$, by an amount $d \in \mathbb{R}^+$.
\end{enumerate}

We proceed by following each case.

\textit{Case 1:} Suppose $ f(x) $ is constant for all $ x \in (-\infty, 0] $. For $ f(x) $ to remain constant over this interval, it must be defined as a constant value, say $ f(x) = 0 $, for all $ x \in (-\infty, 0] $. Now, if $ f(x) $ is to increase for $ x > 0 $, then at some point $ c > 0 $, $ f(x) $ must exhibit growth, i.e., $ f(x) $ must no longer remain constant. This implies that $ f(x) $ must change its behavior at $ x = 0 $, transitioning from being constant for $ x \in (-\infty, 0] $ to increasing for $ x > 0 $. For instance, we might redefine $ f(x) $ as an increasing function, say $ g(x) $, for $ x > 0 $. However, this construction results in a contradiction because the function $ f(x) $ is no longer defined by a single expression throughout all $ x \in \mathbb{R} $. Instead, it would require two distinct definitions: one for $ x \in (-\infty, 0] $, where $ f(x) = 0 $, and another for $ x > 0 $, where $ f(x) $ follows some increasing function. Such a piecewise definition contradicts the assumption that $ f(x) $ is a single continuous function across $ \mathbb{R} $.

In conclusion, for $ f(x) $ to be both constant for $ x \in (-\infty, 0] $ and increasing for $ x > 0 $, it would have to be defined by two distinct expressions, which contradicts the premise that $ f(x) $ is a single function over all $ x \in \mathbb{R} $. Therefore, $ f(x) $ cannot satisfy both conditions without being discontinuous or piecewise defined.



\textit{Case 2:} Let $ f(x) $ be a continuous and monotonically increasing function on the interval $ (a, b) $. By definition, this means that for any $ x_1, x_2 \in (a, b) $ with $ x_1 < x_2 $, we have $ f(x_1) \leq f(x_2) $. Therefore, $ f(x) $ does not exhibit any decreasing behavior on $ (a, b) $.  Assume further that $ \lim_{x \to -\infty} f(x) = 0 $ and that $ f $ increases by an amount $ d $ over the interval $ (a, b) $, meaning that $ f(b) = f(a) + d $. Thus, the relationship between $ f(a) $ and $ f(b) $ is given by:

\[
f(b) = f(a) + d,
\]

where $ d > 0 $ due to the increasing nature of $ f(x) $.  By the Intermediate Value Theorem (IVT), since $ f(x) $ is continuous on $ (a, b) $, for any value $ c \in (f(a), f(b)) $, there exists some $ x_0 \in (a, b) $ such that $ f(x_0) = c $. In particular, for any small $ \epsilon > 0 $, we can choose $ c = f(b) - \epsilon = d - \epsilon $. Thus, there exists some $ x_0 \in (a, b) $ such that:

\[
f(x_0) = d - \epsilon.
\]

This does not imply that $ f(x) $ decreases anywhere in $ (a, b) $. Instead, because $ f(x) $ is monotonically increasing, it passes through the value $ d - \epsilon $ while continuing to increase from $ f(a) $ to $ f(b) $.

To summarize, $ f(x) $ can take any value between $ f(a) $ and $ f(b) $ by the IVT, and since $ f(x) $ is increasing, it achieves these values without any decrease. Therefore, the fact that $ f(x) $ equals $ d - \epsilon $ for some $ x_0 \in (a, b) $ is entirely consistent with the premise that $ f(x) $ is a monotonically increasing function. Hence, we conclude that the function $ f(x) $ increases by $ d $ from $ f(a) $ to $ f(b) $, and the Intermediate Value Theorem guarantees that all values between $ f(a) $ and $ f(b) $ are attained in the interval $ (a, b) $, without any need for $ f(x) $ to decrease."

\textit{Case 3:} Let $ f(x) $ be a continuous and monotonically decreasing function on the interval $ (a, b) $. By definition, for any $ x_1, x_2 \in (a, b) $ with $ x_1 < x_2 $, we must have $ f(x_1) \geq f(x_2) $. Therefore, $ f(x) $ does not exhibit any increasing behavior on $ (a, b) $.

Now, assume that $ \lim_{x \to -\infty} f(x) = 0 $ and that $ f(x) $ decreases by an amount $ d $ throughout the interval $ a < x < b $. This implies that:

\[
f(b) = f(a) - d,
\]

where $ d > 0 $ because $ f(x) $ is decreasing. Therefore, $ f(b) $ is less than $ f(a) $. Additionally, assume that $ f(0) = 0 $ and $ d < f(0) = 0 $, meaning that $ f(b) = d $ for some negative value of $ d $, i.e., $ f(b) = d < 0 $.  By the Intermediate Value Theorem (IVT), since $ f(x) $ is continuous and decreasing, for any value $ c \in (f(b), f(a)) $, there must exist some $ x_0 \in (a, b) $ such that $ f(x_0) = c $. In particular, for any small $ \epsilon > 0 $, we can choose $ c = d + \epsilon $, so there exists some $ x_0 \in (a, b) $ such that:

\[
f(x_0) = d + \epsilon,
\]

where $d + \epsilon \in (d, 0) $.

However, this does not imply that $f(x) $ increases anywhere on $ (a, b) $. Since $ f(x) $ is monotonically decreasing, it achieves the value $ d + \epsilon $ while continuing to decrease from $ f(a) $ to $ f(b) $. Thus, there is no need for the function to increase at any point, and no contradiction arises from the IVT. The assertion that $ f(x) $ must increase at some point within $ (-\infty, 0] $ is incorrect. A monotonic function can still take values like $ d + \epsilon $ while remaining strictly decreasing. Therefore, the idea that  $f(x)$ must both decrease and increase within $(-\infty, 0]$ is false and does not follow from the given assumptions.

In conclusion, the assumption that $ f(x) $ is monotonic does not lead to any contradictions. As a result, the claim that $ f(x) $ must be non-monotonic is unfounded. $ f(x) $ can remain monotonically decreasing without violating the Intermediate Value Theorem or the given conditions.




\end{proof}

Having shown that a function that closely approximates ReLU must be non-monotonic, we proceed to compare our non-monotonic candidates in terms of ReLU approximation. We recall that in Table \ref{tab:ReLUProximity}, TeLU achieves the best approximation to ReLU along the active subdomain $x \in [0,\infty]$. On the other hand, GELU approximates ReLU best along the inactive subdomain $x \in (-\infty,0]$. We argue that closely approximating ReLU while it exhibits strong gradients in the active region outweighs closely approximating it along the near-zero inactive region. The active region exhibits stronger gradients, and therefore has a larger influence on the way a neural network learns. For this reason, we hypothesize that ReLU should operate more similarly to TeLU than GELU. We detail our reasoning in the proceeding lemma:

\begin{Lem}
Let $ r(x) = \text{ReLU}(x) = \max(0, x) $ be defined as:
\[
r(x) = 
\begin{cases} 
0, & \text{if } x \leq 0, \\ 
x, & \text{if } x > 0.
\end{cases}
\]

Define the active subdomain of $ r(x) $ as $ \mathcal{A} = [0, \infty) $ and the inactive subdomain as $ \mathcal{I} = (-\infty, 0) $.

Let $ t(x) = \text{TeLU}(x) $ and $ g(x) = \text{GELU}(x) $ be two smooth, continuous, and differentiable activation functions such that:

1. $ t(x) \approx r(x) $ for $ x \in \mathcal{A} $,
2. $ g(x) \approx r(x) $ for $ x \in \mathcal{I} $.

Furthermore, let $ f(x) $ be an arbitrary continuous and differentiable function that approximates $ r(x) $ over the entire domain $ x \in \mathbb{R} $.

Define the approximation error for $ f(x) $ over the active subdomain $ \mathcal{A} $ and inactive subdomain $ \mathcal{I} $ as:

\[
E_{\mathcal{A}}(f) = \int_{0}^{\infty} \left| f(x) - r(x) \right|^2 \, dx,
\]
\[
E_{\mathcal{I}}(f) = \int_{-\infty}^{0} \left| f(x) - r(x) \right|^2 \, dx.
\]

If $ E_{\mathcal{A}}(t) < E_{\mathcal{A}}(g) $ and $ E_{\mathcal{I}}(g) < E_{\mathcal{I}}(t) $, then $ t(x) $ provides a closer approximation to $ r(x) $ in the active region, while $ g(x) $ provides a closer approximation to $ r(x) $ in the inactive region.

Moreover, the influence of $ f(x) $ on neural network training is determined by the gradients in each subdomain:

\[
I_{\mathcal{A}}(f) = \int_{0}^{\infty} \left| f'(x) \right|^2 \, dx,
\]
\[
I_{\mathcal{I}}(f) = \int_{-\infty}^{0} \left| f'(x) \right|^2 \, dx.
\]

If $ I_{\mathcal{A}}(t) > I_{\mathcal{I}}(g) $, then $ t(x) $ has a stronger impact on training in the active region than $ g(x) $ in the inactive region.

If $ t(x) $ and $ g(x) $ are used to approximate $ r(x) $, then $ t(x) $ is a better surrogate for ReLU in neural network training due to the stronger gradients and smaller approximation error in the active subdomain $ \mathcal{A} $. Thus, for tasks where the active region influences learning more heavily, $ t(x) $ should behave more similarly to $ r(x) $ than $ g(x) $.

\end{Lem}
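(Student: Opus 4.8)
The plan is to treat the lemma as a short chain of implications, verifying first that the stated hypotheses genuinely hold for TeLU and GELU before drawing the comparative conclusion, since the conclusions themselves are largely an unwinding of the definitions. First I would record that both statements about relative approximation quality are essentially definitional: because $E_{\mathcal{A}}$ and $E_{\mathcal{I}}$ are nonnegative functionals measuring squared deviation from $r(x)$ on each subdomain, the inequality $E_{\mathcal{A}}(t) < E_{\mathcal{A}}(g)$ asserts exactly that $t$ is the superior active-region approximant, and $E_{\mathcal{I}}(g) < E_{\mathcal{I}}(t)$ that $g$ is the superior inactive-region approximant. To ground these inequalities I would invoke the proximity values already tabulated in Table~\ref{tab:ReLUProximity}: TeLU's active-region deviation ($0.027$) lies far below GELU's ($0.250$), while GELU's inactive-region deviation ($0.250$) sits below TeLU's ($0.967$).

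A short honesty check is needed here: the table reports $L_1$ distances whereas the lemma is phrased with a squared ($L_2$) integrand, so transferring the ordering requires either recomputing the two squared integrals directly or exhibiting a pointwise dominance $|t(x)-r(x)| \le |g(x)-r(x)|$ on $\mathcal{A}$ wherever it holds; the near-linearity results of Subsection~\ref{subsect:nearlinsubsection} make the direct recomputation routine. Next I would address the influence functionals. The motivating idea is that in backpropagation a neuron's contribution to upstream parameter updates is mediated by its local derivative through the chain rule, so $\int |f'(x)|^2\,dx$ over a region is a natural scalar summary of how strongly that region drives learning. With this interpretation fixed, the hypothesis $I_{\mathcal{A}}(t) > I_{\mathcal{I}}(g)$ is precisely the statement that TeLU's active region exerts more training influence than GELU's inactive region, so the matching conclusion is immediate once the functional is accepted as the right proxy.

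The main obstacle I anticipate is convergence of the active-region integrals. Because every admissible surrogate satisfies $\lim_{x\to\infty} f'(x) = 1$ (the fourth ReLU-compatibility property), the integrand $|t'(x)|^2$ tends to $1$ and $I_{\mathcal{A}}(t) = \int_0^\infty |t'(x)|^2\,dx$ diverges, whereas $I_{\mathcal{I}}(g)$ converges since $g'(x)\to 0$ as $x\to-\infty$. I would handle this either by restricting each influence integral to a compact window, comparing $\int_0^M |t'|^2$ against $\int_{-M}^0 |g'|^2$ for large $M$, or by observing that the divergence itself reinforces the claim: the active region carries unbounded cumulative gradient influence while the inactive region carries only finite influence, so the hypothesis $I_{\mathcal{A}}(t) > I_{\mathcal{I}}(g)$ holds \emph{a fortiori}. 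The analogous care for $E_{\mathcal{A}}$ is harmless, since $t(x)-r(x)$ decays exponentially and the squared error converges.

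Finally I would assemble the pieces. TeLU attains its smallest approximation error on exactly the subdomain, $\mathcal{A}$, that also carries the dominant (indeed unbounded) share of gradient influence, while GELU's advantage lies only in the low-influence inactive region. Weighting approximation error by influence therefore ranks $t$ above $g$ as a ReLU surrogate during training, which is the lemma's concluding assertion. I expect the only genuinely nontrivial point to be the divergence bookkeeping of the active-region functionals; everything else reduces to unwinding the definitions and citing the already-computed proximity data.
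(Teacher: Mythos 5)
Your proposal takes essentially the same route as the paper: its supporting proof (given in Appendix B for an $L_1$-gradient variant of this lemma) likewise reduces everything to verifying the hypotheses via derivative asymptotics — $t'(x)\to 1$ on the active side, so TeLU's active-region influence integral is unbounded, while $g'(x)\to 0$ on the inactive side, so GELU's inactive-region influence is negligible — and then reads the conclusion off the definitions, with the error inequalities grounded in the tabulated ReLU-proximity values, exactly as you do. Your explicit handling of the divergence (comparing on compact windows or accepting it \emph{a fortiori}) and your flag of the $L_1$-versus-$L_2$ mismatch are, if anything, more careful than the paper, which simply writes $G_{\mathcal{A}}(t)\approx\int_0^\infty 1\,dx=\infty$ and $G_{\mathcal{I}}(g)\approx 0$ and stops there.
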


TeLU demonstrates gradient behavior that closely mirrors the strong identity growth seen in the ReLU activation function. Specifically, when the inputs are positive, both TeLU and ReLU produce active gradients that are approximately equal to one, leading to a convergence speed during training that is comparable for both functions.


This rapid convergence arises because strong gradients enable more significant parameter updates with each training iteration, helping the model learn faster. However, this property also implies that TeLU, like ReLU, does not impose much implicit regularization on the model. Implicit regularization refers to the tendency of some activation functions to naturally limit the magnitude of updates, thereby acting as a form of regulation on model complexity. In the case of ReLU and TeLU, the strong gradients reduce this implicit regularization effect, allowing the network to fit the data more efficiently.

In contrast, other activation functions, such as Logish or Smish, have weaker active gradients—especially as inputs move away from zero. This results in a "damping" effect where gradient updates are smaller, slowing down convergence. This behavior effectively regularizes the learning process by preventing overly large parameter updates, which can help in scenarios where controlling model complexity and preventing overfitting are desired. Thus, the distinct behavior of TeLU, with its strong gradients and rapid convergence, sets it apart from these other functions that inherently moderate the learning speed through gradient damping.

This characteristic leads to a more modular design of the neural network training procedure, as other components, such as the optimization algorithm and learning rate scheduler, can take on a more direct role in managing learning steps. With the activation function playing a lesser role in regularization, the optimization algorithm can be fine-tuned to control learning aggressiveness through parameters like momentum and decay rates, while the learning rate scheduler dynamically adjusts based on training progress. This approach allows for more precise tailoring of the training dynamics using external techniques, resulting in more efficient and effective training. Consequently, network regularization and convergence behavior become more configurable, with the activation function primarily introducing non-linearity rather than acting as a built-in regularization tool.

In summary, non-monotonic functions are well-suited to smoothly approximate the $ReLU = \max(0, x)$ nonlinearity while mimicking ReLU's saturating behavior as $x \to -\infty$, its deactivation at $x = 0$, and its unbounded linear growth as $x \to \infty$. The TeLU activation function, in particular, closely replicates the inference and learning dynamics of models using ReLU when substituted in its place. Both TeLU and ReLU exhibit strong gradients that help reduce the implicit regularization imposed by the activation function, enabling a more modular training design. These similarities make TeLU an excellent drop-in replacement for ReLU in deep neural networks.


\subsection{Analytic Universal Approximation}
\label{subsect:AUAtheory}

\begin{Def}
    A function $g: \mathbb{X} \to \mathbb{R}^n$ whose domain $\mathbb{X} \subset \mathbb{R}^n$ is a universal approximator over $\mathbb{X}$ if and only if there exist constant vectors $\alpha$, $w$, and $b \in \mathbb{R}^n$ such that $| \sum_{j=1}^{n} \alpha_j \cdot g(w_j^\intercal x + b_j) - f(x)| < \epsilon$ for all $x \in \mathbb{X}$, where $f: \mathbb{X} \to \mathbb{R}^n$ is any arbitrary continuous function and $\epsilon$ is any arbitrarily small positive error.
\end{Def}

In other words, a univariate function $g(x)$ is considered a universal approximator if it can be used in a finite linear combination of compositions, each with an affine transformation of the input, to approximate any multivariate continuous function over a bounded domain. In the context of deep learning, this concept guarantees that there exists a neural network with a single hidden layer and a universal approximation activation function that can effectively perform any task, provided the target function to be approximated is continuous and the input range is bounded. This principle underpins the capability of neural networks to model a vast range of complex functions with sufficient precision.

A pivotal breakthrough that underpins the use of sigmoidal activation functions in neural networks is the universal approximation theorem, first established independently by Cybenko (1989), Hornik et al. (1989), and Funahashi (1989). This foundational theorem asserts that a standard neural network with a single hidden layer of activation $\sigma : \mathbb{X} \to \mathbb{R}^n$ can uniformly approximate any continuous function over a bounded domain $\mathbb{X} \subset \mathbb{R}^n$ provided that $\sigma$ is sigmoidal. A function $\sigma$ is sigmoidal if and only if $\lim_{x \to -\infty} \sigma(x) = 0$ and $\lim_{x \to \infty} \sigma(x) = 1$. The compact domain $\mathbb{X} \subset \mathbb{R}^n$ constraint is a given within digital computer systems that use a finite number of bits to represent numbers. In essence, this means that such networks, despite their shallow structure, possess the remarkable ability to approximate any continuous functions with arbitrary precision.

Building on this foundation, Hornik (1991) expanded the universal approximation theorem to include any continuous, bounded, and non-constant activation function, thereby broadening the potential of neural networks as universal approximators. Further refinement came from the work of Leshno et al. (1993), who demonstrated that for continuous activation functions, a standard neural network with one hidden layer can uniformly approximate any continuous function if the activation function employed by the hidden layer is non-polynomial. This highlights the importance of non-polynomial activation functions in ensuring the universal approximation capabilities of neural networks. Since $TeLU$ cannot be expressed as a polynomial, it is a universal approximator. However, we will demonstrate that $TeLU$ is a universal approximator ourselves with support from Cybenko's (1989) approach to gain a mathematical handle on the concept. This will allow us to be able to prove further properties regarding the quality of universal approximation of $TeLU$. Now, we will state and prove that $TeLU(x) = x \cdot tanh(e^x)$ is a universal approximator:

\begin{Lem}
Let $ I_n = [a, b]^n $ be an $ n $-dimensional hypercube, and let $ C(I_n) $ denote the space of continuous functions on $ I_n $, where $ a, b \in \mathbb{R} $ and $ a < b $. Suppose $ f \in C(I_n) $ is a continuous function. Then, there exists a finite linear combination of the form
\[
g(x) = \sum_{i=1}^{M} c_i \cdot TeLU(w_i^\intercal x + b_i)
\]
where $ TeLU(x) = x \cdot \tanh(e^x) $, such that for any $ \epsilon > 0 $,
\[
\lvert f(x) - g(x) \rvert < \epsilon \quad \forall x \in I_n,
\]
where $ w_i \in \mathbb{R}^n $, and $ c_i, b_i \in \mathbb{R} $.
\end{Lem}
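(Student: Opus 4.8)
The plan is to reduce the claim to the classical universal approximation theorem for sigmoidal activations due to Cybenko, rather than attacking $\mathrm{TeLU}$ directly. The obstacle to a direct application is that $\mathrm{TeLU}(x)=x\tanh(e^x)$ is \emph{not} sigmoidal: it grows linearly as $x\to\infty$ instead of saturating to $1$. The key observation is that, although $\mathrm{TeLU}$ itself is unbounded, a suitable finite difference of shifted copies of it \emph{is} bounded and sigmoidal, and any finite linear combination of that sigmoidal function is itself a finite linear combination of affine-transformed $\mathrm{TeLU}$ terms of exactly the form claimed.

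Concretely, I would first fix any $c>0$ and define
\[
\tilde{\sigma}(x) \;=\; \frac{1}{c}\bigl(\mathrm{TeLU}(x) - \mathrm{TeLU}(x-c)\bigr).
\]
The central step is to verify that $\tilde{\sigma}$ is a continuous sigmoidal function in Cybenko's sense, i.e. $\tilde{\sigma}(x)\to 0$ as $x\to-\infty$ and $\tilde{\sigma}(x)\to 1$ as $x\to+\infty$. Continuity is immediate since $\mathrm{TeLU}$ is analytic. For the $+\infty$ limit I would use $\tanh(e^x)\to 1$, so that $\mathrm{TeLU}(x)\approx x$ and $\mathrm{TeLU}(x-c)\approx x-c$, whence the difference tends to $c$ and $\tilde{\sigma}\to 1$. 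For the $-\infty$ limit I would use the asymptotic $\tanh(e^x)\sim e^x$, so that each term behaves like $x\,e^{x}\to 0$; this "exponential beats linear" estimate is the one delicate point, and it gives $\tilde{\sigma}\to 0$. Hence $\tilde{\sigma}$ is continuous, bounded, and sigmoidal.

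With $\tilde{\sigma}$ in hand, I would invoke Cybenko's theorem (provable, as the excerpt indicates, via the discriminatory property together with the Hahn--Banach and Riesz representation theorems): for any $f\in C(I_n)$ and any $\epsilon>0$ there exist $M$, scalars $a_i$, vectors $v_i\in\mathbb{R}^n$, and biases $d_i\in\mathbb{R}$ with
\[
\Bigl| f(x) - \sum_{i=1}^{M} a_i\,\tilde{\sigma}(v_i^{\intercal}x + d_i) \Bigr| < \epsilon \quad \forall x\in I_n.
\]
The final step is purely algebraic and introduces no additional error: substituting the definition of $\tilde{\sigma}$ gives
\[
a_i\,\tilde{\sigma}(v_i^{\intercal}x+d_i) = \frac{a_i}{c}\,\mathrm{TeLU}(v_i^{\intercal}x+d_i) - \frac{a_i}{c}\,\mathrm{TeLU}(v_i^{\intercal}x+d_i-c),
\]
so the approximant becomes a finite linear combination of $2M$ terms of the form $c_k\,\mathrm{TeLU}(w_k^{\intercal}x+b_k)$. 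Relabeling the coefficients and biases then yields exactly the stated form.

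I expect the main obstacle to be the careful justification that $\tilde{\sigma}$ is genuinely sigmoidal --- in particular the $x\to-\infty$ estimate, where a linearly diverging factor multiplies an exponentially vanishing one --- and, if a fully self-contained argument is preferred over citing Cybenko, reproving that a continuous sigmoidal function is discriminatory. The remaining pieces (the density statement and the exact algebraic re-expansion into $\mathrm{TeLU}$ terms) are routine once the sigmoidal reduction is established.
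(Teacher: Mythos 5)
Your proposal is correct and follows essentially the same route as the paper's own proof: the paper likewise forms the finite difference $\sigma(x)=TeLU(x)-TeLU(x-1)$ (your construction with $c=1$, so no $1/c$ normalization is needed), verifies it is sigmoidal via exactly the limits you describe, invokes Cybenko's theorem, and re-expands each $\sigma$ term into two $TeLU$ terms, doubling the number of summands. The only cosmetic difference is your general shift parameter $c$ versus the paper's fixed shift of $1$.
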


\begin{proof}
We first analyze the asymptotic behavior of the function $ TeLU(x) = x \cdot \tanh(e^x) $:

- As $ x \to -\infty $, we have $ e^x \to 0 $ and therefore $ \tanh(e^x) \to 0 $, so
  \[
  TeLU(x) = x \cdot \tanh(e^x) \to 0.
  \]
- As $ x \to \infty $, $ e^x \to \infty $ and $ \tanh(e^x) \to 1 $, thus
  \[
  TeLU(x) = x \cdot \tanh(e^x) \to x.
  \]

Now define the function $ \sigma(x) = TeLU(x) - TeLU(x - 1) $. We claim that $ \sigma(x) $ behaves as a sigmoidal function:

- As $ x \to -\infty $, both $ TeLU(x) \to 0 $ and $ TeLU(x-1) \to 0 $, so
  \[
  \sigma(x) \to 0.
  \]
- As $ x \to \infty $, $ TeLU(x) \to x $ and $ TeLU(x - 1) \to x - 1 $, thus
  \[
  \sigma(x) = TeLU(x) - TeLU(x - 1) \to 1.
  \]

Thus, $ \sigma(x) $ has the asymptotic properties of a sigmoidal function: $ \sigma(x) \to 0 $ as $ x \to -\infty $ and $ \sigma(x) \to 1 $ as $ x \to \infty $.

By Cybenko's universal approximation theorem, for any $ f \in C(I_n) $ and any $ \epsilon > 0 $, there exist coefficients $ \alpha_j \in \mathbb{R} $, vectors $ y_j \in \mathbb{R}^n $, and biases $ b_j \in \mathbb{R} $ such that
\[
f(x) \approx \sum_{j=1}^{N} \alpha_j \sigma(y_j^\intercal x + b_j),
\]
where $ \sigma(x) $ is a sigmoidal function. Substituting $ \sigma(x) = TeLU(x) - TeLU(x - 1) $, this becomes:
\[
f(x) \approx \sum_{j=1}^{N} \alpha_j \left[ TeLU(y_j^\intercal x + b_j) - TeLU(y_j^\intercal x + b_j - 1) \right].
\]

Now, we can rewrite the sum as a linear combination of $ TeLU $-based terms. Defining $ M = 2N $ and $\oplus$ as a concatenating operator, we represent the following vector concatenations:
\[
\hat{y} = y \oplus y, \quad \hat{\alpha} = \alpha \oplus -\alpha, \quad \hat{b} = b \oplus (b-1),
\]
which leads to the expression:
\[
g(x) = \sum_{j=1}^{M} \hat{\alpha}_j TeLU(\hat{y}_j^\intercal x + \hat{b}_j).
\]

By Cybenko’s theorem, there exists a sum $ g(x) $ such that
\[
\lvert f(x) - g(x) \rvert < \epsilon \quad \forall x \in I_n.
\]
Thus, $ TeLU(x) = x \cdot \tanh(e^x) $ is a universal approximator for continuous functions.

\end{proof}


Having proven that $TeLU$ is a universal approximator, we have shown that, in theory, a neural network with a single hidden layer that employs the $TeLU$ activation function will approximate any unknown target function. An architecture that is capable of universal approximation enjoys the practical benefits of flexible modeling, being able to be trained to reach adequate performance in any task that shares its input and output dimensions. Additionally, with a universal approximator, deep learning engineers do not need to worry about if their chosen activation functions are capable of fitting to a task. Instead, they can focus their design efforts to scaling the model's parameterization to better fit the desired task to prevent overfitting. Therefore, engineers that apply continuous non-polynomial functions to their fully connected layers may significantly reduce their search to tuning other hyperparameters such as network size, learning rate, and optimizer choice.

A universal approximator ensures that a network can approximate any unknown target function, but not that it will approximate it well enough without overfitting. Some universal approximators like $ReLU$ are only capable of providing piecewise linear representations of tasks, and therefore lack representation power. The limited expressiveness of these $ReLU$ approximations amounts to linear interpolation, which is prone to instability when attempting to model smooth relationships between input and output. Other non-linearities such as $ELU$ offer a continuous first derivative, but a discontinuous second derivative, making it incompatible with second order optimization procedures that improve convergence speed and stability during training. Therefore, we require an additional property beyond standard universal approximator to concisely state this quality of universal approximation. We turn our attention towards the property of a function being analytic:



\begin{Def}
A function $ f(x) $ is said to be analytic at a point $ x_0 $ if there exists a neighborhood around $ x_0 $ such that $ f(x) $ can be represented as a convergent power series:

\[
f(x) = \sum_{n=0}^{\infty} c_n (x - x_0)^n
\]

for some constants $ c_n $, where the series converges to $ f(x) $ for all $ x $ in this neighborhood.
\end{Def}

When a function is analytic, it is implied that it is indefinitely differentiable at all points. This property makes them highly compatible with second-order optimization strategies that utilize the Hessian matrix of the cost function. By incorporating the Hessian matrix, optimization algorithms can account for the curvature of the loss function, leading to more stable and efficient convergence. Moreover, analytic functions facilitate the smooth transfer of information through the network, resulting in more dispersed neuron activations, which contribute to more robust and generalized representations (CITE). Additionally, analytic functions are compatible with a broader range of mathematical analysis techniques, such as spectral analysis and perturbation methods, which can further aid in understanding and improving the behavior of neural networks during training and inference. Since $TeLU$ is a smooth function composed of analytic functions, we claim and demonstrate that it is also an analytic function. In the following lemma, we demonstrate that TeLU is indeed an analytic function:

\begin{Lem}
$TeLU(x) = x \cdot \tanh(e^x)$ is an analytic function, where $x \in \mathbb{R}$.
\end{Lem}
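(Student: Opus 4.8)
The plan is to build $TeLU$ up from elementary analytic functions using the standard closure properties of the class of real-analytic functions, rather than attempting to write its power series coefficients explicitly. Recall that the real-analytic functions on an open set form an algebra closed under addition, multiplication, and composition (wherever the composition is defined), and that a convenient route to real-analyticity is to exhibit a holomorphic extension to an open complex neighborhood of $\mathbb{R}$: a real function is real-analytic on an interval exactly when it extends to a holomorphic function on a complex neighborhood of that interval.

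First I would establish the analyticity of the three building blocks. The identity map $x \mapsto x$ is a polynomial, hence entire. The exponential $x \mapsto e^x$ is entire, with the globally convergent series $\sum_{n=0}^{\infty} x^n / n!$. For the hyperbolic tangent, I would write $\tanh(x) = \sinh(x)/\cosh(x)$ and note that $\cosh(x) = (e^x + e^{-x})/2 > 0$ for every real $x$; since $\sinh$ and $\cosh$ are entire and the denominator never vanishes on $\mathbb{R}$, the quotient $\tanh$ is real-analytic on all of $\mathbb{R}$.

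Next I would assemble the composition and product. Because $x \mapsto e^x$ maps $\mathbb{R}$ into $(0,\infty) \subset \mathbb{R}$ and $\tanh$ is analytic on all of $\mathbb{R}$, the composition theorem gives that $x \mapsto \tanh(e^x)$ is real-analytic on $\mathbb{R}$. Multiplying by the analytic identity function and invoking closure under products then yields that $TeLU(x) = x \cdot \tanh(e^x)$ is real-analytic on $\mathbb{R}$, completing the argument.

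I expect the main obstacle to be the careful justification that $\tanh$ is analytic throughout $\mathbb{R}$: in the complex plane $\tanh(z)$ has poles at $z = i\pi(k + \tfrac{1}{2})$ for $k \in \mathbb{Z}$, so one must verify that none of these singularities lie on the real axis---equivalently, that $\cosh$ has no real zeros---in order to guarantee a holomorphic extension to a genuine complex neighborhood of $\mathbb{R}$. Once the pole locations are pinned down away from the real line, the composition and product steps are routine applications of the closure properties, and a secondary point worth stating explicitly is that the composition $\tanh \circ \exp$ only ever evaluates $\tanh$ on the positive reals, which lie safely inside its domain of analyticity.
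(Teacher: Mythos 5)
Your proof is correct, and its overall architecture matches the paper's: both decompose $TeLU$ into the identity, $e^x$, and $\tanh$, and both conclude by closure of analytic functions under composition and products. The genuine difference lies in the key step, namely the analyticity of $\tanh$ on all of $\mathbb{R}$. The paper certifies this by quoting the Maclaurin series $\tanh(x) = \sum_{n \geq 1} \frac{2^{2n}(2^{2n}-1)B_{2n}}{(2n)!}\, x^{2n-1}$, which converges only for $|x| < \frac{\pi}{2}$, and then asserts that the composition $\tanh(e^x)$ is analytic for all real $x$; taken literally this leaves a gap, since $e^x$ exits that disc of convergence as soon as $x > \ln(\pi/2) \approx 0.45$, and the quoted series only certifies analyticity of $\tanh$ near the origin. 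Your route---writing $\tanh = \sinh/\cosh$ with $\sinh$, $\cosh$ entire and $\cosh > 0$ on $\mathbb{R}$, equivalently observing that the poles of $\tanh$ sit at $i\pi\left(k+\tfrac{1}{2}\right)$, safely off the real axis---establishes real-analyticity of $\tanh$ at \emph{every} real point, so the composition step is fully justified; your added remark that $\tanh \circ \exp$ only ever evaluates $\tanh$ on $(0,\infty)$ is a nice redundancy on top of that. In short, your argument supplies rigor exactly where the paper's proof is weakest, at the cost of never producing an explicit series representation; the paper's version gestures at explicit Bernoulli-number coefficients (and a final power series $\sum_n b_n x^{n+1}$ for $TeLU$), but that explicitness is not needed for the lemma and is what creates the convergence-radius issue in the first place.
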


\begin{proof}
We begin by noting that $ TeLU(x) = x \cdot \tanh(e^x) $ is the product of the identity function and the composition of the hyperbolic tangent and the exponential functions.

First, consider the exponential function $ e^x $, which has the well-known Maclaurin series expansion:

\[
e^x = \sum_{n=0}^{\infty} \frac{x^n}{n!}
\]

This series converges for all $x \in \mathbb{R}$, showing that $e^x$ is analytic.

Next, we recall the Maclaurin series expansion for the hyperbolic tangent function $ \tanh(x) $:

\[
\tanh(x) = \sum_{n=1}^{\infty} \frac{2^{2n} (2^{2n} - 1) B_{2n}}{(2n)!} x^{2n-1}
\]

where $ B_{2n} $ are the Bernoulli numbers. This series also converges for all $ |x| < \frac{\pi}{2} $, establishing that $ \tanh(x) $ is analytic within this domain.

Since $ \tanh(e^x) $ is the composition of the analytic functions $ \tanh(x) $ and $ e^x $, $ \tanh(e^x) $ is analytic for all $x \in \mathbb{R}$.

Finally, $ TeLU(x) = x \cdot \tanh(e^x) $ is the product of the analytic function $ \tanh(e^x) $ and the identity function $ x $, which is itself analytic. By the closure properties of analytic functions, the product of two analytic functions is also analytic.

Thus, $ TeLU(x) = x \cdot \tanh(e^x) $ is an analytic function, as it can be expressed as a power series:

\[
TeLU(x) = \sum_{n=0}^{\infty} b_n x^{n+1}
\]

where $ b_n \in \mathbb{R} $ are constants derived from the power series of $ \tanh(e^x) $. This completes the proof.
\label{thm:analyticTeLU}
\end{proof}

Having established that $TeLU$ is analytic, it is important to revisit the practical benefits this property brings to neural network architectures. The smoothness and infinite differentiability of $TeLU$ enhance its compatibility with second order optimization methods such as Natural Gradient Descent (NGD) \cite{NGD}. This leads to more stable convergence and improved training efficiency, especially in complex models. Additionally, the smooth transfer of information throughout the network helps promote balanced neuron activations, allowing for robust feature representations and better generalization to unseen data. The analytic nature of TeLU also opens up opportunities for utilizing sophisticated mathematical tools, offering deeper insights and potential for further optimization during training and inference. We now show that $TeLU$, being a universal approximator that is also analytic, leads to analytic universal approximations:

\begin{Thm}
The function $ TeLU(x) = x \cdot \tanh(e^x) $ is an analytic universal approximator. In other words, any continuous function $ f(x) $ can be approximated by a linear combination of $ TeLU $ nonlinearities, and the resulting approximation will be analytic.
\end{Thm}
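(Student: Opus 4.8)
The plan is to combine the two lemmas already proven in this subsection. The first lemma establishes that $TeLU$ is a universal approximator: for any continuous $f \in C(I_n)$ and any $\epsilon > 0$, there exists a finite linear combination $g(x) = \sum_{j=1}^{M} \hat{\alpha}_j \, TeLU(\hat{y}_j^\intercal x + \hat{b}_j)$ satisfying $\lvert f(x) - g(x) \rvert < \epsilon$ for all $x \in I_n$. The second lemma establishes that $TeLU(x) = x \cdot \tanh(e^x)$ is analytic on all of $\mathbb{R}$. The theorem merely asserts that these two properties coexist in the \emph{same} approximant, so the task reduces to showing that the particular $g(x)$ produced by the first lemma inherits analyticity from $TeLU$.

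First I would invoke the first lemma verbatim to obtain the approximating sum $g(x)$ and fix the notation for its coefficients $\hat{\alpha}_j$, weight vectors $\hat{y}_j$, and biases $\hat{b}_j$. Next I would argue analyticity of each summand: by the second lemma $TeLU$ is analytic on $\mathbb{R}$, and the map $x \mapsto \hat{y}_j^\intercal x + \hat{b}_j$ is affine, hence analytic (as a multivariate polynomial of degree one); since the composition of analytic functions is analytic, each term $TeLU(\hat{y}_j^\intercal x + \hat{b}_j)$ is analytic on all of $\mathbb{R}^n$. Then I would apply the closure of the class of analytic functions under scalar multiplication and finite sums: multiplying by the constant $\hat{\alpha}_j$ preserves analyticity, and a finite sum of analytic functions is analytic. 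Because $M = 2N$ is finite, $g(x)$ is therefore analytic on $\mathbb{R}^n$, and in particular on $I_n$.

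To close the argument I would simply restate that $g$ satisfies the uniform bound $\lvert f(x) - g(x) \rvert < \epsilon$ on $I_n$ from the first lemma while being analytic from the reasoning above, which is exactly the definition of an analytic universal approximator. The genuinely novel mathematical content has already been discharged by the two preceding lemmas, so this theorem is essentially a bookkeeping corollary that records their conjunction.

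The main obstacle, such as it is, is conceptual rather than technical: one must be careful that the \emph{closure properties} being invoked are the right ones. In particular, analyticity is a local property preserved under affine precomposition and finite linear combination, but \emph{not} in general under infinite sums or limits; the proof works precisely because the universal approximation lemma delivers a \emph{finite} sum of $M = 2N$ terms rather than an infinite series. I would therefore emphasize the finiteness of the combination as the load-bearing hypothesis, and note that no claim is made about the analyticity of any limiting object as $\epsilon \to 0$ — each fixed tolerance is met by its own finite, analytic approximant.
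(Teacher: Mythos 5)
Your proposal is correct and follows essentially the same route as the paper: both treat the theorem as a corollary of the two preceding lemmas, taking the finite approximating combination $g(x)$ from the universal approximation lemma and concluding that $g$ is analytic via closure of analytic functions under affine precomposition, scalar multiplication, and finite sums. Your explicit attention to the affine-composition step and to the finiteness of the sum being load-bearing is, if anything, slightly more careful than the paper's version (which simply expands each summand as a power series and sums), but the underlying argument is the same.
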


\begin{proof}
We have already established that $ TeLU(x) = x \cdot \tanh(e^x) $ is analytic. Let $ f(x) \in C(\mathbb{R}^n) $ be an arbitrary continuous function, and let $ g(x) $ be a linear combination of $ TeLU $ nonlinearities that approximates $ f(x) $. Thus, for any $ \epsilon > 0 $, there exists a function $ g(x) $ of the form

\[
g(x) = \sum_{j=1}^{M} \alpha_j \cdot TeLU(w_j^\intercal x + b_j)
\]

such that the approximation error satisfies

\[
|g(x) - f(x)| < \epsilon \quad \forall x \in \mathbb{R}^n.
\]

Since $ TeLU(x) $ is analytic, we can express each $ TeLU(w_j^\intercal x + b_j) $ as a convergent power series. Therefore, the function $ g(x) $ becomes a summation of power series expansions for each term:

\[
g(x) = \sum_{j=1}^{M} \alpha_j \cdot \sum_{n=0}^{\infty} c_n (w_j^\intercal x + b_j)^n
\]

where $ c_n $ are the coefficients of the power series expansion. Rewriting the expression, we have:

\[
g(x) = \sum_{j=1}^{M} \sum_{n=0}^{\infty} \alpha_j c_n (w_j^\intercal x + b_j)^n = \sum_{j=1}^{M} h_j(x),
\]

where $ h_j(x) $ represents the power series expansion of each individual term $ TeLU(w_j^\intercal x + b_j) $. Since the sum of analytic functions is also analytic, the approximation $ g(x) $ is a sum of $ M $ analytic functions $ h_j(x) $, and thus $ g(x) $ is analytic.

Consequently, the linear combination of $ TeLU(x) $ nonlinearities that approximates $ f(x) $ is itself an analytic function. Therefore, we conclude that $ TeLU(x) = x \cdot \tanh(e^x) $ is an analytic universal approximator.
\end{proof}

With the proof that an architecture employing TeLU as its hidden activation function is capable of analytic universal approximations, the model achieves several theoretical and practical advantages. Theoretically, this capability ensures that the architecture can approximate a wide range of continuous and differentiable functions with arbitrary precision, which is essential in modeling complex, nonlinear systems. Analytic universal approximation guarantees that not only can the model represent any function, but it can do so with a smooth, well-behaved structure, preserving differentiability across all layers. This property is crucial in tasks requiring optimization through gradient-based methods, as it mitigates issues related to non-smoothness and sharp transitions in the loss landscape. The analytic nature also allows the model to generalize better by capturing underlying relationships in data, rather than fitting to noise or outliers.

Practically, having an architecture with analytic universal approximation capabilities means that the model can be applied to a broader set of problems across domains such as physics, engineering, and finance, where smooth and continuous approximations are needed for high-fidelity simulations and predictions. For instance, in control systems or physical simulations, TeLU-enabled architectures can model complex dynamics with high precision, leading to better predictive power and stability. Additionally, this property often results in more efficient training, as the smoothness in activation and loss functions can lead to faster convergence and more reliable gradient flow, addressing problems like vanishing gradients.


Having established that $ TeLU(x) = x \cdot \tanh(e^x) $ is an analytic universal approximator, we now explore the theoretical and practical benefits of architectures capable of representing problems with analytic universal approximations.

\subsubsection{Theoretical Benefits}

\begin{Prop}
\textit{Guaranteed Approximability of Continuous Functions:} An architecture using analytic universal approximators can approximate any continuous function $ f(x) \in C(\mathbb{R}^n) $ to an arbitrary degree of accuracy. This is ensured by the universal approximation theorem.
\end{Prop}

Since $ TeLU(x) $ is analytic, it guarantees that smooth, continuous functions can be approximated with a high degree of precision. The smoothness of analytic functions, which are infinitely differentiable within their domain, enables the architecture to handle complex functional relationships.

\begin{Prop}
\textit{Convergence Guarantees:} Due to the well-behaved nature of analytic functions, architectures using $ TeLU(x) $ exhibit stable and efficient convergence properties when optimized using gradient-based methods. 
\end{Prop}

Since analytic functions have continuous derivatives, the resulting loss surface is smooth, allowing gradient-based optimization methods to converge faster and more reliably to optimal solutions without getting stuck in local minima.

\begin{Prop}
\textit{Higher-Order Information:} Analytic functions are infinitely differentiable, meaning that architectures with analytic approximators can capture not only the function's value but also higher-order derivatives, enabling more precise modeling of the underlying process.
\end{Prop}

This property is crucial in tasks that require higher-order information, such as physics simulations, where capturing the curvature of the function is essential. It also allows architectures to model complex dynamical systems that require higher-level functional representations.

\begin{Prop}
\textit{Avoidance of Pathological Behavior:} Non-analytic functions may exhibit discontinuities or oscillations, which are not desirable in many practical applications. Analytic approximators avoid these issues by ensuring smooth and well-behaved representations.
\end{Prop}

Non-analytic functions can introduce undesirable behaviors, such as discontinuities or oscillations, which may complicate the learning process and reduce the model’s effectiveness in practical applications. By contrast, architectures using analytic approximators, like TeLU, ensure that the approximated functions are smooth and continuous. This smoothness prevents pathological behaviors, enabling the model to produce well-behaved representations that are more reliable and predictable, particularly in real-world scenarios where stability and smooth transitions are critical.


\subsubsection{Practical Benefits}

\begin{Prop}
\textit{Improved Generalization:} An analytic universal approximator tends to generalize better to unseen data, as it avoids overfitting to noise or specific artifacts in the training data by leveraging smooth approximations.
\end{Prop}

In practice, smooth approximations help models capture the underlying patterns of the data without overfitting, making them more robust to noisy or incomplete datasets. This leads to improved performance on real-world tasks, particularly in domains with complex, non-linear relationships.

\begin{Prop}
\textit{Efficiency in Optimization:} Architectures using analytic functions provide smooth loss landscapes, allowing for more efficient optimization. This reduces the likelihood of getting trapped in local minima and speeds up convergence during training.
\end{Prop}

Since the gradients of analytic functions are stable and predictable, optimization techniques such as gradient descent perform more effectively. In practical terms, this means that training models with analytic approximators can be faster and require fewer iterations, making them suitable for large-scale machine learning tasks.

\begin{Prop}
\textit{Better Interpretability and Differentiability:} Analytic universal approximators offer a smooth, interpretable functional form, which makes them easier to understand and analyze. They are also well-suited for tasks requiring sensitivity analysis or gradient-based algorithms.
\end{Prop}

Because analytic functions are smooth and continuous, they are more interpretable. Additionally, their differentiability allows for the use of gradient-based optimization methods in tasks such as reinforcement learning or control systems, where differentiability is critical.

\begin{Prop}
\textit{Robustness to Numerical Precision:} Analytic functions, due to their smoothness, are less prone to numerical instability, making them more reliable when implemented on hardware with limited floating-point precision.
\end{Prop}

This is particularly important for edge computing or embedded systems, where precision is limited. Analytic universal approximators like $ TeLU(x) $ provide stable computations, reducing the risk of errors caused by floating-point approximations.

\begin{Prop}
\textit{Applicability in Physics-Informed Models:} Many real-world problems, such as those in physics or engineering, are governed by smooth, differentiable equations. Analytic approximators are well-suited for these domains because they can represent smooth transitions and capture physical laws more accurately.
\end{Prop}

In practical applications like fluid dynamics, electromagnetism, or mechanical systems, analytic functions provide a natural fit due to their inherent smoothness and ability to model physical systems with high precision.

\begin{Prop}
\textit{Smoothness in Adversarial Settings:} Analytic functions are naturally resistant to adversarial perturbations because small changes in input lead to small, predictable changes in output, making it harder for adversarial examples to exploit vulnerabilities in the model.
\end{Prop}

In adversarial machine learning, smoothness helps protect the model from being tricked by small, carefully crafted perturbations. This makes architectures based on analytic approximators more robust against adversarial attacks, enhancing security and reliability in sensitive applications.

Thus the architectures that utilize analytic universal approximators, such as $ TeLU(x) = x \cdot \tanh(e^x) $, offer significant theoretical and practical advantages. They guarantee smooth, continuous, and well-behaved approximations, provide better generalization, and are computationally efficient. These properties make analytic universal approximators highly valuable for a wide range of applications, from scientific computing and optimization to adversarial robustness and interpret-ability.

\subsection{Stability}
\label{subsect:stabilitytheory}





In deep neural networks, stability refers to the network's ability to produce consistent and reliable outputs when exposed to small perturbations or changes in input data, weights, or training conditions. Theoretical stability is crucial for ensuring that the network generalizes well to unseen data and avoids issues like exploding or vanishing gradients, which can lead to erratic behavior during training. Without proper attention to stability, networks may suffer from poor convergence, unstable training dynamics, or a lack of reliability in deployment.


Activation functions play a crucial role in influencing the stability of deep neural networks by determining how neurons respond to input signals and propagate gradients during training. Functions like ReLU \cite{ReLU} mitigate the vanishing gradient problem, enabling better gradient flow in deep networks, but can lead to dead neurons, causing instability. Conversely, sigmoid and tanh activations are bounded functions that may limit instability, but suffer from vanishing gradients which leads to slow convergence and unstable learning. More recent activation functions such as Leaky ReLU \cite{LReLU} and ELU \cite{elu}, are designed to balance gradient flow and prevent instability by reducing output bias and thereby approach Fisher ideal learning. Smooth activation functions have also been observed to promote the learning stability of a mode, as demonstrated by Zheng et al \cite{Softplus} and Ramachandran et al \cite{Ramachandran2017SwishAS}.



The vanishing gradient and exploding gradient problems directly impact the stability of deep neural networks by making weight updates during training inefficient or unstable. In the vanishing gradient problem, small gradients in deep layers prevent effective weight updates, causing the network to learn slowly or stop learning altogether. On the other hand, exploding gradients cause large weight updates, leading to oscillations and divergent training. By addressing these issues with activation functions that mitigate both underflow and overflow, deep networks maintain stable learning, ensuring consistent weight updates and convergence.


Activation functions that exhibit near-zero expected output contribute to neural network stability by reducing output bias, ensuring that neurons do not consistently produce large positive or negative values. This mitigates the risk of saturating activation functions like sigmoid or tanh, where extreme outputs lead to vanishing gradients. By centering activations around zero, these functions allow for more balanced weight updates during backpropagation. This behavior aligns with Fisher's Ideal Learning, which suggests that stable learning arises when gradients provide unbiased information about the parameter space, facilitating efficient and consistent training.


Activation functions that are smooth and analytic contribute to learning stability in neural networks by ensuring that gradients change gradually and predictably during training. The smoothness of these functions avoids sudden jumps in gradient values, which can destabilize learning by causing erratic weight updates. Their analytic properties enable continuous differentiability, which is compatible with second order optimization techniques that utilize the curvature of the loss landscape to promote stability of learning. These characteristics allow for more consistent convergence, especially in deep networks, by preserving valuable gradient information throughout the layers through dispersity of activation.


Activation functions that are simple to calculate, such as ReLU and Leaky ReLU, contribute to learning stability by reducing the computational complexity of training deep neural networks. Simpler calculations minimize the risk of numerical errors like underflow or overflow, which can occur when nonlinear operations are repeatedly applied for the activation of a single neuron. Once underflow or overflow emerges within a term of a complex function, outer functions are likely to be driven towards additional underflow or overflow. Subsequent layers may also propagate the numerical instability to all the outputs, directly impacting inference. By avoiding these issues, simple activation functions help maintain numerical precision and prevent cascading errors in gradient computations, ensuring more stable learning, especially in deep architectures.


\begin{table}[]
\centering
 \caption[Stability Heuristics of Activation Functions.]
 {\textbf{Stability Heuristics of Activation Functions.} Summary of heuristics on which properties of an activation function most impact the learning stability of a model. 'Nesting' quantifies the depth of nested non-linearities within each activation function which may be adverse to numerical stability. 'Vanishing' informs if a function saturates towards deactivation at an exponential rate or faster, resulting in more common vanishing gradients. 'Smooth' summarizes whether a function is differentiable at all points. 'Output Bias' measures the expected output of the activation function, given input taken from a standard Gaussian distribution.  }
    \begin{tabular}{||c c c c c ||} 
 \hline
 Function & Nesting & Vanishing & Smooth & Output Bias \\
 \hline\hline
 TeLU & 2 & No & Yes & 0.2621 \\ 
 \hline
 ReLU & 1 & Yes & No & 0.3989 \\
 \hline
 ELU & 2 & Yes & No & 0.1605\\
 \hline
 SiLU & 2 & No & Yes & 0.2066 \\
 \hline
 GELU & 2 & Yes & Yes & 0.2821 \\
 \hline
 Mish & 3 & No & Yes & 0.2404 \\
 \hline
 Logish & 2 & No & Yes & 0.1398 \\
 \hline
 Smish & 3 & No & Yes & 0.1201 \\
 \hline
\end{tabular}
\label{tab:StabilityTable}
\end{table}

We present a detailed summary of the heuristics that we believe most significantly influence the learning stability of a model, focusing on specific properties of activation functions. First, we consider the 'Nesting' column, which reflects the complexity introduced by the number of nested non-linearities within an activation function. Higher levels of nested non-linearities can adversely affect the numerical stability of a model, particularly as the network depth increases. Next, the 'Vanishing' column assesses the tendency of an activation function to approach saturation and deactivation. If a function's derivative exhibits exponential decay as inputs grow towards $-\infty$, it can result in a greater occurrence of vanishing gradients, hindering effective backpropagation \cite{backprop} and slowing down learning in deep networks. In ReLU, this effect is exaggerated with derivatives of 0 for any negative input in what is known as the dying ReLU problem. Lu et al \cite{lu2019dying} discuss the strong effects of the dying ReLU problem in deep and narrow architectures, which are often necessary for efficient learning of complex behavior. The asymptotic decay of each activation function as inputs grow negative can be viewed in Table \ref{tab:Asymptotic} within Subsection \ref{subsect:persistentgradtheory}.

Our 'Smooth' column refers to the non-zero differentiability of the activation function across its entire input domain. A function that is continuously differentiable at all points promotes smoother gradient flow and reduces potential disruptions during the training process. ReLU and ELU are not smooth functions due to their linear definition at positive inputs. Meanwhile, the non-monotonic functions are all smooth, as they can be indefinitely differentiated at all points without degenerating to zero. Finally, the 'Output Bias' column evaluates the expected output of the activation function when the input follows a standard Gaussian distribution. This is evaluated as $\int^{\infty}_{-\infty} p(x) \cdot f(x) dx$, where $p(x)$ is the unit Gaussian distribution $\mathcal{N}(0,1)$, or $\frac{1}{\sqrt{2 \pi}} \cdot e^{- \frac{x^2}{2}}$. $f(x)$ is the mathematical definition of each activation function. This measure helps us understand whether the function introduces an inherent bias in its outputs, which can affect weight initialization and, ultimately, learning dynamics. The closer a function's output bias is to zero, the more it enhances the model's convergence efficiency \cite{LecunSymmetricAdvantage.66.2396}. 



We do not explicitly list exploding gradients in the Table, as their behavior requires a more nuanced, detailed analysis. While the activation functions in our Table are primarily linear units, behaving linearly as inputs grow large, the real advantage of our smooth non-monotonic functions—TeLU, SiLU, GELU, Mish, Logish, and Smish—emerges at small positive inputs. These functions exhibit sub-linear growth, meaning their outputs increase more slowly than their inputs under these conditions. This sub-linearity is critical in controlling gradient behavior, as it naturally prevents the derivatives from becoming excessively large during backpropagation.
By reducing the likelihood of gradients exploding, non-monotonic activation functions contribute significantly to greater learning stability. This stability is essential for deep networks, particularly when training with complex data or architectures that are prone to issues with gradient propagation. As exploding gradients can severely hinder convergence and result in unstable weight updates, choosing activation functions that naturally limit gradient magnitudes offers a practical solution to this widespread problem.

Mitigating exploding gradients is not only about achieving smoother training but also about improving the network's overall learning efficiency. When gradients are kept within a controlled range, the model can train without sudden fluctuations in the loss function, resulting in more stable and predictable weight updates. This stability helps the optimization process stay on course, leading to faster convergence and better generalization to new, unseen data. Activation functions like TeLU and its non-monotonic counterparts play a crucial role in promoting this stability by naturally preventing the gradients from becoming excessively large. By employing smooth, analytic functions like TeLU, these architectures inherently avoid issues such as abrupt changes or sharp boundaries in the model's output. This makes them particularly well-suited for tasks where continuity and smoothness are critical, such as in control systems, robotics, or financial modeling, where precise, stable responses are essential for success.

This can be theoretically shown as follows:

\begin{Thm}
Let $ f(x) \in C(\mathbb{R}^n) $ be a continuous function. Suppose we approximate $ f(x) $ using two architectures: one based on an analytic universal approximator $ TeLU(x) = x \cdot \tanh(e^x) $, and the other based on the non-analytic activation function $ ReLU(x) = \max(0, x) $. The analytic universal approximator provides the following mathematically provable advantages:

\begin{enumerate}
    \item \textit{Smoothness and Continuity of Derivatives:} The analytic universal approximator is infinitely differentiable, whereas ReLU is piecewise linear and not differentiable at $ x = 0 $.
    
    \item \textit{Gradient Propagation Stability:} In deep networks, the gradient propagation using analytic functions remains bounded and stable, whereas ReLU introduces points where gradients vanish or explode, leading to poor training dynamics.
    
    \item \textit{Error Propagation in Deep Networks:} Analytic functions provide smoother error propagation, leading to better control of gradient norms, while ReLU propagates discontinuous gradients, leading to unstable optimization.
\end{enumerate}
\end{Thm}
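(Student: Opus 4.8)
The plan is to treat the three asserted advantages separately, since each rests on a different consequence of analyticity, and to phrase the informal claims of items 2 and 3 as precise statements about backpropagated gradients in an $L$-layer network. Item 1 needs nothing beyond the earlier lemma establishing that $TeLU(x)=x\cdot\tanh(e^x)$ is analytic on $\mathbb{R}$: analyticity gives $TeLU\in C^\infty(\mathbb{R})$, so every derivative exists and is continuous. For $ReLU$ I would exhibit non-differentiability directly---the left and right difference quotients at $x=0$ converge to $0$ and $1$, so $r'(0)$ does not exist and $r'$ jumps from $0$ to $1$ across the origin---which is a one-line argument.

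For item 2 I would first record a uniform bound on the activation derivative. Differentiating gives
\[
TeLU'(x)=\tanh(e^x)+x\,e^x\,\bigl(1-\tanh^2(e^x)\bigr),
\]
and computing the limits shows $TeLU'(x)\to 0$ as $x\to-\infty$ and $TeLU'(x)\to 1$ as $x\to+\infty$ (the term $x\,e^x\,\operatorname{sech}^2(e^x)$ vanishes at both ends). Since $TeLU'$ is continuous with finite limits, it is bounded: there is a constant $B$ with $|TeLU'(x)|\le B$ for all $x$. I would then write the backpropagated gradient through layers $1,\dots,L$ as a product of diagonal derivative matrices and weight matrices via the chain rule, and observe that each activation factor contributes at most $B$ in norm and, crucially, is never identically zero on any half-line. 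By contrast $r'$ equals $0$ on all of $(-\infty,0)$, so a unit driven negative contributes an exact factor of zero---the dying-$ReLU$ vanishing mechanism---while the discontinuity of $r'$ at the origin means an arbitrarily small input perturbation can flip a factor between $0$ and $1$.

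For item 3 I would upgrade boundedness to Lipschitz control: because $TeLU$ is analytic, $TeLU''$ is continuous and (by the same asymptotic analysis) bounded, so $TeLU'$ is globally Lipschitz. A perturbation of the pre-activations then changes the product of derivative factors by an amount controlled linearly in the perturbation, giving the promised smooth, bounded error propagation; for $ReLU$ the discontinuous $r'$ admits no such Lipschitz bound, so an infinitesimal perturbation near a sign change produces an $O(1)$ jump in a gradient factor.

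The main obstacle is that the statements of items 2 and 3 are qualitative, so the real work lies in choosing the right formalization rather than in any hard computation. In a genuine deep network the gradient norm also depends on the weight matrices, which can themselves cause blow-up or decay independently of the activation; I would therefore isolate the activation's contribution---for instance by bounding the product of the diagonal derivative factors while holding the weights fixed, or by assuming norm-controlled weights---so that the comparison is genuinely attributable to $TeLU$ versus $ReLU$ and not confounded by the linear layers. Making \emph{stability} precise in this way is the crux of the argument.
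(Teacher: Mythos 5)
Your proposal is correct and follows the same three-part skeleton as the paper's proof, but it is more rigorous in two places and genuinely diverges in a third. For item 1 both you and the paper invoke the earlier analyticity lemma; your explicit one-line computation of the left and right difference quotients of $ReLU$ at the origin is a small addition the paper omits in this proof. For item 2 the paper writes the layer gradient as $W^{(l)}\cdot\bigl(\tanh(e^x)+x\,e^x\,\mathrm{sech}^2(e^x)\bigr)$ and simply \emph{asserts} that this is bounded and ``never vanishes nor explodes,'' whereas you actually derive boundedness (continuity of $TeLU'$ plus finite limits $0$ and $1$ at $\mp\infty$) and then track the product of diagonal derivative factors across layers --- the same decomposition, done properly. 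The real divergence is item 3: the paper's argument there is essentially a restatement of item 2 with $\partial\mathcal{L}/\partial a^{(L)}$ in front of the same formula, while you give ``smoother error propagation'' independent content by noting that $TeLU''$ is bounded, hence $TeLU'$ is globally Lipschitz, so perturbations of pre-activations change the gradient product by an amount linear in the perturbation --- a quantitative stability statement that $ReLU$'s discontinuous derivative provably cannot satisfy. Finally, your insistence on isolating the activation's contribution from the weight matrices fixes a genuine looseness in the paper: as written, the paper's claim that the TeLU gradient ``never explodes'' is false without a hypothesis on $W^{(l)}$, since large weights can blow up gradients regardless of the activation; your formulation (holding weights fixed or norm-controlled) is what makes the comparison attributable to the nonlinearity itself.
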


\begin{proof}
We will prove each point in turn.

\textit{1. Smoothness and Continuity of Derivatives:}

As demonstrated in \ref{thm:analyticTeLU} $ TeLU(x) = x \cdot \tanh(e^x) $, TeLU is analytic. In being analytic, TeLU may be defined as a convergence power series and is therefore infinitely differentiable at all points.





\textit{2. Gradient Propagation Stability:}

Consider a deep network with $ L $ layers. Let $ g^{(l)}(x) $ represent the gradient of the output with respect to the input at layer $ l $, defined as:
\[
g^{(l)}(x) = \frac{d}{dx} \left( W^{(l)} \cdot a^{(l-1)}(x) \right),
\]
where $ W^{(l)} $ is the weight matrix and $ a^{(l-1)}(x) $ is the activation of layer $ l-1 $.

Using an analytic activation function like $ TeLU(x) $, the gradient propagation remains continuous and smooth:
\[
g^{(l)}(x) = W^{(l)} \cdot \frac{d}{dx} TeLU(x) = W^{(l)} \cdot \left( tanh(e^x) + x \cdot e^x \cdot sech^2(e^x) \right),
\]
which is bounded and continuous for all $ x \in \mathbb{R} $. The gradient never vanishes nor explodes, as the derivative of $ TeLU(x) $ is smooth for all $ x $.

In contrast, for $ ReLU(x) $, the gradient at layer $ l $ is:
\[
g^{(l)}(x) = W^{(l)} \cdot \frac{d}{dx} ReLU(x) = W^{(l)} \cdot 1(x > 0).
\]

Where $1(x > 0)$ evaluates to 1 if $x>0$ and 0 if $x \leq 0$. At $ x = 0 $, the gradient can be discontinuous or zero, leading to two major issues:
1. \textit{Vanishing gradients} when a large number of neurons have negative input, causing the gradient to become zero and halting learning.
2. \textit{Exploding gradients} in deep layers, especially when weights $ W^{(l)} $ are large, causing sharp changes in the gradient due to the piecewise linear nature of ReLU.

Thus, analytic universal approximators offer more stable gradient propagation across deep networks.

\textit{3. Error Propagation in Deep Networks:}

Let $ \mathcal{L}(x) $ represent the loss function of the network, and let the gradient of the loss with respect to the input at the final layer be $ \frac{\partial \mathcal{L}}{\partial x} $.

For the analytic activation $ TeLU(x) $, the gradient propagation follows from the smooth nature of $ TeLU $:
\[
\frac{\partial \mathcal{L}}{\partial x} = \frac{\partial \mathcal{L}}{\partial a^{(L)}} \cdot \frac{d}{dx} TeLU(x) = \frac{\partial \mathcal{L}}{\partial a^{(L)}} \cdot \left( tanh(e^x) + x \cdot e^x \cdot sech^2(e^x) \right).
\]
The gradient is well-controlled due to the smooth behavior of $ TeLU(x) $, ensuring that error propagation through layers remains stable and predictable.

For $ ReLU(x) $, the error propagation becomes problematic:
\[
\frac{\partial \mathcal{L}}{\partial x} = \frac{\partial \mathcal{L}}{\partial a^{(L)}} \cdot \frac{d}{dx} ReLU(x) = \frac{\partial \mathcal{L}}{\partial a^{(L)}} \cdot \mathbb{1}(x > 0).
\]
This creates sharp, discontinuous changes in the error gradient when $ x $ crosses zero, leading to unpredictable error propagation. Furthermore, the gradient can become zero when large numbers of neurons are inactive (i.e., in the dead zone), effectively halting the learning process for those weights.

Thus, analytic activations offer more stable and predictable error propagation in deep networks.

\end{proof}

Smoothness and continuity of derivatives, gradient propagation stability, and error propagation stability are crucial for ensuring the effective training of deep neural networks. Smooth activation functions with continuous derivatives allow for stable and predictable gradient updates, preventing erratic changes during backpropagation. This, in turn, promotes gradient propagation stability, which is essential for maintaining useful gradient values across many layers, avoiding the problems of vanishing or exploding gradients. Stable error propagation further ensures that the network can consistently learn from its mistakes, leading to more reliable convergence and better generalization to unseen data. Together, these properties form the foundation of stable and efficient learning in deep networks.

\newpage
\section{Experimental Validation}

This section presents a detailed experimental validation of the TeLU activation function, building on the theoretical foundation established in the previous section. Here, we provide empirical results that support and confirm TeLU's proposed benefits, such as enhanced gradient behavior, improved convergence speed, computational efficiency, and stability. By testing TeLU across various neural network architectures and datasets, we aim to demonstrate its effectiveness in addressing challenges like inefficient learning, the vanishing gradient problem, and computational complexity. These experiments serve to bridge the gap between theory and practice, offering concrete evidence of TeLU’s potential as a superior alternative to traditional activation functions.

\subsection{Persistent Gradients of the Saturation Region}
\label{subsect:persistentgradexp}

In Subsection \ref{subsect:persistentgradtheory}, we examined the rate at which the derivatives of various activation functions vanish as inputs approach $x \to -\infty$. By categorizing each function's vanishing gradient behavior using asymptotic decay classes, we identified TeLU as belonging to the class with the most persistent gradients, $\Theta(\frac{x}{e^x})$. Additionally, we noted that TeLU maintains stronger derivatives compared to other nonlinearities, particularly at the onset of its deactivation region.

\subsubsection{FashionMNIST Dataset on MLP with Negative Biases}

We demonstrate the benefit of these characteristics experimentally by attempting to train Multi-Layer Perceptrons (MLPs) where neurons have been initialized with a bias of -10. The large negative bias dictates the activation of the neuron towards the saturating region of lower-bounded nonlinearities. By introducing this large negative bias, we evaluate the ability of the architecture to recover from a point where backpropagations yields vanishing gradients. This extreme case lets us focus on the direct effects of each activation function's asymptotic saturation rate. We utilize two hidden layers and train on the FashionMNIST dataset \cite{MNIST} for 200 epochs. We optimize according to the SGD optimizer with a momentum of 0.9 and a weight decay of 0.0005. This reduced hidden layer count effectively reduces the number of epochs that we would need to run this experiment to witness activation function recovery. We outline our configuration in table \ref{tab:NegativeBiasHyperparams}.

The presence of weight decay does not significantly impact recovery in practice, as the bias is what is keeping the neuron activations in their saturation regions. We display the validation accuracies over the 200 epochs in Figure \ref{fig:neg10bias200} and testing accuracies in Table \ref{tab:negbiastestsummary}, with each data point averaged over 10 trials. We observe that TeLU tends to begin its successful stride towards recovery before any other nonlinearity, due to having stronger inactive gradients near the origin where expected inputs are likely to fall. We observe Mish, SiLU, Smish, and Logish begin their recovery sooner than ELU, GELU, and ReLU, which is what we would expect to see from our analysis. Furthermore, TeLU's accuracy approaches a competitive level at a faster rate than do the competing activation functions.

\begin{table}[b]
    \centering
    \caption[MLP FashionMNIST Negative-Bias Experiment Configuration.]
    {\textbf{MLP FashionMNIST Negative-Bias Experiment Configuration.} Configuration of experiments ran on an MLP model on the FashionMNIST dataset with biases initialized to -5 to test architecture ability to recover from vanishing gradients. }
    \label{tab:NegativeBiasHyperparams}
    \begin{tabular}{||c c||} 
 \hline
 Hyperparameter & Value \\
 \hline\hline
 Train/Val/Test Split & 50,000 / 10,000 / 10,000\\ 
 \hline
 Hidden Layer Count & 2\\ 
 \hline
 Hidden Layer Width & 128\\ 
 \hline
 Initialization & Xavier Uniform\\ 
 \hline
 Optimizer & Mini-batch SGD\\
 \hline
 Momentum & 0.9\\
 \hline
 Batch Size & 128\\
 \hline
 Learning Rate & 0.005\\
 \hline
 Weight Decay & 0.0005\\
 \hline
 Number of Epochs & 200\\
 \hline
 Number of Trials & 10\\
 \hline
\end{tabular}
\end{table}

\begin{figure}
    \centering
    \includegraphics[width=0.75\linewidth]{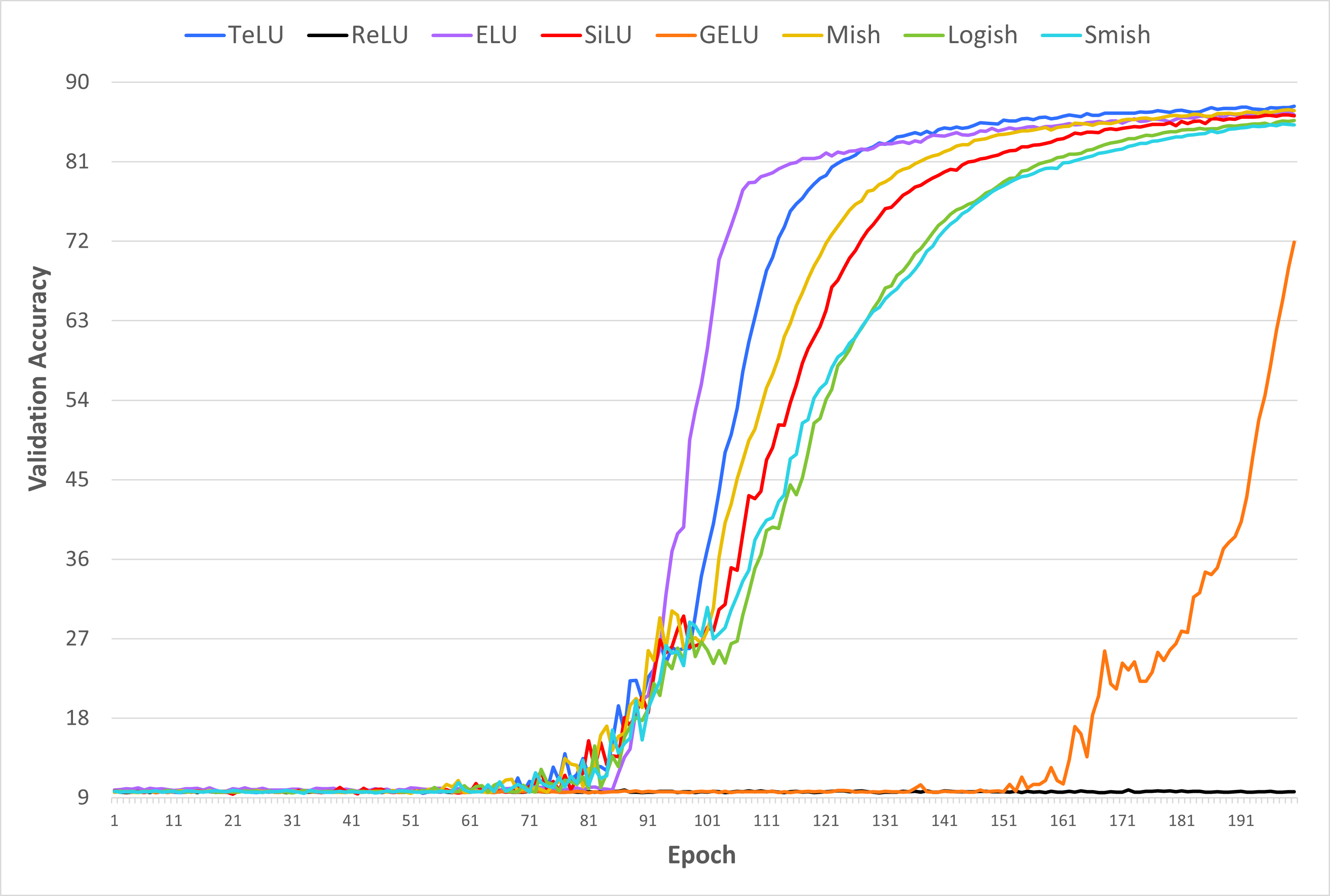}
    \caption[MLP FashionMNIST Light-Bias 200-Epoch Validation Accuracy.]
    {\textbf{MLP FashionMNIST Light-Bias 200-Epoch Validation Accuracy.} Validation accuracy progression of MLP models employing different hidden activation functions. Each architecture is initialized with bias values of -10 and trained over 300 epochs. Each data point is averaged over 10 trials.}
    \label{fig:neg10bias200}
    \vspace{1.0\baselineskip}
\end{figure}

We demonstrate the consistency of this experiment by initializing the bias learned parameters to -20, pushing neurons further into their saturation regions. We plot the validation accuracies for each epoch, averaged over 9 trials in Figure \ref{fig:neg20bias200}. We notice that TeLU continues to be the first to recover from vanishing gradients, giving further empirical evidence that it is less prone to stalling learning when TeLU neurons are pushed into their inactive regions. We also note that the order of each architecture begins their recovery in the same order as before, adding the relevance of the asymptotic decay classifications. Mish, SiLU, Logish, and Smish; belonging to the same asymptotic decay class as TeLU but having weaker gradients near the origin; being their recovery before ELU, GELU, and ReLU. We summarize our test accuracies averaged over 10 trials per data point for the 200 epoch experiments in Table \ref{tab:negbiastestsummary}. We recognize that within the 200 epochs TeLU architectures generalize to a superior testing accuracy in both cases.

For biases intialized to -20, we notice that 200 epochs may not be enough time for GELU to begin its recovery. Therefore we increase the number of epochs to 300 to gain further insight into the remainder of each architecture's recovery. Figure \ref{fig:neg20bias300} plots the 300 epoch journey of each architecture, averaged over 5 trials. Again, we see nonlinearities begin their recovery in the order of TeLU, Mish, SiLU, Logish, Smish, ELU, and GELU. It is worth noting that although the order of recovery follows our asymptotic classifications, the rate of recovery once begun depends on other factors we have not yet accounted for. We hypothesize that TeLU's near-linearity and ELU's linearity throughout positive inputs helps both shoot up to their concluding validation accuracies faster than other nonlinearities. Throughout all trials, ReLU exhibits no ability to recover, as is expected from its zero-gradient off state, which characterizes its dying neuron problem. 


\begin{figure}
    \centering
    \includegraphics[width=0.75\linewidth]{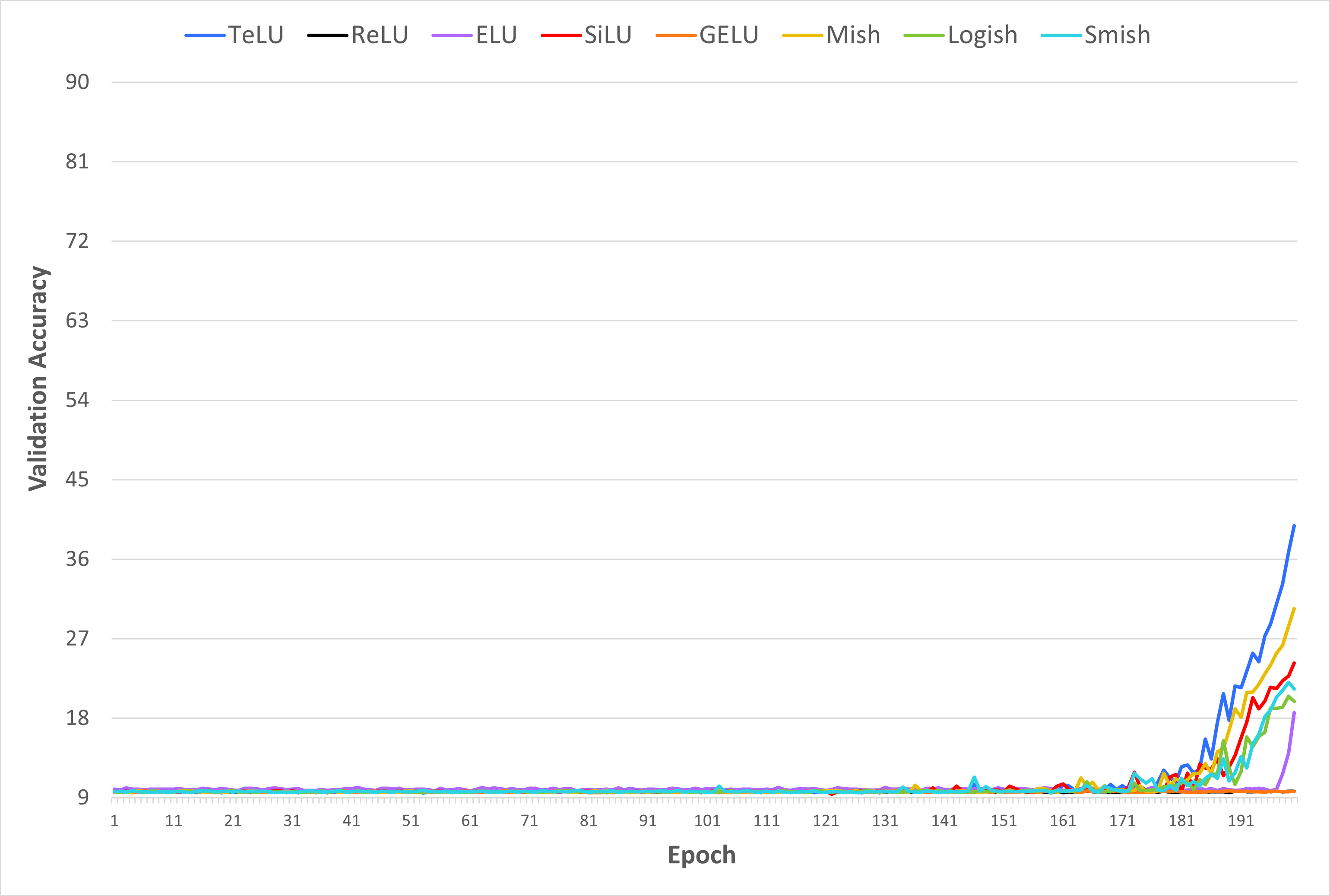}
    \caption[MLP FashionMNIST Heavy-Bias 200-Epoch Validation Accuracy.]
    {\textbf{MLP FashionMNIST Heavy-Bias 200-Epoch Validation Accuracy.} Validation accuracy progression of MLP models employing different hidden activation functions. Each architecture is initialized with bias values of -20 and trained over 200 epochs. Each data point is averaged over 10 trials.}
    \label{fig:neg20bias200}
    \vspace{1.0\baselineskip}
\end{figure}


\begin{table}
\centering
 \caption[MLP FashionMNIST Negative-Bias Test Accuracy Averaged over 10 Trials.]
 {\textbf{MLP FashionMNIST Negative-Bias Test Accuracy Averaged over 10 Trials.} Testing accuracies of MLP models employing various hidden activation functions after being initialized to negative biases. This experiment tests the ability of each architecture to recover from vanishing gradients over the course of 200 epochs, as an indication of being able to continue learning effectively despite vanishing gradients.}
    \begin{tabular}{||c c c||} 
 \hline
 $f(x)$ & bias=-10 & bias=-20 \\
 \hline\hline
 $TeLU(x)$ & \textbf{86.41}$\pm$0.37 & \textbf{39.98}$\pm$9.27 \\ 
 \hline
 $ReLU(x)$ & 10.00$\pm$0.00 & 10.0$\pm$0.00 \\
 \hline
 $ELU(x)$ &  85.58$\pm$0.36 & 18.53$\pm$15.6 \\
 \hline
 $SiLU(x)$ & 85.31$\pm$0.24 & 24.99$\pm$3.18 \\
 \hline
 $GELU(x)$ & 71.49$\pm$5.25 & 10.0$\pm$0.00 \\
 \hline
 $Mish(x)$ & 86.09$\pm$0.25 & 31.01$\pm$2.65 \\
 \hline
 $Logish(x)$ & 85.00$\pm$0.69 & 20.04$\pm$2.21 \\
 \hline
 $Smish(x)$ & 84.606$\pm$0.73 & 21.55$\pm$1.68 \\
 \hline
\end{tabular}
\label{tab:negbiastestsummary}
\vspace{1.0\baselineskip}
\end{table}

\begin{figure}
    \centering
    \includegraphics[width=0.75\linewidth]{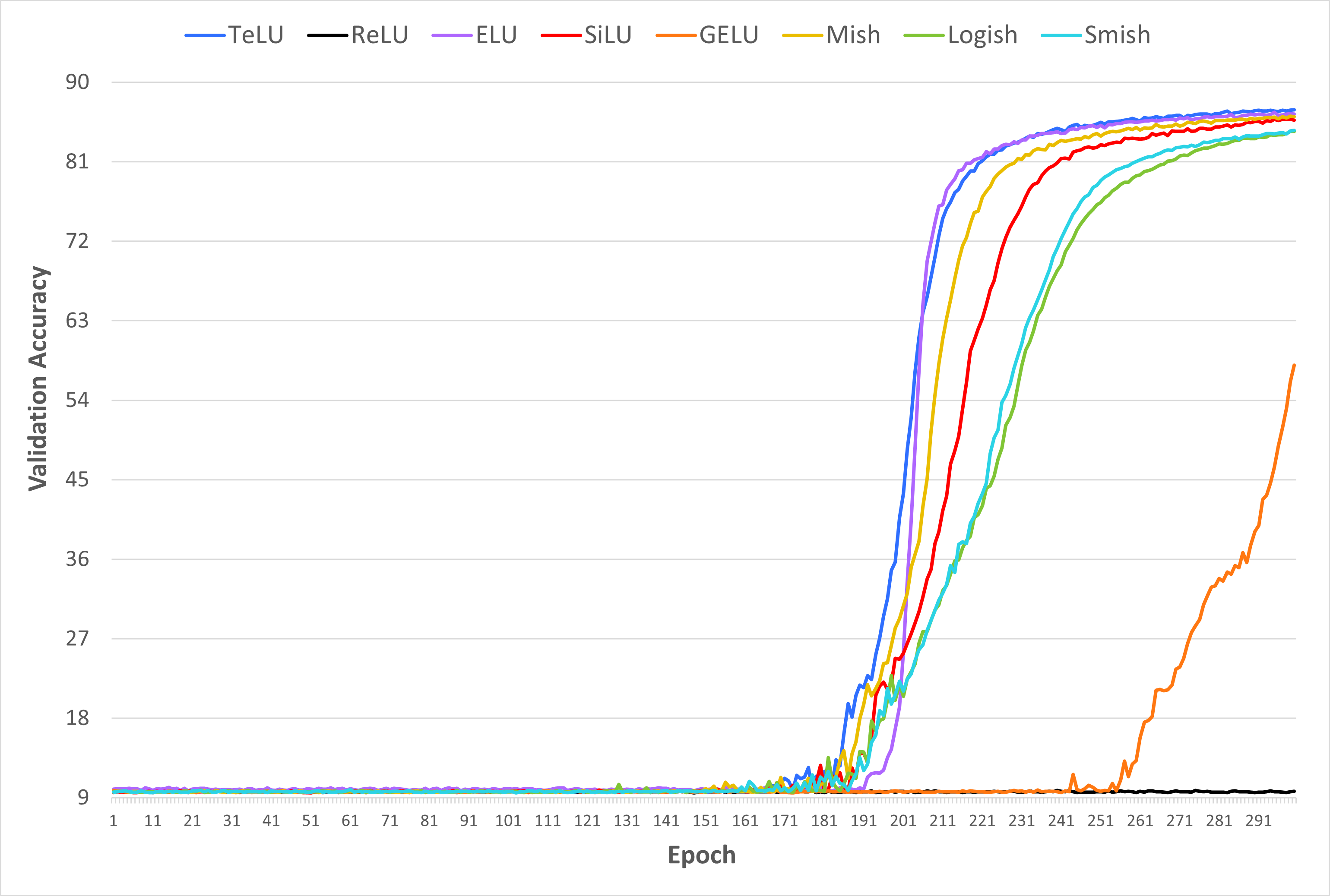}
    \caption[MLP FashionMNIST Heavy-Bias 300-Epoch Validation Accuracy.]
    {\textbf{MLP FashionMNIST Heavy-Bias 300-Epoch Validation Accuracy.} Validation accuracy progression of MLP models employing different hidden activation functions. Each architecture is initialized with bias values of -20 and trained over 300 epochs. Each data point is averaged over 5 trials.}
    \label{fig:neg20bias300}
\end{figure}


 In practice, the Hyperbolic Tangent is generally preferred over the Logistic Sigmoid for preventing vanishing gradient problems when properly regularized. Similarly, we believe that TeLU and SiLU offer more persistent gradients with proper regularization, which may explain why Mish, Logish, and Smish are often considered to have a stronger self-regularizing effect. We capitalize the practical applications of slow gradient decay by initializing neuron biases in a multi-layer perceptron (MLP) to varying negative values. We then perform training on the MNIST datasets to test the ability of neural networks employing different hidden nonlinearities to reactivate their inactive neurons. To simplify results, we take nonlinearities that have shown a difference in their decay rates, numerical underflow regions, or inactive gradient calculations. Therefore, we use ReLU, ELU, GELU, Mish, and TeLU nonlinearities.

 \subsubsection{CIFAR-10 Dataset with DenseNet CNN}

We hypothesize that the pronounced vanishing gradient effects associated with GELU and ReLU activation functions may be particularly harmful in CNN architectures lacking bypass connections. CNNs process local information from input images through convolution, where small square filters are applied around each pixel. This local information is passed through multiple convolution layers, often combined with pooling operations to reduce the signal's dimensionality. In architectures like DenseNet \cite{DenseNet}, fully connected neural blocks are interleaved between convolutional steps. Unlike Residual Networks (ResNet) \cite{ResNet}, DenseNets do not have bypass connections to facilitate proper gradient flow to all layers. Consequently, vanishing gradients can significantly impede the learning process of such architectures. In extreme cases, this stalling can have lasting effects, ultimately reducing the model's ability to fit the data. Therefore, we anticipate that activation functions prone to "dying neurons," such as ReLU and GELU, may severely limit the performance of DenseNets.


To evaluate our hypothesis, we use a CNN without residual connections to compare the testing accuracies of architectures that employ TeLU, GELU, and ReLU as their hidden layer activations. Specifically, we implement TeLU, GELU, and ReLU versions of the DenseNet121 architecture \cite{DenseNet} and train them on the CIFAR10 dataset \cite{cifar} for 200 epochs each. The dataset is partitioned into 40,000 training images, 10,000 validation images, and 10,000 testing images. We use an SGD optimizer with a base learning rate of 0.1, along with a learning rate scheduler that reduces the learning rate by a factor of 0.2 every 60 epochs. Additionally, we apply L2 regularization with a weight decay coefficient of 0.0005 and perform mini-batch gradient descent \cite{minibatchgradientdescent} using a batch size of 128. Further experimental configurations are detailed in Table \ref{tab:DenseNetHyps}.

\begin{table}[]
    \centering
    \caption[DenseNet CIFAR-10 Configuration.]
    {\textbf{DenseNet CIFAR-10 Configuration.} Static Configuration of experiment ran of the DenseNet architecture on the CIFAR-10 dataset. }
    \label{tab:DenseNetHyps}
    \begin{tabular}{||c c||} 
 \hline
 Hyperparameter & Value \\
 \hline\hline
 Train/Val/Test Split & 40,000 / 10,000 / 10,000\\ 
 \hline
 Normalization & Standard Score, detailed in Table \ref{tab:NormalizationDetails} \\ 
 \hline
 Architecture & DenseNet\\ 
 \hline
 Initialization & Xavier Uniform\\ 
 \hline
 Optimizer & Mini-batch SGD with momentum\\ 
 \hline
 Batch Size & 128\\
 \hline
 Weight Decay & 0.0007\\
 \hline
 Base Learning Rate & 0.1\\
 \hline
 Learning Scheduler Period & 60\\
 \hline
 Learning Scheduler Decay & 0.2\\
 \hline
 Data Augmentations & Random 4-padded cropping and horizontal flipping\\
 \hline
 Number of Epochs & 200\\
 \hline
 Number of Trials & 10\\
 \hline
\end{tabular}
\vspace{1.0\baselineskip}
\end{table}

We observe the test accuracies of each architecture, averaged over 10 trials, in Table \ref{tab:DenseNetTestAccuracies}. We also view the progression of validation accuracy of the TeLU, GELU, and ReLU architectures, averaged across 10 trials, in Figure \ref{fig:DenseNetValidations}. We notice that TeLU, an activation function that features stronger mitigation of vanishing gradients in the deactivation region, performs significantly better than both the GELU and ReLU architectures. These results are consistent with those of Dubey et al \cite{DUBEY202292}, which highlight that residual connections are integral to the success of architectures employing GELU or ReLU activation functions.

Table \ref{tab:DenseNetTestAccuracies} presents the test accuracies of each architecture, averaged over 10 trials, while Figure \ref{fig:DenseNetValidations} shows the progression of validation accuracy for the TeLU, GELU, and ReLU architectures, also averaged over 10 trials. Notably, TeLU outperforms both GELU and ReLU, highlighting its advantage due to its persistent gradients in the deactivation region, which more effectively mitigate the vanishing gradient problem. This persistent gradient flow allows for more robust learning, especially in architectures lacking residual connections. These results are consistent with the findings of Dubey et al. \cite{DUBEY202292}, which highlight the necessity of residual connections for the success of architectures using GELU or ReLU activation functions. In contrast, TeLU effectively addresses the vanishing gradient and numerical underflow issues related to the dying neuron problem, eliminating the need for residual connections to maintain high performance. As a result, TeLU offers greater architectural flexibility by reducing dependencies on specific model configurations.

\begin{figure}
    \centering
    \includegraphics[width=0.75\linewidth]{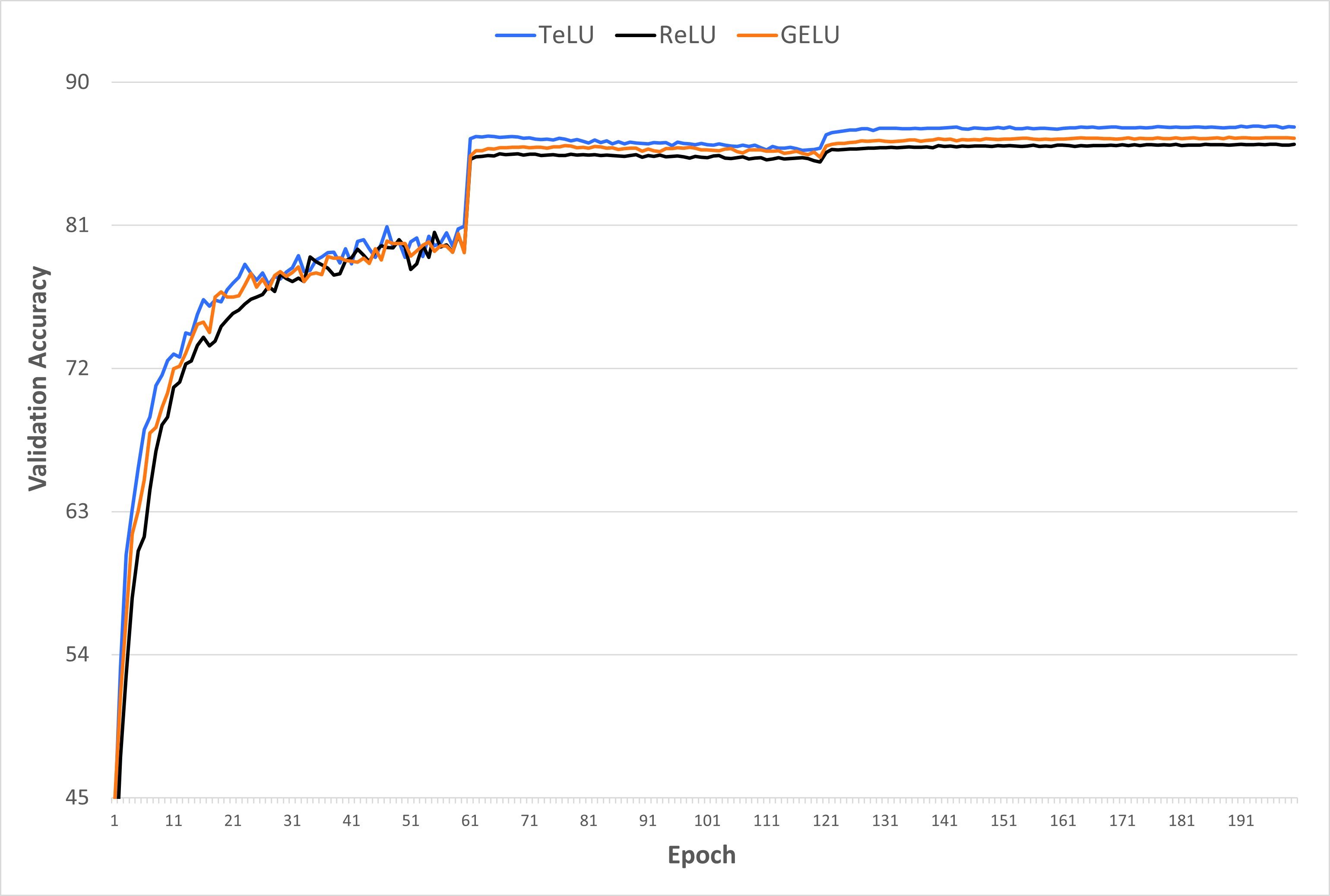}
    \caption[DenseNet CIFAR-10 Validation Accuracy Progression.]
    {\textbf{DenseNet CIFAR-10 Validation Accuracy Progression.} Progression of validation accuracy of DenseNet121 architecture on CIFAR-10 dataset over 200 epochs, averaged over 10 trials per data point.}
    \label{fig:DenseNetValidations}
\end{figure}

\begin{table}[]
\centering
 \caption[DenseNet CIFAR-10 Accuracy Summary.]
 {\textbf{DenseNet CIFAR-10 Accuracy Summary.} Training, validation, and testing accuracies of DenseNet121 architecture on CIFAR-10, averaged over 10 trials.}
    \begin{tabular}{||c c c c||} 
 \hline
 Function & Train & Validation & Testing \\
 \hline\hline
 TeLU & 99.897 & \textbf{87.620} $\pm$ 0.197 & \textbf{86.806} $\pm$ 0.282 \\ 
 \hline
 ReLU & 99.910 & 86.669 $\pm$ 0.176 & 85.825 $\pm$ 0.279  \\
 \hline
 GELU & 99.946 & 86.243 $\pm$ 0.179 & 85.386 $\pm$ 0.200 \\
 \hline
\end{tabular}
\label{tab:DenseNetTestAccuracies}
\vspace{1.0\baselineskip}
\end{table}

\subsection{Near-Linearity of Active Region}
\label{subsect:nearlinexp}

In Subsection \ref{subsect:nearlinsubsection}, we investigated how effectively various linear units approximate a linear function within their active regions. This was done by calculating the integral difference between each activation function and the line representing its slope as $x \to \infty$. Among the smooth linear units, TeLU was found to minimize this integral difference. While both ReLU and ELU are defined as the identity for positive inputs, we argue that TeLU offers superior learning efficiency due to its persistent gradients, which help mitigate the vanishing gradient problem and prevent learning slowdowns.

\subsubsection{ImageNet Dataset with ResNet18 Architecture}

We first test the practical applicability of the theoretical improvement in convergence rate with the ImageNet dataset \cite{ImageNet}. We directly implement Pytorch's example ImageNet training on the ResNet18 architecture \cite{ResNet}. Residual Neural Networks (ResNets) are a type of Convolutional Neural Networks (CNNs) \cite{CNNs} that define bypass connections between hidden convolutional layers to treat effective model depth as a learnable parameter and provide strong gradients to earlier neural layers during backpropagation. We use the predefined base learning rate of 0.1, weight decay coefficient of 0.0001, and minibatch size of 256. We optimize according to Stochastic Gradient Descent (SGD) with a momentum of 0.9 over 90 epochs, dividing our learning rate by 10 at periods of 30 epochs. Due to the computational requirements and the size of the ImageNet dataset, we only ran experiments with the ReLU and TeLU nonlinearities. This configuration is detailed in Table \ref{tab:ImageNetResNet18Hyps}. Our first trial was run with a seed of 1111, but the following two trials were unseeded due to performance drop. We plot the averaged validation accuracy metrics over the trials in Figure \ref{fig:ImageNet90}. Additionally, we show the test accuracy in Table \ref{tab:ImageNetTests}. We observe that TeLU outperforms ReLU with baseline hyperparameters when training for 90 epochs.

\begin{table}[]
    \centering
    \caption[ResNet18 ImageNet Experiment Configuration.]
    {\textbf{ResNet18 ImageNet Experiment Configuration.} Configuration of experiments ran on a ResNet18 architecture training on the ImageNet dataset over 90 epochs. }
    \label{tab:ImageNetResNet18Hyps}
    \begin{tabular}{||c c||} 
 \hline
 Hyperparameter & Value \\
 \hline\hline
 Train/Val Split & 1,281,167/50,000\\ 
 \hline
 Normalization & Standard Score, detailed in Table \ref{tab:NormalizationDetails} \\ 
 \hline
 Architecture & ResNet18 \cite{ResNet}\\ 
 \hline
 Initialization & Xavier Uniform\\ 
 \hline
 Optimizer & Mini-batch SGD\\
 \hline
 Momentum & 0.9\\
 \hline
 Batch Size & 256\\
 \hline
 Learning Scheduler Period & 30\\
 \hline
 Learning Scheduler Decay & 0.1\\
 \hline
 Learning Rate & 0.01\\
 \hline
 Weight Decay & 0.0001\\
 \hline
 Number of Epochs & 90, 50, or 20\\
 \hline
 Number of Trials & 1, 3, or 3 in respect to number of epochs\\
 \hline
\end{tabular}
\vspace{1.0\baselineskip}
\end{table}

\begin{figure}
    \centering
    \includegraphics[width=0.75\linewidth]{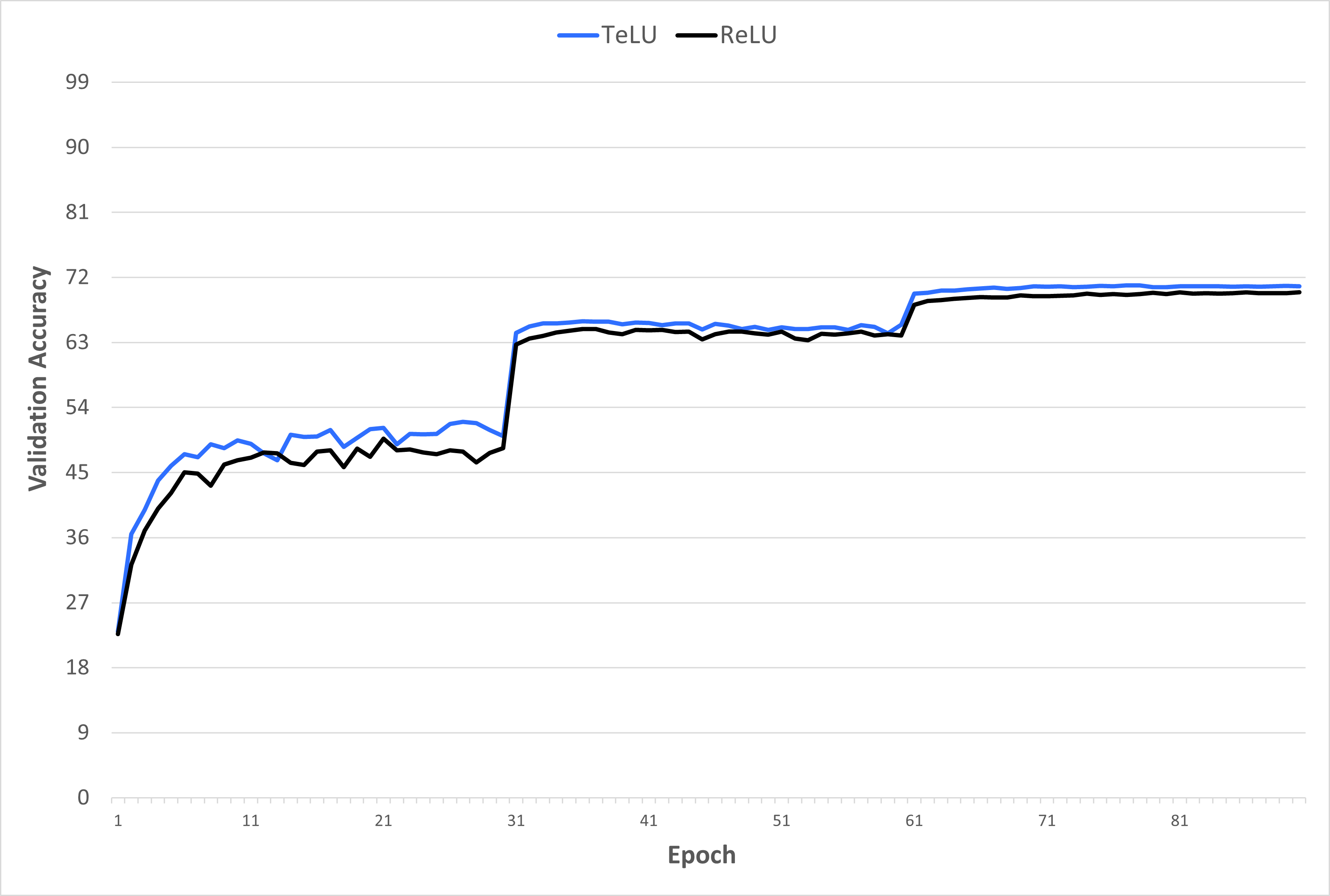}
    \caption[ResNet18 ImageNet Validation Accuracy over 90 Epochs.]
    {\textbf{ResNet18 ImageNet Validation Accuracy over 90 Epochs.} Progression of validation accuracy of ResNet34 architectures employing either TeLU or ReLU nonlinearities over the course of 90 training epochs.}
    \label{fig:ImageNet90}
    \vspace{1.0\baselineskip}
\end{figure}

Given that the unseeded trials still required 14 days to train on 4 GTX 1080 Ti Graphics Processing Units (GPU), we wish to take advantage of TeLU's theoretical improvements in convergence speed. Therefore, we lower the number of training epochs from 90 to 50 and expect to see comparable results. To keep the learning rate scheduler roughly proportional, we lowered the learning rate decay period from 30 to 20. We show the average validation epoch accuracy metrics over the 50 epochs in Figure \ref{fig:ImageNet50Validations}. We also note a comparable performance of both activation function architectures, but a more-unchanged TeLU accuracy in Table \ref{tab:ImageNetTests}. We attribute this to TeLU's distant saturation region helping the network stay relevant across forward passes as well as TeLU's strong gradients for positive inputs.


\begin{figure}
    \centering
    \includegraphics[width=0.75\linewidth]{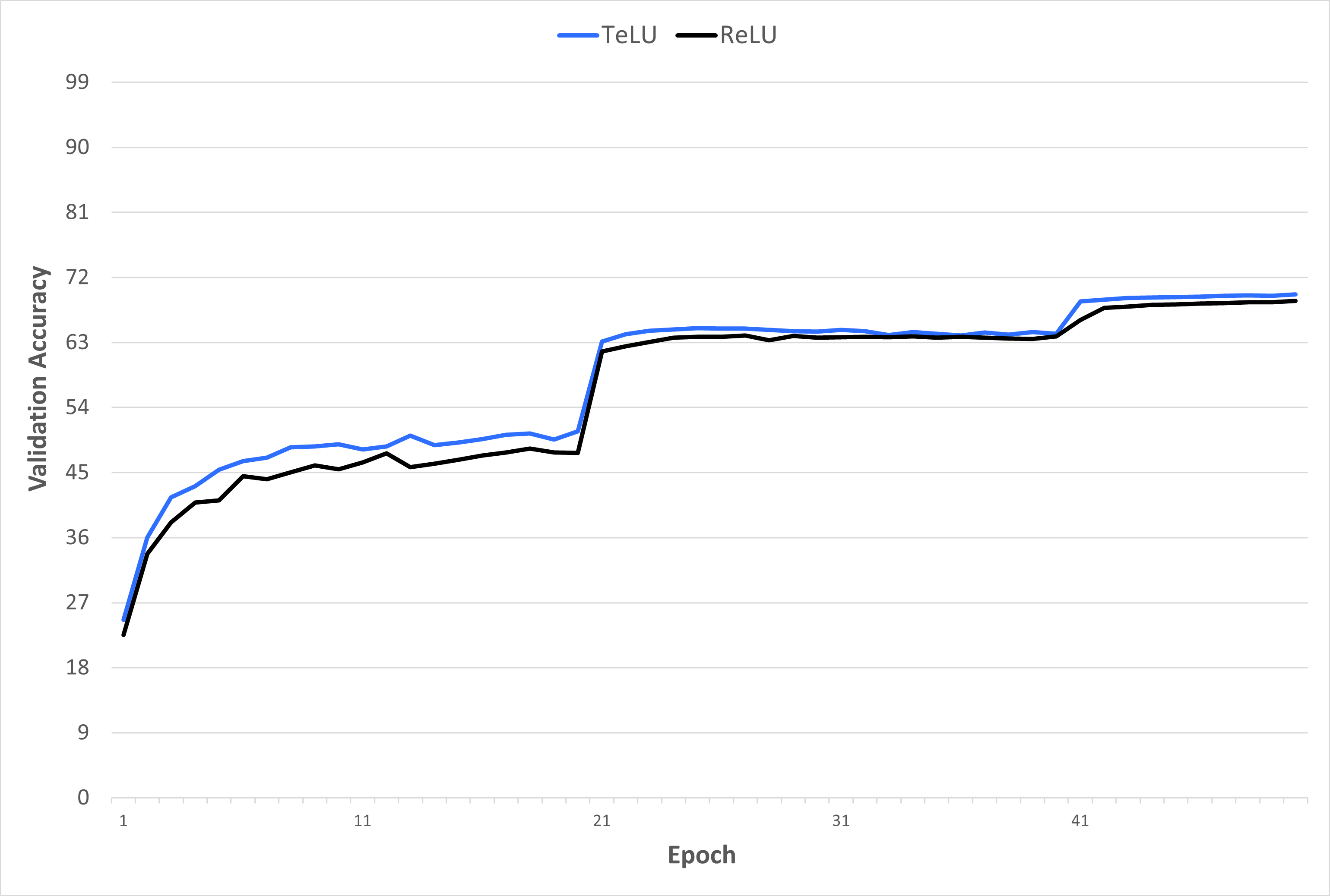}
    \caption[ResNet18 ImageNet Validation Accuracy over 50 Epochs.]
    {\textbf{ResNet18 ImageNet Validation Accuracy over 50 Epochs.} Progression of validation accuracy of ResNet34 architectures employing either TeLU or ReLU nonlinearities over the course of 50 training epochs. Each data point is averaged over 3 trials.}
    \label{fig:ImageNet50Validations}
\end{figure}

\begin{figure}
    \centering
    \includegraphics[width=0.75\linewidth]{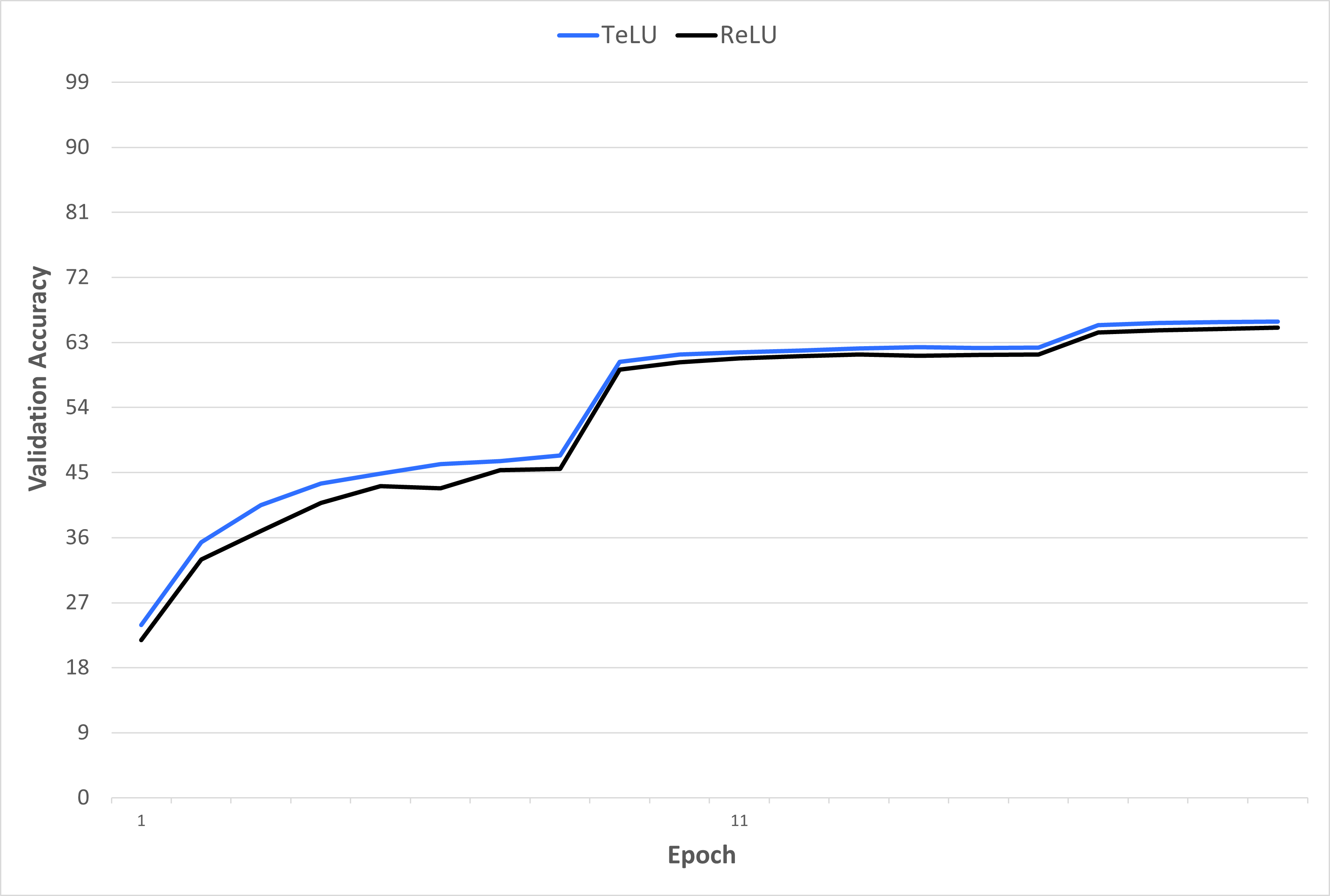}
    \caption[ResNet18 ImageNet Validation Accuracy over 20 Epochs.]
    {\textbf{ResNet18 ImageNet Validation Accuracy over 20 Epochs.} Progression of validation accuracy of ResNet34 architectures employing either TeLU or ReLU nonlinearities over the course of 20 training epochs. Each data point is averaged over 3 trials.}
    \label{fig:ImageNet20Validations}
\end{figure}

\begin{table}[]
    \centering
        \caption[ResNet18 ImageNet Test Accuracy Summary.]
        {\textbf{ResNet18 ImageNet Test Accuracy Summary.} Top-1 and top-5 test accuracies for ResNet34 architectures employing either TeLU or ReLU nonlinearities. Experiment was repeated for 90, 50, and 20 epochs to demonstrate the convergance speed advantages of TeLU. Each testing accuracy is averaged over 3 trials for the 50 epoch and 20 epoch experiments.}
    \label{tab:ImageNetTests}
    \begin{tabular}{||c c c||} 
 \hline
 Name & Top-1 Accuracy \% & Top-5 Accuracy \%\\
 \hline\hline
 TeLU 90 Epoch & \textbf{70.86} & \textbf{89.70} \\ 
 \hline
 ReLU 90 Epoch & 69.96 & 89.27 \\ 
 \hline
 TeLU 50 Epoch & \textbf{69.76}$\pm$0.095 & \textbf{89.16}$\pm$0.063 \\ 
 \hline
 ReLU 50 Epoch & 68.736$\pm$0.092 & 88.47$\pm$0.006 \\ 
 \hline
 TeLU 20 Epoch & \textbf{65.87}$\pm$0.044 & \textbf{86.67}$\pm$0.107 \\ 
 \hline
 ReLU 20 Epoch & 65.08$\pm$0.041 & 86.24$\pm$0.082 \\ 
 \hline
\end{tabular}
\vspace{1.0\baselineskip}
\end{table}

\subsubsection{Text8 Dataset with Dynamic Pooling Transformer}

We then proceeded to test TeLU's baseline convergence rate on a transformer \cite{Transformer} architecture. For performance improvements, we utilized a dynamic pooling transformer architecture \cite{DynamicPoolingTransformers} with 8 heads of attention. We set the fully connected and attention neurons to drop out at a rate of 0.12 during training. We utilized a base learning rate of 0.0002 that updated according to a 200,000 step cosine scheduler with 4,000 initial warm-up steps. We used the ADAM optimizer \cite{Adam}, given that we did not employ weight decay. We detail our configuration in Table \ref{tab:Transformerhyperparams}. With this configuration, we ran 3 trials and recorded the averaged progression of validation error metrics over the 20 evaluation steps in Figure \ref{fig:transformers_full}.

\begin{table}[]
    \centering
    \caption[ Dynamic-Pooling Transformer Text8 Experiment Configuration.]
    {\textbf{Dynamic-Pooling Transformer Text8 Experiment Configuration.} Configuration of experiments ran on a dynamic-pooling transfromer architecture on the Text8 dataset. }
    \label{tab:Transformerhyperparams}
    \begin{tabular}{||c c||} 
 \hline
 Hyperparameter & Value \\
 \hline\hline
 Train/Val/Test Split & $90 \cdot 10^6$/$5 \cdot 10^6$/$5 \cdot 10^6$\\ 
 \hline
 Architecture & Dynamic-Pooling Transformer \cite{DynamicPoolingTransformers}\\ 
 \hline
 Model Dimension & 512\\
 \hline
 Number of Heads & 8\\
 \hline
 Head Dimension & 64\\
 \hline
 dropout & 0.06\\
 \hline
 computational precision & Floating Point 16\\
 \hline
 gradient clipping & 0.25\\
 \hline
 Initialization & Gaussian with 0.02 average and 0.01 stdv\\ 
 \hline
 Optimizer & Mini-batch Adam\\
 \hline
 Batch Size & 16\\
 \hline
 Base Learning Rate & 0.0002\\
 \hline
 Learning Scheduler & Cosine Scheduler \cite{warmcosineScheduler}\\
 \hline
 Warm-Up Steps & 4000\\
 \hline
 Learning Scheduler Period & 150,000 or 200,000 \\
 \hline
 Mini-Batch Steps & 100,000 or 200,000\\ 
 \hline
 Number of Trials & 3 or 9, in respect to mini-batch steps \\
 \hline
 Random Seed & [0000, 1111, ..., 9999]\\
 \hline
\end{tabular}
\end{table}

\begin{figure}
    \centering
    \includegraphics[width=0.75\linewidth]{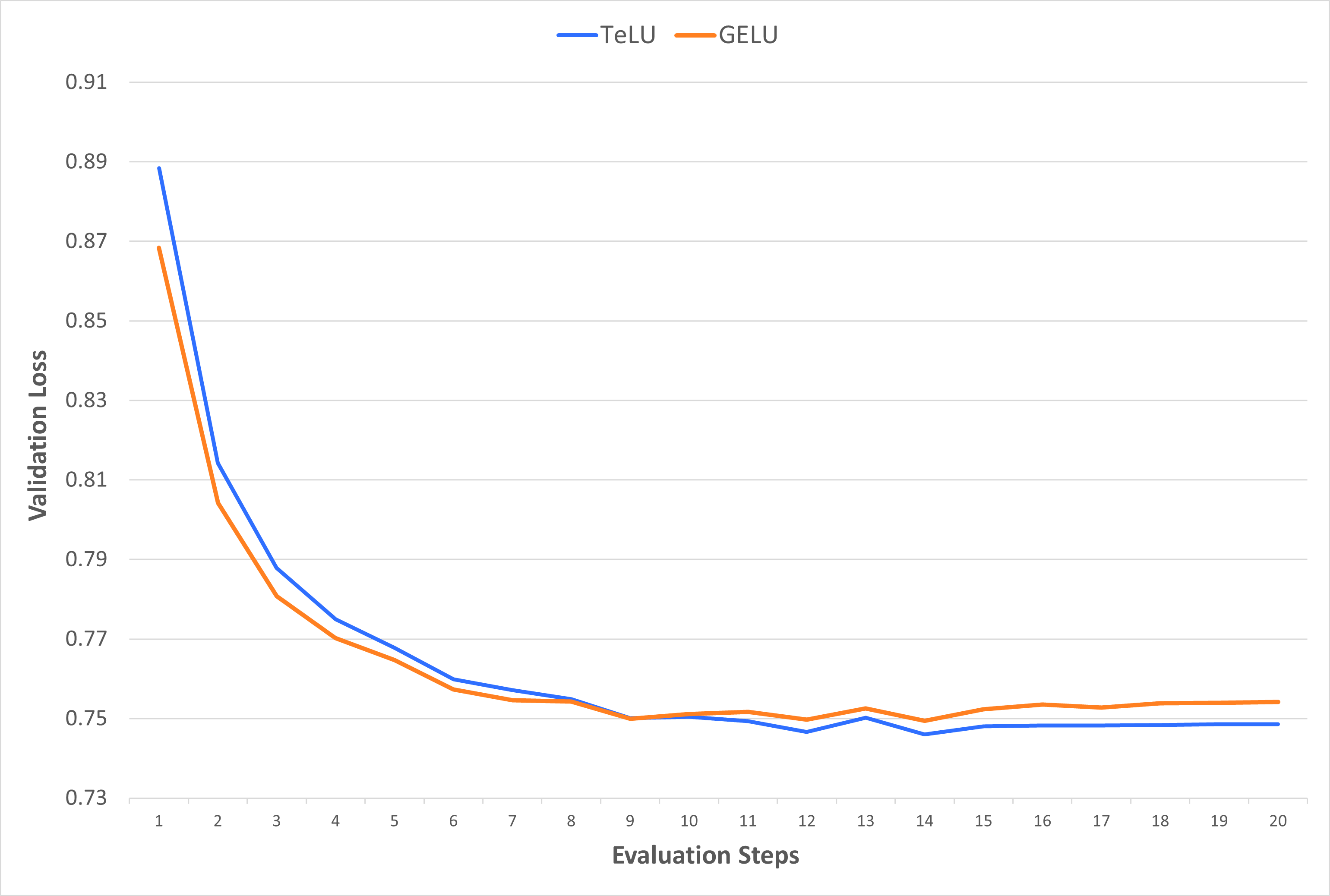}
    \caption[Dynamic-Pooling Transformer Text8 Validation Loss.]
    {\textbf{Dynamic-Pooling Transformer Text8 Validation Loss.} Shows the progression of validation Loss throughout 200,000 mini-batches of the Dynamic-Pooling Transformer architecture being trained on the Text8 dataset with the Mini-batch Adam optimizer. Each data point has been averaged over 3 trials.}
    \label{fig:transformers_full}
    \vspace{1.0\baselineskip}
\end{figure}

As shown in Figure \ref{fig:transformers_full}, GELU initially achieves lower validation loss during the early evaluation steps. However, TeLU’s faster convergence allows its loss to drop below GELU’s over time. Notably, TeLU's validation loss stabilizes by evaluation step 15, whereas GELU begins to experience increasing validation error beyond step 14. These validation improvements with TeLU translate into superior test loss, as seen in Table \ref{tab:transformer}. Furthermore, TeLU's stability at favorable validation accuracy results in lower standard deviation in test loss compared to GELU.

To demonstrate TeLU's improved convergence rate further, we reduced the training steps from 200,000 to 100,000 and updated the cosine scheduler to a period of 150,000. This was empirically chosen to allow the architectures to continue applying meaningful updates during the last few training steps. With a shorter experiment, our budget now allows for a ReLU architecture to be included. We view the progression of the validation error in Figure \ref{fig:transformers_half} and the test error summary in Table \ref{tab:transformer}, with data averaged over 9 trials.

\begin{table}[]
    \centering
       \caption[Dynamic-Pooling Transformer on Text8 Test Loss.]
        {\textbf{Dynamic-Pooling Transformer on Text8 Test Loss.} Shows the progression of validation Loss throughout 200,000 mini-batches of the Dynamic-Pooling Transformer architecture being trained on the Text8 dataset with the Mini-batch Adam optimizer. Each data point has been averaged over 9 trials.}
    \label{tab:transformer}
    \begin{tabular}{||c c c||} 
 \hline
 Name & 100 Epoch & 200 Epoch\\
 \hline\hline
 TeLU & \textbf{0.7985}$\pm$0.00089 & \textbf{0.7970}$\pm$0.00078 \\ 
 \hline
 ReLU & 0.7998$\pm$0.00055 & - \\
 \hline
 GELU & 0.7999$\pm$0.0027 & 0.8034$\pm$0023 \\ 
 \hline
\end{tabular}
\vspace{1.0\baselineskip}
\end{table}

\begin{figure}
    \centering
    \includegraphics[width=0.75\linewidth]{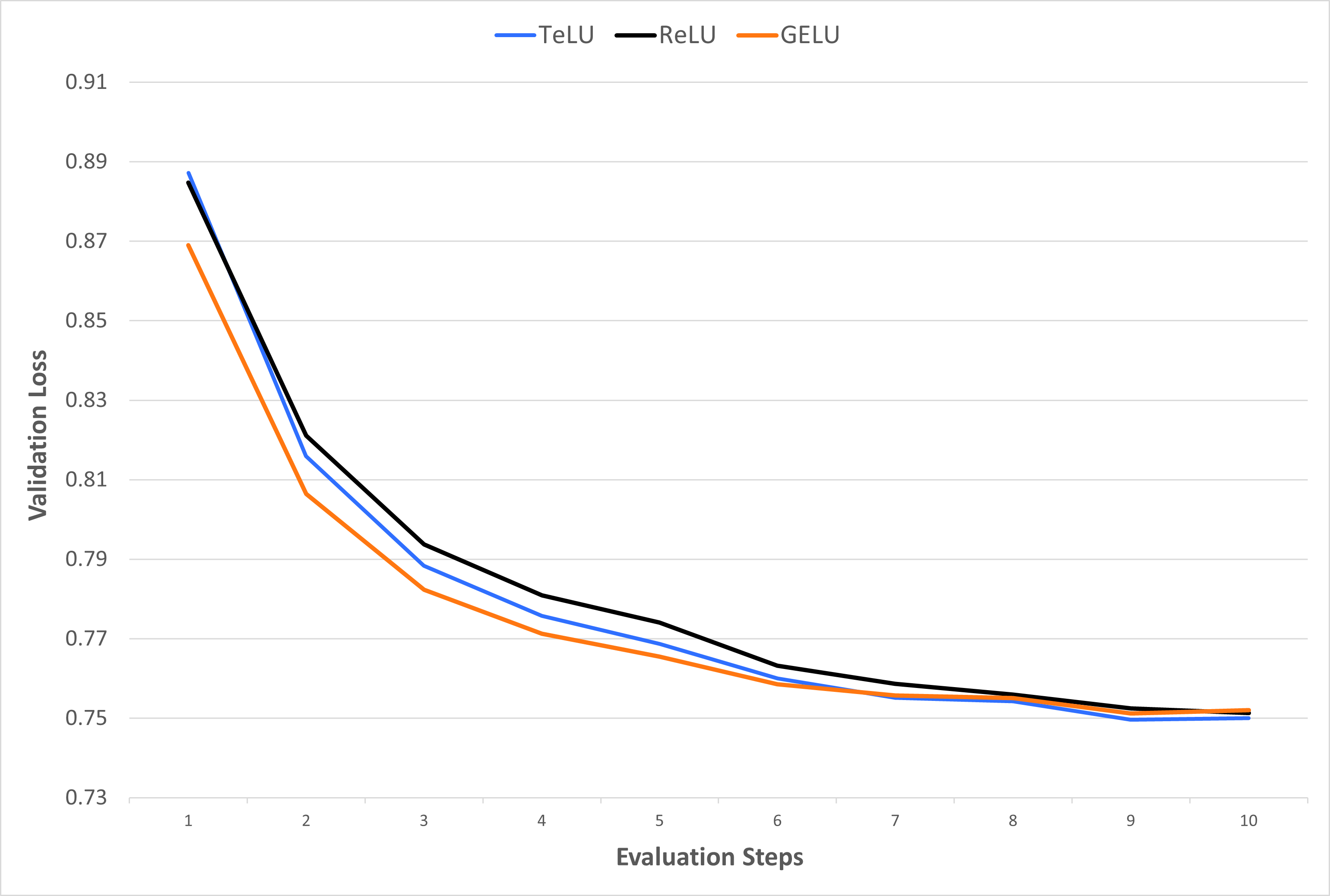}
    \caption[Dynamic-Pooling Transformer on Text8 Validation Loss Short Experiment.]
    {\textbf{Dynamic-Pooling Transformer on Text8 Validation Loss Short Experiment.} Shows the progression of validation Loss throughout 100,000 mini-batches of the Dynamic-Pooling Transformer architecture being trained on the Text8 dataset with the Mini-batch Adam optimizer. Each data point has been averaged over 9 trials.}
    \label{fig:transformers_half}
    \vspace{1.0\baselineskip}
\end{figure}

The shortening of the cosine scheduler period \cite{warmcosineScheduler} enables TeLU to outperform GELU at earlier evaluation steps, as anticipated. ReLU, like GELU, displays strong gradients in the active region but converges more slowly. This slower convergence can likely be attributed to the dying neuron problem, which inhibits learning in ReLU architectures. As a result, both ReLU’s dying neuron issue and GELU’s sub-linearity in the active region contribute to the slower and less consistent convergence observed.

\subsection{Runtime Efficiency}
\label{subsect:runtimeefficiencyexp}

Using the function definitions provided in Table \ref{table:baselineFormulations}, we measured the computational efficiency of forward and backward passes for a neuron employing the baseline implementation of each nonlinearity. To ensure accurate delay measurements, we first minimized operating system scheduler disturbances \cite{OS1} \cite{OS2} by running only essential Windows operating system programs on the system in addition to the experiment itself. Our host system had 12 Intel i7 processors, 32 GB of RAM, and an NVIDIA RTX-2070 Graphics Processing Unit (GPU) \cite{GPU}. We defined an input vector of $10^6$ randomly initialized 32-bit floating point values and performed a sequence of forward and backward passes over $10^6$ iterations. This approach helped evenly distribute any remaining operating system interrupts across each nonlinearity's measurement. We repeated the experiment after its completion to ensure that the sequential execution of each function's forward and backward passes did not provide any bias to our experiment. This repeat experiment confirmed that system performance factor differences between the times of each nonlinearity's calculations were negligible. The delay of each activation function is recorded in Table \ref{tab:RunTimesLaptop}, where we observed that TeLU's performance was second only to ReLU's.

\begin{table}[]
\centering
 \caption[Timing of $10^6$ Iterations of Single Neuron with Input Size of $10^6$ on RTX2070.]
 {\textbf{Timing of $10^6$ Iterations of Single Neuron with Input Size of $10^6$ on RTX2070.} Delay in seconds of 1 million forward and backward passes of a single-neuron architectures with each of the focused non-linearities. Experiment executed in system utilizing NVIDIA Geforce 2070. Input Size totaled over 1 million iterations.}
    \begin{tabular}{||c c c||} 
 \hline
 Function & Forward & Backward  \\
 \hline\hline
 TeLU & 139.56s & 322.35s  \\ 
 \hline
 ReLU & 107.75s & 299.77s   \\
 \hline
 ELU & 167.64s & 326.48s \\
 \hline
 SiLU & 187.80s & 535.00s \\
 \hline
 GELU & 210.27s & 457.67s  \\
 \hline
 Mish & 205.23s & 380.03s \\
 \hline
 Logish & 244.66s & 574.30s \\
 \hline
 Smish & 285.02s & 621.04s \\
 \hline
\end{tabular}
\label{tab:RunTimesLaptop}
\vspace{1.0\baselineskip}
\end{table}

We repeated the baseline definition experiment on two Linux systems, each equipped with 64 GB of RAM. One system featured an A100 GPU, while the other used a TI-1080 GPU. This time, we used input vector of length $10^{6}$ and $10^{7}$ to ensure that each GPU would require multiple operations to process the larger 32-bit floating point input vector. The timing comparisons between these experiments was proportional, as seen in Tables \ref{tab:RunTimesGPU} and \ref{tab:RunTimesGPU10Minput} indicating minimal variance in nonlinearity delays between the two systems. The concise formulation of TeLU allows for significant computational efficiency improvements over other smooth functions, making TeLU-based architectures more scalable and leading to reduced training times. This speedup in training suggests not only an improvement in TeLU's convergence speed but also enhanced energy efficiency during operation.

\begin{table}[]
\centering
\caption[Timing of $10^6$ Iterations of Single Neuron with Input Size of $10^6$ on Server GPUs.]
{\textbf{Timing of $10^6$ Iterations of Single Neuron with Input Size of $10^6$ on Server GPUs.} Delay in seconds of 1 million forward and backward passes of a single-neuron architectures with each of the focused non-linearities. Experiment repeated for systems utilizing A100 and 1080Ti GPUs. Input Size totaled over 1 million iterations.}
    \begin{tabular}{||c | c c | c c ||} 
 \hline
          & \multicolumn{2}{|c|}{A100 GPU} & \multicolumn{2}{|c|}{1080Ti GPU} \\
\hline
 Function & Forward & Backward & Forward & Backward \\
 \hline\hline
 TeLU & 77.337s & 150.98s & 119.35s & 279.83s \\ 
 \hline
 ReLU & 65.356s & 148.46s & 96.525s & 248.87s \\
 \hline
 ELU & 103.00s & 171.44s & 156.60s & 281.62s\\
 \hline
 SiLU & 97.036s & 195.08s & 162.31s & 444.18s\\
 \hline
 GELU & 114.06s & 209.38s & 174.27s & 387.38s \\
 \hline
 Mish & 108.41s & 170.07s & 173.37s & 315.67s\\
 \hline
 Logish & 120.66s & 211.11s & 208.95s & 479.72s\\
 \hline
 Smish & 137.34s & 222.90s & 233.01s & 515.59s\\
 \hline
\end{tabular}
\label{tab:RunTimesGPU}
\end{table}

\begin{table}[]
\centering
 \caption[Timing of $10^6$ Iterations of Single Neuron with Input Size of $10^7$.]
 {\textbf{Timing of $10^6$ Iterations of Single Neuron with Input Size of $10^7$.} Delay in seconds of 10 million forward and backward passes of a single-neuron architectures with each of the focused non-linearities. Experiment repeated for systems utilizing A100 and 1080Ti GPUs. Input Size totaled over 1 million iterations.}
    \begin{tabular}{||c | c c | c c ||} 
 \hline
          & \multicolumn{2}{|c|}{A100 GPU} & \multicolumn{2}{|c|}{1080Ti GPU} \\
\hline
 Function & Forward & Backward & Forward & Backward \\
 \hline\hline
 TeLU & 206.54s & 517.10s & 771.53s & 2032.20s \\ 
 \hline
 ReLU & 127.60s & 453.21s & 442.32s & 1719.73s \\
 \hline
 ELU & 251.17s & 505.88s & 926.03s & 1943.89s\\
 \hline
 SiLU & 282.00s & 872.09s & 1074.99s & 3464.10s\\
 \hline
 GELU & 313.81s & 757.55s & 1188.41s & 2953.31s \\
 \hline
 Mish & 311.79s & 593.34s & 1181.49s & 2324.54s\\
 \hline
 Logish & 386.72s & 947.87s & 1491.52s & 3775.24s\\
 \hline
 Smish & 439.14s &  1023.43s & 1700.14s & 4086.84s\\
 \hline
\end{tabular}
\label{tab:RunTimesGPU10Minput}
\vspace{1.0\baselineskip}
\end{table}

\begin{table}[]
\centering
\caption[Timing of $10^7$ Iterations of Single Neuron with Input Size of $10^6$.]
{\textbf{Timing of $10^7$ Iterations of Single Neuron with Input Size of $10^6$.} Delay in seconds of 1 million forward and backward passes of a single-neuron architectures with each of the focused non-linearities. Experiment repeated for systems utilizing A100 and 1080Ti GPUs. Input Size totaled over 10 million iterations.}
    \begin{tabular}{||c | c c | c c ||} 
 \hline
          & \multicolumn{2}{|c|}{A100 GPU} & \multicolumn{2}{|c|}{1080Ti GPU} \\
\hline
 Function & Forward & Backward & Forward & Backward \\
 \hline\hline
 TeLU & 772.77s & 1511.7s & 1176.7s & 2786.8s \\ 
 \hline
 ReLU & 661.14s & 1489.03s & 966.6s & 2509.2s \\
 \hline
 ELU & 1041.3s & 1706.5s & 1556.9s & 2820.8s\\
 \hline
 SiLU & 973.83s & 1952.3s & 1707.4s & 4547.6s\\
 \hline
 GELU & 1124.1s & 2102.2s & 1707.5s & 3864.6s \\
 \hline
 Mish & 1084.0s & 1694.9s & 1865.3s & 3287.5s\\
 \hline
 Logish & 1230.7s & 2099.5s & 2080.2s & 4802.1s\\
 \hline
 Smish & 1346.7s & 2242.4s & 2311.0s & 5148.4s\\
 \hline
\end{tabular}
\label{tab:RunTimesGPU10Miter}
\vspace{1.0\baselineskip}
\end{table}

This experiment was repeated for $10^7$ iterations of input vectors of size $10^6$, with results shown in Table \ref{tab:RunTimesGPU10Minput}. Across all cases, TeLU demonstrates substantial performance improvements over all nonlinearities except ReLU. Although ReLU is faster to compute, TeLU compensates with faster convergence, achieving high performance with fewer training iterations. These findings are particularly meaningful, as they highlight TeLU's superior computational efficiency compared to other smooth activation functions. This efficiency plays a critical role in deep neural networks because activation functions heavily influence the overall training speed. Faster training times not only enhance productivity but also enable the use of larger models, increased data throughput, and more frequent experimentation. Moreover, computationally efficient activation functions can reduce hardware demands, lower energy consumption, and facilitate real-time applications, making them vital for scalable and sustainable AI solutions.

\subsection{ReLU Compatibility}
\label{subsect:relucompexp}

Optimizing hyperparameter selection for training deep learning models is a time-intensive process. The time required can span weeks or even months, depending on the number of hyperparameters involved, the complexity of the architecture, and the duration of the training. The challenge arises from the fact that hyperparameter values can impact model accuracy, overfitting, convergence speed, and stability in nonlinear and interdependent ways. Furthermore, when designing experiments from scratch, each new setup demands its own lengthy tuning process, as we encountered in our initial experiments.

To enhance our productivity and streamline the process of finding competitive configurations, we began utilizing hyperparameter settings sourced from GitHub repositories or published papers. These configurations were primarily designed for ReLU-based architectures. Fortunately, we discovered that TeLU remained competitive without requiring any modifications to these ReLU configurations. In practice, adapting ReLU architectures to TeLU architectures proved to be straightforward—simply by defining the TeLU nonlinearity and substituting it for the ReLU nonlinearity in the hidden layers. This observed compatibility suggests that ReLU and TeLU perform well within similar hyperparameter subspaces, which could facilitate the adoption of TeLU. Given that ReLU is arguably the most popular activation function for hidden neurons, it is likely that data scientists will initially test a novel nonlinearity within an existing ReLU architecture using ReLU configurations. Therefore, TeLU's similarity to ReLU across different architectures could help it make a positive first impression.

\subsubsection{FashionMNIST Dataset with MLP with Varied Weight Decay}

To experimentally demonstrate this compatibility, we designed MLP architectures utilizing various activation functions, including ReLU, ELU, GELU, SiLU, Mish, Logish, Smish, and TeLU. Each architecture was defined with 8 hidden layers, each with a width of 128 neurons. The batch size was set to 128, and the learning rate was fixed at 0.005. We used the Fashion MNIST dataset, dividing it into training, validation, and testing partitions with 50,000; 10,000; and 10,000 samples, respectively. Each configuration was tested over 6 trials across 6 different weight decay values: 0.0001, 0.0003, 0.0005, 0.001, 0.003, and 0.005. We chose to focus on weight decay because these activation functions seemed to differ most in their degree of self-regularization. The complete experimental configuration can be viewed in Table \ref{tab:ReLUSimilarityMLPHyperparams}. The resulting accuracies for each configuration are presented in Figure \ref{fig:ReLULikenessOverall}.

\begin{table}[]
    \centering
    \caption[MLP FashionMNIST Across Weight Decays Configuration.]
    {\textbf{MLP FashionMNIST Across Weight Decays Configuration.} Configuration of experiments ran on an MLP model on the FashionMNIST dataset across various weight decay coefficient values. }
    \label{tab:ReLUSimilarityMLPHyperparams}
    \begin{tabular}{||c c||} 
 \hline
 Hyperparameter & Value \\
 \hline\hline
 Train/Val/Test Split & 50,000 / 10,000 / 10,000\\ 
 \hline
 Hidden Layer Count & 8\\ 
 \hline
 Hidden Layer Width & 128\\ 
 \hline
 Initialization & Xavier Uniform\\ 
 \hline
 Optimizer & Mini-batch SGD\\
 \hline
 Momentum & 0.9\\
 \hline
 Batch Size & 128\\
 \hline
 Learning Rate & 0.005\\
 \hline
 Weight Decay & [0.0001, 0.0003, 0.0005, 0.001, 0.003, 0.005]\\
 \hline
 Number of Epochs & 100\\ 
 \hline
 Number of Trials & 6 trials per weight decay\\ 
 \hline
\end{tabular}
\end{table}

\begin{figure}
    \centering
    \includegraphics[width=0.75\linewidth]{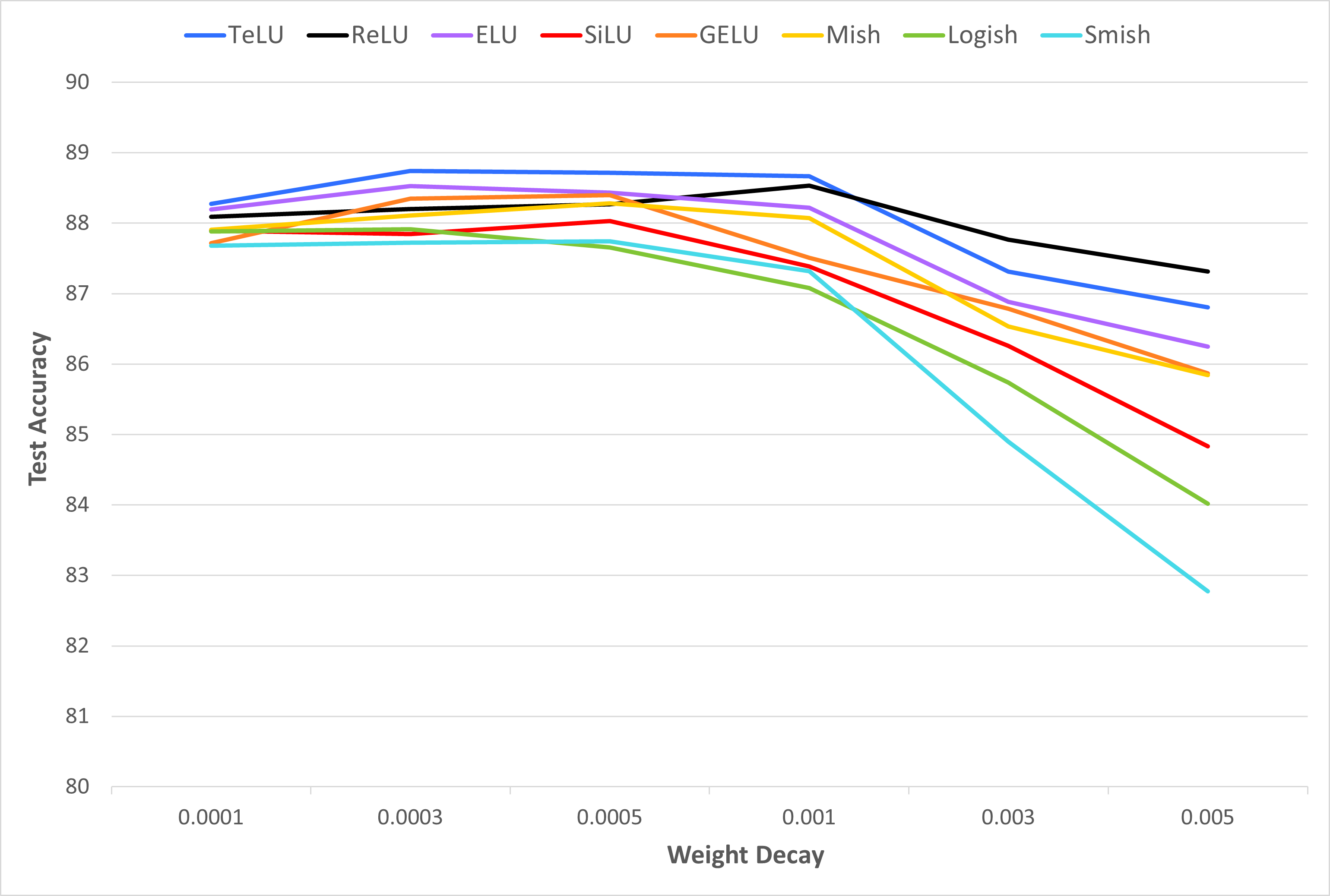}
    \caption[MLP FashionMNIST Test Accuracy Across Wide Range of Weight Decays.]
    {\textbf{MLP FashionMNIST Test Accuracy Across Wide Range of Weight Decays.} Visualizing the testing accuracy of different MLP architectures when the weight decay coefficient of their loss function is treated as an independent variable. Wider weight decay set of {0.0001, 0.0003, 0.0005, 0.001, 0.003, 0.005} shows broad spectrume of the testing accuracy of different MLP architectures.}
    \label{fig:ReLULikenessOverall}
\end{figure}

It is observed that TeLU exhibits superior accuracy when using a weight decay coefficienct that is optimal for ReLU. ReLU and TeLU nonlinearities generally benefit from higher levels of L2 regularization, while Logish and Smish require minimal or no weight decay to maintain stability. ReLU likely benefits more from increased weight decay due to its tendency to cause neurons to become permanently inactive during training. A larger weight decay helps mitigate the dying ReLU problem by keeping preactivations close to zero, which allows neurons to stay near the threshold needed to activate or deactivate based on different input signals. In contrast, nonlinearities with complex nested subfunctions, such as Smish, experience instability at weight decay levels that are beneficial for ReLU. When external regularization pushes the activation of preceding Smish neurons towards zero, subsequent layers receive inputs close to zero. This can lead to numerical underflow as the nested nonlinearities are calculated from the inside out, resulting in dying neurons that cause subsequent layers to produce zero activation, further propagating numerical instability. Overall, we observe a strong correlation between the mathematical complexity of a nonlinearity and its ideal level of L2 regularization, with the exception of SiLU and ELU.

We repeat this experiment for weight decay coefficient values 0.0004, 0.0006, 0.0008, 0.001, 0.0012, 0.0014, and 0.0016. As shown in Figure \ref{fig:ReLULikenessFocus}, the same key observation persists: where ReLU performs optimally, TeLU outperforms it. This is significant because ReLU is widely adopted in deep neural networks, meaning many existing projects rely on its configuration. With TeLU acting as a straightforward substitute for ReLU, researchers and developers can seamlessly switch to TeLU and immediately benefit from faster convergence and enhanced stability without requiring additional tuning, leading to greater productivity and minimal adjustment effort.

\begin{figure}
    \centering
    \includegraphics[width=0.75\linewidth]{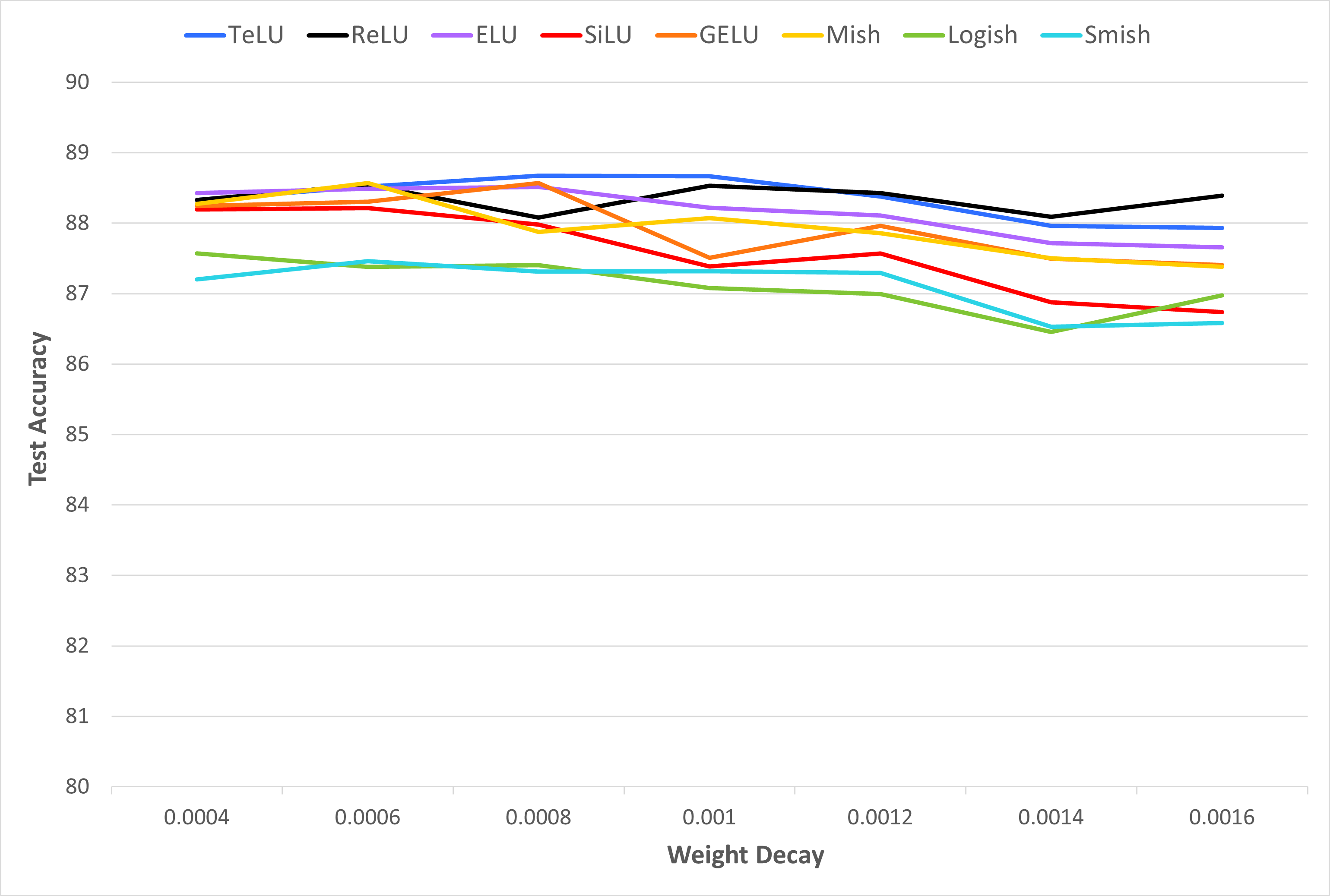}
    \caption[MLP FashionMNIST Test Accuracy Across Focused Range of Weight Decays.]
    {\textbf{MLP FashionMNIST Test Accuracy Across Focused Range of Weight Decays.} Visualizing the testing accuracy of different MLP architectures when the weight decay coefficient of their loss function is treated as an independent variable. Narrower weight decay set of {0.0004, 0.0006, 0.0008, 0.001, 0.0012, 0.0014, 0.0016} meant to focus on configurations that optimize ReLU and TeLU hidden non-linearities. Each data point shown is averaged over 6 trials.}
    \label{fig:ReLULikenessFocus}
    \vspace{1.0\baselineskip}
\end{figure}

\subsubsection{CIFAR-10 Dataset with SqueezeNext with Varied Optimizers}

To further validate the compatibility between ReLU and TeLU configurations, we evaluated the performance of various nonlinearities when trained with hyperparameters optimized for ReLU. We utilized a SqueezeNext CNN architecture with ReLU as the activation function in its hidden layers. We then fine-tuned the learning rate, weight decay, and learning rate scheduler scaling hyperparameters for each of four optimization methods. We utilized Minibatch Stochastic Gradient Descent (SGD) \cite{SGD1, SGD2}, SGD with momentum \cite{Momentum}, AdamW \cite{AdamW}, and RMSprop \cite{RMSprop}. The resulting learning rate and weight decay hyperparameters for each optimizer are detailed in \ref{tab:CIFAR_sup_sq_hps}. The portion of the hyperparameters that stay constant throughout optimizer choice is detailed in Table \ref{tab:SqueezeNextReLULikenessHyps}. The learning rate scheduler for each optimizer scaled the learning rate according to the learning decay value once every 60 epochs. For the experiment, we set aside 10,000 of the 60,000 CIFAR-10 images to define a validation partition, trained on the remaining 50,000 images, and tested on the standard 10,000-image testing split. We then substituted ReLU with the other nonlinearities and summarized the testing accuracies for each architecture over 10 trials in Table \ref{tab:CIFARSUM}.

\begin{table}[]
    \centering
    \caption[SqueezeNext CIFAR-10 ReLU-Focused Configuration.]
    {\textbf{SqueezeNext CIFAR-10 ReLU-Focused Configuration.} Static Configuration of experiments ran on the SqueezeNext architecture on the CIFAR-10 dataset across various optimization algorithms. }
    \label{tab:SqueezeNextReLULikenessHyps}
    \begin{tabular}{||c c||} 
 \hline
 Hyperparameter & Value \\
 \hline\hline
 Train/Val/Test Split & 45,000 / 5,000 / 10,000\\ 
 \hline
 Normalization & Standard Score, detailed in Table \ref{tab:NormalizationDetails} \\ 
 \hline
 Architecture & SqueezeNext \cite{SqueezeNext}\\ 
 \hline
 Initialization & Xavier Uniform\\ 
 \hline
 Batch Size & 128\\
 \hline
 Learning Scheduler Period & 60\\
 \hline
 Data Augmentations & Random 4-padded cropping and horizontal flipping\\
 \hline
 Number of Epochs & 200\\ 
 \hline
 Number of Trials & 5 per optimizer\\ 
 \hline
\end{tabular}
\vspace{1.0\baselineskip}
\end{table}

\begin{table}[]
    \centering
    \caption[SqueezeNext CIFAR-10 Optimizer-Dependent Hyperparameters.]
    {\textbf{SqueezeNext CIFAR-10 Optimizer-Dependent Hyperparameters.} Learning rate, weight decay coefficient, and learning rate scheduler decay hyperparameter values which vary across optimizer choice. Each hyperparameter value has been tuned to optimize the performance of SqueezeNext architectures employing the ReLU hidden non-linearity.}
    \label{tab:CIFAR_sup_sq_hps}
    \begin{tabular}{||c c c c||} 
 \hline
 Optimizer & learning rate & weight decay & learning decay\\
 \hline\hline
 SGD & 0.1 & 0.003 & 0.2\\ 
 \hline
 Momentum & 0.1 & 0.0007 & 0.2\\ 
 \hline
 AdamW & 0.005 & 0.005 & 0.2\\ 
 \hline
 RMSprop & 0.0002 & 0.005 & 0.4\\
 \hline
\end{tabular}
\vspace{1.0\baselineskip}
\end{table}

\begin{table}
\centering
\caption[SqueezeNext CIFAR-10 Test Accuracy with Optimizers.]
{\textbf{SqueezeNext CIFAR-10 Test Accuracy with Optimizers.} average test accuracy of the SqueezeNext architecture training over 200 epochs on the CIFAR-10 dataset across various optimization algorithms. Each data point has been averaged over a minimum of 5 trials.}
{
\begin{tabular}{||c c c c c||} 
 \hline
 Name & SGD & Momentum & AdamW & RMSprop\\
 \hline\hline
 TeLU & 91.40$\pm$0.11 & \textbf{90.96}$\pm$0.29 & \textbf{91.22}$\pm$0.07 & \textbf{89.86}$\pm$0.28\\ 
 \hline
 ReLU & \textbf{91.84}$\pm$0.33 & 90.77$\pm$0.16 & 90.31$\pm$0.44 & 88.59$\pm$0.14\\ 
 \hline
 ELU & 90.96$\pm$0.24 & 90.05$\pm$0.42 & 90.94$\pm$0.15 & 89.45$ \pm $0.39 \\ 
 \hline
 SiLU & 78.61$\pm$6.3 & 84.10$\pm$1.1 & 90.24$\pm$0.38 & 66.00$\pm$1.3\\
 \hline
 GELU & 88.42$\pm$0.28 & 89.33$\pm$0.24 & 90.37$\pm$0.11 & 80.68$\pm$1.2 \\
 \hline
 Mish & 89.87$\pm$0.21 & 90.04$\pm$0.25 & 90.85$\pm$0.30 & 87.39$\pm$0.17\\
 \hline
 Logish & 61.44$\pm$29 & 66.10$\pm$3.6 & 86.20$\pm$1.1 & 43.20$\pm$19\\ 
 \hline
 Smish & 77.28$\pm$3.0 & 68.60$\pm$2.2 & 82.54$\pm$1.0 & 66.91$\pm$2.3\\ 
 \hline
\end{tabular}}
\label{tab:CIFARSUM}
\vspace{1.0\baselineskip}
\end{table}

We observe that in the momentum, AdamW, and RMSprop optimizer-oriented ReLU configurations, TeLU performs presents a significant improvement over ReLU. This reinforces our ongoing observation that TeLU tends to be superior over ReLU in configurations that optimize ReLU performance. The SGD-oriented ReLU configuration presented the only case where ReLU performed best out of all activation functions. We note that the SGD optimizer was the most challenging to tune due to their smaller viable hyperparameter subspaces, yet it yielded the highest accuracies for the architecture being optimized once an adequate tuning was found. This behavior regarding SGD is common \cite{WhySGD}, and may help explain why TeLU did not outperform ReLU in this context. Since the success of SGD is particularly inconsistent between similar configurations, it is expected that the substitution of activation functions within an architecture adds some additional tuning costs. TeLU, however, retains comparable performance with ReLU in the SGD case and shows significant improvement over all other activation functions.

To visualize the empirical similarity between TeLU and ReLU further, we plot the progression of validation accuracies in each architecture while optimizing according to the SGD and momentum-accelerated SGD optimizers in Figures \ref{fig:SGDSqz10} and \ref{fig:MomentumSqz10}. We find that architectures employing each nonlinearity learn in consistent patterns in both cases. ReLU, having little to no empirical self regularization according to our observations so far, benefits from large weight decays. Therefore, we see that more self-regularized activation functions cause their architecture's validation accuracy to saturate to a lower value in this high regularization context. We also notice that TeLU and ELU architectures initially converge at a faster rate than their competition, similar to the persistent gradient experiments. We hypothesize that this occurs for the same reasons of having strong gradients in the unbounded region of activation without suffering from dying neuron problems. Eventually; ReLU, TeLU, and ELU architectures saturate to a similar validation accuracy and TeLU generalizes to a superior test accuracy. 

\begin{figure*}
    \centering
    \includegraphics[width=0.75\linewidth]{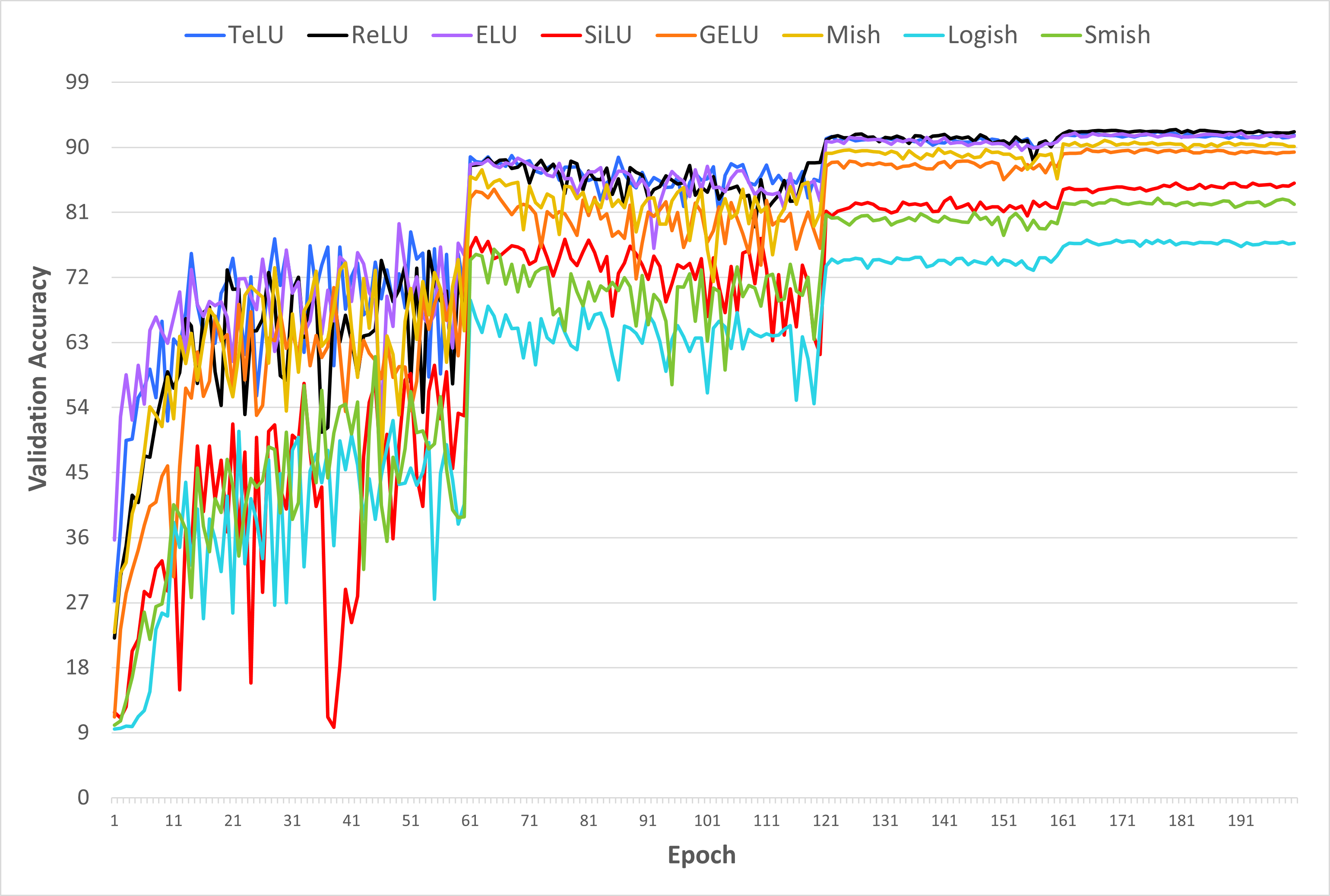}
    \caption[SqueezeNext on CIFAR-10 with SGD Validation Accuracy.]
    {\textbf{SqueezeNext on CIFAR-10 with SGD Validation Accuracy.} Shows the progression of validation accuracy throughout 200 epochs of the SqueezeNext architecture being trained on the CIFAR-10 dataset with the Mini-batch SGD optimizer. Each data point has been averaged over 5 trials.}
    \label{fig:SGDSqz10}
\end{figure*}

\begin{figure*}
    \centering
    \includegraphics[width=0.75\linewidth]{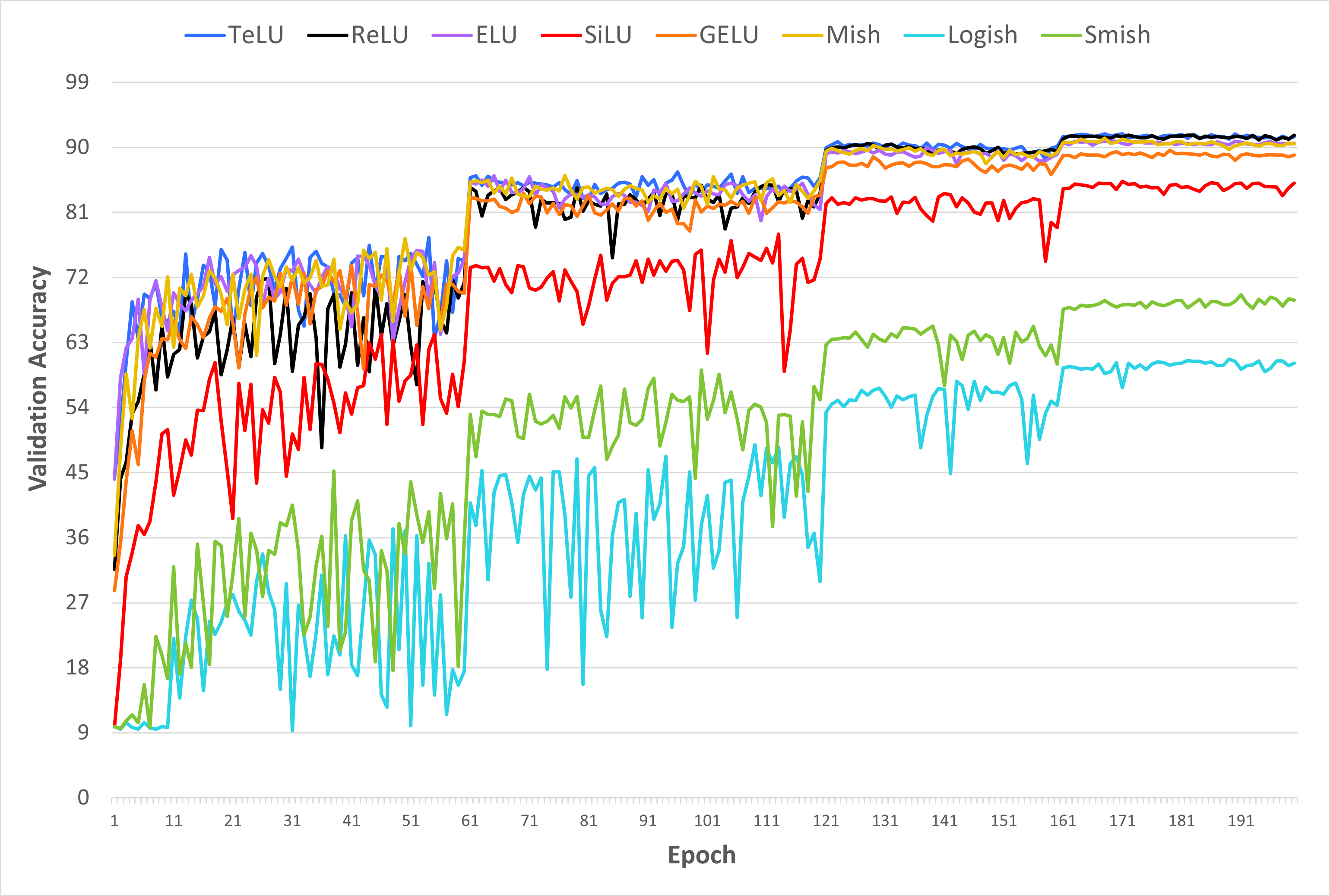}
    \caption[SqueezeNext on CIFAR-10 with Momentum-Accelerated SGD Validation Accuracy.]
    {\textbf{SqueezeNext on CIFAR-10 with Momentum-Accelerated SGD Validation Accuracy.} Shows the progression of validation accuracy throughout 200 epochs of the SqueezeNext architecture being trained on the CIFAR-10 dataset with the momentum-accelerated Mini-batch SGD optimizer. Each data point has been averaged over 10 trials.}
    \label{fig:MomentumSqz10}
\end{figure*}


We emphasize for these ReLU-similarity experiments that all experiments are done with configurations that are meant to favor the learning of ReLU architectures. This way, an activation function's empirical similarity to ReLU in terms of viable configurations can be highlighted. Based on the resulting observations, we conclude that TeLU and ReLU share similar ideal hyperparameter subspaces, allowing them to perform competitively after training with the same configurations. This suggests that architectures and tuning strategies optimized for ReLU can also be effectively applied to TeLU, minimizing the cost of substituting the hidden nonlinearities. In practice, we found that TeLU performs well with ReLU hyperparameters but may achieve even better results with a $20\%$ reduction in weight decay and learning rate. Conversely, ReLU appears to perform optimally with a $25\%$ increase in weight decay and learning rate when applied to a TeLU configuration.

\subsection{Analytic Universal Approximation}
\label{subsect:AUAexp}

\subsubsection{MNIST Dataset with Variational AutoEncoder}

To highlight the practical relevance of TeLU being an analytic universal approximation, we compare its training and testing loss to that of ReLU on the data compression and expansion of MNIST samples. A Variational AutoEncoder (VAE) \cite{VAE} architecture is used with encoder layers of size 784, and 512, and 32. Unlike the standard autoencoder \cite{StandardAE}, the encoder of a VAE concludes with a probability distribution of the latent space rather than the latent space itself. From this, the forward pass of a VAE's decoder samples from the distribution to generate a latent space representation before continuing to reconstruct the image. We apply the standard KL Divergence Loss \cite{KLDLoss} to the sampling at the start of decoding as well the mean squared error (MSE) \cite{MSELoss} reconstruction loss. We use a batch size of 100 and train ReLU and TeLU networks over 50 epochs.Table \ref{tab:VAEHyperparams} summarizes these hyperparameters. In Table \ref{tab:VAE}, we observe the training and testing errors of each model across varying learning rates to show that the results are consistent. ReLU, a piecewise-linear universal approximator is not analytic and therefore does not form smooth approximations to the target representation of the task. Therefore, we observe that TeLU reconstructions outperform ReLU reconstructions. 

\begin{table}[]
    \centering
    \caption[Variational AutoEncoder on MNIST Configuration.]
    {\textbf{Variational AutoEncoder on MNIST Configuration.} Configuration of experiments ran on a Variational AutoEncoder model on the MNIST dataset. }
    \label{tab:VAEHyperparams}
    \begin{tabular}{||c c||} 
 \hline
 Hyperparameter & Value \\
 \hline\hline
 Train/Val & 60,000 / 10,000\\ 
 \hline
 Architecture & VAE\\ 
 \hline
 Layer Widths & 784, 512, 256, 32, 256, 512, 784\\ 
 \hline
 Initialization & Xavier Uniform\\ 
 \hline
 Loss Functions & BCE and KLD\\
 \hline
 Optimizer & Adam\\
 \hline
 Batch Size & 100\\
 \hline
 Learning Rate & 0.001, 0.005, or 0.0001\\
 \hline
 Number of Epochs & 50\\ 
 \hline
 Number of Trials & 10 per learning rate\\ 
 \hline
\end{tabular}
\end{table}


\begin{table}[]
    \centering
    \caption[VAE MNIST Reconstruction Error.]
    {\textbf{VAE MNIST Reconstruction Error.} Training and testing errors of our Variational AutoEncoder compressing and decompressing samples, averaged over 5 trials for each combination.}
    \label{tab:VAE}
    \begin{tabular}{||c c c||} 
 \hline
 Name & Train Error & Test Error\\
 \hline\hline
 TeLU lr=0.001 & \textbf{100.77}$\pm$0.240 & \textbf{100.56}$\pm$0.245 \\ 
 \hline
 ReLU lr=0.001 & 102.00$\pm$0.028 & 101.78$\pm$0.052 \\
 \hline
 TeLU lr=0.0005 & \textbf{95.102}$\pm$0.090 & \textbf{96.367}$\pm$0.097 \\ 
 \hline
 ReLU lr=0.0005 & 96.537$\pm$0.248 & 97.884$\pm$0.145 \\
 \hline
 TeLU lr=0.0001 & \textbf{95.234}$\pm$0.126 & \textbf{97.242}$\pm$0.174 \\ 
 \hline
 ReLU lr=0.0001 & 96.785$\pm$0.164 & 98.844$\pm$0.397 \\
 \hline
\end{tabular}
\end{table}

\subsubsection{Penn TreeBank Dataset with RNN Architectures}

Additionally, we show that TeLU performs competitively when used as the hidden nonlinearity in Recurrent Neural Networks (RNN) \cite{RNNBPTT} architectures on the Penn TreeBank (PTB) \cite{PTBDataSet} dataset. First, we define an Elman architecture \cite{ELMAN1990179}, the simplest form of a recurrent neural network. In an Elman recurrent neural network, hidden neurons perform their forward pass by taking their previous activation as inputs, allowing for the introduction of neuron states. We set an input embedding size of 450, a hidden layer width of 650, and 2 hidden layers. We train over 40 epochs with a batch size of 20, a learning rate of 0.2, and a hidden neuron dropout rate of 0.2. We truncate the backpropagation through time (BPTT) \cite{RNNBPTT} steps to 25 and perform gradient clipping to limit the absolute value of each computer gradient during the backward pass to 0.1. We compare the Perplexity (PPL) \cite{PPL} of TeLU, ReLU, and Tanh architectures in Table \ref{tab:RNN} over 10 seeded trials and observe that TeLU networks offer significant improvements over networks employing ReLU and Tanh nonlinearities. These 10 trials were seeded with the first 10 whole numbers, 0 through 10. The hyperparameter values used for this Elman network experiment are detailed in Table \ref{tab:ElmanHyperparams}.

We follow this success to demonstrate similar improvements in Long Short Term Memory (LSTM) RNNs. We utilize the same batch size, number of hidden layers, and number of epochs as in the Elman experiments. Since LSTM networks benefit from additional memory and gating over Elman networks, we increase the number of backpropagation steps to 35. We also empirically determine that the embeddeding size of 550 and the hidden layer size of 750 are preferred by LSTMs. To balance this modification, we increase our hidden neuron dropout rate to 0.65 and reduce our gradient clipping to 0.25 to reduce overfitting of the model. We maintain the gating activation function as logistic sigmoids and hyperbolic tangents like in standard LSTMs so that they may continue functioning as gates. Since these sigmoid gates suffer from vanishing gradients, we increase our learning rate to 20. Lastly, we modify only the LSTM cells' output activation function to employ either TeLU, ReLU, or Tanh. We observe significant improvements of TeLU over ReLU, and slight improvements over Tanh in Table \ref{tab:RNN}. For reproducibility, we detail this seeded LSTM experiment in Table \ref{tab:LSTMHyperparams}.

\begin{table}[]
    \centering
        \caption[RNN Penn Treebank Test PPL Summary.]
        {\textbf{RNN Penn Treebank Test PPL Summary.} Elman and LSTM RNN architectures' testing PPL, averaged over 10 trials per non-linearity.}
    \label{tab:RNN}
    \begin{tabular}{||c c c||} 
 \hline
 Name & Elman & LSTM\\
 \hline\hline
 TeLU & \textbf{132.65}$\pm$2.48 & \textbf{81.842}$\pm$0.37 \\ 
 \hline
 ReLU & 140.63$\pm$4.25 & 86.855$\pm$0.62 \\
 \hline
 Tanh & 138.65$\pm$5.39 & 82.123$\pm$1.25 \\ 
 \hline
\end{tabular}
\end{table}

\begin{table}[]
    \centering
    \caption[Elman RNN on Penn Treebank Experiment Configuration.]
    {\textbf{Elman RNN on Penn Treebank Experiment Configuration.} Configuration of experiments ran on an Elman RNN model on the Penn Treebank dataset.}
    \label{tab:ElmanHyperparams}
    \begin{tabular}{||c c||} 
 \hline
 Hyperparameter & Value \\
 \hline\hline
 Train/Val/Test Split & 930,000/74,000/82,000 words\\ 
 \hline
 Architecture & Elman Network \\ 
 \hline
 Embedding Size & 450\\ 
 \hline
 Hidden Layer Count & 2\\ 
 \hline
 Hidden Layer Size & 650\\ 
 \hline
 Dropout & 0.2\\ 
 \hline
 Initialization & Uniform(-0.1,0.1)\\ 
 \hline
 Optimizer & Mini-batch SGD\\
 \hline
 Batch Size & 20\\
 \hline
 BPTT Steps & 25\\
 \hline
 Gradient Clipping & 1.0\\
 \hline
 Seeds & 0, 1, ..., 9\\
 \hline
 Learning Rate & 0.2\\
 \hline
 Number of Epochs & 40\\ 
 \hline
 Number of Trials & 9\\ 
 \hline
\end{tabular}
\vspace{1.0\baselineskip}
\end{table}

\begin{table}[]
    \centering
    \caption[LSTM RNN on Penn Treebank Experiment Configuration.]
    {\textbf{LSTM RNN on Penn Treebank Experiment Configuration.} Configuration of experiments ran on an LSTM RNN model on the Penn Treebank dataset.}
    \label{tab:LSTMHyperparams}
    \begin{tabular}{||c c||} 
 \hline
 Hyperparameter & Value \\
 \hline\hline
 Train/Val/Test Split & 930,000/74,000/82,000 words\\ 
 \hline
 Architecture & LSTM Network \cite{LSTM} \\ 
 \hline
 Embedding Size & 550\\ 
 \hline
 Hidden Layer Count & 2\\ 
 \hline
 Hidden Layer Size & 750\\ 
 \hline
 Dropout & 0.65\\ 
 \hline
 Initialization & Uniform(-0.1,0.1)\\ 
 \hline
 Optimizer & Mini-batch SGD\\
 \hline
 Batch Size & 20\\
 \hline
 BPTT Steps & 35\\
 \hline
 Gradient Clipping & 0.25\\
 \hline
 Seeds & 0000, 1111, ..., 9999\\
 \hline
 Learning Rate & 0.2\\
 \hline
 Number of Epochs & 40\\ 
 \hline
 Number of Trials & 9\\ 
 \hline
\end{tabular}
\vspace{1.0\baselineskip}
\end{table}


\subsubsection{FashionMNIST Dataset with MLP Robustness Experiment}

To observe the robustness benefits of a smooth universal approximator, we designed an experiment with an MLP on the Fashion-MNIST dataset \cite{MNIST}. We performed ten trials, seeded from integers 0 to 9, where an architecture employing the TeLU, ReLU, and ELU nonlinearities as its hidden activation function trains over 100 epochs. We partition the FashionMNIST according to 50,000 training samples; 10,000 validation samples; and 10,000 testing samples. In order to reach robust solutions, we use an SGD optimizer \cite{SGD1} with a large learning rate of 0.05 and a large weight decay coefficient of 0.001. We set our momentum parameter update acceleration coefficient to 0, upon noticing that momentum tends to produce less robust models in this simple experiment. We define a batch size of 256 with the intention to perform more regularized weight parameter update steps, resulting in mini-batch gradient descent \cite{minibatchgradientdescent}. After each epoch of training, we evaluate our model on the validation partition and save a checkpoint if validation accuracy has improved. After 100 epochs, we evaluate the best validation accuracy checkpoint on 11 instances of our test dataset, each with an increasing degree of Gaussian noise standard deviation summed to the inputs. We detail our experimental setup in Table \ref{tab:RobustnessHyperparams}.

\begin{table}[]
    \centering
    \caption[MLP FashionMNIST Robustness Configuration.]
    {\textbf{MLP FashionMNIST Robustness Configuration.} Configuration of experiments ran on an MLP model on the FashionMNIST dataset and tested on various standard deviations of zero-centered Gaussian noise. }
    \label{tab:RobustnessHyperparams}
    \begin{tabular}{||c c||} 
 \hline
 Hyperparameter & Value \\
 \hline\hline
 Train/Val/Test Split & 50,000 / 10,000 / 10,000\\ 
 \hline
 Hidden Layer Count & 8\\ 
 \hline
 Hidden Layer Width & 128\\ 
 \hline
 Initialization & Xavier Uniform\\ 
 \hline
 Optimizer & Mini-batch SGD\\
 \hline
 Momentum & 0.0\\
 \hline
 Batch Size & 256\\
 \hline
 Learning Rate & 0.05\\
 \hline
 Weight Decay & 0.001\\
 \hline
 Number of Epochs & 100\\ 
 \hline
 Number of Trials & 10\\ 
 \hline
\end{tabular}
\end{table}

\begin{figure}
    \centering
    \includegraphics[width=0.75\linewidth]{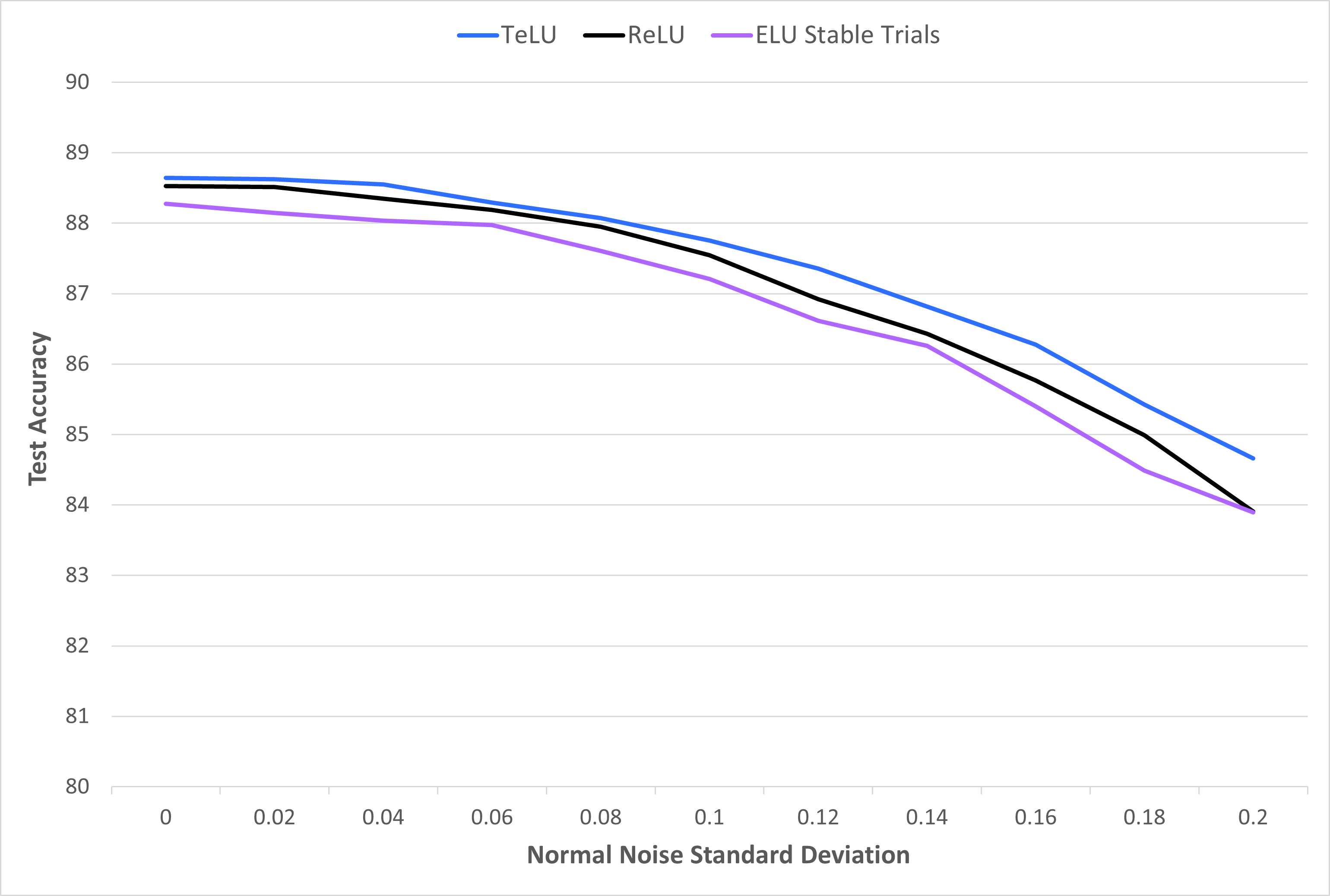}
    \caption[MLP FashionMNIST Robustness Test Accuracy.]
    {\textbf{MLP FashionMNIST Robustness Test Accuracy.} Shows TeLU, ReLU, and stable ELU robustness accuracies across various standard deviations of Gaussian noise.}
    \label{fig:RobustnessFigure}
    \vspace{1.0\baselineskip}
\end{figure}

\begin{table}[]
\centering
 \caption[MLP FashionMNIST Robustness Test Accuracy.]
 {\textbf{MLP FashionMNIST Robustness Test Accuracy.} Average test accuracy of our MLP model training over 100 epochs on the FashionMNIST dataset with increasing levels of Gaussian noise applied to the testing samples. Each data point has been averaged over 10 trials.}
    \begin{tabular}{||c c c c c||} 
 \hline
 Stdv & TeLU & ReLU & ELU & ELU Stable \\
 \hline\hline
 0.00 & \textbf{88.64} & 88.53 & 25.65 & 88.27 \\ 
 \hline
 0.02 & \textbf{88.62} & 88.51 & 25.62 & 88.14 \\
 \hline
 0.04 & \textbf{88.54} & 88.35 & 25.60 & 88.03\\
 \hline
 0.06 & \textbf{88.29} & 88.18 & 25.59 & 87.97\\
 \hline
 0.08 & \textbf{88.07} & 87.94 & 25.52 & 87.60 \\
 \hline
 0.10 & \textbf{87.76} & 87.54 & 25.44 & 87.21\\
 \hline
 0.12 & \textbf{87.35} & 86.91 & 25.32 & 86.61\\
 \hline
 0.14 & \textbf{86.81} & 86.43 & 25.25 & 86.26\\
 \hline
 0.16 & \textbf{86.28} & 85.77 & 25.08 & 85.40\\
 \hline
 0.18 & \textbf{85.42} & 84.98 & 24.89 & 84.49\\
 \hline
 0.20 & \textbf{84.66} & 83.90 & 24.77 & 83.89\\
 \hline
\end{tabular}
\label{tab:RobustnessTable}
\end{table}

We record our testing accuracies over the 11 noise standard deviations in Table \ref{tab:RobustnessTable}. We note that while tuning the hyperparameters of the experiment to optimize robustness without reducing accuracy, ELU experienced numerical instability at favorable configurations. To focus on the robustness of the activation functions, rather than the stability, we separate ELU's testing accuracies into two columns. The first column depicts the average ELU test accuracy for each noise level, while the second "ELU Stable" column calculates the average accuracies for stable ELU trials. We also visualize the stable behavior of all three architectures in Figure \ref{fig:RobustnessFigure} across the gradually increasing noise distributions. We did not include the overall testing average of ELU architectures in this Figure, as this would significantly reduce the resolution needed to valuably compare the robustness exhbited each activation function. We note that TeLU, an analytic universal approximator, significantly outperforms the piecewise ReLU and ELU activation functions when employed as the hidden layer activation of an MLP.

\subsection{Stability}
\label{subsect:stabilityexp}
In Subsection \ref{subsect:stabilitytheory}, we explored how an activation function's properties; such as simple formulation, mitigation of vanishing and exploding gradients, smoothness, and reduced output bias; contribute to the overall stability of a neural network. With these foundational concepts established, we now shift from theory to practice by validating these benefits through experimental analysis. In this subsection, we will demonstrate how these theoretical advantages translate into real-world improvements in training stability, convergence speed, and model performance. By conducting empirical tests across various architectures and datasets, we aim to showcase the tangible impact of these activation functions on deep learning tasks, reinforcing the connection between theory and practical application.

\subsubsection{FashionMNIST Dataset with MLP with Varied Network Depths}

First we demonstrate that TeLU architectures offer stable learning across a wide range of model depths. We define MLP architectures that employ hidden activations of TeLU, ReLU, and Mish with a starting number of 16 hidden layers of width 128. We then train models with a momentum-accelerated Mini-batch Gradient Descent \cite{minibatchgradientdescent} \cite{Momentum} on the MNIST dataset \cite{MNIST} with a weight decay coefficient of 0.0005; setting aside 10,000 validation images and 10,000 testing images. We repeat this process for incrementally increasing hidden layers until reaching a hidden depth of 44. We detail our precise experimental configuration in Table \ref{tab:StabilityDepthHyperparams} and show the average testing accuracies of each architecture in Figure \ref{fig:stabilitydepth05}. We observe that ReLU, being a piecewise linear function with greater output bias than TeLU and Mish, succumbs to instability as depth increases. TeLU and Mish, both being smooth non-monotonic functions with less biased outputs, maintain their stability and perform predictably across all depths. Our detailed configuration is given by Table \ref{tab:StabilityDepthHyperparams}.

\begin{table}[]
    \centering
    \caption[MLP FashionMNIST Cross-Depth Stability Experiment Configuration.]
    {\textbf{MLP FashionMNIST Cross-Depth Stability Experiment Configuration.} Configuration of experiments ran on MLP architectures with different hidden layer counts on the FashionMNIST dataset. }
    \label{tab:StabilityDepthHyperparams}
    \begin{tabular}{||c c||} 
 \hline
 Hyperparameter & Value \\
 \hline\hline
 Train/Val/Test Split & 50,000 / 10,000 / 10,000\\ 
 \hline
 Hidden Layer Count & [16, 44]\\ 
 \hline
 Hidden Layer Width & 128\\ 
 \hline
 Initialization & Xavier Uniform\\ 
 \hline
 Optimizer & Mini-batch SGD\\
 \hline
 Momentum & 0.9\\
 \hline
 Batch Size & 128\\
 \hline
 Learning Rate & 0.005\\
 \hline
 Weight Decay & Repeated with 0.0005 and 0.001\\
 \hline
 Number of Epochs & 100\\ 
 \hline
 Number of Trials & 10 trials per depth\\ 
 \hline
\end{tabular}
\end{table}

\begin{figure}
    \centering
    \includegraphics[width=0.75\linewidth]{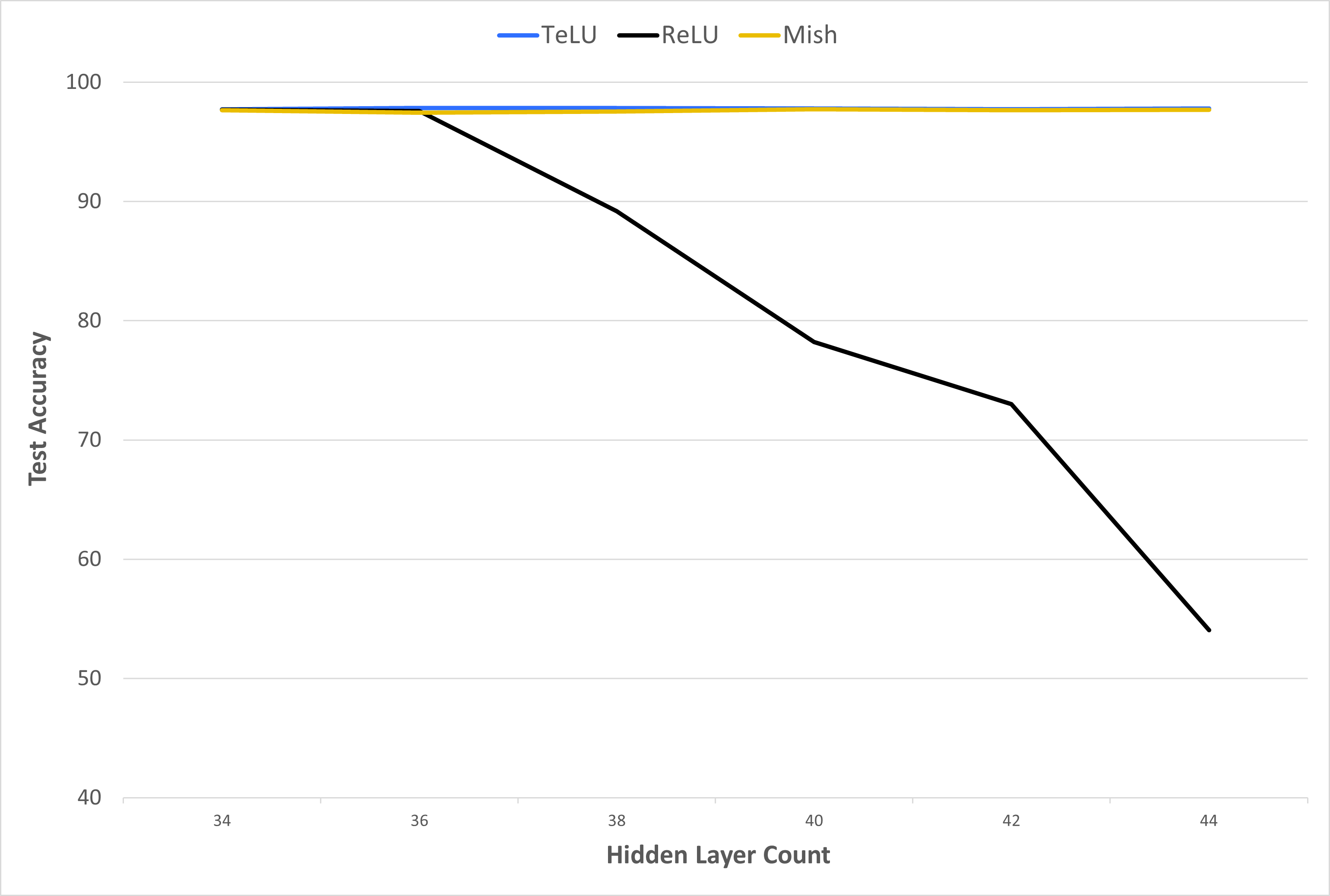}
    \caption[MLP FashionMNIST Test Accuracy Across Depths with Weight Decay 0.0005.]
    {\textbf{MLP FashionMNIST Test Accuracy Across Depths with Weight Decay 0.0005.} Architecture stability as number of hidden layers increases while using weight decay coefficient of 0.0005.}
    \label{fig:stabilitydepth05}
    \vspace{1.0\baselineskip}
\end{figure}

We then repeat the experiment for hidden layer counts between 34 and 44 with an increased weight decay coefficient of 0.001. We observe in Figure \ref{fig:enter-stabilitydepth1} how ReLU, being less self-regularized, stabilizes with greater L2 regularization. Mish, however, exhibits drastic instability with this minor increase in weight decay. We understand that Mish is a self-regularized non-monotonic function \cite{Mish} with a more complex formulation than TeLU and ReLU. In this context, $Mish(x) = x \cdot tanh(ln(1+e^x))$ may be interpreted as a version of $TeLU(x) = x \cdot tanh(e^x)$ with the addition a nested natural logarithm nonlinearity. This extra intermediate computation exposes each Mish activation to numerical underflow. Once underflow occurs, it may propagate along the remaining layers until dictating a particular choice of inference. At the worst case of a depth of 44, this underflow is more likely to occur somewhere in the network and the average accuracy of the model be comparable to that of a random guess.

\begin{figure}
    \centering
    \includegraphics[width=0.75\linewidth]{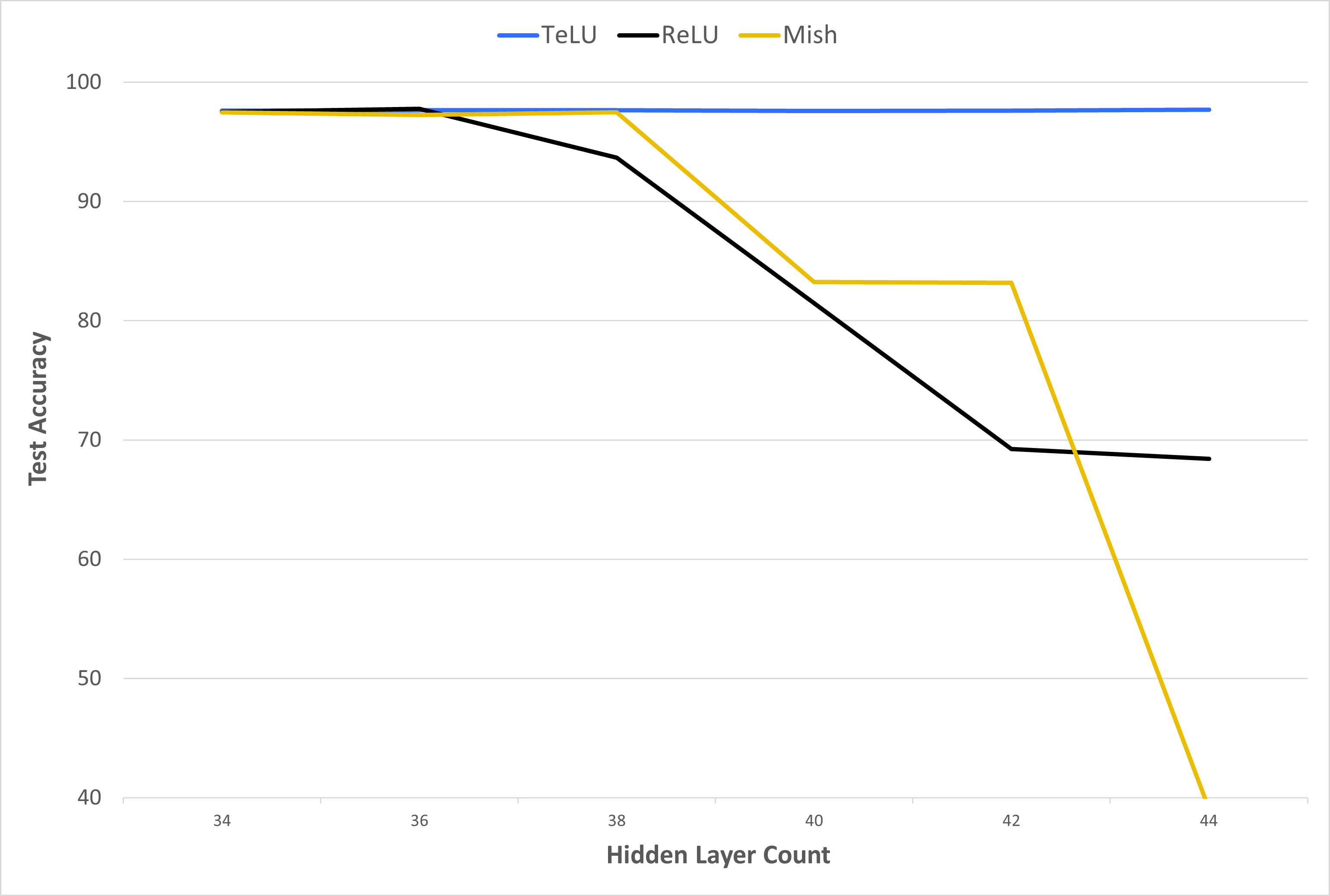}
    \caption[MLP FashionMNIST Test Accuracy Across Depths with Weight Decay of 0.001.]
    {\textbf{MLP FashionMNIST Test Accuracy Across Depths with Weight Decay of 0.001.} Architecture stability as number of hidden layers increases while using weight decay coefficient of 0.001.}
    \label{fig:enter-stabilitydepth1}
    \vspace{1.0\baselineskip}
\end{figure}

\subsubsection{FashionMNIST Dataset with MLP with Varied Initialization Methods}

We perform a similar experiment with the MLP architecture on the FashionMNIST dataset \cite{MNIST}. This time we treat our weight initialization method as an independent variable instead of our model's depth. We define TeLU, ReLU, ELU, SiLU, GELU, Mish, Logish, and Smish MLP architectures by changing only the activation function of hidden layers. We initialize each architecture according to Xavier Uniform, Xavier Normal, He Uniform, and He Normal for different experiments. To be able to observe the impact of the different initializations, we limit the number of epochs to prevent eventual convergance on similar accuracies across initializers. We configure our experiment hyperparameters according to Table \ref{tab:StabilityInitHyperparams}. We train each combination of initializer and architecture over 10 trials. We view the averaged testing accuracies for each architecture on each initializer in Figure \ref{fig:stabilityinit}. We observe how TeLU maintains superior performance over other architectures across all initializers. We also note that the ReLU architectures also exhibit considerable accuracy, but all other activation architectures seem to vary in success across initializers.

\begin{table}[]
    \centering
    \caption[MLP FashionMNIST Cross-Initializer Stability Configuration.]
    {\textbf{MLP FashionMNIST Cross-Initializer Stability Configuration.} Configuration of experiments ran on an MLP model on the FashionMNIST dataset across various learnable parameter initialization methods. }
    \label{tab:StabilityInitHyperparams}
    \begin{tabular}{||c c||} 
 \hline
 Hyperparameter & Value \\
 \hline\hline
 Train/Val/Test Split & 50,000 / 10,000 / 10,000\\ 
 \hline
 Hidden Layer Count & 8\\ 
 \hline
 Hidden Layer Width & 128\\ 
 \hline
 Initialization & Xavier Uniform, Xavier Normal, He Uniform, He Normal\\ 
 \hline
 Optimizer & Mini-batch SGD\\
 \hline
 Momentum & 0.9\\
 \hline
 Batch Size & 128\\
 \hline
 Learning Rate & 0.1\\
 \hline
 Weight Decay & 0.001\\
 \hline
 Number of Epochs & 50\\ 
 \hline
 Number of Trials & 6 trials per initializer\\
 \hline
\end{tabular}
\end{table}

\begin{figure}
    \centering
    \includegraphics[width=0.75\linewidth]{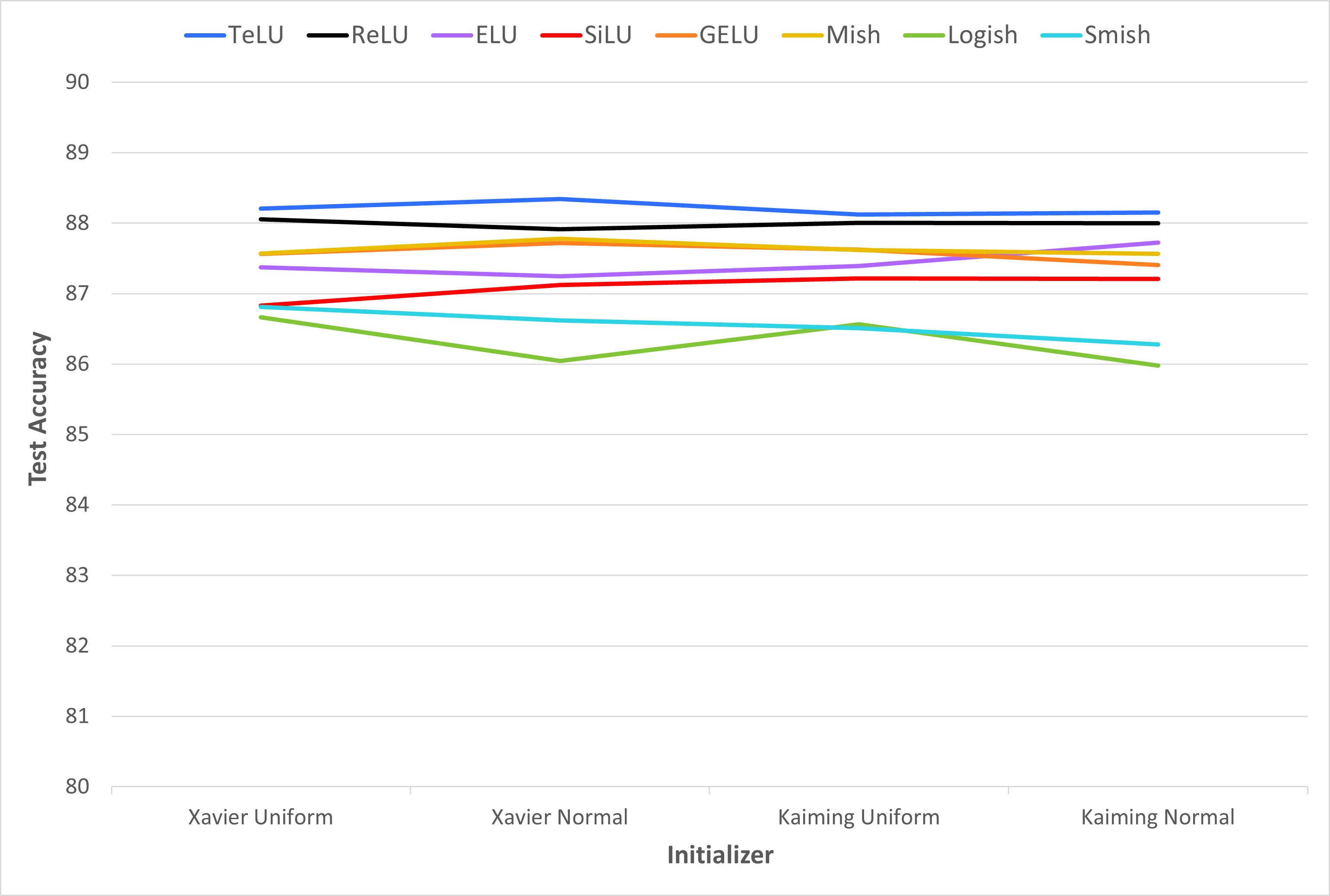}
    \caption[MLP FashionMNIST Test Accuracy Across Initializers.]
    {\textbf{MLP FashionMNIST Test Accuracy Across Initializers.} average test accuracy of our MLP model training over 50 epochs on the FashionMNIST dataset across various weight initialization methods. Each data point has been averaged over 10 trials.}
    \label{fig:stabilityinit}
\end{figure}

\subsubsection{CIFAR-100 Dataset with SqueezeNext with Varied Optimizers}

We go on to showcase the comparative stability and performance of the TeLU, ReLU, and GELU activation functions when utilized as the hidden activation function in a SqueezeNext CNN \cite{SqueezeNext}. We train the SqueezeNext architecture on the CIFAR-100 dataset \cite{cifar} for 200 epochs with the Mini-batch SGD \cite{minibatchgradientdescent} \cite{SGD1}, momentum-accelerated SGD \cite{Momentum}, AdamW \cite{AdamW}, and RMSprop \cite{RMSprop}. We utilize a learning rate scheduler that updates the base learning rate every 60 epochs. We detail the training hyperparameter values that stay constant across optimizer settings in Table \ref{tab:SqueezeNextCIFAR100}, as well as the hyperparameters that change in Table \ref{tab:cifar100_sup_sq_hps}. After each training epoch, we check to see if our validation has improved. If it has, we overwrite our checkpoint with the current weight and bias values of our model. After 5 trials of each optimizer and architecture combination's training is complete, we evaluate the best validation accuracy checkpoint on the testing partition of 10,000 CIFAR-100 images. We summarize the testing accuracies and standard deviations in Table \ref{tab:CIFAR100SUM}. Additionally, we compare the progression of the validation accuracies of each architecture for the Minibatch SGD and momentum-accelerated SGD optimizers in Figures \ref{fig:SGDC100SqzNxt} and \ref{fig:MomentumC100SqzNxt}.

\begin{table}[]
    \centering
    \caption[SqueezeNext CIFAR-100 Optimizer-Independent Hyperparameters.]
    {\textbf{SqueezeNext CIFAR-100 Optimizer-Independent Hyperparameters.} Static configuration of stability experiments ran on the SqueezeNext architecture on the CIFAR-100 dataset across different optimizers. }
    \label{tab:SqueezeNextCIFAR100}
    \begin{tabular}{||c c||} 
 \hline
 Hyperparameter & Value \\
 \hline\hline
 Train/Val/Test Split & 45,000 / 15,000 / 10,000\\ 
 \hline
 Normalization & Standard Score, detailed in Table \ref{tab:NormalizationDetails} \\ 
 \hline
 Architecture & SqueezeNext \cite{SqueezeNext}\\ 
 \hline
 Initialization & Xavier Uniform\\ 
 \hline
 Batch Size & 128\\
 \hline
 Learning Scheduler Period & 60\\
 \hline
 Data Augmentations & Random 4-padded cropping and horizontal flipping\\
 \hline
  Number of Epochs & 200\\ 
 \hline
  Number of Trials & 10\\ 
 \hline
\end{tabular}
\end{table}

\begin{table}[]
    \centering
    \caption[SqueezeNext CIFAR-100 Optimizer-Dependent Hyperparameters.]
    {\textbf{SqueezeNext CIFAR-100 Optimizer-Dependent Hyperparameters.} Learning rate, weight decay coefficient, and learning rate scheduler decay hyperparameter values which vary across optimizer choice.}
    \label{tab:cifar100_sup_sq_hps}
    \begin{tabular}{||c c c c||} 
 \hline
 Optimizer & learning rate & weight decay & gamma\\
 \hline\hline
 SGD & 0.04 & 0.003 & 0.2\\ 
 \hline
 Momentum & 0.003 & 0.003 & 0.2\\ 
 \hline
 AdamW & 0.005 & 0.005 & 0.4\\ 
 \hline
 RMSprop & 0.0002 & 0.005 & 0.4\\  
 \hline
\end{tabular}
\vspace{1.0\baselineskip}
\end{table}

\begin{table}
\centering
\caption[CIFAR-100 SqueezeNext Test Accuracy Summary.]
{\textbf{CIFAR-100 SqueezeNext Test Accuracy Summary.} A summary of the test accuracies achieved across various optimizers on the CIFAR-100 dataset by the SqueezeNext Architecture checkpoint that achieved optimal validation accuracy throughout 200 epochs of training.}
    \begin{tabular}{||c c c c c||}  
 \hline
 Name & SGD & Momentum & AdamW & RMSprop\\
 \hline\hline
 TeLU & \textbf{71.47}$\pm$0.08 & \textbf{70.53}$\pm$0.25 & \textbf{69.64}$\pm$0.07 & \textbf{68.83}$\pm$0.33\\ 
 \hline
 ReLU & 69.52$\pm$0.43 & 65.05$\pm$0.51 & 66.31$\pm$0.48 & 67.99$\pm$0.21\\ 
 \hline
 GELU & 67.09$\pm$0.36 & 66.26$\pm$29 & 66.50$\pm$0.44 & 65.19$\pm$0.25 \\ [1ex]
 \hline
\end{tabular}
\label{tab:CIFAR100SUM}
\vspace{1.0\baselineskip}
\end{table}

\begin{figure}
    \centering
    \includegraphics[width=0.75\linewidth]{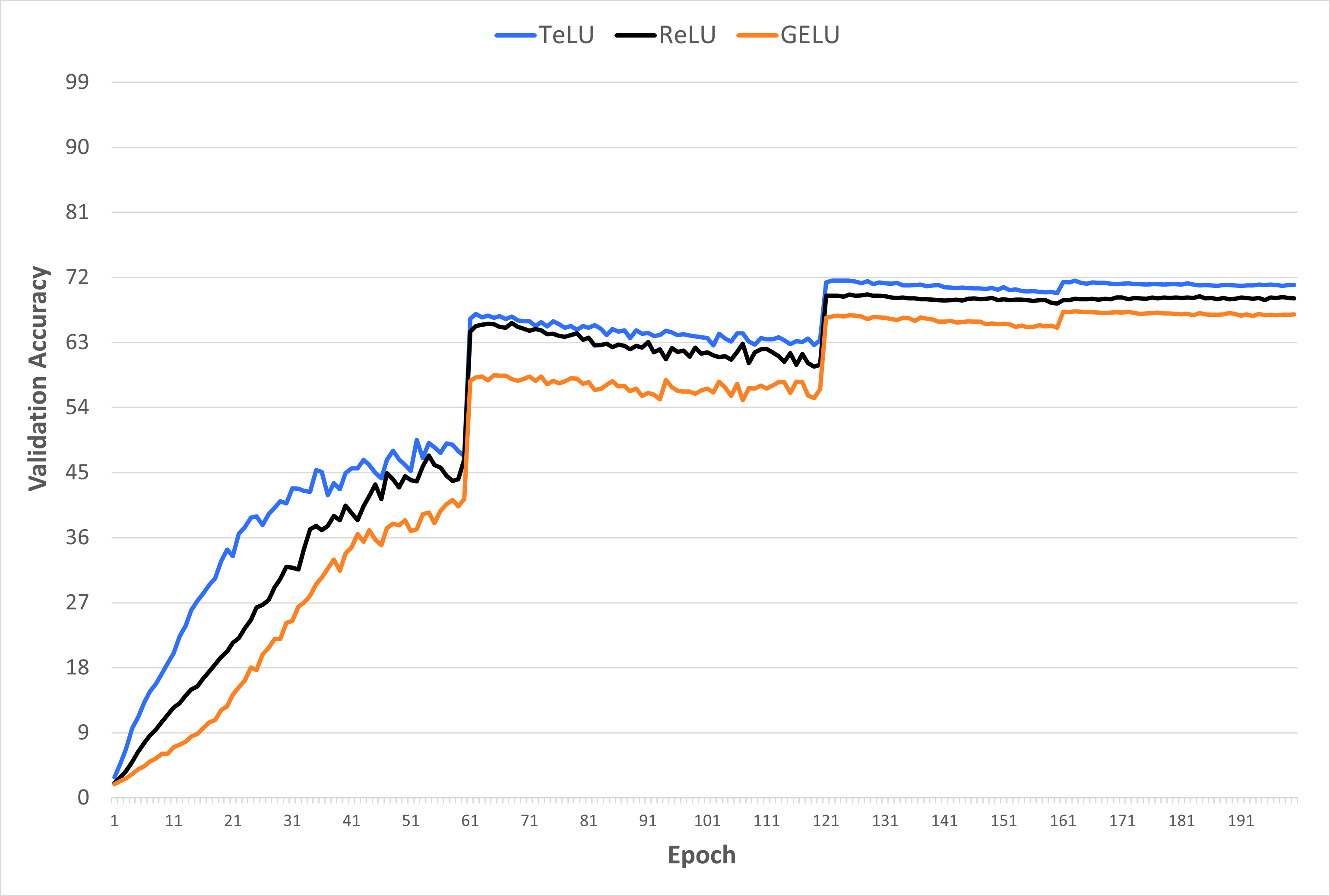}
    \caption[SqueezeNext on CIFAR-100 with SGD Validation Accuracy.]
    {\textbf{SqueezeNext on CIFAR-100 with SGD Validation Accuracy.} Shows the progression of validation accuracy throughout 200 epochs of the SqueezeNext architecture being trained on the CIFAR-100 dataset with the Mini-batch SGD optimizer. Each data point has been averaged over 5 trials.}
    \label{fig:SGDC100SqzNxt}
    \vspace{1.0\baselineskip}
\end{figure}

\begin{figure}
    \centering
    \includegraphics[width=0.75\linewidth]{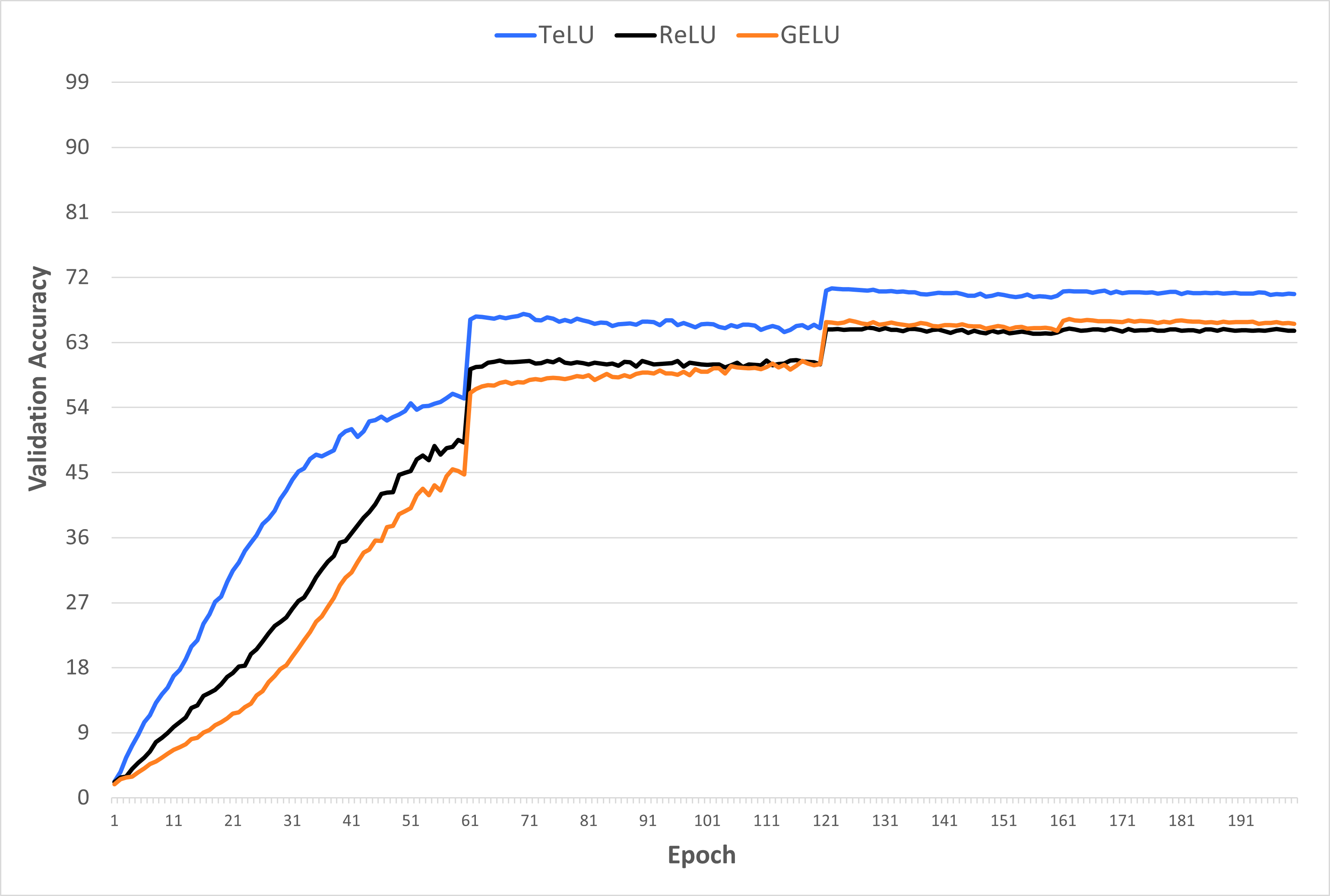}
    \caption[SqueezeNext on CIFAR-100 with Momentum-Accelerated SGD Validation Accuracy.]
    {\textbf{SqueezeNext on CIFAR-100 with Momentum-Accelerated SGD Validation Accuracy.} Shows the progression of validation accuracy throughout 200 epochs of the SqueezeNext architecture being trained on the CIFAR-100 dataset with the momentum-accelerated Mini-batch SGD optimizer. Each data point has been averaged over 5 trials.}
    \label{fig:MomentumC100SqzNxt}
    \vspace{1.0\baselineskip}
\end{figure}

We observe from Table \ref{tab:CIFAR100SUM} that TeLU exhibits greater accuracy than both ReLU and GELU accuracies in all cases. Additionally, we notice that TeLU exhibits greater learning stability in most cases, as determined by its minimal standard deviation of test accuracy across the Mini-batch SGD, momentum-accelerated SGD, and AdamW optimizers. This means that the network is less sensitive to fluctuations during training, leading to more predictable and stable outcomes, which are crucial in real-world applications where reliability is paramount.

\subsubsection{Tiny ImageNet Dataset with ResNet34 with Varied Optimizers}

We extend our experimentation to include a ResNet34 architecture on TinyImageNet to ensure that this trend persists for larger architectures and datasets. We utilize the configuration detailed in Table \ref{tab:ResNet34TinyHyps} on both Mini-batch SGD and momentum-accelerated SGD optimizers. Across optimizers, we vary only the base learning rate, weight decay coefficient, and learning rate scheduler decay hyperparameters as detailed in Table \ref{tab:tiny_sup_r34_hps}. We perform each trial by training the respective ResNet34 architecture on the TinyImageNet training partition, consisting of 100,000 images, over the course of 200 epochs. After each epoch, we see if the validation accuracy that the updated model exhibits is an improvement over the current validation accuracy. If the previous epoch of training has improved validation accuracy, we save a new checkpoint of our model. After training concludes, we evaluate our best checkpoint on the unseen testing partition to determine our top-1 and top-5 testing accuracies.

We record our test results across 5 trials for the Mini-batch SGD optimizer in Table \ref{tab:tiny_sup_r34_sgd} and observe that TeLU offers greater average test accuracy and less variance in top-1 test scores. For another perspective on these accuracy improvements, we plot the progression of validation accuracy across the 200 epochs of training in Figure \ref{fig:TinySGDVal}. Here, we observe that the TeLU architecture exhibits faster convergence speeds after the learning rate updates at epochs 60 and 100. After this, both the ReLU and TeLU architectures experience a saturation in validation accuracies, with TeLU architectures ending up at a greater validation accuracy.

\begin{table}[]
    \centering
    \caption[ResNet34 TinyImageNet Optimizer-Independent Hyperparameters.]
    {\textbf{ResNet34 TinyImageNet Optimizer-Independent Hyperparameters.} Static configuration of stability experiments ran on the ResNet34 architecture on the TinyImageNet dataset across different optimizers. }
    \label{tab:ResNet34TinyHyps}
    \begin{tabular}{||c c||} 
 \hline
 Hyperparameter & Value \\
 \hline\hline
 Train/Val/Test Split & 100,000 / 10,000 / 10,000\\ 
 \hline
 Normalization & Standard Score, detailed in Table \ref{tab:NormalizationDetails} \\ 
 \hline
 Architecture & ResNet34\\ 
 \hline
 Initialization & Xavier Uniform\\ 
 \hline
 Batch Size & 128\\
 \hline
 Learning Scheduler Updates & 60, 100, 140, 170\\
 \hline
 Data Augmentations & Random 4-padded cropping and horizontal flipping\\
 \hline
 Number of Epochs & 200\\ 
 \hline
 Number of Trials & 10\\ 
 \hline
\end{tabular}
\end{table}

\begin{table}[]
    \centering
    \caption[ResNet34 TinyImageNet Optimizer-Dependent Hyperparameters.]
    {\textbf{ResNet34 TinyImageNet Optimizer-Dependent Hyperparameters.} Learning rate, weight decay coefficient, and learning rate scheduler decay hyperparameter values which vary across optimizer choice.}
    \begin{tabular}{||c c c c||} 
 \hline
 Optimizer & learning rate & weight decay & gamma\\
 \hline\hline
 SGD & 0.05 & 0.001 & 0.3\\ 
 \hline
 Momentum & 0.04 & 0.0004 & 0.4\\ 
 \hline
 AdamW & 0.0005 & 0.004 & 0.5\\ 
 \hline
 RMSprop & 0.0001 & 0.0002 & 0.6\\  
 \hline
\end{tabular}
\label{tab:tiny_sup_r34_hps}
\end{table}

\begin{table}[]
\vspace{1.0\baselineskip}
    \centering
    \caption[ResNet34 TinyImageNet SGD Test Accuracy.]
    {\textbf{ResNet34 TinyImageNet SGD Test Accuracy.} Test accuracies of TeLU and ReLU ResNet34 architectures averaged across 10 trials at the state at which they achieved optimal validation accuracy when training on TinyImageNet dataset with a Mini-batch SGD optimizer.}
    \label{tab:tiny_sup_r34_sgd}
    \begin{tabular}{||c c c||} 
 \hline
 Name & Top-1 Test & Top-5 Test\\
 \hline\hline
 TeLU & \textbf{62.34}$\pm$0.173 & \textbf{81.86}$\pm$0.337\\ 
 \hline
 ReLU & 61.16$\pm$0.314 & 80.51$\pm$0.263\\ 
 \hline
\end{tabular}
\end{table}

\begin{table}[]
    \centering
    \caption[ResNet34 TinyImageNet Momentum-Accelerated SGD Test Accuracy Summary.]
    {\textbf{ResNet34 TinyImageNet Momentum-Accelerated SGD Test Accuracy Summary.} Test accuracies of TeLU and ReLU ResNet34 architectures averaged across 10 trials at the state at which they achieved optimal validation accuracy when training on TinyImageNet dataset with a momentum-accelerated Mini-batch SGD optimizer.}
    \label{tab:tiny_sup_r34_mom}
    \begin{tabular}{||c c c||} 
 \hline
 Name & Top-1 Test & Top-5 Test\\
 \hline\hline
 TeLU & \textbf{62.09}$\pm$0.222 & \textbf{82.28}$\pm$0.453\\ 
 \hline
 ReLU & 38.37$\pm$34.6 & 50.32$\pm$43.8\\ 
 \hline
\end{tabular}
\vspace{1.0\baselineskip}
\end{table}

 \begin{figure}
     \centering
     \includegraphics[width=0.75\linewidth]{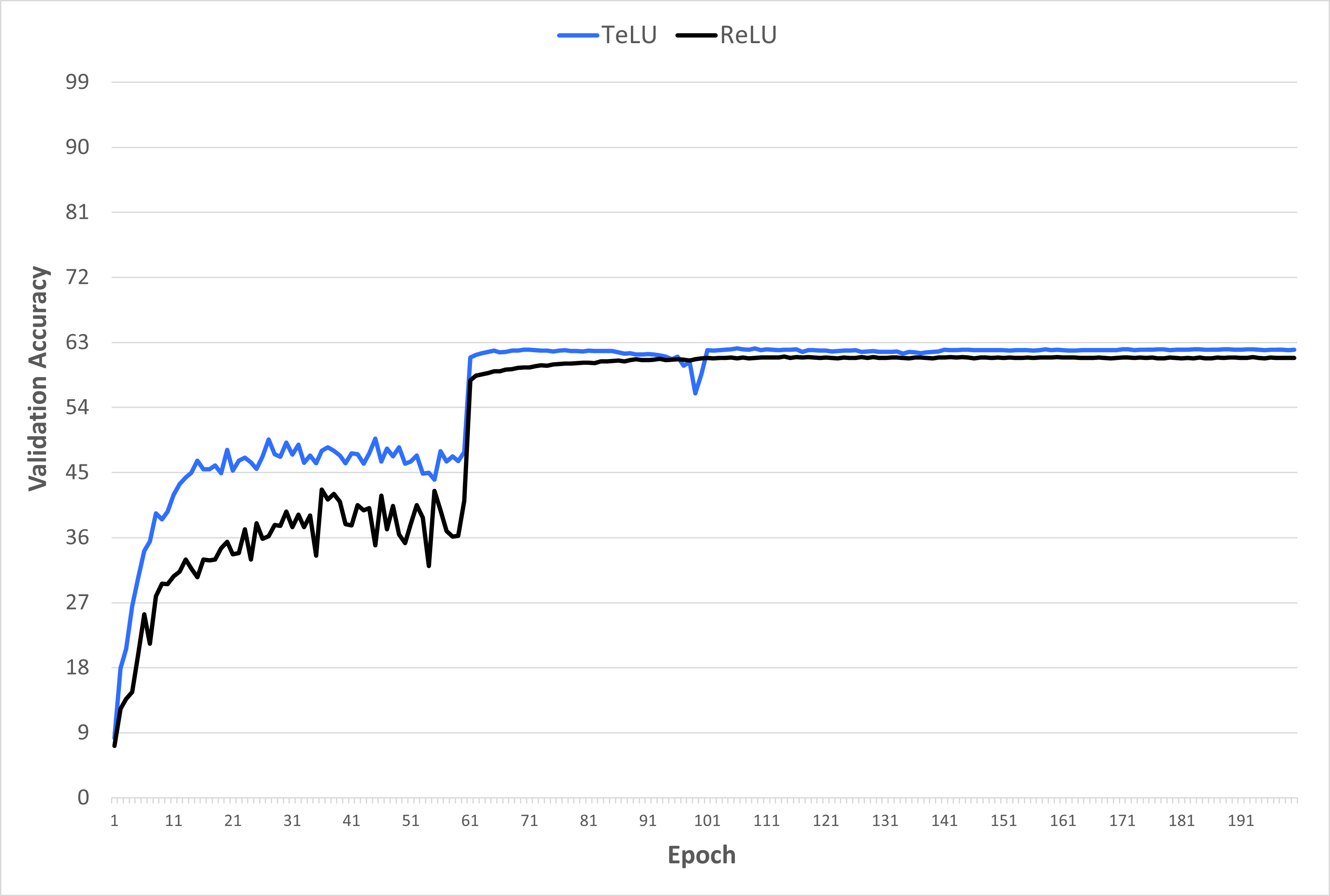}
     \caption[ResNet34 TinyImageNet SGD Validation Accuracy.]
     {\textbf{ResNet34 TinyImageNet SGD Validation Accuracy.} Progression of validation accuracies of TeLU and ReLU ResNet34 architectures averaged across 10 trials when training on TinyImageNet dataset with a Mini-batch SGD optimizer.}
     \label{fig:TinySGDVal}
     \vspace{1.0\baselineskip}
 \end{figure}

 \begin{figure}
     \centering
     \includegraphics[width=0.75\linewidth]{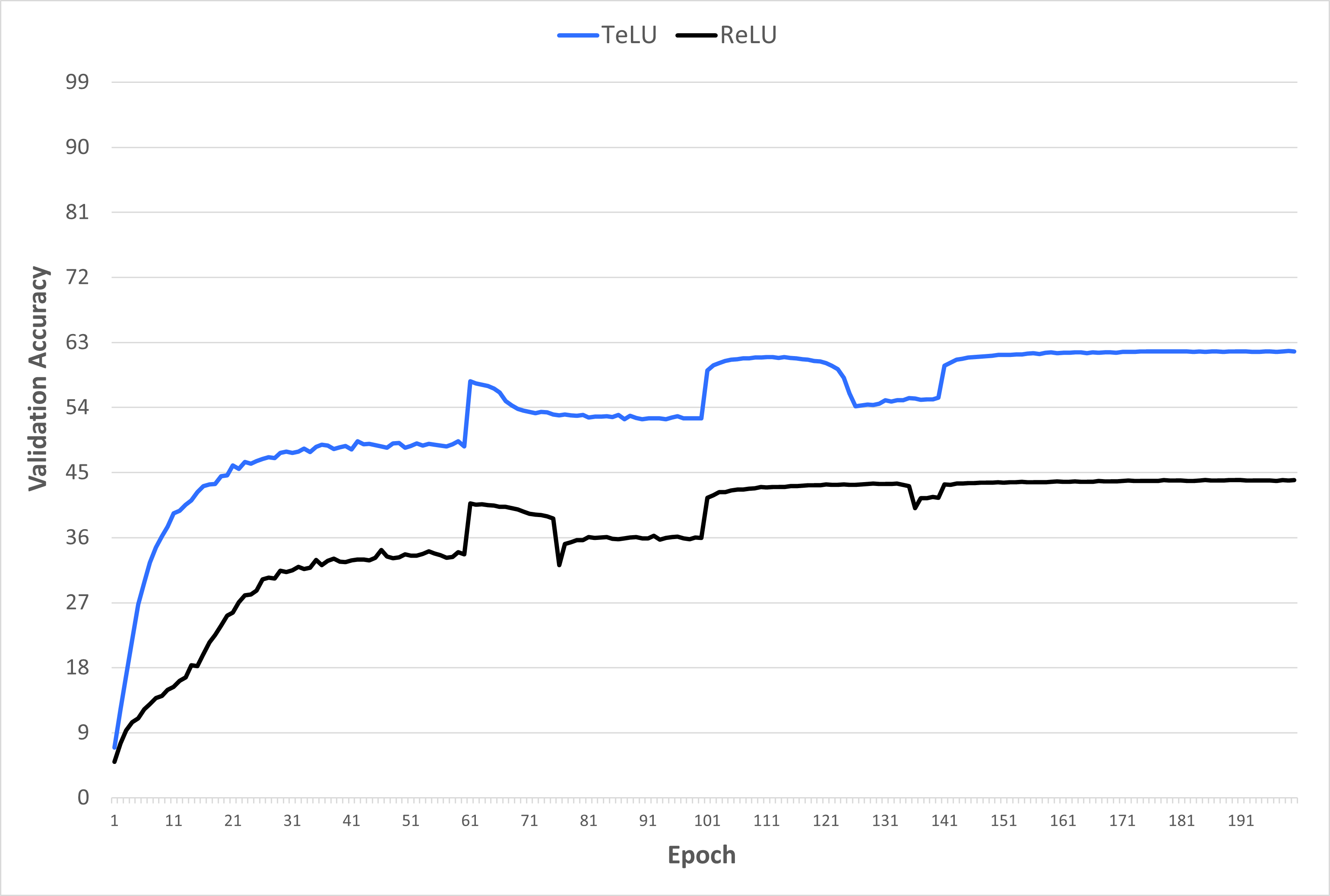}
     \caption[ResNet34 TinyImageNet Momentum-Accelerated SGD Validation Accuracy.]
     {\textbf{ResNet34 TinyImageNet Momentum-Accelerated SGD Validation Accuracy.}Progression of validation accuracies of TeLU and ReLU ResNet34 architectures averaged over 10 trials when training on TinyImageNet dataset with a momentum-accelerated Mini-batch SGD optimizer}
     \label{fig:TinyMomVal}
 \end{figure}

 We also summarize the test results across 10 trials for the momentum-accelerated Mini-batch SGD optimizer in Table \ref{tab:tiny_sup_r34_mom}. It is evident that the ReLU-based ResNet34 architecture experiences significant instability in both top-1 and top-5 accuracy metrics due to numerical issues. Notably, 3 out of the 10 trials resulted in 'not a number' (NaN) training accuracies across all 200 epochs, indicating that numerical instability arises as early as the first epoch. When training accuracy is compromised by such instability, the neurons transmitting it dictate the output, leading to random validation accuracies, where the network effectively guesses output values. This behavior is further illustrated in Figure \ref{fig:TinyMomVal}, where we observe that the ReLU ResNet34 architecture underperforms by approximately 20\% in average validation accuracy compared to the TeLU model. These findings confirm that TeLU-based architectures maintain stability across different optimizers.





\subsubsection{Summary of Stability Experimental Validation}

Throughout previous subsections of this section, we have observed similar cases of instability with other activation functions. Figure \ref{fig:ReLULikenessOverall} shows that SiLU, GELU, Mish, Logish, and Smish experience significantly more instability than other nonlinearities when employed within MLP architectures when the weight decay coefficient is increased beyond 0.001. Figure \ref{fig:ReLULikenessFocus} adds further evidence to this observation, with SiLU, GELU, Mish, Logish, and Smish architectures resulting in inferior testing accuracies after training concludes. Furthermore, we observe in Figure \ref{fig:SGDSqz10} that SiLU, Logish, and Smish SqueezeNext architectures result in reduced test accuracies when trained on the CIFAR-10 dataset with a Mini-batch SGD optimizer. This drop in accuracy is amplified for Logish and Smish when momentum is introduced in \ref{fig:MomentumSqz10}. Across all experiments, TeLU experiences the most stable and consistent results.

The reasons for TeLU's unique degree of learning stability lie in its ability to mitigate both vanishing and exploding gradients while maintaining smoothness and minimal output bias. This combination ensures that gradients propagate effectively through deep networks, allowing for steady and controlled learning, even in complex architectures. TeLU's ability to deliver high accuracy while ensuring stable learning makes it particularly well-suited for tasks requiring long-term reliability, such as in autonomous systems, financial forecasting, or critical healthcare diagnostics, where instability in learning could result in severe consequences. By delivering consistently better accuracy and reduced variability, TeLU provides a foundation for more resilient, generalizable models across a wide range of applications.

\newpage
\section{Discussion}


We have proposed the Hyperbolic Tangent Exponential Unit (TeLU) an activation function for deep neural networks defined as $TeLU(x)= x \cdot tanh(e^x)$. TeLU is an analytic, non-monotonic non-linearity that belongs to the class of linearly-scaled cumulative distribution functions, which consists of linear units such as $ReLU(x)=max(0,x)$ and $GELU(x)=\frac{x}{2} \cdot (1 + erf(\frac{x}{\sqrt{2}}))$. Like other linear units, TeLU may be interpreted as the identity function $x$ with an implicit activation dropout as input becomes negative. This results in TeLU comprising a dense deactivation region followed by an active region exhibiting asymptotic identity growth. Throughout this paper, we have demonstrate TeLU's persistent gradients of its deactivation region, near-linearity of its active region, computational efficiency, compatibility as a substitute for ReLU, analytic universal approximation, and inherent stability properties. Together, these properties allow TeLU to uniquely overcome vanishing gradient, convergence delay, computational delay, configuration tuning delay, community adoptability, and learning instability challenges.


\subsection{Persistent Gradients of Deactivation Region of TeLU}

In Subsection \ref{subsect:persistentgradtheory}, we noticed that our lower-bounded activation functions had derivatives that saturate towards deactivation at different rates. To better describe these varying vanishing rates, we extended the asymptotic growth classes to encompass asymptotic decay with classes $\Theta(\frac{x}{e^x})$, $\Theta(\frac{1}{e^x})$, $O(\frac{1}{x!})$, and $\Omega(\frac{1}{(x^2)!})$. We discovered that TeLU's gradient vanished according to $\Theta(\frac{x}{e^x})$, leading to minimal vanishing gradient concerns among competing functions. We showcased the effect of these varying vanishing gradients numerically by expressing the domain that avoids numerical underflow in each activation function. 

In Subsection \ref{subsect:persistentgradexp}, we showed the direct impact of these asymptotic decays, we designed an experiment where MLP architectures employing different hidden non-linearities have biases initialized to -10 and -20. We trained each architecture on the FashionMNIST dataset and observed each model's ability to recover from the vanishing gradient conditions found at their saturation regions. In each case, we observed that TeLU emerges first. Remaining architectures lag behind TeLU, in an order that is expected given their asymptotic decay class and in a manner that is reflective of their gradient strengths. 

We proceed to provide further experimental validations to the impact of persistent gradients by comparing TeLU, ReLU, and GELU DenseNet architectures on CIFAR-10. DenseNets do not feature any residual bypass connections, so vanishing or dying gradients in GELU and ReLU architectures are especially effective in halting learning prematurely, resulting in models that exhibit poor accuracy. Meanwhile, the TeLU architecture trains and generalizes to improved accuracy in all experiments, showing that TeLU mitigates the vanishing gradient problem.

\subsection{Near-Linearity of Active Region of TeLU}

In Subsection \ref{subsect:nearlinsubsection}, we focused on TeLU's near-linear behavior of the active region, featuring strong gradients that are essential for meaningful learnable parameter update steps. This undiminished learning allows for rapid, stable convergence towards superior training accuracy and generalizes to superior test accuracies, as ensured by TeLU's implicit dropout. In Subsection \ref{subsect:nearlinexp}, we provide experimental validation for the rapid, stable convergence of TeLU. We started by comparing TeLU and ReLU ResNet34 architectures on the ImageNet dataset. While TeLU smoothly approaches linear growth rapidly, ReLU exhbits linear growth immediately once its input become positive. With both TeLU and ReLU activation functions exhibiting strong active gradients, we compare the speed and stability of both architectures. 

First, we utilized Pytorch's ResNet34 default training configuration, which encompassed 90 epochs of training. We observed that the TeLU activation allowed for faster and more consistent convergence than ReLU. To accentuate on the benefits of this improved convergence, we limited our number of epochs to 50 and 20 epochs in two separate follow-up experiments. In either case, we observe that TeLU architectures provide improved convergence and superior generalization over ReLU architectures on ImageNet. 

We further demonstrate TeLU's convergence advantages over ReLU and GELU within a dynamic-pooling transformer architecture trained on the Text8 dataset. Initially, we highlighted TeLU's superior convergence speed and stability compared to GELU in terms of validation loss, with this improvement translating to lower test loss and reduced standard deviation. To examine the impact of TeLU's faster convergence, we conducted a shorter experiment with a reduced learning rate scheduler period. As expected, TeLU outpaced both ReLU and GELU in terms of convergence speed and stability under these conditions.



\subsection{Computational Efficiency of TeLU}

In Subsection \ref{subsect:runtimeefficiencytheory}, we quantified the computational complexities of baseline definitions of activation functions. We counted the number of piecewise segmentations, nonlinear functions, arithmetic operations, and constant terms present within each activation function. We discovered that TeLU exhibited minimal computational complexity that was second only to ReLU. Table \ref{table:ComputationComplexity} summarizes these counts for our focused group of non-linearities. 

In Subsection \ref{subsect:runtimeefficiencyexp}, we witness the direct computational benefits on TeLU over other smooth activation functions within custom Pytorch \cite{Pytorch} benchmarks run on various systems. Each benchmark consisted of a number of forward and backward passes on neurons that activated according to the TeLU, ReLU, ELU, SiLU, GELU, Mish, Logish, and Smish nonlinearities. The first of these benchmarks was run on a Windows 11 operating system with an NVIDIA RTX 2070 GPU, and consisted of $10^6$ activations on an input size of $10^6$, each followed by the corresponding gradient calculation. The total amount of time spent on forward and backward passes was calculated for each non-linearity. 

TeLU and ReLU were found to offer the most computational efficiency, as expected from the computational complexity heuristics. The next benchmarks were run on a LINUX batch server and utilized A100 \cite{A100} and 1080Ti GPUs, respectively. Tables \ref{tab:RunTimesGPU}, \ref{tab:RunTimesGPU10Miter}, and \ref{tab:RunTimesGPU10Minput} summarize the resulting runtime necessary across varying sizes of input and number of iterations. Across each operating system, device, and experimental configuration; we observe TeLU offering optimal run-times that are second only to ReLU.

\subsection{Configuration Compatability of ReLU and TeLU}

In Subsection \ref{subsect:relucomptheory}, we describe how non-monotonic non-linearities are the most suitable for approximating the ReLU activation function. We begin by calculating the area between different activation functions and ReLU, showing an initial heuristic for the closeness of approximation each function provides. This is performed for both negative and positive inputs, to indicate the quality of approximation in both the inactive and active regions. From our candidate pool, we demonstrated that non-monotonic functions are uniquely capable of approximating ReLU as input $x \to \pm \infty$ and at $x=0$. Additionally, we found that TeLU serves as the most suitable substitute for ReLU in deep learning applications. TeLU's active gradients closely approximate ReLU's strong identity growth, resulting in a comparable convergence speed and low implicit regularization. In contrast, other activation functions with weaker active gradients experience implicit regularization in the form of gradient damping, a behavior that distinguishes them from ReLU and TeLU.

In Subsection \ref{subsect:relucompexp}, we experimentally validated the operational similarities between TeLU and ReLU by testing MLP architectures with different non-linearities as their hidden activation functions. The results showed that the same weight decay coefficient values led to optimal performance in both TeLU and ReLU architectures, whereas other architectures with different activation functions did not achieve similar results. Furthermore, we show that TeLU provides accuracy improvements over ReLU on the same configuration that optimizes ReLU convergence and generalization. We offer additional experimental evidence of this similarity by demonstrating that TeLU improves accuracy in training configurations optimized to minimize the loss in ReLU CNN architectures. For each optimizer, including Mini-batch SGD \cite{minibatchgradientdescent, SGD1}, momentum accelerated SGD \cite{Momentum}, AdamW \cite{AdamW}, and RMSprop \cite{RMSprop}, we tune the training hyperparameters that influence the learning of a SqueezeNext architecture using the ReLU nonlinearity. The hyperparameters are optimized to achieve the highest possible validation accuracy for each optimizer. Again, we observe that TeLU architectures are optimized by configurations designed for ReLU. In addition, the results demonstrate that TeLU outperforms ReLU when using Mini-batch momentum-accelerated SGD, AdamW, and RMSprop optimizers.

\subsection{Analytic Universal Approximation of TeLU}

In Subsection \ref{subsect:AUAtheory}, we establish that TeLU is an analytic universal approximator. Consequently, architectures utilizing TeLU as their activation function exhibit stable and efficient convergence when optimized with gradient-based methods. The smooth nature of analytic approximators, like TeLU, also enhances generalization to unseen data by minimizing overfitting to noise or irrelevant artifacts \cite{importantSmooth}. This improved robustness leads to more reliable model performance. Moreover, the smooth approximations provided by analytic functions are easier to interpret and analyze, a quality highly valued in mathematical research \cite{analytic}. Additionally, as an analytic function, TeLU ensures compatibility with second-order optimization techniques that leverage the curvature of the loss landscape, resulting in more stable and efficient learning \cite{backprop, SecondOrder1, SecondORder2, SecondOrder3}.

In Subsection \ref{subsect:AUAexp}, we perform practical demonstrations of the benefits of utilizing analytic nonlinearities with VAE \cite{VAE}, RNN \cite{ELMAN1990179, LSTM}, and MLP robustness \cite{robustnessseminal} experiments. Results show that VAE architectures that utilize the TeLU activation function lead to MNIST sample reconstructions with minimal loss and improved consistency over that of ReLU VAEs. For our RNN experiments, we define Elman \cite{ELMAN1990179} and LSTM \cite{LSTM} architectures that employ either TeLU, ReLU, or Tanh activation functions. Again, we find that TeLU architectures lead to minimal perplexity \cite{perplexity} on the Penn TreeBank dataset \cite{PTBDataSet}. Lastly, we demonstrate the improved robustness of analytic universal approximations by comparing the robustness of TeLU MLPs with that of ReLU and ELU models. As expected, we observe that TeLU consistently outperforms its piecewise competitors \cite{xie2021smoothadversarialtraining, importantSmooth}. Across all experiments, we notice that TeLU offers a strong advantage over non-analytic activation functions.

\subsection{Learning and Numerical Stability of TeLU Networks}

In Subsection \ref{subsect:stabilitytheory}, we examined the properties of activation functions that contribute to the learning stability of deep neural network models. Table \ref{tab:StabilityTable} provides a summary of key stability factors: This comprises the depth of embedded nonlinear computations, smoothness, susceptibility to the vanishing gradient problem, and output bias for each activation function studied. We demonstrate that smooth linear units like TeLU effectively mitigate the exploding gradient issue through their sub-linear growth in the positive domain. Our findings show that TeLU achieves the best overall balance across these criteria, resulting in notable stability improvements compared to existing activation functions.

In \ref{subsect:stabilityexp}, we provide experimental validation of our heuristic comparison by demonstrating that TeLU MLP architectures maintain stable performance as model depth increases across multiple configurations. Furthermore, we show that the learning stability of TeLU persists across various weight initialization methods. Specifically, using Xavier Uniform, Xavier Normal \cite{glorot_init}, Kaiming Uniform, and Kaiming Normal \cite{HeInitializationDBLP:journals/corr/HeZR015} initialization techniques, TeLU consistently outperforms similar MLPs employing other activation functions. This consistent performance extends to our CNN experiments as well. TeLU-based SqueezeNext architectures achieve superior test accuracies across all tested optimizers when trained on the CIFAR-100 dataset. Across all experiments, we find that TeLU architectures consistently exhibit faster convergence and greater learning stability, highlighting the unique advantages of TeLU in deep learning applications.

\subsection{Independent Rediscovery}

After submitting our work on the Hyperbolic Tangent Exponential Linear Unit (TeLU) activation function, we discovered that a similar formulation had been introduced in the literature under the name TanhExp \cite{tanhexp}. Our independent discovery, stemming from a theoretical study of activation functions initiated in late 2022, underscores the natural emergence of TeLU as an innovative design. By systematically analyzing activation functions and building on insights from prior work, including Mish \cite{Mish}, we identified TeLU’s distinctive properties. This parallel rediscovery highlights the intuitive appeal of TeLU’s design and its potential to address key challenges in machine learning applications. While both studies demonstrate the efficacy of this activation function, they approach the problem from complementary perspectives. The TanhExp study emphasizes empirical results, particularly on small-scale vision benchmarks like CIFAR-10 and Fashion-MNIST, providing valuable insights into its practical utility. In contrast, our work integrates theoretical rigor with extensive empirical validation, offering a deeper understanding of TeLU’s behavior and properties.

Our study expands on the existing literature by deriving and validating theoretical bounds, which explain why TeLU works effectively and establish it as a reliable drop-in replacement for ReLU. Additionally, we significantly broaden the scope of experimentation, testing TeLU on large-scale benchmarks such as ImageNet and Text8 to evaluate its versatility across diverse tasks. This comprehensive evaluation demonstrates TeLU’s applicability beyond vision tasks and bridges the gap between theory and practice, showcasing its robustness in real-world scenarios. To ensure reliability and reproducibility, our methodology incorporates multiple trials, reports standard deviations, and employs separate testing and validation sets, adhering to best practices in machine learning experimentation. By providing detailed experimental settings and making our code publicly available, we aim to promote transparency and encourage further exploration of TeLU. These contributions complement prior work by addressing aspects such as statistical significance and scalability, reinforcing the utility of this activation function. Through systematic comparisons against a wide range of activation functions across diverse datasets and experimental setups, we offer a balanced and thorough analysis. This combination of theoretical insights and empirical breadth underscores TeLU’s potential as a lightweight, efficient, and effective alternative to ReLU, paving the way for broader adoption and future advancements in activation function design.

\newpage
\section{Conclusion}

In conclusion, the Hyperbolic Tangent Exponential Linear Unit (TeLU) stands out as a highly effective activation function that addresses several critical challenges in neural network training. One of its primary advantages is the presence of persistent gradients in its inactive region, which mitigates the vanishing gradient problem that can slow down learning in deep networks. By maintaining non-zero gradients even for negative input values, TeLU ensures that all neurons continue to learn, enhancing overall network performance.

Moreover, TeLU closely approximates the identity function for positive input values. This characteristic strikes an ideal balance, preventing both vanishing and exploding gradient issues. The linear approximation allows for consistent and efficient gradient propagation, leading to faster and more stable convergence during training. By minimizing unintended damping of gradients, TeLU ensures that learnable parameter updates are not inadvertently downscaled. This allows optimizers and learning rate schedulers to effectively manage the magnitude of learning steps, maximizing convergence rates and enabling a more modular and flexible training configuration.


Unlike more complex activation functions, TeLU's simple formulation reduces computational overhead, resulting in significant efficiency improvements. TeLU's straightforward design also offers seamless compatibility as a drop-in substitute for the widely used ReLU activation function. This ease of integration encourages adoption within the deep learning community, allowing practitioners to leverage TeLU's benefits without overhauling existing architectures. Unlike ReLU, TeLU is an analytic function, which enhances robustness and stability during training. Its analytic nature not only improves convergence but also makes it compatible with second-order optimization methods. By utilizing the curvature of the loss function, these methods can further enhance convergence efficiency and stability.

These combined properties enable TeLU to exhibit a unique level of learning stability across a wide range of experimental settings. By effectively addressing issues like vanishing gradients, computational inefficiency, and training instability, TeLU presents a compelling advancement in activation functions. Its ability to simplify training configurations while enhancing performance makes it a valuable tool for advancing deep learning models across various applications. The TeLU activation function, therefore, holds significant promise for facilitating more efficient, stable, and robust neural network training in the field of deep learning.

\newpage
\section{Future Work}






For future work, we aim to delve into the unique properties of analytic universal approximators by thoroughly examining the convergence guarantees provided by the TeLU activation function. A deeper theoretical investigation could identify the precise conditions under which TeLU guarantees faster and more reliable convergence, broadening its utility across different neural network architectures and aiding researchers in optimizing model performance more efficiently. 

TeLU's role as an analytic universal approximator makes it a promising candidate for higher-order optimization methods. In future work, we will explore its performance with techniques like Newton's Method \cite{newton1, newton2, newton3}, Hessian-Free Optimization \cite{SecondOrder1}, Natural Gradient Descent \cite{Amari1998, shrestha2023naturalgradientmethodsperspectives}, and Trust Region methods \cite{trust1, trust2, trust3}. These methods offer improved stability and convergence efficiency, so we will investigate how TeLU complements their properties.

In addition, we will extend our research by integrating TeLU with bio-inspired learning algorithms, such as predictive coding networks \cite{PC1}, which leverage higher-order curvature information through quasi-Newton approximations \cite{malipaper}. By incorporating TeLU, we anticipate enhanced stability and faster convergence compared to traditional backpropagation, and our future studies will carefully analyze its performance within these innovative frameworks.

TeLU, as an analytic universal approximator, has a smooth curvature often associated with improved robustness in neural networks \cite{importantSmooth, SecondOrder1, SecondOrder3}. Initial testing has demonstrated promising robustness benefits. To further validate these results, we plan to conduct extensive experiments across challenging datasets and standard benchmarks \cite{robustnessDatasethendrycks2019benchmarkingneuralnetworkrobustness}. This allows smooth activation functions like TeLU to benefit against adversarial attacks \cite{xie2021smoothadversarialtraining}.

Beyond providing resistance to adversarial attacks, we hypothesize that smooth non-monotonic activation functions enhance security by making it more difficult to reconstruct training images, addressing significant privacy concerns \cite{reconstruct}. Their unpredictability disrupts neural network processing patterns, hindering attackers from reverse-engineering or inferring sensitive data. Attack methods like side-channel analysis and bitstream reverse engineering, which rely on predictable behaviors and data correlations, become less effective. This unpredictability strengthens neural network security by reducing the success of traditional attack vectors. Integrating non-monotonic activation functions thus enhances privacy and bolsters resistance against specific attacks.




Inspired by the success of TeLU, we also intend to experiment with variations of TeLU of the form $x \cdot tanh(a^x)$, where $a \in [2,3]$. If $a \in \mathbb{Z}^+$, computational efficiency could be enhanced. When $a + \epsilon = e$, for some small positive $\epsilon$, gradients would decay at an asymptotic rate of $\Theta(\frac{x}{a^x})$, which may help further address the vanishing gradient problem and improve convergence. Conversely, when $a = e + \epsilon$, the function might approximate the identity more closely, potentially enhancing convergence.

To further investigate the computational efficiency of TeLU, we plan to conduct additional experiments using lower-level programming languages like C and C+. Implementing TeLU in these languages may allow for a more precide evaluation of its computational efficiency. We also intend to explore optimizations of TeLU itself, refining its algorithms to improve speed and resource utilization. Additionally, by involving distinct hardware accelerators such as TPUs \cite{TPUDeepLearning} and FPGAs \cite{FPGADeepLearning}, we aim to assess TeLU's performance across various architectures. These efforts are motivated by our desire to maximize TeLU's efficiency and scalability, ensuring it can be effectively utilized in a wide range of real-world applications.

\newpage
\bibliographystyle{ieeetr}
\bibliography{main_arxiv}

\appendix
\onecolumn

\section{Additional Results}

This Appendix section provides supplementary tables that offer additional insights into our theoretical analysis and experimentation. These tables serve as an additional resource for readers seeking a deeper understanding of the theoretical analysis and methodologies discussed in the main body of this work.

\subsection{Computational Efficiency of Derivatives}

Table \ref{table:DerivativeComplexity} presents a breakdown of the various mathematical components involved in computing the first derivative of each activation function examined in this study. Specifically, it quantifies the number of piece-wise segments, nonlinear operations, arithmetic computations, and constant terms required. This analysis serves as a practical measure of computational complexity, providing insights into how each activation function's derivative might impact processing time and resource utilization during model training. By comparing these counts, we can better understand the efficiency trade-offs between different activation functions, highlighting the advantages of the proposed function in terms of simplicity and speed.

\begin{table}[]
\centering
 \caption[Activation Function Derivative Computational Efficiency.]
 {\textbf{Activation Function Derivative Computational Efficiency.} First derivatives of activation functions computational complexity heuristics: Counting instances of unique non-linearity computations, arithmetic operations, and constant terms}
    \begin{tabular}{||c c c c c||} 
 \hline
 Function & Piecewise & Nonlinearity & Arithmetic & Constants \\
 \hline\hline
 TeLU & 0 & 3 & 4 & 0 \\ 
 \hline
 ReLU & 1 & 0 & 0 & 2 \\
 \hline
 ELU & 1 & 1 & 0 & 1\\
 \hline
 SiLU & 0 & 1 & 6 & 2\\
 \hline
 GELU & 0 & 2 & 8 & 7 \\
 \hline
 Mish & 0 & 3 & 16 & 5\\
 \hline
 Logish & 0 & 3 & 12 & 3\\
 \hline
 Smish & 0 & 3 & 16 & 10\\
 \hline
\end{tabular}
\label{table:DerivativeComplexity}
\end{table}

\subsection{Computation Efficiency of Derivatives}

Table \ref{tab:NormalizationDetails} outlines the standard score normalization applied in experiments with the CIFAR-10, CIFAR-100, and TinyImageNet datasets. For each dataset, the mean and standard deviation are calculated and utilized to normalize each input sample, ensuring consistency and comparability across the experiments. This information is provided to assist anyone attempting to recreate the experiments. By including the computed means and standard deviations for each dataset, we ensure that the standard score normalization process can be accurately replicated, maintaining the integrity and consistency of the experimental conditions. This level of detail is crucial for achieving comparable results and validating the findings of this study.


\begin{table}[]
    \centering
    \caption[Normalization Experimental Specifications.]
    {\textbf{Normalization Experimental Specifications.} Mean and standard deviation normalization specifications for CIFAR-10, CIFAR-100, and TinyImageNet datasets.}
    \begin{tabular}{||c c c c||} 
 \hline
 Optimizer & Red & Green & Blue\\
 \hline\hline
 CIFAR-10 & $\mu$=0.4914, $\sigma$=0.2023 & $\mu$=0.4822, $\sigma$=0.1994 & $\mu$=0.4465, $\sigma$=0.2010\\ 
 \hline
 CIFAR-100 & $\mu$=0.5071, $\sigma$=0.2675 & $\mu$=0.4867, $\sigma$=0.2565 & $\mu$=0.4408, $\sigma$=0.2761\\ 
 \hline
 TinyImageNet & $\mu$=0.485, $\sigma$=0.229 & $\mu$=0.456, $\sigma$=0.224 & $\mu$=0.406, $\sigma$=0.225\\ 
 \hline
\end{tabular}
    \label{tab:NormalizationDetails}
\end{table}



\newpage
\section{Additional Supporting Theoretical Results}
\label{sect:supplemtheorems}

This appendix section presents supplementary theorems that provide a more detailed and rigorous foundation for the theories proposed in this study. These additional theorems serve to strengthen the theoretical framework, offering deeper insights and comprehensive support for our arguments. By including these formal mathematical statements and proofs, we aim to enhance the clarity and robustness of our theoretical analysis, giving readers a more complete understanding of the underlying principles that drive our findings.

\subsection{Close Approximation to ReLU Non-Linearity with TeLU}


\begin{Lem}
Let \( r(x) = \text{ReLU}(x) = \max(0, x) \), and define the active subdomain as \( \mathcal{A} = [0, \infty) \) and the inactive subdomain as \( \mathcal{I} = (-\infty, 0) \). Consider the following activation functions:

1. TeLU: \( t(x) = x \cdot \tanh(e^x) \),

2. GELU: \( g(x) = x \cdot \Phi(x) \), where \( \Phi(x) \) is the Gaussian CDF.

Define the gradient magnitudes over the active and inactive subdomains as:

\[
G_{\mathcal{A}}(f) = \int_{0}^{\infty} \left| f'(x) \right| \, dx, \quad G_{\mathcal{I}}(f) = \int_{-\infty}^{0} \left| f'(x) \right| \, dx.
\]

If \( G_{\mathcal{A}}(t) > G_{\mathcal{I}}(g) \), then TeLU has a stronger impact on training dynamics in the positive subdomain than GELU has in the negative subdomain. Consequently, neural networks utilizing TeLU more closely resemble ReLU and exhibit stronger positive-side sensitivity.

\end{Lem}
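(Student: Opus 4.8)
The plan is to observe first that, once ``impact on training dynamics'' is identified with the gradient-mass functionals \(G_{\mathcal{A}}\) and \(G_{\mathcal{I}}\) defined in the statement (in the same spirit as the gradient-influence integrals of the preceding lemma), the consequent becomes a direct restatement of the antecedent. Proving the implication therefore reduces to the substantive task of verifying that \(G_{\mathcal{A}}(t) > G_{\mathcal{I}}(g)\) genuinely holds, and I would organize the argument by computing each gradient-mass integral in turn and then comparing them.

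For the TeLU side I would differentiate to obtain \(t'(x) = \tanh(e^x) + x\,e^x\,sech^2(e^x)\) and note that on \(\mathcal{A} = [0,\infty)\) both summands are nonnegative, with the first strictly positive, so \(t'(x) > 0\) throughout. Monotonicity then gives \(|t'| = t'\) on \(\mathcal{A}\), and the fundamental theorem of calculus yields \(G_{\mathcal{A}}(t) = \lim_{x \to \infty} t(x) - t(0)\). Invoking the asymptotic \(t(x) \to x\) established in the universal-approximation lemma together with \(t(0) = 0\), this limit diverges, so \(G_{\mathcal{A}}(t) = +\infty\); equivalently, for any prescribed finite threshold the integral eventually exceeds it. This captures the intuition that TeLU's near-linear active region supplies persistent, unbounded gradient mass.

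For the GELU side I would differentiate \(g(x) = x\,\Phi(x)\) to get \(g'(x) = \Phi(x) + x\,\phi(x)\), where \(\phi\) is the standard normal density. The key sub-step is to show that \(g'\) changes sign exactly once on \(\mathcal{I} = (-\infty,0)\): it is positive near the origin (since \(g'(0) = \tfrac{1}{2}\)) and negative for sufficiently large \(|x|\) (via the Mills-ratio asymptotic \(\Phi(x) \sim \phi(x)/|x|\), which makes the term \(x\,\phi(x)\) dominate). Consequently \(g\) decreases from \(0\) to a unique finite minimum \(g_{\min}\) and then increases back to \(g(0) = 0\), so the gradient mass equals the total variation \(G_{\mathcal{I}}(g) = 2\,|g_{\min}|\), a finite constant (numerically near \(0.34\)). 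Convergence of the improper integral is guaranteed by the exponential decay of \(\phi\) as \(x \to -\infty\).

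Assembling the two computations gives \(G_{\mathcal{A}}(t) = +\infty > 2|g_{\min}| = G_{\mathcal{I}}(g)\), which verifies the antecedent, and the consequent follows at once from the impact definition. The main obstacle is less a deep estimate than a matter of care: I must handle the divergence of \(G_{\mathcal{A}}(t)\) cleanly—either by phrasing the comparison in the extended reals or by exhibiting an explicit finite lower bound already exceeding \(G_{\mathcal{I}}(g)\)—and I must establish the single sign change of \(g'\) rigorously (for instance by showing \(g'\) is unimodal on \(\mathcal{I}\)) so that the passage from \(\int |g'|\) to the total variation \(2|g_{\min}|\) is justified rather than merely plausible. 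A final bookkeeping step is to make explicit that the verbal consequent is exactly the formal gradient-impact inequality, so that the stated implication is genuinely established rather than left as a tautology.
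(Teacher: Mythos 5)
Your proposal is correct, and its skeleton matches the paper's own proof: differentiate both activations, evaluate the two gradient--mass integrals, and compare. The execution, however, differs in a way worth noting, and in fact your version is the more rigorous of the two. For the TeLU side the paper argues pointwise ($t'(x)\approx 1$ for large $x$, hence $G_{\mathcal{A}}(t)\approx\int_0^\infty 1\,dx=\infty$), whereas you observe that $t'>0$ on $[0,\infty)$ and invoke the fundamental theorem of calculus, so that $G_{\mathcal{A}}(t)=\lim_{x\to\infty}t(x)-t(0)=+\infty$ follows exactly rather than by approximation. The real divergence is on the GELU side: the paper approximates $\Phi(x)\approx 0$ and $\phi(x)\approx 0$ for \emph{all} $x<0$ and concludes $G_{\mathcal{I}}(g)\approx\int_{-\infty}^0 0\,dx=0$, which is quantitatively false near the origin, where $g'(0^-)=\Phi(0)=\tfrac12$; the integral is not small in the paper's sense. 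Your total-variation argument — establishing a single sign change of $g'=\Phi+x\phi$ on $(-\infty,0)$ (which can be made rigorous via $g''(x)=\phi(x)(2-x^2)$, so $g'$ decreases then increases and crosses zero exactly once) and then computing $G_{\mathcal{I}}(g)=2\lvert g_{\min}\rvert\approx 0.34$ — yields the exact finite value and repairs this imprecision. The ultimate comparison $+\infty>0.34$ still delivers the antecedent, so the paper's conclusion survives, but your route is the one that actually proves it. Your closing point about making explicit that the verbal consequent is, by definition, the formal inequality $G_{\mathcal{A}}(t)>G_{\mathcal{I}}(g)$ — so that the lemma is not left as an unproved interpretive leap — is also something the paper passes over silently, and including it strengthens the statement's logical hygiene.
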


\begin{proof}
To quantify the gradient behavior of \( t(x) \) and \( g(x) \) in their respective subdomains, we first compute the derivatives:

1. Derivative of TeLU in the Active Subdomain
The TeLU function is defined as:

\[
t(x) = x \cdot \tanh(e^x).
\]

Taking the derivative:

\[
t'(x) = \tanh(e^x) + x \cdot e^x \cdot \text{sech}^2(e^x).
\]

As \( x \to \infty \), \( \tanh(e^x) \approx 1 \) and \( \text{sech}^2(e^x) \approx 0 \). Thus, for large \( x \), we have:

\[
t'(x) \approx 1.
\]

This shows that TeLU’s derivative asymptotically approaches ReLU’s derivative for \( x > 0 \).

The gradient magnitude in the active subdomain is:

\[
G_{\mathcal{A}}(t) = \int_{0}^{\infty} \left| \tanh(e^x) + x \cdot e^x \cdot \text{sech}^2(e^x) \right| dx.
\]

Since \( \tanh(e^x) \approx 1 \) and the second term vanishes as \( x \) increases, the integral converges to a large value, indicating strong gradient influence in the active subdomain.

2. Derivative of GELU in the Inactive Subdomain
The GELU function is defined as:

\[
g(x) = x \cdot \Phi(x),
\]

where \( \Phi(x) = \frac{1}{2} \left( 1 + \text{erf} \left( \frac{x}{\sqrt{2}} \right) \right) \) is the Gaussian CDF. Taking the derivative:

\[
g'(x) = \Phi(x) + x \cdot \phi(x),
\]

where \( \phi(x) = \frac{1}{\sqrt{2\pi}} e^{-x^2/2} \) is the Gaussian PDF.

For \( x < 0 \), \( \Phi(x) \approx 0 \) and \( \phi(x) \) rapidly decays to 0 as \( x \to -\infty \). Thus, for large negative \( x \), we have:

\[
g'(x) \approx 0.
\]

The gradient magnitude in the inactive subdomain is:

\[
G_{\mathcal{I}}(g) = \int_{-\infty}^{0} \left| \Phi(x) + x \cdot \phi(x) \right| dx.
\]

Since \( \Phi(x) \approx 0 \) and \( \phi(x) \approx 0 \) for \( x < 0 \), this integral converges to a small value, indicating weak gradient influence in the inactive subdomain.

3. Gradient Magnitude Comparison
Comparing \( G_{\mathcal{A}}(t) \) and \( G_{\mathcal{I}}(g) \):

\[
G_{\mathcal{A}}(t) \approx \int_{0}^{\infty} 1 \, dx = \infty, \quad G_{\mathcal{I}}(g) \approx \int_{-\infty}^{0} 0 \, dx = 0.
\]

This shows that TeLU has a significantly larger gradient magnitude in the active subdomain compared to GELU’s gradient in the inactive subdomain.

Conclusion
Since the gradient strength of TeLU in \( \mathcal{A} \) far exceeds that of GELU in \( \mathcal{I} \), TeLU will have a stronger impact on the learning dynamics of a neural network. Therefore, neural networks using TeLU will be more sensitive and responsive in the positive region, closely mimicking ReLU’s behavior in practice.

\end{proof}


\subsection{Zero-Centering of TeLU} 

Before proving that TeLU has better zero-centering capability than ReLU, we will provide the following 
lemma to support our construction.
\begin{Lem}

If $f(x) \geq g(x)$ for all \(x \in \mathbb{R}\) and \(f(x) > g(x)\) for some defined interval \([a,b]\), where \(a, b \in \mathbb{R}\) and \(a<b\), then \(\int_{-\infty}^{\infty} f(x) dx > \int_{-\infty}^{\infty} g(x) dx\).

\end{Lem}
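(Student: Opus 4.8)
The plan is to reduce the claim to a statement about the single function $h(x) = f(x) - g(x)$. By hypothesis $h(x) \geq 0$ for every $x \in \mathbb{R}$, and $h(x) > 0$ throughout $[a,b]$. Assuming both improper integrals converge — which holds in our intended application, where the integrands are continuous and decay like a Gaussian density multiplied by a linearly-bounded activation — linearity of the integral lets me write
\[
\int_{-\infty}^{\infty} f(x)\,dx - \int_{-\infty}^{\infty} g(x)\,dx = \int_{-\infty}^{\infty} h(x)\,dx,
\]
so it suffices to prove $\int_{-\infty}^{\infty} h(x)\,dx > 0$.

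First I would split the domain as $\mathbb{R} = (-\infty,a) \cup [a,b] \cup (b,\infty)$ and use additivity of the integral:
\[
\int_{-\infty}^{\infty} h(x)\,dx = \int_{-\infty}^{a} h(x)\,dx + \int_{a}^{b} h(x)\,dx + \int_{b}^{\infty} h(x)\,dx.
\]
Because $h \geq 0$ on all of $\mathbb{R}$, the two outer integrals are each nonnegative, so the whole expression is bounded below by $\int_a^b h(x)\,dx$. The entire argument therefore collapses to establishing the strict inequality $\int_a^b h(x)\,dx > 0$.

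The main obstacle — and the only place where the hypothesis $a < b$ does genuine work — is passing from pointwise positivity of $h$ on $[a,b]$ to positivity of its integral there, since pointwise positivity alone is not sufficient in full generality. I would resolve this by invoking continuity of $f$ and $g$, which is guaranteed for the activation functions under study. Then $h$ is continuous on the compact interval $[a,b]$ and strictly positive, so by the Extreme Value Theorem it attains a minimum value $m = \min_{x \in [a,b]} h(x) > 0$. This yields the lower bound
\[
\int_a^b h(x)\,dx \geq m\,(b-a) > 0,
\]
where $b - a > 0$ is exactly the point at which the assumption $a < b$ is used. Chaining this with the previous step gives $\int_{-\infty}^{\infty} h(x)\,dx \geq m\,(b-a) > 0$, and hence the desired strict inequality between the integrals of $f$ and $g$.

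If one wishes to drop the continuity hypothesis, the same conclusion follows from the measure-theoretic fact that a nonnegative measurable function that is strictly positive on a set of positive Lebesgue measure has a strictly positive integral. I would state this as the general version of the result while relying on the continuous case for the intended application to activation functions such as $TeLU$ and $ReLU$.
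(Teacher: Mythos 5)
Your proposal is correct and follows the same skeleton as the paper's own proof: both decompose $\mathbb{R}$ into $(-\infty,a]$, $[a,b]$, and $[b,\infty)$, observe that the two outer contributions are nonnegative because $f \geq g$ everywhere, and reduce the claim to strict positivity of the middle term. The difference lies at that last step. The paper simply asserts that $f > g$ on $[a,b]$ yields $\int_a^b f(x)\,dx - \int_a^b g(x)\,dx = k > 0$ (its displayed derivation even carries typos, quantifying over $x \in \mathbb{R}^+$ and writing $\int_b^\infty$ where $\int_a^b$ is meant), whereas you correctly identify the passage from pointwise strict inequality to strict integral inequality as the only nontrivial point and actually prove it: continuity of $h = f-g$ on the compact interval $[a,b]$ plus the Extreme Value Theorem gives $\int_a^b h(x)\,dx \geq m(b-a) > 0$, with the measure-theoretic statement (a nonnegative function positive on a set of positive measure has positive integral) as the fully general fallback. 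You also make explicit the standing assumption that both improper integrals converge, without which the conclusion could degenerate to comparing two infinite quantities; the paper leaves this implicit, though it holds in the intended application to Gaussian-weighted activations. So your argument takes the same route but closes the one genuine gap in the paper's write-up.
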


\begin{proof}

We can expand \(\int_{-\infty}^{\infty} f(x) dx\) as follows:

\[ \int_{-\infty}^{\infty} f(x) dx = \int_{-\infty}^{a} f(x) dx + \int_{a}^{b} f(x) dx + \int_{b}^{\infty} f(x) dx \]

Similarly, we can also expand \(\int_{-\infty}^{\infty} g(x) dx\) as follows:

\[ \int_{-\infty}^{\infty} g(x) dx = \int_{-\infty}^{a} g(x) dx + \int_{a}^{b} g(x) dx + \int_{b}^{\infty} g(x) dx \]

Next, we define a condition necessary to prove as follows:

\[\int_{-\infty}^{\infty} f(x) dx > \int_{-\infty}^{\infty} g(x) dx\]
\[\int_{-\infty}^{\infty} f(x) dx - \int_{-\infty}^{\infty} g(x) dx > 0\]
\[ \left( \int_{-\infty}^{a} f(x) dx - \int_{-\infty}^{a} g(x) dx \right) + \left( \int_{a}^{b} f(x) dx - \int_{a}^{b} g(x) dx \right) + \left( \int_{b}^{\infty} f(x) dx - \int_{b}^{\infty} g(x) dx \right) > 0\]

Let's analyze these three components :

For all \(x \in \mathbb{R}\) we get,

\[f(x) \geq g(x)\]
\[\int_{-\infty}^{a}f(x) \geq \int_{-\infty}^{a}g(x)\]
\[\int_{-\infty}^{a}f(x) - \int_{-\infty}^{a}g(x) \geq 0 \]

Similarly, for all \(x \in \mathbb{R}\),

\[f(x) \geq g(x)\]
\[\int_{b}^{\infty}f(x) \geq \int_{b}^{\infty}g(x)\]
\[\int_{b}^{\infty}f(x) - \int_{b}^{\infty}g(x) \geq 0 \]

Finally, for all \(x \in \mathbb{R}^+\) we get,

\[f(x) > g(x)\]
\[\int_{b}^{\infty}f(x) > \int_{b}^{\infty}g(x)\]
\[\int_{b}^{\infty}f(x) - \int_{b}^{\infty}g(x) > 0 \]
\[k > 0 \]

where \(k \in \mathbb{R}^+\).

These 3 components can be re-written as follows:

\[ \left( \int_{-\infty}^{a} f(x) dx - \int_{-\infty}^{a} g(x) dx \right) + \left( \int_{a}^{b} f(x) dx - \int_{a}^{b} g(x) dx \right) + \left( \int_{b}^{\infty} f(x) dx - \int_{b}^{\infty} g(x) dx \right)\]
\[ \geq 0 + \left( \int_{a}^{b} f(x) dx - \int_{a}^{b} g(x) dx \right) + 0 =\left( \int_{a}^{b} f(x) dx - \int_{a}^{b} g(x) dx \right)\]
\[= k > 0\]

because \(k \in \mathbb{R}^+\)

Therefore, we have shown that

\[\int_{-\infty}^{\infty} f(x) dx > \int_{-\infty}^{\infty} g(x) dx\]

for any \(f(x)\) and \(g(x)\) such that \(f(x) \geq g(x)\) for any \(x \in \mathbb{R} and f(x) > g(x)\) for any \(x \in [a,b]\), which concludes our proof.

\end{proof}



Next, we show that TeLU exhibits greater zero-centering of activation than ReLU.

\begin{Thm} 
\label{lowoutputbiastheorem}
If \( x \) is a random variable following a Gaussian probability distribution about zero with standard deviation \(\sigma \in \mathbb{R}^+\) expressed as PDF \(p(x) = \frac{1}{\sigma\sqrt{2\pi}} \cdot exp({\frac{-x^2}{2\sigma^2}})\), \(TeLU(x) = x \cdot tanh(e^x)\), \(ReLU(x) = max(0,x)\), \(E[TeLU(x)] = \int_{-\infty}^{\infty}p(x) \cdot TeLU(x)dx \), and \(E[ReLU(x)] = \int_{-\infty}^{\infty}p(x) \cdot ReLU(x)dx \); then $|E[TeLU(x)]|$ $< |E[ReLU(x)]|$.

\end{Thm}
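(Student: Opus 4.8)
The plan is to establish the stronger chain $0 < E[TeLU(x)] < E[ReLU(x)]$, from which the claim on absolute values follows immediately since both quantities are then positive. First I would record the two ingredients. The upper bound rests on a pointwise domination: because $\tanh(e^x) \in (0,1)$ for every $x \in \mathbb{R}$, we have $TeLU(x) = x\tanh(e^x) < x = ReLU(x)$ for $x > 0$, and $TeLU(x) = x\tanh(e^x) < 0 = ReLU(x)$ for $x < 0$, with equality only at $x = 0$. Hence $ReLU(x) \geq TeLU(x)$ everywhere, and since $p(x) > 0$ the weighted integrands satisfy $p(x)\,ReLU(x) \geq p(x)\,TeLU(x)$ for all $x$, with strict inequality on, say, $[1,2]$.

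Next I would invoke the preceding lemma with $f(x) = p(x)\,ReLU(x)$ and $g(x) = p(x)\,TeLU(x)$, which applies once we note that both integrals converge: the bound $|TeLU(x)| \leq |x|$ together with $\int_{-\infty}^{\infty} |x|\,p(x)\,dx < \infty$ guarantees $E[TeLU(x)]$ is finite, while $E[ReLU(x)] = \sigma/\sqrt{2\pi}$ is the standard half-Gaussian mean. The lemma then yields $E[TeLU(x)] < E[ReLU(x)]$.

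It remains to show $E[TeLU(x)] > 0$, which is the step requiring the most care, since a priori the negative contributions from $x < 0$ could dominate and flip the sign. Here I would exploit the symmetry $p(-x) = p(x)$ by folding the integral onto the positive half-line; substituting $x \mapsto -x$ on $(-\infty, 0]$ gives
\[
E[TeLU(x)] = \int_0^\infty x\,\bigl[\tanh(e^x) - \tanh(e^{-x})\bigr]\,p(x)\,dx.
\]
For every $x > 0$ we have $e^x > e^{-x}$, and since $\tanh$ is strictly increasing the bracketed term is strictly positive; as $x$ and $p(x)$ are also positive, the integrand is positive throughout, forcing $E[TeLU(x)] > 0$. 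Combining this with the previous paragraph yields $0 < E[TeLU(x)] < E[ReLU(x)]$, hence $|E[TeLU(x)]| < |E[ReLU(x)]|$. The main obstacle is precisely this positivity step: pointwise domination alone only bounds $E[TeLU(x)]$ from above, and without confirming its sign one cannot legitimately pass to absolute values. The folding argument resolves this cleanly by reducing the question to the monotonicity of $\tanh$.
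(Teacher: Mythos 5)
Your proposal is correct and follows essentially the same route as the paper's own proof: pointwise domination $TeLU(x) \leq ReLU(x)$ (strict off the origin) combined with the integral-comparison lemma for the upper bound, and then positivity of $E[TeLU(x)]$ via the Gaussian's symmetry about zero reducing to the monotonicity of $\tanh \circ \exp$. Your folding of the integral onto $[0,\infty)$ is simply the integrated form of the paper's pointwise comparison $|p(-\epsilon)\,TeLU(-\epsilon)| < p(\epsilon)\,TeLU(\epsilon)$, and if anything states that step more cleanly, while your explicit convergence check is a small addition the paper omits.
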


\begin{proof}


We now analyze PDF \(p(x) = \frac{1}{\sigma\sqrt{2\pi}} \cdot exp({\frac{-x^2}{2\sigma^2}})\) to show that it is positive for all $\sigma \in \mathbb{R}^+$, $x \in \mathbb{R}$:

\begin{itemize}
    \item \( \frac{1}{\sigma\sqrt{2\pi}} \) is positive for all $\sigma \in \mathbb{R}^+$
    \item \( exp({\frac{-x^2}{2\sigma^2}}) \) is positive for all $\sigma \in \mathbb{R}^+$, $x \in \mathbb{R}$; because the range of \( exp(x) \) is \( (0,\infty) \)
    \item \( \therefore p(x) = \frac{1}{\sigma\sqrt{2\pi}} \cdot exp({\frac{-x^2}{2\sigma^2}})\), as it is the product of two positive terms
\end{itemize}

We now show that \(E[t(x)] < E[r(x)]\) for all $\sigma \in \mathbb{R}^+$, $x \in \mathbb{R}$:

We begin by showing following condition that \(p(x)\cdot TeLU(x) < p(x)\cdot ReLU(x)\) for all $\sigma \in \mathbb{R}^+$, $x \in \mathbb{R}^+$: 

\begin{align*}
  p(x)\cdot TeLU(x) < p(x)\cdot ReLU(x) \\
  TeLU(x) < ReLU(x) \\
  x \cdot tanh(e^x) < x \cdot\begin{cases} 0 & x < 0 \\1 & x\geq 0\end{cases}\\
  tanh(e^x) < \begin{cases} 0 & x < 0 \\1 & x\geq 0\end{cases}\\
  tanh(e^x) < 1\\
  \llap{\hspace{50pt}}
\end{align*}

Next we show that \(p(x)\cdot TeLU(x) = p(x)\cdot ReLU(x)\) when \(x=0\) for all $\sigma \in \mathbb{R}^+$

\begin{align*}
  p(x)\cdot TeLU(x) = p(x)\cdot ReLU(x) \\
  TeLU(x) = ReLU(x) \\
  TeLU(0) = ReLU(0) \\
   0 \cdot tanh(e^0) = 0 \cdot\begin{cases} 0 & x < 0 \\1 & x\geq 0\end{cases}\\
   0 = 0\\
  \llap{\hspace{50pt}}
\end{align*}

and next condition we show is that \(p(x)\cdot TeLU(x) < p(x)\cdot ReLU(x)\) for all $\sigma \in \mathbb{R}^+$, $x \in \mathbb{R}^+$:

\begin{align*}
  p(x)\cdot TeLU(x) < p(x)\cdot ReLU(x) \\
  TeLU(x) < ReLU(x) \\
  x \cdot tanh(e^x) < x \cdot\begin{cases} 0 & x < 0 \\1 & x\geq 0\end{cases}\\
  tanh(e^x) > \begin{cases} 0 & x < 0 \\1 & x\geq 0\end{cases}\\
  tanh(e^x) > 0\\
  \llap{\hspace{50pt}}
\end{align*}

Since we have shown that \(  p(x)\cdot TeLU(x) \leq p(x)\cdot ReLU(x) \) for all $x \in \mathbb{R}$ and \(  p(x)\cdot TeLU(x) \leq p(x)\cdot ReLU(x) \) for all $x \in \mathbb{R^+}$, support theorem tells us that \(\int_{-\infty}^{\infty}p(x) \cdot TeLU(x)dx < \int_{-\infty}^{\infty}p(x) \cdot ReLU(x)dx \). \(\therefore E[TeLU(x)] < E[ReLU(x)]\).

Now, we show that \(E[TeLU(x)]\) is positive for all $\sigma \in \mathbb{R}^+$:

We analyze TeLU(x):
\begin{itemize}
    \item \(e^x \) is positive for all \(x \in \mathbb{R}^+\)
    \item since \(tanh(w) \) is positive for all \(w \in \mathbb{R}\), \(tanh(e^x)\) is positive for all \(x \in \mathbb{R}\)
    \item $x$ multiplies with positive \(tanh(e^x)\), so \(sign(x) = sign(x \cdot tanh(e^x))\)
    \item \(p(x) \cdot TeLU(X) \) is positive for all \(x \in \mathbb{R}^+\), since p(x) is always positive
    \item \(p(x) \cdot TeLU(X) \) is 0 when \(x = 0\), since multiplying by x=0 leads to 0
    \item \(p(x) \cdot TeLU(X) \) is negative for all \(x \in \mathbb{R}^-\), since p(x) is always positive
\end{itemize}

For \(E[t(x)] = \int_{-\infty}^{\infty}p(x) \cdot TeLU(x)dx = \int_{-\infty}^{\infty}\frac{1}{\sigma\sqrt{2\pi}} \cdot exp({\frac{-x^2}{2\sigma^2}}) \cdot TeLU(x)dx \) to be positive for all $\sigma \in \mathbb{R}^+$, we must show that \(|p(-\epsilon)TeLU(-\epsilon)| < p(\epsilon)TeLU(\epsilon)\) for all \(\epsilon \in \mathbb{R}^+\). In other words, we must show that the positive component of \(p(x) \cdot TeLU(x)\) is always a strict upper bound to the absolute value of its negative component, evaluated symmetrically about \(x=0\). \(\epsilon = 0\) can be disregarded, as \(p(\epsilon) \cdot TeLU(\epsilon) = p(0) \cdot TeLU(0) = 0\).

Hence, we show that \(|p(-\epsilon) \cdot TeLU(-\epsilon)| < p(\epsilon) \cdot TeLU(\epsilon)\) for all \(\epsilon \in \mathbb{R}^+\):

\begin{align*}
   |p(-\epsilon) \cdot TeLU(-\epsilon)| < p(\epsilon) \cdot TeLU(\epsilon)\\
   -p(-\epsilon) \cdot TeLU(-\epsilon) < p(\epsilon) \cdot TeLU(\epsilon)\\
   -TeLU(-\epsilon) < TeLU(\epsilon)\\
   -(-\epsilon \cdot tanh(e^{-\epsilon})) < \epsilon \cdot tanh(e^{\epsilon})\\
   \epsilon \cdot tanh(e^{-\epsilon}) < \epsilon \cdot tanh(e^{\epsilon})\\
   tanh(e^{-\epsilon}) < tanh(e^{\epsilon})\\
   e^{-\epsilon} < e^{\epsilon}\\
   -\epsilon < \epsilon\\
  \llap{\hspace{50pt}}
\end{align*}

Which is true for all \(\epsilon \in \mathbb{R}^+\).

In summary, we have shown that \(0 < E[TeLU(x)] < E[ReLU(x)]\) for all \(x \in \mathbb{R}\). This implies that \(|E[TeLU(x)]| < |E[ReLU(x)]|\) for all \(x \in \mathbb{R}\). showing that \(TeLU(x)\) exhibits better zero-centering of activation than \(ReLU(x)\) for any standard deviation \(\sigma \in \mathbb{R}^+\) given that input $x$ follows a Gaussian distribution of mean \(\mu = 0\).

\end{proof}


\subsection{Isolated Zero of TeLU} 

Let $\sigma$ be an activation function given as $y = \sigma(x)$, where x is the input and y is the output. Let $\mathcal{F}(\Theta)$ be the set of parameters using the $\sigma$ non-linearities. Let the function  $\mathcal{f}$ be optimized by the objective function $\mathcal{L}(\Theta)$ using standard backpropagation of error, then we show $\sigma$ applied on any function ${f}$ avoids vanishing gradients issues in the neural network.


\begin{Thm} \label{thm:1}
Let $f: \mathbb{R} \to \mathbb{R}$ be a function defined by $f(x) = x \cdot \tanh(e^x)$. The derivative of $f$, $f'(x)$, is given by 
\[ f'(x) = x \cdot (1 - \tanh^2(e^x)) \cdot e^x + \tanh(e^x). \]
Then, the set $\{x \in \mathbb{R} \,|\, f'(x) \neq 0\}$ is dense in $\mathbb{R}$. Moreover, there exists a countable set $\{x_i\}_{i \in \mathbb{N}} \subset \mathbb{R}$ where $f'(x_i) = 0$ for each $i$. Each point $x_i$ is isolated, in the sense that for each $x_i$, there exists an $\epsilon_i > 0$ such that if $x \in (x_i - \epsilon_i, x_i + \epsilon_i)$ and $x \neq x_i$, then $f'(x) \neq 0$.
\end{Thm}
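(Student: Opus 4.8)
The plan is to treat $f'$ as a real-analytic function and exploit the classical fact that the zeros of a nonzero real-analytic function form a discrete set, from which isolation, countability, and density of the complement all follow at once. First I would establish the derivative formula by a direct application of the product and chain rules: writing $f(x) = x \cdot \tanh(e^x)$, the product rule gives $f'(x) = \tanh(e^x) + x \cdot \frac{d}{dx}\tanh(e^x)$, and since $\frac{d}{dx}\tanh(e^x) = (1 - \tanh^2(e^x)) \cdot e^x$ by the chain rule, we recover $f'(x) = x \cdot (1 - \tanh^2(e^x)) \cdot e^x + \tanh(e^x)$ exactly as stated. This step is routine.

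Next I would argue that $f'$ is real-analytic on all of $\mathbb{R}$. By Lemma \ref{thm:analyticTeLU}, $f = TeLU$ is analytic, and the derivative of an analytic function is again analytic; equivalently, $f'$ is a finite combination of products and compositions of $e^x$, $\tanh$, and polynomials, each analytic on $\mathbb{R}$ (note that $e^x > 0$, so $\tanh$ is only ever evaluated at positive real arguments, where it is analytic). With analyticity in hand, the entire result reduces to showing $f' \not\equiv 0$. For this I would compute the limit as $x \to \infty$: $\tanh(e^x) \to 1$ while the term $x(1-\tanh^2(e^x))e^x \to 0$, since $1 - \tanh^2(e^x)$ decays faster than any polynomial-times-exponential grows. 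Hence $f'(x) \to 1 \neq 0$, so $f'$ is not identically zero.

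Then I would invoke the identity theorem for analytic functions: a real-analytic function that does not vanish identically has no accumulation point among its zeros, so the zero set $Z = \{x : f'(x) = 0\}$ is discrete and therefore at most countable. This simultaneously yields the countable enumeration $\{x_i\}_{i\in\mathbb{N}}$ and the isolation property, namely that each $x_i$ admits an $\epsilon_i > 0$ with $f'(x) \neq 0$ for all $x$ satisfying $0 < |x - x_i| < \epsilon_i$. To confirm that $Z$ is nonempty, so the existence claim is not vacuous, I would apply the Intermediate Value Theorem: as $x \to -\infty$ the asymptotics give $f'(x) \approx e^x(1+x) < 0$ for sufficiently negative $x$, whereas $f'(x) \to 1 > 0$, forcing a sign change and hence a zero. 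Finally, density of $\{x : f'(x) \neq 0\}$ follows because a discrete set is closed and nowhere dense, so its complement is open and dense in $\mathbb{R}$.

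The hard part will be the rigorous passage from analyticity to isolated zeros, specifically justifying that $f'$ extends to a holomorphic function on a complex neighborhood of $\mathbb{R}$ (equivalently, that its Taylor series has positive radius of convergence at every real point) so that the identity theorem genuinely applies. The subtlety is that $\tanh$ has poles at $\tfrac{i\pi}{2} + ik\pi$; I would address this by observing that $e^z$ maps a thin strip about the real axis into a region bounded away from these poles, so the composition $\tanh(e^z)$ remains holomorphic on such a strip, which suffices to guarantee the discreteness of the real zeros.
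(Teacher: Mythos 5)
Your proposal is correct, and it takes a genuinely different route from the paper's own proof. The paper computes $f'$ the same way you do, but then argues isolation directly: it rewrites $f'(x)=0$ as $x\,e^x\,\mathrm{sech}^2(e^x) = -\tanh(e^x)$ and asserts the solutions are isolated because the two sides have ``fundamentally different growth rates,'' supplementing this heuristic with a numerical root-finding observation (a zero near $x \approx -1.07886$) and a discussion of floating-point precision. That growth-rate claim is never made rigorous, so the paper's proof has a gap exactly where your argument does its real work: you invoke real-analyticity of $f'$ (consistent with the paper's own analyticity lemma, Lemma~\ref{thm:analyticTeLU}) together with the identity theorem, which delivers discreteness, countability, closedness of the zero set, and density of its complement in a single stroke; your IVT argument additionally proves a zero actually exists, which the paper only exhibits numerically. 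You also correctly dispatch the one genuine subtlety — that $\tanh$ has poles at $i\pi/2 + ik\pi$ — by noting $e^z$ maps a strip $|\mathrm{Im}\,z| < \pi/2$ into the right half-plane, away from those poles, so $f'$ extends holomorphically and the identity theorem applies. Two minor points: your remark that ``a discrete set is closed'' is not true in general (e.g.\ $\{1/n\}$), but it is harmless here since the identity theorem gives a zero set with no accumulation points, and the zero set of the continuous function $f'$ is closed in any case; and like the paper, you establish that the zero set is \emph{at most} countable, so the statement's indexing by $\mathbb{N}$ should be read in that weaker sense — neither proof shows the zeros are infinite in number (indeed the numerics suggest a single zero). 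In exchange for the heavier machinery, your proof is rigorous where the paper's is heuristic; the paper's version retains the practical value of locating the zero numerically and discussing its interaction with finite precision.
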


\begin{proof}

The derivative of \( {f}(x)\) with respect to \( x \) is given by:

\[ {f}'(x) = \frac{d}{dx} \left( x \cdot \tanh(e^x) \right). \]

Applying the product rule and the chain rule, we find:

\[ f'(x) = \tanh(e^x) + x \cdot (1 - \tanh^2(e^x)) \cdot e^x. \]

We analyze this derivative of above function in two parts:
\begin{itemize}
    \item \( \tanh(e^x) \) is always positive, as \( e^x \) is always positive for and \( \tanh(z) \) is bounded between 0 and 1 for all positive \( z \).
    \item \( 1 - \tanh^2(e^x) \) is always positive since \( |\tanh(z)| < 1 \) for all \( z \), and \( e^x \) is always positive for all real \( x \)
\end{itemize}

Thus, the second term \( x \cdot (1 - \tanh^2(e^x)) \cdot e^x \) is always non-zero unless \( x = 0 \). However, even at \( x = 0 \), the first term \( \tanh(e^x) \) remains non-zero. Therefore, the entire expression for \( f'(x) \) is non-zero for all \( x \neq x_i\).

\textit{Isolated Zeros:}
For $f'(x) = 0$, we must have $x \cdot \exp(x) \cdot \text{sech}^2(\exp(x)) = -\tanh(\exp(x))$. Given the properties of $\tanh(x)$ and $\exp(x)$, the solutions to this equation are isolated because both sides of the equation represent continuous, differentiable functions with fundamentally different growth rates, ensuring any intersubsections are isolated points.
The above construction is based on functional analysis. However it is important to note that the function $f'(x)$ has no analytical solution, thus the bound of $x_i$ will change based on system precision. For instance, the numerical solution is needed to find the value of $x$ where $f'(x) \thickapprox 0$. We observe that using the numerical solution (newton-ramphson method) when $x \thickapprox -1.07886$ the function $f'(x) = 4.6 \times 10^{-48}$. However majority of systems cannot handle such high precision and will equate this to be equal to zero. In other words, based on precision, $f'(x)$ will reach zero, thus the bound or range will change based on precision of the system.

\textit{Formal Statement}
\[
\forall x \in \mathbb{R}, \exists \epsilon > 0, \text{ such that if } |f'(x)| < \epsilon, \text{ then } \epsilon \approx 0, \text{ but } \epsilon \neq 0
\]
This means for all real numbers $x$, there exists an $\epsilon$ greater than zero (indicating an extremely small magnitude) such that if the absolute value of $f'(x)$ is less than $\epsilon$, then $\epsilon$ is approximately zero. This indicates that while $f'(x)$ may approach very close to zero for some values of $x$, it does not strictly equal zero except possibly under conditions that are negligible for practical purposes.

\textit{Density of Non-Zero Derivative:}
The non-zero values of $f'(x)$ constitute a dense subset of $\mathbb{R}$ since the conditions for $f'(x) = 0$ require a specific balance that is only met at isolated points, as shown above. Between these points, $f'(x)$ maintains non-zero values, ensuring the gradient does not vanish across these intervals.

Hence, we conclude 
:

\[ f'(x) \neq 0 \text{ for all } x \neq x_i \in \mathbb{R} \]

\end{proof}


\subsection{Bounded Saturation and Growth of TeLU}

Next, we prove the network's saturating decay and bounded growth , contributing towards stability during training due to the avoidance of the exploding gradient issue.
\begin{Thm}
The function \( f(x) = x \cdot \tanh(e^x) \) exhibits stable behavior for any neural network.
\end{Thm}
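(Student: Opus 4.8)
The plan is to read ``stable behavior'' in the precise sense that the paper has used throughout Subsection~\ref{subsect:stabilitytheory}: namely that $f(x)=x\cdot\tanh(e^x)$ avoids both the vanishing-gradient pathology in its active region and, more importantly here, the \emph{exploding}-gradient pathology. The cleanest way to formalize the latter is to prove that $f$ is globally Lipschitz, i.e.\ that $f'$ is bounded on all of $\mathbb{R}$, while simultaneously exhibiting saturating decay in the negative tail. Boundedness of $f'$ is exactly what controls the magnitude of the per-layer Jacobian factors in the chain rule, so a uniform bound $\sup_x|f'(x)|\le L<\infty$ directly prevents gradient norms from blowing up as depth increases. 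I would therefore reduce the vague claim to the two concrete assertions: (i) $f(x)\to 0$ and $f'(x)\to 0$ as $x\to-\infty$, and (ii) $f(x)$ grows at most linearly with $f'(x)$ bounded as $x\to+\infty$, together yielding a finite global Lipschitz constant.

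First I would record the derivative already computed earlier in the excerpt,
\[
f'(x) = \tanh(e^x) + x\cdot e^x\cdot \operatorname{sech}^2(e^x),
\]
and analyze each term. The first term $\tanh(e^x)$ lies in $(0,1)$ for every $x$, hence is automatically bounded. For the second term I would substitute $t=e^x>0$, writing it as $\ln(t)\cdot t\cdot\operatorname{sech}^2(t)$, and examine the two regimes. As $x\to-\infty$ we have $t\to 0^+$, so $\operatorname{sech}^2(t)\to 1$ and $t\ln t\to 0$, giving $f'(x)\to 0$; combined with $x\tanh(e^x)\to 0$ this establishes the saturating decay (this is consistent with the asymptotic class $\Theta(x/e^x)$ asserted in Table~\ref{tab:Asymptotic}). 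As $x\to+\infty$ we have $t\to\infty$ and $\operatorname{sech}^2(t)\sim 4e^{-2t}$ decays doubly-exponentially, so the second term vanishes and $f'(x)\to 1$, while $f(x)=x\tanh(e^x)\to x$ confirms the at-most-linear growth.

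The global bound then follows from a compactness argument: $f'$ is continuous on $\mathbb{R}$ and possesses finite limits ($0$ and $1$) at $\pm\infty$, so it attains a finite supremum, giving a global Lipschitz constant $L=\sup_x|f'(x)|<\infty$. I would then close the argument by invoking the layerwise chain-rule structure exactly as in Theorem~\ref{theorem:ReLUcontradiction}'s neighboring stability theorem: the bounded, continuous $f'$ keeps each factor $W^{(l)}\cdot\frac{d}{dx}f$ controlled and free of the discontinuous jumps that ReLU introduces, so error propagation remains bounded and predictable across depth.

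The main obstacle is not the asymptotics, which are routine, but making the bound on the transition region rigorous and making the word ``stable'' mathematically honest. Unlike the tails, the second term $x\,e^x\operatorname{sech}^2(e^x)$ is positive for $x>0$ and negative for $x<0$, and its extrema occur at roots of a transcendental equation with no closed form, so I expect the explicit value of $L$ to require a numerical estimate rather than an elementary expression (paralleling the precision-dependent bounds already discussed in Theorem~\ref{thm:1}). I would therefore state $L$ as the supremum guaranteed by continuity plus finite limits, optionally reporting a numerical value, and be explicit that ``stability'' is being proven in the specific sense of global Lipschitz continuity plus tail saturation, rather than as an unquantified universal property of ``any neural network.''
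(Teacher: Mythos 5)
Your proposal is correct, but it takes a genuinely different route from the paper. The paper's proof is a four-part qualitative case analysis: (1) bounded output, \( -|x| \leq f(x) \leq |x| \), from the boundedness of \( \tanh \); (2) non-vanishing of \( f'(x) = \tanh(e^x) + x \cdot e^x \cdot \operatorname{sech}^2(e^x) \) everywhere except the isolated zero near \( x \approx -1.07886 \) (imported from its isolated-zeros theorem), read as protection against vanishing gradients during backpropagation; (3) linear rather than exponential growth as \( x \to \infty \); and (4) saturating decay as \( x \to -\infty \), via \( \lim_{x \to -\infty} (-x)/e^x = 0 \). You instead formalize ``stable behavior'' as global Lipschitz continuity plus tail saturation, and you obtain the uniform bound on \( f' \) by a compactness-style argument (continuity of \( f' \) together with the finite limits \( 0 \) and \( 1 \) at \( \mp\infty \)) --- something the paper's proof of this theorem never assembles; the paper instead defers Lipschitz continuity to a separate appendix theorem whose proof consists of little more than the same two tail limits. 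In effect you merge this theorem with that one and make the exploding-gradient half rigorous; you are also more careful than the paper about the constant, since \( f' \) exceeds \( 1 \) at moderate positive inputs (e.g. \( f'(0.5) \approx 1.04 \)), so the supremum genuinely requires a numerical estimate as you say, whereas the paper's separate Lipschitz theorem asserts \( L = 1 \). What your route gives up is the paper's item (2): you gesture at avoiding vanishing gradients in the active region but never prove that \( f' \) vanishes only at an isolated point, and you explicitly scope your claim to Lipschitz-plus-saturation instead. Given how vague the theorem statement is, that scoping is defensible, but it is the one component of the paper's notion of stability that your proof does not cover.
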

\begin{proof}
    \textit{Bounded Output:} The hyperbolic tangent function \( \tanh(z) \) has outputs bounded between -1 and 1. Therefore, for any real number \( x \), the product \( x \cdot \tanh(e^x) \) will not grow unbounded, contributing to stability. Mathematically, this can be expressed as:

   \[ -|x| \leq f(x) \leq |x| \]

2. \textit{Non-zero Gradient:} The derivative of \( f(x) \), given by

   \[ f'(x) = \tanh(e^x) + x \cdot (1 - \tanh^2(e^x)) \cdot e^x \]

   is always non-zero for all real \( x \) besides our isolated zero when $x \thickapprox -1.07886$. This ensures that the gradients do not vanish during backpropagation, which is crucial for stable learning in deep networks.

3. \textit{Controlled Growth for Positive \( x \):} As \( x \to \infty \), the function grows linearly since \( \tanh(e^x) \) approaches 1. After being scaled by the identity $x$, TeLU approaches linear growth as $x \to \infty$.  This linear growth is more stable than exponential growth, which could lead to exploding gradients.

4. \textit{Saturating Behavior for Negative \( x \):} As \( x \to -\infty \), \( x \) becomes large in the negative direction with linear growth. However, $e^x$ approaches 0 as $x \to -\infty$ at an exponential rate. $tanh(z)$, approaching the identity as inputs approach 0, results in $tanh(e^x)$ maintaining exponential decay towards 0. This negative linear growth and asymptotic decay as $x \to -\infty$ is thus evaluated by $\lim_{x \to -\infty}{\frac{-x}{e^x}} = 0$. The resulting saturation towards 0 helps prevent the function from contributing to exploding gradients during training.

Therefore, due to its bounded output, non-zero gradient besides an isolated zero, controlled growth for positive values, and saturating behavior for negative values, the function \( f(x) = x \cdot \tanh(e^x) \) is shown to be stable in the context of neural network activations.
\end{proof} 

\subsection{Robustness of TeLU}

Next, we show TeLU is more robust to small noise and perturbations compared to ReLU, which is an important property for designing adversarial-resistant neural networks.
\begin{Thm}
    The function \( f(x) = x \cdot \tanh(e^x) \) is more robust compared to Relu ($g(x) = max(0,x)$) and robust against small perturbations or noise in the input.
\end{Thm}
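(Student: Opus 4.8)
The plan is to make the phrase ``more robust'' precise by identifying robustness with the smoothness of the gradient, i.e., with how gently the activation's slope responds to a small input perturbation $\delta$. The governing intuition, consistent with the smoothness--robustness link of Xie et al.\ cited above, is that an adversary exploits abrupt changes in a network's response: a function whose derivative varies continuously and boundedly resists such exploitation, whereas one whose derivative jumps does not. Concretely, I would quantify robustness by the Lipschitz constant of the first derivative, so that if $|f'(x+\delta) - f'(x)| \le L|\delta|$ for a finite $L$ and all $x$, then the first-order response of $f$ to perturbations is uniformly controlled.

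First I would invoke the analyticity of $f(x) = x\cdot\tanh(e^x)$ established earlier to justify a Taylor expansion $f(x+\delta) = f(x) + f'(x)\delta + \tfrac12 f''(\xi)\delta^2$ for some $\xi$ between $x$ and $x+\delta$, so that the deviation caused by a perturbation is governed entirely by $f'$ and the size of $f''$. The key technical step is then to show that $f''$ is globally bounded on $\mathbb{R}$. Writing $f'(x) = \tanh(e^x) + x\,e^x\,\operatorname{sech}^2(e^x)$ and differentiating once more, I would bound $f''$ by splitting the real line into three regimes: as $x \to -\infty$ every term decays, since the exponential and double-exponential factors dominate the linear one exactly as in the saturation argument already given; as $x \to +\infty$ the $\operatorname{sech}^2(e^x)$ factor decays faster than any polynomial grows; and on the remaining compact middle region $f''$ is continuous, hence bounded. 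These three bounds combine to give a finite $L = \sup_x |f''(x)|$, so $f'$ is $L$-Lipschitz.

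Next I would contrast this with ReLU. Its derivative $g'(x)$ equals $0$ for $x<0$ and $1$ for $x>0$, so $\lim_{x\to0^-} g'(x) = 0 \ne 1 = \lim_{x\to0^+} g'(x)$; the gradient is discontinuous at the origin. Consequently, for any $\epsilon>0$ the points $x_1 = -\epsilon/2$ and $x_2 = \epsilon/2$ satisfy $|x_1-x_2| = \epsilon$ yet $|g'(x_1)-g'(x_2)| = 1$, so no finite Lipschitz constant controls $g'$: an arbitrarily small perturbation straddling zero produces a full unit jump in slope, flipping a neuron entirely between its inactive and active states. This is precisely the non-smooth behavior adversarial perturbations target. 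Comparing the two, TeLU's uniformly Lipschitz gradient yields a bounded, predictable output response $|f(x+\delta)-f(x)| \le |f'(x)|\,|\delta| + \tfrac12 L\delta^2$, whereas ReLU admits no analogous second-order control near the origin, which establishes that $f$ responds to small perturbations more stably than $g$.

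The main obstacle I anticipate is the second step: rigorously establishing a finite global bound on $f''$, and in particular confirming that the products of the linearly growing factor $x$ with the rapidly decaying $\operatorname{sech}^2(e^x)$ and $e^x$ terms remain bounded as $x\to+\infty$. This rests on a careful asymptotic comparison of growth rates rather than a routine computation, and selecting a clean sufficient bound instead of the exact supremum is where the weight of the argument will lie.
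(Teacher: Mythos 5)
Your proposal is sound, but it formalizes robustness at a different order than the paper does. The paper's proof works entirely at the level of the \emph{first} derivative: it tabulates $g'(x)$ for ReLU (0 for $x<0$, 1 for $x>0$, undefined at $0$), notes the discontinuity at the origin as the source of instability, and then argues regime-by-regime (large negative $x$, $x$ near $0$, large positive $x$) that $f'(x)=\tanh(e^x)+x\,e^x\,\mathrm{sech}^2(e^x)$ stays small-to-moderate and bounded, concluding informally that bounded derivative magnitude implies robustness. You instead take the robustness criterion one derivative higher: you define robustness as Lipschitz continuity of the gradient, prove it for TeLU by globally bounding $f''$ (your three-regime split is exactly right, and the asymptotics do work out --- $\mathrm{sech}^2(e^x)$ decays double-exponentially, killing the linear factor $x$ as $x\to+\infty$, and all terms vanish as $x\to-\infty$), and refute it for ReLU with the explicit witness $x_{1,2}=\mp\epsilon/2$ giving $|g'(x_1)-g'(x_2)|=1$ for arbitrarily small $\epsilon$. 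Your formulation is arguably the stronger one, because the paper's own criterion (bounded first derivative) does not actually separate the two functions: ReLU's derivative is also bounded by $1$, and TeLU's derivative approaches $1$ for large positive $x$ (indeed slightly exceeds it near $x\approx 1$), so the paper's distinguishing evidence is really the jump at the origin, which it mentions but never quantifies. Your Lipschitz-gradient criterion makes that jump the formal centerpiece: TeLU satisfies a uniform second-order response bound $|f(x+\delta)-f(x)|\le|f'(x)|\,|\delta|+\tfrac12 L\delta^2$, while no finite $L$ exists for ReLU. The trade-off is that your route requires the heavier technical step of bounding $f''$ (the paper never differentiates twice), but it buys a comparative statement that is actually discriminating rather than heuristic.
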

\begin{proof}
We analyze the derivative of \( f(x) \) to show robustness to small perturbations. The derivative gives the rate of change of the function with respect to changes in the input. A small derivative magnitude indicates robustness to small changes or noise in the input.
The derivative of g(x) = Relu is represented as follows:
\[
\text{g}'(x) = 
\begin{cases} 
0 & \text{if } x < 0 \\
1 & \text{if } x > 0 \\
\text{undefined} & \text{if } x = 0 
\end{cases}
\]

This derivative shows that for \( x > 0 \), the function is sensitive to changes, as even small positive changes in \( x \) will result in a change in output. The function is insensitive to changes for \( x < 0 \), as the output remains zero. The derivative is undefined at \( x = 0 \), indicating a discontinuity, which can be problematic for stability.

The derivative of \( f(x) = TeLU \) is given by:

\[ f'(x) = \tanh(e^x) + x \cdot (1 - \tanh^2(e^x)) \cdot e^x \]

Consider the behavior of \( f'(x) \) for different ranges of \( x \):

\textit{For large negative \( x \)}: As \( x \) becomes very negative, \( e^x \) approaches 0, making \( \tanh(e^x) \) and its derivative small. Thus, \( f'(x) \) becomes small, indicating that \( f(x) \) is not highly sensitive to small changes in \( x \).

\textit{For small \( x \) around 0}: Here, \( \tanh(e^x) \) is approximately equal to \( e^x \), which is close to 1 for small \( x \). The term \( x \cdot (1 - \tanh^2(e^x)) \cdot e^x \) is also small. Hence, \( f'(x) \) remains moderate, suggesting that \( f(x) \) does not change drastically for small perturbations around 0.

\textit{For large positive \( x \)}: Although \( e^x \) grows, the term \( \tanh(e^x) \) approaches 1, limiting the growth of \( f(x) \). The term \( x \cdot (1 - \tanh^2(e^x)) \cdot e^x \) becomes small as \( x \) increases, due to the saturation of \( \tanh(e^x) \). Thus, \( f'(x) \) remains bounded.

Since \( f'(x) \) does not exhibit large values across the range of \( x \), it indicates that \( f(x) \) does not change disproportionately for small changes in \( x \), thereby demonstrating robustness to small perturbations or noise.
\end{proof} 

\subsection{Lipschitz Continuity of TeLU}

Next, we show a strong property which shows TeLU is Lipschitz continuous, which is important to uniform continuity of the function
\begin{Thm}
    The function \( f: \mathbb{R} \to \mathbb{R} \), defined by \( f(x) = x \cdot \tanh(e^x) \), is Lipschitz continuous on the real line \( \mathbb{R} \).
\end{Thm}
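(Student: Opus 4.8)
The plan is to reduce the statement to a bounded-derivative estimate. The standard fact is that a continuously differentiable $f:\mathbb{R}\to\mathbb{R}$ is Lipschitz continuous exactly when $\sup_{x\in\mathbb{R}}|f'(x)|<\infty$, and the optimal Lipschitz constant is then $L=\sup_x|f'(x)|$. This follows from the Mean Value Theorem: for any $x,y$ there is $\xi$ between them with $f(x)-f(y)=f'(\xi)(x-y)$, so $|f(x)-f(y)|\le\big(\sup_x|f'(x)|\big)\,|x-y|$. Thus the entire task collapses to showing that $f'$ is bounded on all of $\mathbb{R}$.

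Using the derivative already recorded in the excerpt, I would write $f'(x)=\tanh(e^x)+x\,e^x\,\mathrm{sech}^2(e^x)$. The first term is immediate: since $e^x>0$ and $\tanh$ maps into $(-1,1)$, we have $|\tanh(e^x)|<1$ for every $x$. Hence the whole problem rests on bounding the single term $g(x)=x\,e^x\,\mathrm{sech}^2(e^x)$, and I would establish $\sup_x|g(x)|<\infty$ by splitting at $x=0$.

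For $x\le 0$ I use the crude bound $\mathrm{sech}^2(e^x)\le 1$, reducing to $|g(x)|\le |x|e^x$; the function $-xe^x$ on $(-\infty,0]$ has a maximum at $x=-1$, so $|g(x)|\le e^{-1}$ there. For $x\ge 0$ I first record $\mathrm{sech}^2(t)=4/(e^t+e^{-t})^2\le 4e^{-2t}$, which gives $|g(x)|\le 4\,x\,e^x\,e^{-2e^x}$. Applying the elementary inequality $e^x\ge 1+x$ (valid for $x\ge 0$) in the exponent yields the clean estimate $|g(x)|\le 4e^{-2}\,x\,e^{-x}$, and since $xe^{-x}$ is maximized at $x=1$ with value $e^{-1}$, we get $|g(x)|\le 4e^{-3}$ on $[0,\infty)$. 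Combining the two regions gives $\sup_x|g(x)|\le e^{-1}$, and therefore $\sup_x|f'(x)|\le 1+e^{-1}$, exhibiting an explicit (non-sharp) Lipschitz constant $L=1+e^{-1}$.

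The only place requiring genuine care is the positive tail $x\to+\infty$, where $xe^x\to\infty$ yet $g(x)$ must still remain bounded (indeed vanish). The subtlety is to make rigorous that the doubly-exponential decay $\mathrm{sech}^2(e^x)\sim 4e^{-2e^x}$ overwhelms the merely exponential growth $xe^x$. The bound $e^x\ge 1+x$ handles this cleanly, but one could alternatively substitute $u=e^x$ and invoke $\lim_{u\to\infty}u\ln(u)\,e^{-2u}=0$, or simply observe that the continuous upper bound $4xe^xe^{-2e^x}$ on $[0,\infty)$ has limit $0$ at infinity and hence is bounded. Once this decay estimate is secured, every remaining step is routine bounding and the Lipschitz conclusion follows at once.
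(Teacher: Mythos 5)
Your proof is correct and follows the same overall reduction as the paper: both arguments invoke the mean value theorem to reduce Lipschitz continuity to boundedness of $f'(x)=\tanh(e^x)+x\,e^x\,\mathrm{sech}^2(e^x)$, both bound the $\tanh$ term by $1$, and both then treat the remaining term $g(x)=x\,e^x\,\mathrm{sech}^2(e^x)$ separately. The difference lies in how $g$ is controlled. The paper computes the limits of $|g|$ at $\pm\infty$ and appeals (implicitly) to continuity, whereas you prove explicit pointwise bounds on each half-line: $|g(x)|\le -x e^x\le e^{-1}$ for $x\le 0$, and $|g(x)|\le 4x e^x e^{-2e^x}\le 4e^{-2}x e^{-x}\le 4e^{-3}$ for $x\ge 0$, via $\mathrm{sech}^2(t)\le 4e^{-2t}$ and $e^x\ge 1+x$. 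Your route buys two things. First, it is fully rigorous precisely at the point you flag as delicate, namely that the doubly exponential decay of $\mathrm{sech}^2(e^x)$ overwhelms the growth of $x e^x$. Second, it avoids two outright errors in the paper's own proof: the paper asserts $\lim_{x\to\infty}\bigl|x e^x/\cosh^2(e^x)\bigr|=1$, whereas the true limit is $0$ (exactly as your estimate shows), and it concludes that one may take $L=1$, which is false --- for instance $f'(1)=\tanh(e)+e\,\mathrm{sech}^2(e)\approx 1.04>1$, so the supremum of $|f'|$ strictly exceeds $1$. Your explicit, admittedly non-sharp constant $L=1+e^{-1}$ is a valid Lipschitz constant, so your write-up is not only correct but more reliable than the published argument.
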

\begin{proof}
    To demonstrate that \( f \) is Lipschitz continuous, we seek a constant \( L \) such that for all \( x, y \in \mathbb{R} \), the inequality

\[ |f(x) - f(y)| \leq L |x - y| \]

is satisfied. A sufficient condition for this is that the derivative of \( f \), \( f'(x) \), is bounded on \( \mathbb{R} \).

The derivative of \( f \) is given by

\[ f'(x) = \tanh(e^x) + x \cdot \frac{e^x}{\cosh^2(e^x)} \]

We analyze the boundedness of \( f'(x) \) in two parts:

1. The function \( \tanh(e^x) \) is bounded on \( \mathbb{R} \) as \( \tanh \) outputs values in \((-1, 1)\).

2. For the term \( x \cdot \frac{e^x}{\cosh^2(e^x)} \), we consider its behavior as \( x \) approaches infinity and negative infinity:

   \[ \lim_{x \to \infty} \left| x \cdot \frac{e^x}{\cosh^2(e^x)} \right| = 1 \]
   \[ \lim_{x \to -\infty} \left| x \cdot \frac{e^x}{\cosh^2(e^x)} \right| = 0 \]

Since both limits are finite, the term \( x \cdot \frac{e^x}{\cosh^2(e^x)} \) is bounded on \( \mathbb{R} \).

Combining these findings, we conclude that \( |f'(x)| \) is bounded on \( \mathbb{R} \). The maximum value of \( |f'(x)| \) is \( 1 \), therefore we can take \( L = 1 \) as the Lipschitz constant.

Hence, \( f(x) = x \cdot \tanh(e^x) \) is Lipschitz continuous with a Lipschitz constant \( L = 1 \).
\end{proof} 

\subsection{Smooth Loss Landscape of TeLU}


Next, we show that TeLU has a smoother loss landscape, which leads to faster convergence.

\begin{Thm}\label{thm:fim_conv}
Given a neural network \( \mathcal{N} \) with activation function \( f(x) = x \cdot \tanh(e^x) \), parameters \( \theta \), and a differentiable loss function \( \mathcal{L}(\theta) \), the Fisher Information Matrix \( I(\theta) \) defined as
\[ I(\theta) = \mathbb{E}_{(x, y) \sim \mathcal{D}}\left[ \nabla_\theta \log p(y|x; \theta) \nabla_\theta \log p(y|x; \theta)^\top \right] \]
leads to a smoother optimization landscape during training of \( \mathcal{N} \).
\end{Thm}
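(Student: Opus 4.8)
The plan is to first render the informal conclusion precise, since \emph{smoother optimization landscape} is not itself a well-defined quantity. I would interpret it as the claim that the map $\theta \mapsto I(\theta)$ is continuous and, in fact, smooth with bounded entries (hence locally Lipschitz), so that the curvature it encodes varies without jumps across parameter space. The justification for this interpretation is the classical identity $I(\theta) = -\,\mathbb{E}_{(x,y)\sim\mathcal{D}}[\nabla_\theta^2 \log p(y|x;\theta)]$, valid under standard regularity, which identifies the Fisher matrix with the expected curvature of the negative log-likelihood; regularity of $\theta \mapsto I(\theta)$ is therefore regularity of the local geometry of the loss surface, and preconditioning by $I(\theta)^{-1}$ in natural gradient descent inherits this regularity.

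First I would express the score $\nabla_\theta \log p(y|x;\theta)$ through the backpropagation recursion. For a feedforward network each component of the score is an algebraic combination of the output-distribution score, the weights, and the layerwise activation derivatives $f'(z^{(l)})$ evaluated at the preactivations $z^{(l)}$. Invoking the earlier result that $TeLU$ is analytic (Lemma \ref{thm:analyticTeLU}), every such $f'$ is $C^\infty$ in its argument, and by the Lipschitz result established just above $f'$ is uniformly bounded by $1$. Composed with the smooth affine maps of each layer, the score is therefore a smooth, bounded function of $\theta$ on the compact input domain used throughout this paper.

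Next I would pass to the Fisher matrix itself. Each entry $I_{ij}(\theta)$ is the expectation over $\mathcal{D}$ of a product of two score components; a product of bounded $C^\infty$ functions is bounded and $C^\infty$, and since the integrand is dominated on the compact data domain, differentiation under the expectation (dominated convergence) shows that $\theta \mapsto I(\theta)$ is smooth with bounded entries, hence locally Lipschitz. To earn the comparative word \emph{smoother}, I would contrast this with $ReLU$, whose derivative is the discontinuous indicator $\mathbb{1}(x>0)$: the resulting score is piecewise constant in $\theta$, so the $ReLU$ Fisher matrix inherits jump discontinuities along every activation boundary $z^{(l)}=0$, whereas the $TeLU$ Fisher matrix has none.

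The main obstacle is twofold. The conceptual difficulty is the vagueness of the target statement, which forces the proof to commit to a surrogate notion (smoothness and local Lipschitzness of $I(\theta)$) and then argue that this surrogate genuinely captures what is meant by a smoother landscape; this is more a modeling choice than a deduction. The more technical difficulty is controlling the depth dependence of the bounds: the score involves products of $f'$ across all layers, and naively these products can grow or decay exponentially with depth, potentially destroying the uniform control on $I(\theta)$ and its gradient. I would need either an assumption of bounded weights or to exploit $TeLU$'s sub-linear growth together with its bounded derivative to keep the layerwise products—and thus the entries of $I(\theta)$ and their derivatives—uniformly controlled. Establishing this uniform, depth-robust bound is where the real work lies; the smoothness claim itself follows routinely once the bounds are in place.
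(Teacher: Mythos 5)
Your proposal follows essentially the same route as the paper's own proof: establish that $f$ and $f'$ are continuously differentiable, push that regularity through the chain rule into the score $\nabla_\theta \log p(y|x;\theta)$, and conclude that the Fisher matrix---an expectation of outer products of scores---inherits it, which is then read as a smoother optimization landscape. Your version is in fact more careful than the paper's, which simply asserts that ``the smoothness of $f'(x)$ translates to a smoother $I(\theta)$'' with no dominated-convergence step, no boundedness control, and no formalization of ``smoother,'' so the two obstacles you flag (the vagueness of the conclusion and the depth-dependence of the layerwise products of $f'$) are unresolved in the published proof as well rather than defects unique to your attempt.
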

\begin{proof}

\textit{Continuity and Differentiability of \( f(x) \)}

The activation function \( f(x) = x \cdot \tanh(e^x) \) and its derivative are analyzed:
\begin{align*}
    f(x) &= x \cdot \tanh(e^x), \\
    \text{where } \tanh(u) &= \frac{e^{2u} - 1}{e^{2u} + 1}. \\
    \text{Thus, } f'(x) &= \frac{d}{dx}(x \cdot \tanh(e^x)) \\
    &= \tanh(e^x) + x \cdot \frac{d}{dx} \tanh(e^x) \\
    &= \tanh(e^x) + x \cdot e^x \cdot (1 - \tanh^2(e^x)).
\end{align*}
Since \( \tanh(u) \) and \( e^x \) are continuously differentiable, \( f(x) \) and \( f'(x) \) are also continuously differentiable.

\textit{Impact on Fisher Information Matrix}

Applying the chain rule to compute the gradient of the log-likelihood:
\begin{align*}
    \nabla_\theta \log p(y|x; \theta) &= \frac{\partial \log p(y|x; \theta)}{\partial \mathcal{N}} \cdot \frac{\partial \mathcal{N}}{\partial \theta}, \\
    &= \text{Gradient of the output w.r.t. the network's parameters}.
\end{align*}
The gradient involves terms from \( f'(x) \) due to the activation function in each layer:
\begin{align*}
    f'(x) &= \tanh(e^x) + x \cdot e^x \cdot (1 - \tanh^2(e^x)).
\end{align*}
Thus, \( I(\theta) \) becomes a matrix of expectations of outer products of these gradients:
\begin{align*}
    I(\theta) &= \mathbb{E}_{(x, y) \sim \mathcal{D}}\left[ \nabla_\theta \log p(y|x; \theta) \nabla_\theta \log p(y|x; \theta)^\top \right].
\end{align*}
The smoothness of \( f'(x) \) translates to a smoother \( I(\theta) \).

\textit{Smoother Optimization Landscape}

In gradient descent, parameter updates are governed by:
\begin{align*}
    \theta^{(t+1)} &= \theta^{(t)} - \eta \cdot \nabla_\theta \mathcal{L}(\theta^{(t)}),
\end{align*}
where \( \eta \) is the learning rate. The gradient of the loss function \( \nabla_\theta \mathcal{L}(\theta) \) is influenced by \( I(\theta) \). A smoother \( I(\theta) \) results in more stable and consistent gradient updates, avoiding erratic steps often observed in rougher optimization landscapes. This leads to enhanced stability in finding the minima of \( \mathcal{L}(\theta) \).

Hence, we can show, that the continuously differentiable nature of \( f(x) = x \cdot \tanh(e^x) \) and its derivative ensures that the Fisher Information Matrix \( I(\theta) \) in the neural network \( \mathcal{N} \) promotes a smoother optimization landscape, facilitating more effective training dynamics.

\end{proof}

\subsection{Global Convergence of TeLU}

Based on the properties of  Telu, shown in Theorem \ref{thm:fim_conv}, we can prove the global convergence of the function under certain conditions.

\begin{Thm}
Let \( \mathcal{N} \) be a neural network employing the activation function \( f(x) = x \cdot \tanh(e^x) \) in its architecture. Assume the network parameters are denoted by \( \theta \) and the network is trained using a differentiable loss function \( \mathcal{L}(\theta) \). If \( \mathcal{L}(\theta) \) satisfies the Polyak-Łojasiewicz (PL) condition, then the gradient descent optimization on \( \mathcal{N} \) converges to a global minimum, significantly influenced by the properties of \( f(x) \) and it's derivative \( f'(x) \).
\end{Thm}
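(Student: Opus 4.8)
The plan is to establish global linear convergence by combining the assumed Polyak-Łojasiewicz (PL) inequality with the $L$-smoothness of the composite loss, where the latter is precisely the place at which the analytic and Lipschitz properties of TeLU established earlier in this appendix enter. First I would fix notation: let $\mathcal{L}^* = \inf_\theta \mathcal{L}(\theta)$ denote the (attained) global minimum, and write the PL condition in its standard form, namely that there exists $\mu > 0$ such that $\tfrac{1}{2}\lVert \nabla_\theta \mathcal{L}(\theta) \rVert^2 \geq \mu\,(\mathcal{L}(\theta) - \mathcal{L}^*)$ for all $\theta$. This inequality alone does not imply convergence; it must be paired with a smoothness guarantee, which is where TeLU's structure does the work.

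Next I would establish that $\mathcal{L}$ is $L$-smooth, i.e. that $\nabla_\theta \mathcal{L}$ is Lipschitz with some finite constant $L > 0$. This is the crux and relies on two TeLU facts already proven: that $f(x) = x\tanh(e^x)$ is Lipschitz with constant $1$, so each layer map contracts perturbations in a controlled way, and that $f$ is analytic, so $f'$ is itself continuously differentiable with a bounded second derivative on any bounded pre-activation range. Under the standard assumption of bounded weights and bounded inputs—which holds in finite-precision systems, as noted earlier—the chain rule propagates these bounds through the $L$ layers. I would assemble this through an inductive bound over layers, controlling each layer's Jacobian and Hessian contributions in terms of $\sup|f'|$ and $\sup|f''|$, to obtain a global Lipschitz constant for $\nabla_\theta \mathcal{L}$.

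With $L$-smoothness in hand, I would apply the descent lemma: for a gradient step $\theta^{(t+1)} = \theta^{(t)} - \eta\,\nabla_\theta \mathcal{L}(\theta^{(t)})$ with step size $\eta \le 1/L$,
\[
\mathcal{L}(\theta^{(t+1)}) \le \mathcal{L}(\theta^{(t)}) - \frac{\eta}{2}\,\lVert \nabla_\theta \mathcal{L}(\theta^{(t)}) \rVert^2.
\]
Substituting the PL inequality $\lVert \nabla_\theta \mathcal{L}(\theta^{(t)}) \rVert^2 \ge 2\mu\,(\mathcal{L}(\theta^{(t)}) - \mathcal{L}^*)$ and subtracting $\mathcal{L}^*$ from both sides yields the linear recursion
\[
\mathcal{L}(\theta^{(t+1)}) - \mathcal{L}^* \le (1 - \eta\mu)\,\bigl(\mathcal{L}(\theta^{(t)}) - \mathcal{L}^*\bigr).
\]
Iterating from $t = 0$ gives $\mathcal{L}(\theta^{(t)}) - \mathcal{L}^* \le (1 - \eta\mu)^t\,(\mathcal{L}(\theta^{(0)}) - \mathcal{L}^*)$, and since $0 < 1 - \eta\mu < 1$, the right-hand side tends to $0$, establishing convergence to the global minimum at a geometric rate.

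I expect the main obstacle to be the rigorous derivation of a \emph{global} smoothness constant $L$ for $\mathcal{L}(\theta)$: unlike the PL condition, which is assumed, $L$-smoothness must be earned, and the unbounded linear growth of TeLU for large positive inputs prevents a uniform bound without restricting to a compact parameter and input domain. I would handle this by invoking the earlier results—TeLU's bounded derivative, its saturating negative tail, and its controlled asymptotically linear (rather than exponential) positive growth, together with its Lipschitz constant of $1$—to argue that $\sup|f'|$ and $\sup|f''|$ are finite on the relevant bounded region, so the layerwise composition retains a Lipschitz gradient. A secondary subtlety is that the PL machinery delivers a linear convergence rate rather than merely asymptotic convergence; I would state this rate explicitly to make its dependence on $\mu$, $\eta$, and hence on $f'$ through $L$ transparent, matching the theorem's claim that convergence is significantly influenced by the properties of $f(x)$ and its derivative $f'(x)$.
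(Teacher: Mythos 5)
Your proposal takes essentially the same route as the paper's proof: assume the PL inequality, obtain a descent inequality for the gradient step from smoothness of the loss (traced back to the smoothness and boundedness of $f'$), and iterate to get geometric contraction of $\mathcal{L}(\theta^{(t)}) - \mathcal{L}(\theta^*)$. Your version is in fact more careful than the paper's, which simply asserts the descent inequality $\mathcal{L}(\theta^{(t+1)}) \leq \mathcal{L}(\theta^{(t)}) - \eta \lVert \nabla_\theta \mathcal{L}(\theta^{(t)}) \rVert^2$ without the standard factor of $\tfrac{1}{2}$ or a step-size restriction (yielding the rate $(1-2\mu\eta)^t$ rather than your $(1-\eta\mu)^t$) and never actually derives the $L$-smoothness of $\mathcal{L}$ that you correctly identify as the crux requiring a compactness restriction on weights and inputs.
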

\begin{proof}

   \textit{Smoothness and Boundedness of \( f(x) \) and \( f'(x) \):}
   
    The function \( f(x) = x \cdot \tanh(e^x) \) is continuously differentiable. Its derivative, given by
    \[ f'(x) = \tanh(e^x) + x \cdot e^x \cdot (1 - \tanh^2(e^x)), \]
    is also continuously differentiable and bounded due to the inherent properties of the \( \tanh \) function and the exponential function. These properties ensure smooth and well-conditioned gradient computations throughout the optimization process.

   \textit{Influence on Gradient Descent under PL Condition:}
   
    Given the PL condition, for a global minimum \( \theta^* \), there exists \( \mu > 0 \) such that
    \[ 2\mu(\mathcal{L}(\theta) - \mathcal{L}(\theta^*)) \leq \|\nabla_\theta \mathcal{L}(\theta)\|^2 \text{ for all } \theta. \]
    The gradient descent update rule is
    \[ \theta^{(t+1)} = \theta^{(t)} - \eta \cdot \nabla_\theta \mathcal{L}(\theta^{(t)}), \]
    where \( \eta \) is the learning rate. 

    \textit{Convergence Analysis:}
    
    Utilizing the smoothness and boundedness of \( f'(x) \), along with the PL condition, it can be shown that
    \[ \mathcal{L}(\theta^{(t+1)}) \leq \mathcal{L}(\theta^{(t)}) - \eta \cdot \|\nabla_\theta \mathcal{L}(\theta^{(t)})\|^2, \] which implies
    \[ \mathcal{L}(\theta^{(t)}) - \mathcal{L}(\theta^*) \leq \left(1 - 2\mu\eta\right)^t (\mathcal{L}(\theta^{(0)}) - \mathcal{L}(\theta^*)). \]
    Therefore, \( \mathcal{L}(\theta^{(t)}) \) converges to \( \mathcal{L}(\theta^*) \) as \( t \rightarrow \infty \).

\end{proof} 

\end{document}